\definecolor{yxc}{RGB}{255,0,0}
\definecolor{yjc}{RGB}{125,0,0}
\definecolor{yl}{RGB}{0,0,200}
\newcommand{\yxc}[1]{\textcolor{yxc}{[YXC: #1]}}
\newcommand{\yjc}[1]{\textcolor{yjc}{[YJC: #1]}}
\newcommand{\conj}{\mathsf{H}}
\providecommand{\eref}[1]{\eqref{eq:#1}}  
\providecommand{\cref}[1]{Chapter~\ref{chap:#1}}
\providecommand{\sref}[1]{Section~\ref{sec:#1}}
\providecommand{\abs}[1]{\lvert#1\rvert}
\providecommand{\norm}[1]{\lVert#1\rVert}
\providecommand{\inprod}[1]{\langle#1\rangle}
\providecommand{\set}[1]{\left\{#1\right\}}
\providecommand{\mA}{\bm{A}}
\providecommand{\mI}{\bm{I}}
\providecommand{\mMp}{\bm{M}_{\star}}
\providecommand{\va}{\bm{a}}
\providecommand{\vb}{\bm{b}}
\providecommand{\vh}{\bm{h}}
\providecommand{\vx}{\bm{x}}
\providecommand{\vxp}{\bm{x}_{\star}}
\providecommand{\vhp}{\bm{h}_{\star}}
\providecommand{\vy}{\bm{y}}
\newcommand{\cip}{\overset{\mathbb{P}}{\longrightarrow}}
\providecommand{\EE}{\mathbb{E}}
\newcommand{\charfn}{\mathds{1}}
\DeclareMathOperator{\ind}{\mathds{1}}  
\begin{document}
%

\title{Nonconvex Optimization Meets Low-Rank Matrix Factorization: An Overview\footnotetext{Corresponding author: Yuxin Chen (e-mail: \texttt{yuxin.chen@princeton.edu}).}}
\author{Yuejie Chi \thanks{Y.~Chi is with the Department of Electrical and Computer Engineering, Carnegie Mellon University, Pittsburgh, PA 15213, USA (e-mail: \texttt{yuejiechi@cmu.edu}). The work of Y.~Chi is supported in part by ONR under grants N00014-18-1-2142 and N00014-19-1-2404, by ARO under grant W911NF-18-1-0303, by AFOSR under grant FA9550-15-1-0205, and by NSF under grants CCF-1901199, CCF-1806154 and ECCS-1818571. } 
\and Yue M. Lu  \thanks{Y.~M.~Lu is with the John A. Paulson School of Engineering and Applied
Sciences, Harvard University, Cambridge, MA 02138, USA (e-mail: \texttt{yuelu@seas.harvard.edu}). The work of Y. M. Lu is supported by NSF under grant CCF-1718698. }
\and Yuxin Chen 
\thanks{Y.~Chen is with the Department of Electrical Engineering, Princeton University, Princeton, NJ 08544, USA (e-mail: \texttt{yuxin.chen@princeton.edu}). The work of Y.~Chen is supported in part by the AFOSR YIP award FA9550-19-1-0030, by the ARO  grant W911NF-18-1-0303, by the ONR grant N00014-19-1-2120, by the NSF grants CCF-1907661 and IIS-1900140, and by the Princeton SEAS innovation award. }
}

 \date{September 2018; \quad Revised: September 2019}


\newcommand{\mylinebreak}{}
\newcommand{\myalign}{}
\newcommand{\myaligninv}{&}
\newcommand{\myquad}{}
\newcommand{\myquadinv}{\quad}
\newcommand{\mynonumber}{}
\newcommand{\mycommentbegin}{}

\newcommand{\bLambda}{{\boldsymbol{\Lambda}}}
\newcommand{\blambda}{{\bm{\lambda}}}
\newcommand{\balpha}{{\bm{\alpha}}}
\newcommand{\bOmega}{{\bm{\Omega}}}
\newcommand{\bPsi}{{\bm{\Psi}}}
\newcommand{\bDelta}{{\bm{\Delta}}}
\newcommand{\bTheta}{{\bm{\Theta}}}
\newcommand{\cTheta}{{\mathcal{\Theta}}}
\newcommand{\cA}{{\mathcal{A}}}
\newcommand{\cP}{{\mathcal{P}}}
\newcommand{\cE}{{\mathcal{E}}}
\newcommand{\cS}{{\mathcal{S}}}
\newcommand{\cR}{{\mathcal{R}}}
\newcommand{\bPhi}{{\bm{\Phi}}}
\newcommand{\bphi}{{\bm{\phi}}}
\newcommand{\bT}{{\boldsymbol T}}
\newcommand{\bC}{{\boldsymbol C}}
\newcommand{\bc}{{\boldsymbol c}}
\newcommand{\bD}{{\boldsymbol D}}
\newcommand{\bd}{{\boldsymbol d}}
\newcommand{\bSigma}{{\boldsymbol \Sigma}}
\newcommand{\be}{{\boldsymbol e}}
\newcommand{\bz}{{\boldsymbol z}}
\newcommand{\bq}{{\boldsymbol q}}
\newcommand{\f}{{\boldsymbol f}}
\newcommand{\bh}{{\boldsymbol h}}
\newcommand{\bu}{{\boldsymbol u}}
\newcommand{\bv}{{\boldsymbol v}}
\newcommand{\bE}{{\boldsymbol E}}
\newcommand{\bM}{{\boldsymbol M}}
\newcommand{\E}{{\mathbb E}}
\newcommand{\bA}{{\boldsymbol A}}
\newcommand{\bG}{{\boldsymbol G}}
\newcommand{\bQ}{{\boldsymbol Q}}
\newcommand{\bn}{{\boldsymbol n}}
\newcommand{\ba}{{\boldsymbol a}}
\newcommand{\bb}{{\boldsymbol b}}
\newcommand{\bw}{{\boldsymbol w}}
\newcommand{\bS}{{\boldsymbol S}}
\newcommand{\bx}{{\boldsymbol x}}
\newcommand{\bU}{{\boldsymbol U}}
\newcommand{\by}{{\boldsymbol y}}
\newcommand{\br}{{\boldsymbol r}}
\newcommand{\bt}{{\boldsymbol t}}
\newcommand{\bg}{{\boldsymbol g}}
\newcommand{\bL}{{\boldsymbol L}}
\newcommand{\bR}{{\boldsymbol R}}
\newcommand{\bX}{{\boldsymbol X}}
\newcommand{\bH}{{\boldsymbol H}}
\newcommand{\bV}{{\boldsymbol V}}
\newcommand{\bW}{{\boldsymbol W}}
\newcommand{\bB}{{\boldsymbol B}}
\newcommand{\bs}{{\boldsymbol s}}
\newcommand{\bY}{{\boldsymbol Y}}
\newcommand{\bP}{{\boldsymbol P}}
\newcommand{\bF}{{\boldsymbol F}}
\newcommand{\bp}{{\boldsymbol p}}
\newcommand{\bI}{{\boldsymbol I}}
\newcommand{\bZ}{{\boldsymbol Z}}
\newcommand{\Tr}{\mathop{\rm Tr}}
\newcommand{\T}{{\sf T}}
\newcommand{\F}{{\sf F}}
\newcommand{\argmax}{\mathop{\rm argmax}}
\newcommand{\argmin}{\mathop{\rm argmin}}
\newcommand{\HTP}{\mathop{\tt HTP}}
\newcommand{\diag}{\mathop{\rm diag}}
\newcommand{\dist}{\mathop{\rm dist}}
\newcommand{\phase}{\mathop{\rm Phase}}
\newcommand{\sgn}{\mathop{\rm sgn}}
\newcommand{\vect}{\mathop{\rm vec}}

\renewcommand{\Re}{\operatorname{Re}}
\renewcommand{\Im}{\operatorname{Im}} 
\newcommand{\C}{{\mathbb{C}}}
\newcommand{\R}{{\mathbb{R}}}

\newcommand{\CC}{\mathbb{C}}



\theoremstyle{plain} \newtheorem{lemma}{\textbf{Lemma}} \newtheorem{prop}{\textbf{Proposition}}\newtheorem{theorem}{\textbf{Theorem}}\setcounter{theorem}{0}
\newtheorem{corollary}{\textbf{Corollary}} \newtheorem{assumption}{\textbf{Assumption}}
\newtheorem{example}{\textbf{Example}} \newtheorem{definition}{\textbf{Definition}}
\newtheorem{fact}{\textbf{Fact}} \theoremstyle{definition}

\theoremstyle{remark}\newtheorem{remark}{\textbf{Remark}}\newtheorem{condition}{Condition}\newtheorem{claim}{Claim}


\newcommand{\yuejie}[1]{{\color{red}[Yuejie: {#1}]}}

\renewcommand{\citepunct}{,\penalty\citepunctpenalty\,}
\renewcommand{\citedash}{--}

\maketitle


 \begin{abstract}
Substantial progress has been made recently on developing provably accurate and efficient algorithms for low-rank matrix factorization via nonconvex optimization. While  conventional wisdom often takes a dim view of nonconvex optimization algorithms due to their susceptibility to spurious local minima, simple iterative methods such as gradient descent have been remarkably successful in practice. The theoretical footings, however, had been largely lacking until recently. 
	 
In this tutorial-style overview, we highlight the important role of statistical models in enabling efficient nonconvex optimization with performance guarantees.
	 We review two contrasting approaches: (1) two-stage algorithms, which consist of a tailored initialization step followed by successive refinement; and (2) global landscape analysis and initialization-free algorithms.  	 
	 Several canonical matrix factorization problems are discussed, including but not limited to matrix sensing, phase retrieval, matrix completion, blind deconvolution, robust principal component analysis, phase synchronization, and joint alignment. Special care is taken to illustrate the key technical insights underlying their analyses. This  article serves as a testament that the integrated consideration of optimization and statistics leads to fruitful research findings. 
 \end{abstract}


\tableofcontents

 

\section{Introduction}

Modern information processing and machine learning often have to deal with (structured) low-rank matrix factorization. Given a few observations $\bm{y}\in \mathbb{R}^m$ about a matrix $\bm{M}_{\star}\in \mathbb{R}^{n_1\times n_2}$ of rank $r\ll \min\{n_1,n_2\}$, one seeks a low-rank solution compatible with this set of observations as well as other prior constraints. Examples include low-rank matrix completion \cite{candes2009exact,davenport2016overview,chen2018harnessing}, phase retrieval \cite{shechtman2015phase},  blind deconvolution and self-calibration \cite{ahmed2014blind,ling2015self}, robust principal component analysis \cite{chandrasekaran2011siam,candes2009robustPCA}, synchronization and alignment \cite{singer2011angular,chen2016projected}, to name just a few. A common goal of these problems is to develop reliable, scalable, and robust algorithms to estimate a low-rank matrix of interest, from potentially noisy, nonlinear, and highly incomplete observations.

\subsection{Optimization-based methods}

Towards this goal, arguably one of the most popular approaches is optimization-based methods. By factorizing a candidate solution $\bM\in\mathbb{R}^{n_1\times n_2}$ as $\bm{L}\bm{R}^{\top}$ with low-rank factors $\bm{L}\in \mathbb{R}^{n_1\times r}$ and $\bm{R}\in \mathbb{R}^{n_2\times r}$, one attempts recovery by solving an optimization problem in the form of
\begin{subequations} \label{eq:core_problem}
\begin{align}
	\text{minimize}_{\bm{L},\bm{R}}\quad & f(\bm{L}\bm{R}^{\top}) \label{eq:empirical-risk-min} \\
	\text{subject to}\quad & \bm{M} =\bm{L}\bm{R}^{\top}; \label{eq:low-rank-constraint} \\
			 \quad & \bm{M} \in\mathcal{C}. \label{eq:other-constraints}
\end{align}
\end{subequations}
Here, $f(\cdot)$ is a certain empirical risk function (e.g.~Euclidean loss, negative log-likelihood)  that evaluates how well a candidate solution fits the observations, and the set $\mathcal{C}$ encodes additional prior constraints, if any. This problem is often highly nonconvex and appears daunting to solve to global optimality at first sight. After all, conventional wisdom usually perceives nonconvex optimization as a computationally intractable task that is susceptible to local minima.

To bypass the challenge, one can resort to {\em convex relaxation}, an effective strategy that already enjoys theoretical success  in addressing a large number of problems. The basic idea is to convexify the problem by, amongst others, dropping or replacing the low-rank constraint \eqref{eq:low-rank-constraint} by a nuclear norm constraint \cite{fazel2002matrix,recht2010guaranteed,candes2009exact,CanTao10,davenport2016overview,chen2018harnessing}, and solving the convexified problem in the full matrix space (i.e.~the space of $\bm{M}$).  While such convex relaxation schemes exhibit intriguing performance guarantees in several aspects (e.g. near-minimal sample complexity, stability against noise), its computational cost often scales at least cubically in the size of the matrix $\bm{M}$, which often far exceeds the time taken to read the data. In addition, the prohibitive storage complexity associated with the convex relaxation approach presents another hurdle that limits its applicability to large-scale problems.

This overview article focuses on provable low-rank matrix estimation based on nonconvex optimization. This approach operates over the  parsimonious factorized representation \eqref{eq:low-rank-constraint} and optimizes the nonconvex loss directly over the low-rank factors $\bm{L}$ and $\bm{R}$. The advantage is clear: adopting economical representation of the low-rank matrix results in low storage requirements, affordable per-iteration computational cost, amenability to parallelization, and scalability to large problem size, when performing iterative optimization methods like gradient descent.  
However, despite its wide use and remarkable performance in practice \cite{koren2009matrix,chen2004recovering}, the foundational understanding of generic nonconvex optimization is far from mature.  It is often unclear whether an  optimization algorithm can converge to the desired global solution and, if so, how fast this can be accomplished. For many nonconvex problems, theoretical underpinnings had been lacking until very recently.



\subsection{Nonconvex optimization meets statistical models}

Fortunately, despite  general intractability, some important nonconvex problems may not be as hard as they seem.  For instance, for several low-rank  matrix factorization problems,  it has been shown  that:  under proper statistical models,  simple first-order methods are guaranteed to succeed in a small number of iterations, achieving low computational and sample complexities simultaneously 
(e.g.~\cite{keshavan2010matrix,jain2013low,candes2015phase,sun2016guaranteed,chen2015solving,chen2015fast,li2016deconvolution,tu2015low,zhang2016provable,netrapalli2014non,ma2017implicit}).
The key to enabling guaranteed and scalable computation is to concentrate on problems arising from specific statistical signal estimation tasks, which may exhibit benign structures amenable to computation and help rule out undesired ``hard'' instances by focusing on the average-case performance.  Two messages deserve particular attention when we examine the geometry of associated nonconvex loss functions: 

\begin{itemize}
	\itemsep0.5em

	\item {\bf Basin of attraction.} For several statistical problems of this kind,  there often exists a  reasonably large basin of attraction around the global solution, within which an iterative  method like gradient descent is guaranteed to be successful and converge fast. Such a basin might exist even when the sample complexity is quite close to the information-theoretic limit  \cite{keshavan2010matrix,jain2013low,candes2015phase,sun2016guaranteed,chen2015solving,chen2015fast,li2016deconvolution}.

	\item {\bf Benign global landscape.} Several problems  provably enjoy benign optimization landscape when the sample size is sufficiently
large, in the sense that there is no spurious local minima, i.e. all local minima are also global minima, and that the only undesired stationary points are strict saddle points \cite{sun2016geometric,sun2015complete,ge2016matrix,bhojanapalli2016global,li2018non}.

\end{itemize}

These important messages inspire a recent flurry of activities in the design of two contrasting algorithmic approaches: 

\begin{itemize}
	\itemsep0.5em

	\item {\bf Two-stage approach.} Motivated by the existence of a basin of attraction, a large number of works follow a two-stage paradigm:  (1) {\em initialization}, which locates an initial guess within the basin; (2) {\em iterative refinement}, which successively refines the estimate without leaving the basin. This approach often leads to very efficient algorithms that run in  time proportional to that taken to read the data.  

	\item {\bf Saddle-point escaping algorithms.} In the absence of spurious local minima, a key challenge boils down to how to efficiently escape undesired saddle points and find a local minimum, which is the focus of this approach. This approach does not rely on carefully-designed initialization. 

\end{itemize}

The research along these lines  highlights the synergy between statistics and optimization in data science and machine learning. The algorithmic choice often  needs to properly exploit the underlying statistical models in order to be truly efficient, in terms of both statistical accuracy and computational efficiency.  


%

\subsection{This paper}

Understanding the effectiveness of nonconvex optimization  is currently among the most active areas of research in machine learning, information and signal processing, optimization and statistics. Many exciting new developments in the last several years have significantly advanced our understanding of this approach for various statistical problems.  This article aims to provide a thorough, but by no means exhaustive, technical overview of important recent results in this exciting area, targeting the broader machine learning, signal processing, statistics, and optimization communities.  

The rest of this paper is organized as follows. Section~\ref{sec:preliminaries} reviews some preliminary facts on optimization that are instrumental to understanding the materials in this paper. Section~\ref{sec:noncvx_eg} uses a toy (but non-trivial) example (i.e.~rank-1 matrix factorization) to illustrate why it is possible to solve a nonconvex problem to global optimality, through both local and global lenses. Section~\ref{sec:examples} introduces a few canonical statistical estimation problems that will be visited multiple times in the sequel. Section~\ref{sec:gd} and Section~\ref{sec:GD_variants} review gradient descent and its many variants as a local refinement procedure, followed by a discussion of other methods in Section~\ref{sec:other_algorithms}. Section~\ref{sec:spectral-initialization} discusses the spectral method, which is commonly used to provide an initialization within the basin of attraction. Section~\ref{sec:global} provides a global landscape analysis, in conjunction with algorithms that work without the need of careful initialization. We conclude the paper in Section~\ref{sec:conclusions} with some discussions and remarks. Furthermore, a short note is provided at the end of several sections to cover some historical remarks and provide further pointers.


\subsection{Notations}

It is convenient to introduce a few notations that will be used throughout. We use boldfaced symbols to represent vectors and matrices.   For any vector $\bm{v}$, we let $\|\bm{v}\|_2$, $\|\bm{v}\|_1$ and $\|\bm{v}\|_0$ denote its $\ell_2$, $\ell_1$, and $\ell_0$ norm, respectively.  For any matrix $\bm{M}$, let $\|\bm{M}\|$, $\|\bm{M}\|_{\mathrm{F}}$, $\|\bm{M}\|_{*}$, $\|\bm{M}\|_{2,\infty}$, and $\|\bm{M}\|_{\infty}$  stand for the spectral norm (i.e.~the largest singular value), the Frobenius norm, the nuclear norm (i.e.~the sum of  singular values), the $\ell_2/\ell_{\infty}$ norm (i.e.~the largest $\ell_2$ norm of the rows), and the entrywise $\ell_{\infty}$ norm (the largest magnitude of all entries), respectively. 
We denote by $\sigma_{j}(\bm{M})$ (resp.~$\lambda_{j}(\bm{M})$) the $j$th largest singular value (resp.~eigenvalue)  of $\bm{M}$, and let $\bm{M}_{j,\cdot}$ (resp.~$\bm{M}_{\cdot,j}$) represent  its $j$th row (resp.~column). The condition number of $\bm{M}$ is denoted by $\kappa(\bm{M})$. In addition,  $\bm{M}^{\top}$, $\bm{M}^{\conj}$  and $\overline{\bm{M}}$  indicate the transpose, the conjugate transpose, and the entrywise conjugate of $\bm{M}$, respectively. For two matrices $\bm{A}$ and $\bm{B}$ of the same size, we define their inner product as $\langle {\bm{A}}, \bm{B} \rangle = \mathsf{Tr}(\bm{A}^{\top}\bm{B})$, where $\mathsf{Tr}(\cdot)$ stands for the trace. The matrix $\bm{I}_{n}$ denotes the $n\times n$ identity matrix, and $\bm{e}_i$ denotes the $i$th column of $\bm{I}_{n}$. For a linear operator $\mathcal{A}$, denote by $\mathcal{A}^*$  its adjoint operator. For example, if $\mathcal{A}$ maps $\bm{X}\in \mathbb{R}^{n\times n}$ to $[\langle \bm{A}_i, \bm{X}\rangle ]_{1\leq i\leq m}$, then 
$\mathcal{A}^*(\bm{y})= \sum_{i=1}^m y_i \bm{A}_i$. We also let $\mathsf{vec}(\bm{Z})$  denote the vectorization of a matrix $\bm{Z}$. The indicator function $\charfn_{A}$ equals $1$ when the event $A$ holds true, and $0$ otherwise. Further, the notation $\mathcal{O}^{r\times r}$ denotes  the set of $r\times r$ orthonormal matrices. Let $\bm{A}$ and $\bm{B}$ be two square matrices. We write $\bm{A} \succ \bm{B}$ (resp.~$\bm{A} \succeq \bm{B}$) if their difference $\bm{A} - \bm{B}$ is a positive definite (resp.~positive semidefinite) matrix.


Additionally, the standard notation $f(n)=O\left(g(n)\right)$ or
$f(n)\lesssim g(n)$ means that there exists a constant $c>0$ such
that $\left|f(n)\right|\leq c|g(n)|$,  $f(n)\gtrsim g(n)$ means that there exists a constant $c>0$ such
that $|f(n)|\geq c\left|g(n)\right|$, 
and $f(n)\asymp g(n)$ means that there exist constants $c_{1},c_{2}>0$
such that $c_{1}|g(n)|\leq|f(n)|\leq c_{2}|g(n)|$.  

\section{Preliminaries in optimization theory}
\label{sec:preliminaries}

We start by reviewing some basic concepts  and preliminary facts in optimization theory. For simplicity of presentation, this section focuses on an unconstrained problem
\begin{equation}
\label{eq:f_min}
	\text{minimize}_{\bx\in \mathbb{R}^n} \quad f(\bx). 
\end{equation}
The optimal solution, if it exists, is denoted by 
\begin{equation}
	\label{eq:opt-x}
	\bm{x}_{\mathsf{opt}}=\argmin_{\bx\in \mathbb{R}^n} f(\bx).
\end{equation}
When $f(\bx)$ is strictly convex,\footnote{Recall that  $f(\bx)$ is said to be strictly convex if and only if for any $\lambda\in (0,1)$ and $\bx,\by\in\mbox{dom}(f)$, one has
$f(\lambda \bx +(1-\lambda)\by ) < \lambda f(\bx) + (1-\lambda)f(\by)$ unless $\bm{x}=\bm{y}$.
%
} $\bm{x}_{\mathsf{opt}}$ is unique. But it may be non-unique when $f(\cdot)$ is nonconvex. 

\subsection{Gradient descent for locally strongly convex functions}

To solve \eqref{eq:f_min}, arguably the simplest method is (vanilla) gradient descent (GD), which follows the update rule
\begin{equation}
	\label{eq:GD-general}
	\bm{x}_{t+1} = \bm{x}_t - \eta_t \nabla f(\bm{x}_t), \qquad t=0,1,\cdots
\end{equation}
Here, $\eta_t$ is the step size or learning rate at the $t$th iteration, and $\bm{x}_0$ is the initial point. 
This method and its variants are  widely used in practice, partly due to their simplicity and scalability to large-scale problems.

A central question is when GD converges fast to the global minimum $\bm{x}_{\mathsf{opt}}$. As is well-known in the optimization literature, GD is provably convergent at a linear rate when $f(\cdot)$ is (locally) strongly convex and smooth. Here, an algorithm is said to {\em converge linearly} if the error $\|\bm{x}_t-\bm{x}_{\mathsf{opt}}\|_2$ converges to 0 as a geometric series. To formally state this result, we define two concepts that commonly arise in the optimization literature. 
\begin{definition}[$\mathsf{Strong~convexity}$]A twice continuously differentiable
function $f:\mathbb{R}^{n}\mapsto\mathbb{R}$ is said to be $\alpha$-strongly
	convex in a set $\mathcal{B}$ if
\begin{equation}
	\label{eq:def-strong-cvx}
	\nabla^{2}f(\bm{x})\succeq\alpha\bm{I}_{n}, \qquad \forall \bm{x}\in\mathcal{B}.
\end{equation}
\end{definition}
\begin{definition}[$\mathsf{Smoothness}$]A twice continuously
differentiable function $f:\mathbb{R}^{n}\mapsto\mathbb{R}$ is said
to be $\beta$-smooth in a set $\mathcal{B}$ if 
\begin{equation}
	\label{eq:def-smoothness}
	\left\Vert \nabla^{2}f(\bm{x})\right\Vert \leq\beta,\qquad \forall \bm{x}\in\mathcal{B}.
\end{equation}
\end{definition}

With these definitions in place, we have the following standard result (e.g.~\cite{bubeck2015convex}).
\begin{lemma}\label{lem:GD-convergence}
	Suppose that $f$ is $\alpha$-strongly convex and $\beta$-smooth within a local ball $\mathcal{B}_{\zeta}(\bm{x}_{\mathsf{opt}}):=\left\{\bm{x}: \|\bm{x}-\bm{x}_{\mathsf{opt}}\|_2\leq \zeta\right\}$, and that  $\bm{x}_0 \in \mathcal{B}_{\zeta}(\bm{x}_{\mathsf{opt}})$. If $\eta_t\equiv {1}/{\beta}$, then GD obeys 
	\begin{equation*}
			\big\|\bm{x}_{t}-\bm{x}_{\mathsf{opt}} \big\|_{2}\leq\left(1-\frac{\alpha}{\beta}\right)^{t}\big\|\bm{x}_{0}-\bm{x}_{\mathsf{opt}}\big\|_{2},\quad t=0,1,\cdots
	\end{equation*}
\end{lemma}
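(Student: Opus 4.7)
The plan is to establish the claimed geometric decay by a direct one-step contraction argument, combined with an induction that simultaneously guarantees the iterates never leave the local ball $\mathcal{B}_{\zeta}(\bm{x}_{\mathsf{opt}})$ where the curvature bounds apply. Concretely, I would prove by induction on $t$ that (i) $\bm{x}_t \in \mathcal{B}_{\zeta}(\bm{x}_{\mathsf{opt}})$ and (ii) $\|\bm{x}_t - \bm{x}_{\mathsf{opt}}\|_2 \leq (1-\alpha/\beta)^t \|\bm{x}_0 - \bm{x}_{\mathsf{opt}}\|_2$. The base case $t=0$ is given by hypothesis. For the inductive step, both conclusions will fall out of a single contraction estimate on $\bm{x}_{t+1}-\bm{x}_{\mathsf{opt}}$.

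The core computation is the standard "integrated Hessian" trick. Since $\bm{x}_{\mathsf{opt}}$ is a minimizer of the smooth function $f$, we have $\nabla f(\bm{x}_{\mathsf{opt}}) = \bm{0}$, so by the fundamental theorem of calculus applied to $\tau \mapsto \nabla f\bigl(\bm{x}_{\mathsf{opt}} + \tau(\bm{x}_t - \bm{x}_{\mathsf{opt}})\bigr)$,
\begin{equation*}
\nabla f(\bm{x}_t) \;=\; \nabla f(\bm{x}_t) - \nabla f(\bm{x}_{\mathsf{opt}}) \;=\; \bm{H}_t \, (\bm{x}_t - \bm{x}_{\mathsf{opt}}),
\qquad \bm{H}_t \;:=\; \int_{0}^{1} \nabla^{2} f\bigl(\bm{x}_{\mathsf{opt}} + \tau (\bm{x}_t - \bm{x}_{\mathsf{opt}})\bigr)\, d\tau.
\end{equation*}
Plugging into the GD update with $\eta_t = 1/\beta$ gives the clean identity
\begin{equation*}
\bm{x}_{t+1} - \bm{x}_{\mathsf{opt}} \;=\; \bigl(\bm{I}_n - \tfrac{1}{\beta} \bm{H}_t\bigr)\bigl(\bm{x}_t - \bm{x}_{\mathsf{opt}}\bigr).
\end{equation*}

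Since the entire segment from $\bm{x}_{\mathsf{opt}}$ to $\bm{x}_t$ lies in the convex ball $\mathcal{B}_{\zeta}(\bm{x}_{\mathsf{opt}})$ (by the induction hypothesis (i)), the strong convexity and smoothness assumptions yield $\alpha \bm{I}_n \preceq \nabla^{2} f(\cdot) \preceq \beta \bm{I}_n$ along the segment, hence $\alpha \bm{I}_n \preceq \bm{H}_t \preceq \beta \bm{I}_n$. Consequently $\bm{0} \preceq \bm{I}_n - \tfrac{1}{\beta} \bm{H}_t \preceq (1 - \alpha/\beta)\bm{I}_n$, which bounds the spectral norm of the linear map by $1-\alpha/\beta$. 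Taking Euclidean norms gives
\begin{equation*}
\bigl\|\bm{x}_{t+1} - \bm{x}_{\mathsf{opt}}\bigr\|_2 \;\leq\; \Bigl(1-\tfrac{\alpha}{\beta}\Bigr) \bigl\|\bm{x}_t - \bm{x}_{\mathsf{opt}}\bigr\|_2,
\end{equation*}
which closes the induction on (ii); part (i) for $t+1$ follows since contraction keeps the distance below $\zeta$. Iterating the per-step bound $t$ times delivers the stated geometric rate.

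The only subtlety, and the part that needs a little care in writing, is ensuring that the Hessian bounds are legitimately applicable at every step: this is exactly why (i) must be carried along in the induction rather than treated as automatic. Because the bound on $\alpha/\beta$ lies in $(0,1]$ (smoothness plus strong convexity forces $\alpha \leq \beta$), the contraction factor is strictly less than $1$, and the ball is invariant under the iteration — so no obstacle actually materializes, and the proof is essentially a one-line Hessian-integration argument wrapped in an induction.
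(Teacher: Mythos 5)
Your proof is correct and uses exactly the same approach as the paper: the integrated-Hessian identity $\nabla f(\bm{x}_t)=\bm{H}_t(\bm{x}_t-\bm{x}_{\mathsf{opt}})$ via the fundamental theorem of calculus, followed by a spectral bound on $\bm{I}_n-\tfrac{1}{\beta}\bm{H}_t$ and an induction to keep the iterates inside $\mathcal{B}_{\zeta}(\bm{x}_{\mathsf{opt}})$. The only cosmetic difference is that you phrase the ball-invariance explicitly as part of the induction hypothesis, whereas the paper notes it as a byproduct of the one-step contraction before invoking recursion.
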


\begin{proof}[Proof of Lemma~\ref{lem:GD-convergence}]
	The optimality of  $\bm{x}_{\mathsf{opt}}$ indicates that $\nabla f(\bm{x}_{\mathsf{opt}})=\bm{0}$, which allows us to rewrite the GD update rule as
\begin{align*}
\bm{x}_{t+1}-\bm{x}_{\mathsf{opt}} & =\bm{x}_{t}-\eta_t \nabla f\left(\bm{x}_{t}\right)-\left[\bm{x}_{\mathsf{opt}}-\eta_t \nabla f\left(\bm{x}_{\mathsf{opt}}\right)\right] \nonumber\\
	& =\left[\bm{I}_{n}-\eta_t \int_{0}^{1}\nabla^{2}f\left(\bm{x}\left(\tau\right)\right)\mathrm{d}\tau\right] \left(\bm{x}_{t}-\bm{x}_{\mathsf{opt}}\right), \label{gd_iterates}
\end{align*}
where  $\bm{x}\left(\tau\right):=\bm{x}_{\mathsf{opt}}+\tau(\bm{x}_{t}-\bm{x}_{\mathsf{opt}})$. Here, the second line arises from the fundamental theorem of calculus \cite[Chapter XIII, Theorem 4.2]{lang1993real}. If   $\bm{x}_t\in\mathcal{B}_{\zeta}(\bm{x}_{\mathsf{opt}})$, then it is self-evident that $\bm{x}(\tau)\in\mathcal{B}_{\zeta}(\bm{x}_{\mathsf{opt}})$, which combined with the assumption of Lemma \ref{lem:GD-convergence} gives
\[
\alpha\bm{I}_{n}\preceq\nabla^{2}f\left(\bm{x}(\tau)\right)\preceq\beta\bm{I}_{n}, \qquad 0\leq \tau \leq 1.
\]
Therefore, as long as $\eta_t\leq 1/\beta$ (and hence $\|\eta_t \nabla^{2}f(\bm{x}(\tau))\|\leq1$), we have
\[
\bm{0}\preceq\bm{I}_{n}-\eta_t \int_{0}^{1}\nabla^{2}f\left(\bm{x}\left(\tau\right)\right)\mathrm{d}\tau\preceq\left(1-\alpha \eta_t \right)\cdot\bm{I}_{n}.
\]
This together with the sub-multiplicativity of $\|\cdot\|$ yields 
\begin{align*}
\left\Vert \bm{x}_{t+1}-\bm{x}_{\mathsf{opt}}\right\Vert _{2} & \leq\left\Vert \bm{I}_{n}-\eta_t\int_{0}^{1}\nabla^{2}f\left(\bm{x}\left(\tau\right)\right)\mathrm{d}\tau\right\Vert \left\Vert \bm{x}_{t}-\bm{x}_{\mathsf{opt}}\right\Vert _{2}   \\
& \leq\left(1-\alpha\eta_t \right)\left\Vert \bm{x}_{t}-\bm{x}_{\mathsf{opt}}\right\Vert _{2}.
\end{align*}

By setting $\eta_t = 1/\beta$, we arrive at the desired $\ell_{2}$ error
contraction, namely,
\begin{equation}
	\big\|\bm{x}_{t+1}-\bm{x}_{\mathsf{opt}}\|_{2}\leq\left(1-\frac{\alpha}{\beta}\right)\big\|\bm{x}_{t}-\bm{x}_{\mathsf{opt}}\big\|_{2}. \label{eq:error-contraction-GD}  
\end{equation}
A byproduct is: if $\bm{x}_{t}\in \mathcal{B}_{\zeta}(\bm{x}_{\mathsf{opt}})$, then the next iterate $\bm{x}_{t+1}$ also falls in $\mathcal{B}_{\zeta}(\bm{x}_{\mathsf{opt}})$. 
	Consequently,  applying the above argument recursively and recalling the assumption $\bm{x}_0\in\mathcal{B}_{\zeta}(\bm{x}_{\mathsf{opt}})$, we see that all GD iterates remain within $\mathcal{B}_{\zeta}(\bm{x}_{\mathsf{opt}})$. Hence, (\ref{eq:error-contraction-GD}) holds true for all $t$. 
	This immediately concludes the proof.  
\end{proof}
	
This result essentially implies that: to yield $\varepsilon$-accuracy (in a relative sense), i.e.~$ \|\bm{x}_{t}-\bm{x}_{\mathsf{opt}}\|_{2} \leq \varepsilon  \|\bm{x}_{\mathsf{opt}}\|_{2}$,
 the number of iterations required  for GD --- termed the {\em iteration complexity} --- is at most
\[
	O\left(\frac{\beta}{\alpha}\log\frac{\|\bm{x}_0 - \bm{x}_{\mathsf{opt}}\|_2 }{\varepsilon \|\bm{x}_{\mathsf{opt}}\|_2 }\right),
\]
if we initialize GD properly such that $\bx_0$ lies in the local region $\mathcal{B}_{\zeta}(\bm{x}_{\mathsf{opt}})$.
In words, the iteration complexity scales
linearly with the {\em condition number} --- the ratio $\beta/\alpha$
of smoothness to strong convexity parameters. As we shall see, for multiple problems considered herein,  the radius $\zeta$ of this locally strongly convex and smooth ball $\mathcal{B}_{\zeta}(\bm{x}_{\mathsf{opt}})$ can be reasonably large (e.g.~on the same order of $\|\bm{x}_{\mathsf{opt}}\|_2)$.


\begin{figure}
	\centering
	\includegraphics[width=0.3\textwidth]{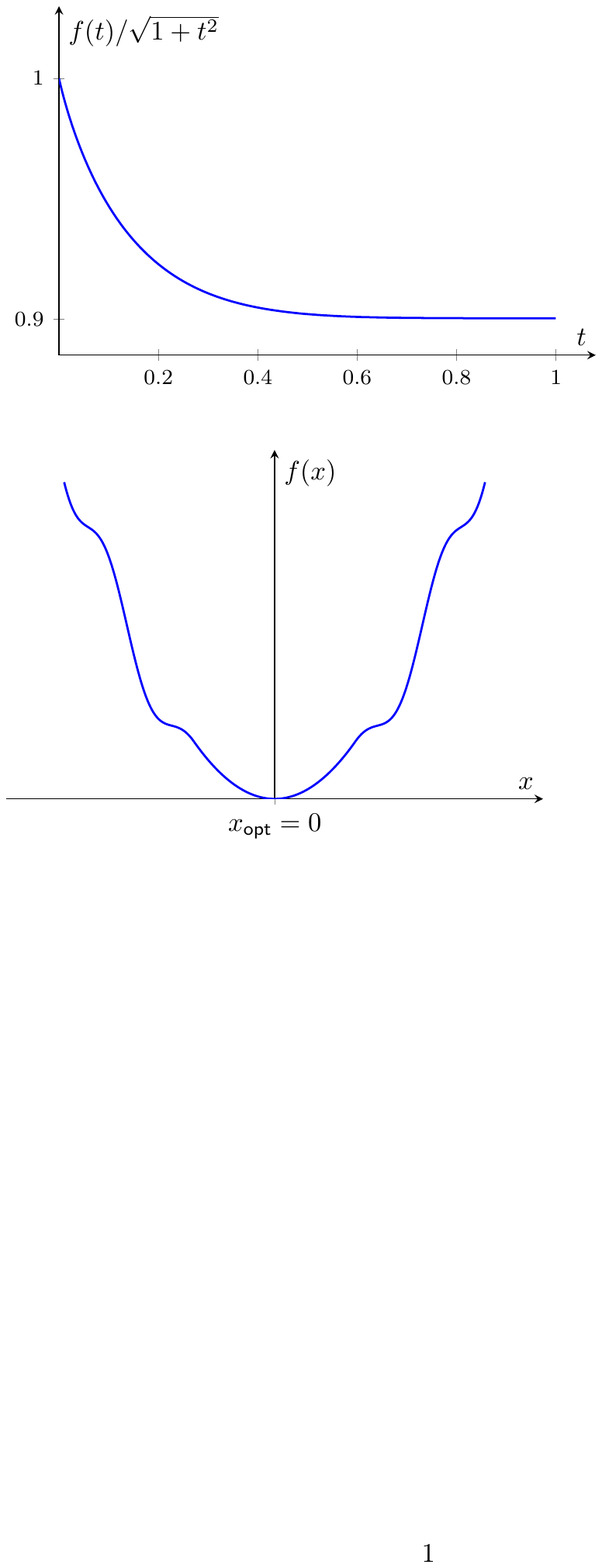}
	\caption{An example of $f(\cdot)$ taken from \cite{chen2015solving}. Here, $f(x)=x^{2}$ if $x\in[-6,6]$ and $f(x)=x^{2}+1.5|x|(\cos(|x|-6)-1)$
if $|x|>6$. It satisfies $\mathsf{RC}(\mu,\lambda,\zeta)$ with some $\mu,\lambda>0$ and $\zeta=\infty$, but is clearly nonconvex. \label{eq:example-RC}}
\end{figure}

\subsection{Convergence  under regularity conditions}

Another condition that has been extensively employed in the literature is the {\em Regularity Condition (RC)} (see e.g.~\cite{candes2015phase,chen2015solving}), which accommodates algorithms beyond vanilla GD, as well as is applicable to possibly nonsmooth functions. Specifically, consider the iterative algorithm
\begin{equation}
	\label{eq:iterative}
	\bm{x}_{t+1} = \bm{x}_t - \eta_t \,\bm{g}(\bm{x}_t)
\end{equation}
for some general mapping $\bm{g}(\cdot):\mathbb{R}^n\mapsto \mathbb{R}^n$. In vanilla GD, $\bm{g}(\bx)=\nabla f(\bx)$, but $\bm{g}(\cdot)$ can also incorporate several variants of GD; see Section~\ref{sec:GD_variants}. The regularity condition is defined as follows. 
\begin{definition}[$\mathsf{Regularity~condition}$]
\label{def:reg-condition}
 $\bm{g}(\cdot)$ is said to obey the regularity condition $\mathsf{RC}( \mu,\lambda, \zeta )$ for some $ \mu,\lambda, \zeta >0 $   if
\begin{equation}\label{eq:RC}
2 \langle\, \bm{g}(\bm{x}), \bm{x}-   \bm{x}_{\mathsf{opt}} \rangle\geq {\mu}\| \bm{g}(\bm{x})\|^2_2 + \lambda \left\| \bm{x}-   \bm{x}_{\mathsf{opt}} \right\|_2^2
\end{equation}
for all $\bm{x}\in\mathcal{B}_{\zeta}(\bm{x}_{\mathsf{opt}})$.
\end{definition}
This condition basically implies that at any feasible point $\bm{x}$, the associated negative search direction $ \bm{g}(\bm{x})$ is positively correlated with the  error $\bm{x} - \bm{x}_{\mathsf{opt}}$, and hence the update rule   \eqref{eq:iterative} --- in conjunction with a sufficiently small step size --- drags the current point closer to the global solution. It follows from the Cauchy-Schwarz inequality that one must have $\mu\lambda\leq 1$. 


It is worth noting that this condition does not require  $\bm{g}(\cdot)$ to be differentiable. Also, 
when $\bm{g}(\cdot)=\nabla f(\cdot)$, it does not require $f(\cdot)$ to be  convex within $\mathcal{B}_{\zeta}(\bm{x}_{\mathsf{opt}})$; see Fig.~\ref{eq:example-RC} for an example. Instead, the regularity condition can be viewed as a combination of  smoothness and  ``{\em one-point strong convexity}'' (as the condition is stated w.r.t.~a single point $\bx_{\mathsf{opt}}$) defined as follows 
\begin{equation} 
	\label{eq:one_point_convexity}
	 f(\bm{x}_{\mathsf{opt}}) - f(\bm{x}) \geq \langle \nabla f(\bm{x}), \bm{x}_{\mathsf{opt} }  - \bm{x}  \rangle + \frac{\alpha}{2} \| \bm{x} - \bm{x}_{\mathsf{opt}} \|_2^2
\end{equation}
for all $\bm{x}\in \mathcal{B}_{\zeta}(\bm{x}_{\mathsf{opt}})$. To see this, observe that
\begin{align}
f(\bm{x}_{\mathsf{opt}})- f(\bm{x}) & \leq f \Big(\bm{x} - \frac{1}{\beta}\nabla f(\bm{x}) \Big)- f(\bm{x})  \nonumber \\
& \leq \Big\langle \nabla f(\bm{x}), - \frac{1}{\beta}\nabla f(\bm{x}) \Big\rangle + \frac{\beta}{2} \Big\| \frac{1}{\beta}\nabla f(\bm{x}) \Big\|_2^2 \nonumber \\
&= -\frac{1}{2\beta} \left\| \nabla f(\bm{x}) \right\|_2^2 , \label{eq:smoothness_consequence}
\end{align}
where the second line follows from an equivalent definition of the smoothness condition \cite[Theorem 5.8]{beck2017first}.
Combining \eqref{eq:one_point_convexity} and \eqref{eq:smoothness_consequence} arrives at the regularity condition with $\mu= 1/\beta$ and $\lambda = \alpha$.

Under the regularity condition, the iterative algorithm  \eqref{eq:iterative} converges linearly with suitable initialization.
\begin{lemma}
	\label{lem:convergence-RC}
	Under $\mathsf{RC}( \mu,\lambda, \zeta )$, the iterative algorithm \eqref{eq:iterative} with $\eta_t\equiv \mu$ and $\bm{x}_0\in \mathcal{B}_{\zeta}(\bm{x}_{\mathsf{opt}})$ obeys
	$$\big\|\bm{x}_{t}-\bm{x}_{\mathsf{opt}}\|^2_{2}\leq\left(1- \mu\lambda \right)^{t}\big\|\bm{x}_{0}-\bm{x}_{\mathsf{opt}}\big\|^2_{2},\quad t=0,1,\cdots $$
\end{lemma}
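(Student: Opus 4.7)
The plan is to mirror the argument of Lemma~\ref{lem:GD-convergence}, but with the one-point strong convexity / smoothness bookkeeping replaced by a single direct application of the regularity condition $\mathsf{RC}(\mu,\lambda,\zeta)$. The key observation is that the contraction is a pure one-step inequality: once I control $\|\bm{x}_{t+1}-\bm{x}_{\mathsf{opt}}\|_2^2$ in terms of $\|\bm{x}_t-\bm{x}_{\mathsf{opt}}\|_2^2$, the result will follow by induction, provided I can also verify that every iterate stays inside $\mathcal{B}_\zeta(\bm{x}_{\mathsf{opt}})$ so that \eqref{eq:RC} remains applicable at every step.

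First I would expand the squared error of the next iterate using the update rule \eqref{eq:iterative} with $\eta_t\equiv \mu$:
\begin{align*}
\|\bm{x}_{t+1}-\bm{x}_{\mathsf{opt}}\|_2^2
&= \|\bm{x}_t-\bm{x}_{\mathsf{opt}} - \mu\, \bm{g}(\bm{x}_t)\|_2^2 \\
&= \|\bm{x}_t-\bm{x}_{\mathsf{opt}}\|_2^2 - 2\mu \langle \bm{g}(\bm{x}_t),\bm{x}_t-\bm{x}_{\mathsf{opt}}\rangle + \mu^2 \|\bm{g}(\bm{x}_t)\|_2^2.
\end{align*}
The whole point of the regularity condition is that the middle (linear) term absorbs both of the other two terms in a controlled way. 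Assuming inductively that $\bm{x}_t\in \mathcal{B}_\zeta(\bm{x}_{\mathsf{opt}})$, \eqref{eq:RC} applied to $\bm{x}=\bm{x}_t$ gives
\[
-2\mu\langle \bm{g}(\bm{x}_t),\bm{x}_t-\bm{x}_{\mathsf{opt}}\rangle \leq -\mu^2 \|\bm{g}(\bm{x}_t)\|_2^2 - \mu\lambda \|\bm{x}_t-\bm{x}_{\mathsf{opt}}\|_2^2,
\]
so the two $\|\bm{g}(\bm{x}_t)\|_2^2$ contributions exactly cancel and one obtains the clean contraction
\[
\|\bm{x}_{t+1}-\bm{x}_{\mathsf{opt}}\|_2^2 \leq (1-\mu\lambda)\,\|\bm{x}_t-\bm{x}_{\mathsf{opt}}\|_2^2.
\]

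Next I would handle the ``staying in the ball'' issue, which is the only non-trivial piece (and the place where a careless proof could go wrong, since \eqref{eq:RC} is only postulated inside $\mathcal{B}_\zeta(\bm{x}_{\mathsf{opt}})$). Because $\mu\lambda\in(0,1]$ by Cauchy--Schwarz, the one-step bound yields $\|\bm{x}_{t+1}-\bm{x}_{\mathsf{opt}}\|_2 \leq \|\bm{x}_t-\bm{x}_{\mathsf{opt}}\|_2$, so starting from $\bm{x}_0\in\mathcal{B}_\zeta(\bm{x}_{\mathsf{opt}})$ we get $\bm{x}_{t+1}\in\mathcal{B}_\zeta(\bm{x}_{\mathsf{opt}})$ as well. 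A simple induction on $t$ therefore justifies applying \eqref{eq:RC} at every iteration, and iterating the contraction $t$ times gives the claimed bound
\[
\|\bm{x}_t-\bm{x}_{\mathsf{opt}}\|_2^2 \leq (1-\mu\lambda)^t \|\bm{x}_0-\bm{x}_{\mathsf{opt}}\|_2^2.
\]

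There is essentially no serious obstacle; the only subtlety is the implicit invariance of $\mathcal{B}_\zeta(\bm{x}_{\mathsf{opt}})$ under the iteration, which must be established in tandem with the contraction rather than as an afterthought. This is exactly analogous to the ``byproduct'' observation at the end of the proof of Lemma~\ref{lem:GD-convergence}.
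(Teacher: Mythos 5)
Your proof is correct and follows essentially the same argument as the paper's: expand the squared error of the next iterate, invoke $\mathsf{RC}(\mu,\lambda,\zeta)$ at $\bm{x}_t$ to cancel the $\|\bm{g}(\bm{x}_t)\|_2^2$ terms with $\eta_t=\mu$, and induct on the invariance of $\mathcal{B}_\zeta(\bm{x}_{\mathsf{opt}})$. The only cosmetic difference is that the paper keeps $\eta_t\le\mu$ general before specializing, whereas you plug in $\eta_t=\mu$ directly.
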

\begin{proof}[Proof of Lemma \ref{lem:convergence-RC}]
Assuming that $\bm{x}_t\in \mathcal{B}_{\zeta}(\bm{x}_{\mathsf{opt}})$, we can obtain
\begin{flalign*}
\| \bm{x}_{t+1}- \bx_{\mathsf{opt}} \|_2^2  
&= \left\| \bx_t -\eta_t \bm{g}(\bx_t)-\bx_{\mathsf{opt}} \right\|_2^2\\
&=\| \bx_t -\bx_{\mathsf{opt}} \|_2^2+\eta_t^2 \| \bm{g} (\bx_t)\|_2^2-2\eta_t \left\langle\bx_t -\bx_{\mathsf{opt}}, \, \bm{g} (\bx_t )\right\rangle\\
&\le \|\bx_t -\bx_{\mathsf{opt}} \|_2^2+\eta_t^2 \| \bm{g}(\bx_t )\|_2^2  -\eta_t\big(\mu\| \bm{g}(\bx_t  )\|_2^2+ \lambda \left\|\bx_t  -\bx_{\mathsf{opt}} \right\|_2^2\big)\\
&=(1-\eta_t\lambda) \| \bx_t  -\bx_{\mathsf{opt}} \|_2^2 + \eta_t \left( \eta_t - \mu\right) \| \bm{g}(\bx_t  )\|_2^2 \\
&\leq (1-\eta_t\lambda)  \| \bx_t  -\bx_{\mathsf{opt}} \|_2^2 ,
\end{flalign*}
where the first inequality comes from $\mathsf{RC}(\mu,\lambda,\zeta)$, and the last line arises if $0\leq \eta_t \leq \mu$. By setting $\eta_t=\mu$, we arrive at
\begin{equation}
\big\|\bm{x}_{t+1}-\bm{x}_{\mathsf{opt}}\|^2_{2}\leq\left(1- \mu\lambda \right)\big\|\bm{x}_{t}-\bm{x}_{\mathsf{opt}}\big\|_{2}^2,
\end{equation}
which also shows that $\bm{x}_{t+1}\in \mathcal{B}_{\zeta}(\bm{x}_{\mathsf{opt}})$. The claim then follows by induction under the assumption that $\bm{x}_{0}\in \mathcal{B}_{\zeta}(\bm{x}_{\mathsf{opt}})$. 
\end{proof}

In view of Lemma \ref{lem:convergence-RC}, the iteration complexity to reach $\varepsilon$-accuracy (i.e.~$\|\bm{x}_t-\bm{x}_{\mathsf{opt}}\|_2\leq \varepsilon \|\bm{x}_{\mathsf{opt}}\|_2$) is at most
\[
	O\left(\frac{1}{\mu\lambda}\log\frac{ \|\bm{x}_0-\bm{x}_{\mathsf{opt}}\|_2  }{\varepsilon \|\bm{x}_{\mathsf{opt}}\|_2 }\right),
\]
as long as a suitable initialization is provided. 

\subsection{Critical points}

An iterative algorithm like gradient descent often converges to one of its fixed points \cite{nesterov2013introductory}. For gradient descent, the associated fixed points are (first-order) {\em critical points} or {\em stationary points} of the loss function, defined as follows.
\begin{definition}[$\mathsf{First\text{-}order~critical~points}$]
	\label{def:1st-criticals}
	A first-order critical point (stationary point) $\bx$ of $f(\cdot)$ is any point that satisfies
	$$ \nabla f(\bx) = \bm{0} .$$ 
\end{definition} 
\noindent Moreover, we call a point $\bx$ an {\em $\varepsilon$-first-order critical point}, for some $\varepsilon>0$, if it satisfies $\|\nabla f(\bx)\|_2\leq \varepsilon$.

A critical point can be a local minimum, a local maximum, or a saddle point of $f(\cdot)$, depending on the curvatures at\,/\,surrounding the point. Specifically, denote by $\nabla^2 f(\bx)$  the Hessian matrix at $\bx$, and let $\lambda_{\min}(\nabla^2 f(\bx))$ be its minimum eigenvalue. Then for any first-order critical point $\bx$:
\begin{itemize}
	\itemsep0.1em
	\item if $ \nabla^2 f(\bx) \prec \bm{0}$, then $\bx$ is a local maximum;
	\item if $ \nabla^2 f(\bx)) \succ \bm{0}$, then $\bx$ is a local minimum;
\item $\lambda_{\min}(\nabla^2 f(\bx))=0$, then $\bx$ is either a local minimum or a degenerate saddle point;
\item $\lambda_{\min}(\nabla^2 f(\bx))<0$, then $\bx$ is a {\em strict} saddle point.
\end{itemize}

Another useful concept  is second-order critical points, defined as follows.
\begin{definition}[$\mathsf{Second\text{-}order~critical~points}$]
	\label{def:2nd-criticals}
	A point $\bm{x}$ is said to be a second-order critical point (stationary point) if  $\nabla f(\bx)= \bm{0}$ and $\nabla^2 f(\bx) \succeq \bm{0}$.  
\end{definition} 
\noindent Clearly, second-order critical points do not encompass local maxima and strict saddle points, and, as we shall see, are of particular interest for the nonconvex problems considered herein. Since we are interested in minimizing the loss function, we do not distinguish local maxima and strict saddle points.



\section{A warm-up example: rank-1 matrix factorization}
\label{sec:noncvx_eg}

For pedagogical reasons, we begin with a self-contained study of a simple nonconvex matrix factorization problem, demonstrating  local convergence in the basin of attraction in Section~\ref{sec:local_convergence_mf} and benign global landscape in Section~\ref{sec:landscape_mf}. 
The analysis in this section requires only elementary calculations.

\begin{figure}[t]
	\begin{center}
		\hspace{-0.1in}\includegraphics[width=0.3\textwidth]{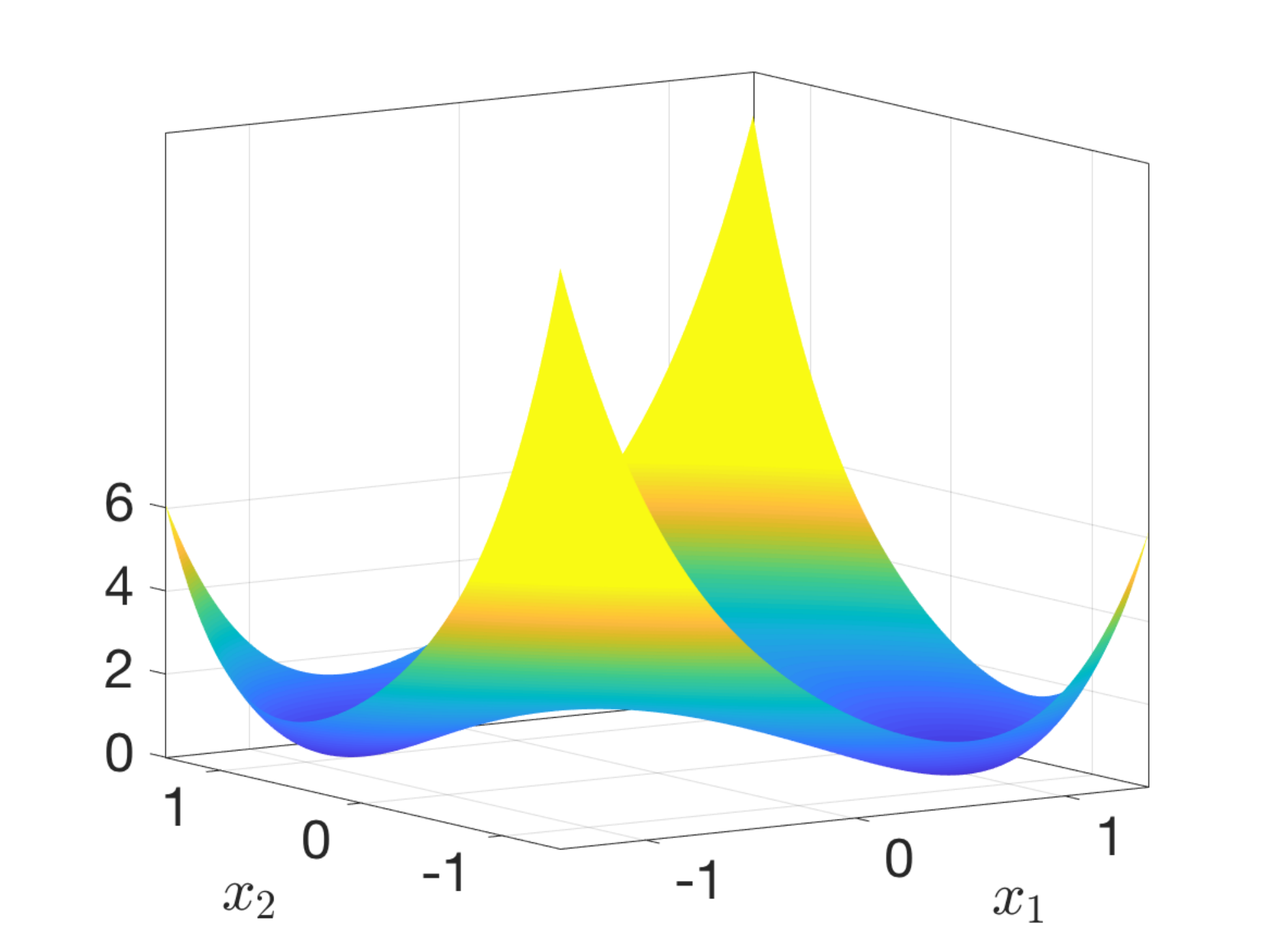}
	\end{center}
	\caption{Illustration of the function $f(\bx)=\frac{1}{4}\|\bm{x}\bm{x}^{\top}-\bm{M}\|_{\mathrm{F}}^{2}$
	, where $\bx=[x_{1},x_{2}]^{\top}$ and $\bm{M}=${\tiny$\left[\protect\begin{array}{cc}
1 & -0.5\protect\\
-0.5 & 1
	\protect\end{array}\right]$}.  
	\label{fig:strict_saddle2}}
\end{figure}

Specifically, consider a positive semidefinite  matrix $\bm{M}\in\mathbb{R}^{n\times n}$ (which is not necessarily low-rank) with eigenvalue decomposition $\bM =\sum_{i=1}^n \lambda_i \bm{u}_i\bm{u}_i^{\top}$. We assume  throughout this section that there is a gap between the 1st and 2nd largest eigenvalues
\begin{equation}
	\label{assumption-rank1}
	\lambda_1 >\lambda_2\geq \ldots \geq \lambda_n \geq 0.
\end{equation}
The aim is to find the best rank-$1$ approximation  of $\bm{M}$. Clearly,  this can be posed as the following problem:\footnote{Here, the preconstant $1/4$ is introduced for the purpose of normalization  and does not affect the solution.}
\begin{equation} 
	\label{eq:rank_one_approx}
	\underset{\bm{x}\in\mathbb{R}^{n}}{\text{minimize}} \quad f(\bm{x}) = \frac{1}{4} \| \bm{x}\bm{x}^{\top}-\bm{M} \|_{\mathrm{F}}^2  ,
\end{equation}
%
%
%
where $f(\cdot)$ is a degree-four polynomial and highly nonconvex.  
The solution to \eqref{eq:rank_one_approx} can be expressed in closed form as the scaled leading eigenvector $\pm \sqrt{\lambda_1} \bm{u}_1$. See Fig.~\ref{fig:strict_saddle2} for an illustration of the function $f(\bx)$ when $\bx\in\mathbb{R}^2$. This problem stems from interpreting principal component analysis (PCA) from an optimization perspective, which has a long history in the literature of (linear) neural networks and unsupervised learning; see for example \cite{baldi1989neural,baldi1995learning,yang1995projection,li2016symmetry,zhu2017global,zhu2018global,hauser2018pca}.

We attempt to minimize the nonconvex function $f(\cdot)$ directly in spite of nonconvexity. This problem, though simple, plays a critical role in understanding the success of nonconvex optimization,  since  several important nonconvex low-rank estimation problems can be regarded as randomized versions or extensions of this problem. 

\subsection{Local linear convergence of gradient descent}
\label{sec:local_convergence_mf}

To begin with, we demonstrate that gradient descent, when initialized at a point sufficiently close to the true optimizer (i.e.~$\pm\sqrt{\lambda_1} \bm{u}_1$), is guaranteed to converge fast.  
\begin{theorem}\label{thm:rank1-local}
	Consider the problem \eqref{eq:rank_one_approx}. Suppose  $\eta_t\equiv\frac{1}{4.5\lambda_1}$ and $\|\bm{x}_0 - \sqrt{\lambda_1}\bm{u}_1\|_{2}\leq\frac{\lambda_{1}-\lambda_{2}}{15\sqrt{\lambda_{1}}} $. Then the GD iterates \eqref{eq:GD-general} obey 
\[
	\big\|\bm{x}_{t}-\sqrt{\lambda_{1}}\bm{u}_{1}\big\|_{2}\leq\left(1-\frac{\lambda_{1}-\lambda_{2}}{18\lambda_{1}}\right)^{t}\big\|\bm{x}_{0}-\sqrt{\lambda_{1}}\bm{u}_{1}\big\|_{2}, ~~ t\geq 0.
\]
\end{theorem}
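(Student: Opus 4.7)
The plan is to verify the hypotheses of Lemma~\ref{lem:GD-convergence} (local strong convexity and smoothness) in a ball around the target $\bm{x}_\star := \sqrt{\lambda_1}\bm{u}_1$, and then apply the lemma directly with $\alpha = (\lambda_1-\lambda_2)/c_1$ and $\beta \le 4.5\lambda_1$ so that the prescribed step size $\eta_t = 1/(4.5\lambda_1)$ is admissible.

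First I would compute the derivatives in closed form. Expanding $f(\bm{x})=\tfrac{1}{4}\|\bm{x}\|_2^4 - \tfrac{1}{2}\bm{x}^\top\bm{M}\bm{x} + \tfrac{1}{4}\|\bm{M}\|_{\mathrm{F}}^2$ gives
\begin{equation*}
\nabla f(\bm{x}) = \bigl(\bm{x}\bm{x}^\top - \bm{M}\bigr)\bm{x},\qquad
\nabla^2 f(\bm{x}) = \|\bm{x}\|_2^2\, \bm{I}_n + 2\bm{x}\bm{x}^\top - \bm{M}.
\end{equation*}
Evaluating at $\bm{x}_\star$ and using $\bm{M}=\sum_i\lambda_i\bm{u}_i\bm{u}_i^\top$ yields
\begin{equation*}
\nabla^2 f(\bm{x}_\star)
= 2\lambda_1\,\bm{u}_1\bm{u}_1^\top + \sum_{i\ge 2}(\lambda_1-\lambda_i)\,\bm{u}_i\bm{u}_i^\top,
\end{equation*}
whose eigenvalues all lie in the interval $[\lambda_1-\lambda_2,\,2\lambda_1]$. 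In particular $\bm{x}_\star$ is a strict local minimum with a positive definite Hessian, and the ``condition number'' at $\bm{x}_\star$ is roughly $2\lambda_1/(\lambda_1-\lambda_2)$.

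Next I would control the Hessian throughout the ball $\mathcal{B}_\zeta(\bm{x}_\star)$ with $\zeta := (\lambda_1-\lambda_2)/(15\sqrt{\lambda_1})$. For any $\bm{x}$ with $\|\bm{x}-\bm{x}_\star\|_2\le\zeta$, writing
\begin{equation*}
\nabla^2 f(\bm{x}) - \nabla^2 f(\bm{x}_\star)
= \bigl(\|\bm{x}\|_2^2-\lambda_1\bigr)\bm{I}_n + 2\bigl(\bm{x}\bm{x}^\top-\bm{x}_\star\bm{x}_\star^\top\bigr)
\end{equation*}
and using the identity $\bm{x}\bm{x}^\top-\bm{x}_\star\bm{x}_\star^\top=(\bm{x}-\bm{x}_\star)\bm{x}^\top + \bm{x}_\star(\bm{x}-\bm{x}_\star)^\top$, a routine bound gives $\|\nabla^2 f(\bm{x})-\nabla^2 f(\bm{x}_\star)\| \le 6\sqrt{\lambda_1}\,\zeta + 3\zeta^2$. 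Since $\zeta\le \sqrt{\lambda_1}/15$, this is at most $\tfrac{9}{15}(\lambda_1-\lambda_2) + \text{lower order}$, which can be absorbed into a fraction of $(\lambda_1-\lambda_2)$. Concretely I would arrange the constants so that for all $\bm{x}\in\mathcal{B}_\zeta(\bm{x}_\star)$,
\begin{equation*}
\tfrac{\lambda_1-\lambda_2}{c_1}\,\bm{I}_n \;\preceq\; \nabla^2 f(\bm{x}) \;\preceq\; 4.5\lambda_1\,\bm{I}_n,
\end{equation*}
with $c_1$ chosen (around $2$) so that the resulting contraction factor from Lemma~\ref{lem:GD-convergence} is at most $1-(\lambda_1-\lambda_2)/(18\lambda_1)$.

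Finally, I would invoke Lemma~\ref{lem:GD-convergence} verbatim: since $\bm{x}_0\in\mathcal{B}_\zeta(\bm{x}_\star)$ and $f$ is $\alpha$-strongly convex and $\beta$-smooth on this ball, the lemma guarantees that every iterate stays in the ball and
\begin{equation*}
\|\bm{x}_t-\bm{x}_\star\|_2 \;\le\; (1-\eta_t\alpha)^t\,\|\bm{x}_0-\bm{x}_\star\|_2 \;\le\; \Bigl(1-\tfrac{\lambda_1-\lambda_2}{18\lambda_1}\Bigr)^{\!t}\|\bm{x}_0-\bm{x}_\star\|_2.
\end{equation*}
The main obstacle is purely bookkeeping: tracking constants carefully in the Hessian perturbation bound so that the radius $\zeta = (\lambda_1-\lambda_2)/(15\sqrt{\lambda_1})$, the step size $1/(4.5\lambda_1)$, and the contraction rate $1-(\lambda_1-\lambda_2)/(18\lambda_1)$ fit together consistently. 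There is no conceptual difficulty — the geometry is convex in the basin around $\bm{x}_\star$ — the only subtlety is that $-\bm{x}_\star$ is another equally good minimizer, which is why the analysis must be restricted to a ball of radius $O((\lambda_1-\lambda_2)/\sqrt{\lambda_1})$ around one chosen sign.
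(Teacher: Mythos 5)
Your proposal is correct and follows the same overall strategy as the paper (verify local strong convexity and smoothness on $\mathcal{B}_\zeta(\sqrt{\lambda_1}\bm{u}_1)$, then invoke Lemma~\ref{lem:GD-convergence}); the only meaningful difference is in how the Hessian bounds are obtained. The paper works directly with the positive-semidefinite ordering on $\bm{x}\bm{x}^\top$, showing $\bm{x}\bm{x}^\top\succeq \lambda_1\bm{u}_1\bm{u}_1^\top - 0.25(\lambda_1-\lambda_2)\bm{I}_n$ and $\|\bm{x}\|_2^2\in[\lambda_1-0.25(\lambda_1-\lambda_2),\,1.15\lambda_1]$, and substitutes into the Hessian expression term by term. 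You instead evaluate the Hessian exactly at $\bm{x}_\star=\sqrt{\lambda_1}\bm{u}_1$, where the eigenstructure is transparent ($\lambda_{\min}=\lambda_1-\lambda_2$, $\lambda_{\max}=2\lambda_1$), and then control the operator-norm deviation $\|\nabla^2 f(\bm{x})-\nabla^2 f(\bm{x}_\star)\|\le 6\sqrt{\lambda_1}\zeta+3\zeta^2\le 0.42(\lambda_1-\lambda_2)$ over the ball, applying Weyl. Your version is a bit more modular — it separates the clean eigenanalysis at the optimum from a generic Lipschitz-perturbation step, a pattern that transfers to the later finite-sample arguments — and in fact yields a slightly sharper strong-convexity constant ($\approx 0.58(\lambda_1-\lambda_2)$ vs. the paper's $0.25(\lambda_1-\lambda_2)$), so the stated contraction factor $1-\frac{\lambda_1-\lambda_2}{18\lambda_1}$ follows with room to spare. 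One small bookkeeping note: Lemma~\ref{lem:GD-convergence} as stated uses $\eta_t\equiv 1/\beta$, so to land exactly on the prescribed step size $1/(4.5\lambda_1)$ you should take $\beta=4.5\lambda_1$ (a valid upper bound, since your analysis actually gives $\lambda_{\max}(\nabla^2 f)\lesssim 2.5\lambda_1$); the rate then becomes $1-\alpha/\beta$, which dominates the claimed bound.
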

\begin{remark} 
By symmetry, Theorem \ref{thm:rank1-local} continues to hold if $\bm{u}_1$ is replaced by $-\bm{u}_1$.
\end{remark}
In a nutshell, Theorem \ref{thm:rank1-local} establishes linear convergence of GD for rank-1 matrix factorization, where the convergence rate largely depends upon the eigen-gap (relative to the largest eigenvalue). This is a ``local'' result, assuming that a suitable initialization is present in the basin of attraction $\mathcal{B}_{\zeta}(\sqrt{\lambda_1}\bu_1)$. Its radius, which is not optimized in this theorem, is given by $\zeta = \frac{\lambda_{1}-\lambda_{2}}{15\lambda_{1}}\| \sqrt{\lambda_1} \bu_1\|_2$. This also depends on the relative eigen-gap $(\lambda_1-\lambda_2)/\lambda_1$.  

\begin{proof}[Proof of Theorem \ref{thm:rank1-local}]
The proof mainly consists of  showing that $f(\cdot)$ is locally strongly convex and smooth, which allows us to invoke Lemma \ref{lem:GD-convergence}. The gradient and the Hessian of $f(\bm{x})$ are given respectively by 
\begin{align}
	\nabla f(\bx) & = (\bx\bx^{\top}-\bM) \bx;  \label{eq:grad-rank1} \\
	\nabla^2 f(\bx) & = \|\bm{x}\|_2^2 \bm{I}_n + 2\bm{x}\bm{x}^{\top} - \bm{M}. \label{eq:Hess-rank1}
\end{align}
For notational simplicity, let $\bm{\Delta}:=\bm{x}-\sqrt{\lambda_{1}}\bm{u}_{1}$. A little algebra yields that if $\|\bm{\Delta}\|_2\leq \frac{\lambda_{1}-\lambda_{2}}{15\sqrt{\lambda_{1}}}$, then
\begin{align}
	& \|\bm{\Delta}\|_{2}\leq\|\bm{x}\|_{2} , ~~\myquadinv \|\bm{\Delta}\|_{2}\|\bm{x}\|_{2}\leq (\lambda_1 - \lambda_2) /12; \label{eq:condition-Delta-1}
\\
	& \qquad  \lambda_1 - 0.25(\lambda_1-\lambda_2) \leq \|\bm{x}\|_{2}^2 \leq 1.15\lambda_1.  \label{eq:condition-Delta-2}
\end{align}
	
We start with the smoothness condition. The triangle inequality gives
\begin{align*}
\left\Vert \nabla^{2}f(\bm{x})\right\Vert  & \leq\left\Vert \|\bm{x}\|_{2}^{2}\bm{I}_{n}\right\Vert +\left\Vert 2\bm{x}\bm{x}^{\top}\right\Vert +\left\Vert \bm{M}\right\Vert \\
 & =3\|\bm{x}\|_{2}^{2}+\lambda_{1} < 4.5\lambda_{1},
\end{align*}
where the last line follows from \eqref{eq:condition-Delta-2}.

	Next, it comes from the definition of $\bm{\Delta}$ and \eqref{eq:condition-Delta-1} that
\begin{align*}
\bm{x}\bm{x}^{\top} & =\lambda_{1}\bm{u}_{1}\bm{u}_{1}^{\top}+\bm{\Delta}\bm{x}^{\top}+\bm{x}\bm{\Delta}^{\top}-\bm{\Delta}\bm{\Delta}^{\top}\\
 & \succeq\lambda_{1}\bm{u}_{1}\bm{u}_{1}^{\top}-3\|\bm{\Delta}\|_{2}\|\bm{x}\|_{2}\bm{I}_n\\
 & \succeq\lambda_{1}\bm{u}_{1}\bm{u}_{1}^{\top}- 0.25\left(\lambda_{1}-\lambda_{2}\right)\bm{I}_n.
\end{align*}
Substitution into (\ref{eq:Hess-rank1}) yields a strong convexity lower bound:
\begin{align*}
\nabla^{2}f(\bm{x}) & =\|\bm{x}\|_{2}^{2}\bm{I}_{n}+2\bm{x}\bm{x}^{\top}-\lambda_{1}\bm{u}_{1}\bm{u}_{1}^{\top}-\sum\nolimits_{i=2}^{n}\lambda_{i}\bm{u}_{i}\bm{u}_{i}^{\top}\\
 & \succeq\left(\|\bm{x}\|_{2}^{2}+\lambda_{1}- 0.5\left(\lambda_{1}-\lambda_{2}\right)\right)\bm{u}_{1}\bm{u}_{1}^{\top} \mylinebreak
 \myalign \myquad\myquad +\sum\nolimits_{i=2}^{n}\left(\|\bm{x}\|_{2}^{2}- 0.5\left(\lambda_{1}-\lambda_{2}\right)-\lambda_{i}\right)\bm{u}_{i}\bm{u}_{i}^{\top}\\
 & \succeq\left(\|\bm{x}\|_{2}^{2}- 0.5\left(\lambda_{1}-\lambda_{2}\right)-\lambda_{2}\right) \underset{=\bm{I}_n}{\underbrace{ \sum\nolimits_{i=1}^{n}\bm{u}_{i}\bm{u}_{i}^{\top} }}\\
 & \succeq 0.25(\lambda_{1}-\lambda_{2}) \bm{I}_n,
\end{align*}
where the last inequality is an immediate consequence of  \eqref{eq:condition-Delta-2}.

In summary, for all $\bm{x}$ obeying $\|\bm{\Delta}\|_2\leq \frac{\lambda_{1}-\lambda_{2}}{15\sqrt{\lambda_{1}}}$, one has
\[
 0.25(\lambda_{1}-\lambda_{2})\bm{I}_n \preceq \nabla^2 f(\bm{x})\preceq4.5\lambda_1 \bm{I}_n.
\]
Applying Lemma \ref{lem:GD-convergence}  establishes the claim. 
%
\end{proof}

The question then comes down to whether one can secure an initial guess of this quality. One popular approach is spectral initialization, obtained by computing the leading eigenvector of $\bm{M}$. For this simple problem, this already yields a solution of arbitrary accuracy. As it turns out,  such a spectral initialization approach is particularly useful when dealing with noisy and incomplete measurements. We refer the readers to Section \ref{sec:spectral-initialization} for detailed discussions of spectral initialization methods. 
%



\subsection{Global optimization landscape}\label{sec:landscape_mf}

We then move on to examining the optimization landscape of this simple problem. In particular, what kinds of critical points does $f(\bx)$ have? This is addressed as follows; see also \cite[Section 3.3]{li2018non}. 

\begin{theorem}
	\label{thm:rank1-global}
	Consider the objective function \eqref{eq:rank_one_approx}. All local minima of $f(\cdot)$ are global optima. The rest of the critical points are either local maxima or strict saddle points. 
\end{theorem}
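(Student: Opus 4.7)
The plan is to enumerate all first-order critical points of $f$ and then classify each by inspecting $\nabla^2 f$ at the point. I would work from the closed-form expressions \eqref{eq:grad-rank1} and \eqref{eq:Hess-rank1}, together with the eigendecomposition $\bm{M}=\sum_{i=1}^{n}\lambda_{i}\bm{u}_{i}\bm{u}_{i}^{\top}$ with $\lambda_{1}>\lambda_{2}\geq\cdots\geq\lambda_{n}\geq 0$.

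First I would solve $\nabla f(\bm{x}) = (\bm{x}\bm{x}^{\top}-\bm{M})\bm{x}=\bm{0}$, which is the eigenvalue equation $\bm{M}\bm{x}=\|\bm{x}\|_{2}^{2}\bm{x}$. Hence either $\bm{x}=\bm{0}$, or $\bm{x}=\pm\sqrt{\lambda_{i}}\bm{q}$ for some $\lambda_{i}>0$ and some unit eigenvector $\bm{q}$ of $\bm{M}$ associated with eigenvalue $\lambda_{i}$ (vectors in $\ker\bm{M}$ are absorbed into $\bm{x}=\bm{0}$ by the norm constraint). This reduces the landscape analysis to three cases.

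Next I would classify each case via the Hessian. At the origin, $\nabla^{2}f(\bm{0})=-\bm{M}$ has smallest eigenvalue $-\lambda_{1}<0$, so $\bm{0}$ is either a local maximum (if $\bm{M}\succ\bm{0}$) or a strict saddle. At $\bm{x}=\pm\sqrt{\lambda_{1}}\bm{u}_{1}$, substituting into \eqref{eq:Hess-rank1} and expanding in the $\{\bm{u}_{i}\}$ basis gives
\[
\nabla^{2}f(\pm\sqrt{\lambda_{1}}\bm{u}_{1})=2\lambda_{1}\bm{u}_{1}\bm{u}_{1}^{\top}+\sum_{j\geq 2}(\lambda_{1}-\lambda_{j})\bm{u}_{j}\bm{u}_{j}^{\top}\succ\bm{0},
\]
so these points are local minima, and a direct evaluation of $f$ (or Eckart--Young) confirms they are the global minima. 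For the remaining critical points $\bm{x}=\pm\sqrt{\lambda_{i}}\bm{q}$ with $i\geq 2$, the strict inequality $\lambda_{i}<\lambda_{1}$ forces $\bm{q}\perp\bm{u}_{1}$, so that testing $\nabla^{2}f(\bm{x})$ against $\bm{u}_{1}$ gives
\[
\bm{u}_{1}^{\top}\nabla^{2}f(\bm{x})\bm{u}_{1}=\|\bm{x}\|_{2}^{2}+2\lambda_{i}(\bm{u}_{1}^{\top}\bm{q})^{2}-\lambda_{1}=\lambda_{i}-\lambda_{1}<0,
\]
exposing a strict negative curvature direction and certifying a strict saddle.

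The only minor subtlety, and the place where one must be slightly careful, is the case of repeated eigenvalues among $\lambda_{2},\dots,\lambda_{n}$: then the critical set at level $\sqrt{\lambda_{i}}$ is a continuous manifold (a sphere in the $\lambda_{i}$-eigenspace) rather than a pair of isolated points. This is not actually an obstacle, since the negative-curvature test above uses only the scalar inequality $\lambda_{i}<\lambda_{1}$ and applies uniformly to every point in that manifold. Assembling the three cases yields the theorem: the local minima of $f$ are exactly $\pm\sqrt{\lambda_{1}}\bm{u}_{1}$, both of which are global optima, while every other critical point is either a local maximum or a strict saddle.
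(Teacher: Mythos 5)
Your proof is correct and follows the same strategy as the paper: solve the eigenvalue equation $\bm{M}\bm{x} = \|\bm{x}\|_2^2\bm{x}$ to enumerate the critical points, then classify each by inspecting $\nabla^2 f$ (positive definite at $\pm\sqrt{\lambda_1}\bm{u}_1$, $-\bm{M}\preceq\bm{0}$ at the origin, and a strict negative curvature at all other critical points). One welcome refinement over the paper's version is your explicit treatment of repeated eigenvalues, where the critical set at level $\sqrt{\lambda_i}$ becomes a sphere in the $\lambda_i$-eigenspace rather than a pair of isolated points; your scalar test $\bm{u}_1^{\top}\nabla^2 f(\bm{x})\bm{u}_1 = \lambda_i - \lambda_1 < 0$ (using $\bm{q}\perp\bm{u}_1$, guaranteed by the eigengap $\lambda_1>\lambda_2$) applies uniformly across that sphere, whereas the paper implicitly fixes an orthonormal eigenbasis and lists the critical points as $\pm\sqrt{\lambda_k}\bm{u}_k$.
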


\begin{proof}[Proof of Theorem \ref{thm:rank1-global}]
Recall that  $\bm{x}$ is a critical point if and only if $\nabla f(\bx) = (\bx\bx^{\top}-\bM) \bx = \bm{0} $, or equivalently,  
$$ \bM\bx = \|\bx\|_2^2  \bx. $$
As a result, $\bx$ is a critical point if either it aligns with an eigenvector of $\bM$ or $\bx = \bm{0}$. Given that the eigenvectors of $\bm{M}$ obey $ \bM\bu_i = \lambda_i  \bu_i $, by properly adjusting the scaling, we determine the set of critical points as follows
$$ \mathsf{critical}\text{-}\mathsf{points} = \{\bm{0} \} \cup \{  \pm \sqrt{\lambda_k } \bu_k: k=1,\ldots, n\}.$$



To further categorize the critical points, we need to examine the associated Hessian matrices as given by \eqref{eq:Hess-rank1}. 
Regarding the critical points $\pm \sqrt{\lambda_k } \bu_k$, we have
\begin{align*}
 \myalign \nabla^2 f\big( \pm \sqrt{\lambda_k } \bu_k\big) 
   \myaligninv = \lambda_k \bI_n + 2\lambda_k \bu_k \bu_k^{\top}  - \bM \\
 & \myquad =  \lambda_k \sum_{i=1}^n \bu_i\bu_i^{\top} + 2\lambda_k \bu_k \bu_k^{\top}  - \sum_{i=1}^n \lambda_i \bu_i\bu_i^{\top} \\
 & \myquad =  \sum_{i:i\neq k} (\lambda_k - \lambda_i ) \bu_i\bu_i^{\top}  + 2\lambda_k \bu_k\bu_k^{\top}.
\end{align*}
%
We can then categorize them as follows:
\begin{enumerate}
\itemsep0.5em
\item With regards to the points $\{\pm \sqrt{\lambda_1 } \bu_1\}$, one has $$\nabla^2 f\big( \pm \sqrt{\lambda_1 } \bu_1\big)\succ \bm{0},$$ and hence  they are (equivalent) local minima of $f(\cdot)$; 
\item For the points $\{\pm \sqrt{\lambda_k } \bu_k\}_{k=2}^n$, one has 
	\begin{align*}		
		\lambda_{\min}\big(\nabla^2 f\big( \pm \sqrt{\lambda_k } \bu_k\big)\big) &< 0, \\  \lambda_{\max}\big(\nabla^2 f\big( \pm \sqrt{\lambda_k } \bu_k\big)\big) &>0,
	\end{align*}
	and therefore they are strict saddle points of $f(\cdot)$.
\item Finally, the critical point at the origin satisfies
\begin{align*}
	\nabla^2 f( \bm{0}) = -\bm{M} \preceq \bm{0},  
\end{align*}
and is hence either a local maxima (if $\lambda_n>0$) or a strict saddle point (if $\lambda_n=0$).
\end{enumerate} 
\end{proof}

This result reveals the benign geometry of the problem \eqref{eq:rank_one_approx} amenable to optimization. All undesired fixed points of gradient descent are strict saddles, which have negative directional curvature and may not be difficult to escape or avoid.

\section{Formulations of a few canonical problems}
\label{sec:examples}

For an article of this length, it is impossible to cover all nonconvex statistical problems of interest.  
Instead, we decide to focus on a few concrete and fundamental matrix factorization problems.  This section presents formulations of several such examples that will be visited multiple times throughout this article. Unless otherwise noted,  the assumptions made in this section (e.g.~restricted isometry for matrix sensing, Gaussian design for phase retrieval) will be  imposed throughout the rest of the paper. 


\subsection{Matrix sensing}
\label{sec:matrix_sensing}

Suppose that we are given a set of $m$ measurements of $\bm{M}_{\star}\in\mathbb{R}^{n_1\times n_2}$ of the form
\begin{equation}
	\label{eq:matrix-sensing-samples}
	y_i = \langle \bm{A}_i, \bm{M}_{\star} \rangle, \qquad 1\leq i\leq m,
\end{equation}
where $\{\bm{A}_i\in \mathbb{R}^{n_1\times n_2}\}$ is a collection of sensing matrices known {\em a priori}. We are asked to recover $\bm{M}_{\star}$ --- which is assumed to be of rank $r$ --- from these linear matrix equations \cite{recht2010guaranteed,candes2011tight}. 

When $\bm{M}_{\star}=\bm{X}_{\star}\bm{X}_{\star}^{\top}\in \mathbb{R}^{n\times n}$ is positive semidefinite with $\bm{X}_{\star} \in \mathbb{R}^{n\times r}$, this can be cast as solving the least-squares problem   
\begin{align}
	\underset{\bm{X}\in \mathbb{R}^{n\times r}}{\text{minimize}}\,\, \myquadinv f(\bm{X})=\frac{1}{4m}\sum_{i=1}^{m}\big(\langle\bm{A}_{i},\bm{X}\bm{X}^{\top}\rangle- y_i \big)^{2}.   
	\label{eq:min-matrix-sensing-rank-r}	
\end{align}
Clearly, we cannot distinguish $\bm{X}_{\star}$ from $\bm{X}_{\star}\bm{H}$ for any orthonormal matrix $\bm{H}\in \mathcal{O}^{r\times r}$, as they correspond to the same low-rank matrix $\bm{X}_{\star}\bm{X}_{\star}^{\top}=\bm{X}_{\star}\bm{H}\bm{H}^{\top}\bm{X}_{\star}^{\top}$. This simple fact implies that there exist multiple global optima for \eqref{eq:min-matrix-sensing-rank-r}, a phenomenon that holds for most  problems discussed herein. 

For the general  case where  $\bm{M}_{\star}=\bm{L}_{\star}\bm{R}_{\star}^{\top}$ with $\bm{L}_{\star} \in \mathbb{R}^{n_1\times r}$ and $\bm{R}_{\star} \in \mathbb{R}^{n_2\times r}$, we wish to minimize   
\begin{align}
	\underset{\bm{L}\in \mathbb{R}^{n_1\times r},\bm{R}\in \mathbb{R}^{n_2\times r}}{\text{minimize}} \myquadinv f(\bm{L},\bm{R})=\frac{1}{4m}\sum_{i=1}^{m}\big(\langle\bm{A}_{i},\bm{L}\bm{R}^{\top}\rangle- y_i \big)^{2}.   
	\label{eq:min-matrix-sensing-rank-r-asymmetric}	
\end{align}
Similarly, we cannot distinguish $(\bm{L}_{\star},\bR_{\star})$ from $(\bm{L}_{\star}\bm{H},\bR_{\star}(\bm{H}^{\top})^{-1})$ for any {\em invertible} matrix $\bm{H}\in \mathbb{R}^{r\times r}$, as $\bm{L}_{\star}\bm{R}_{\star}^{\top}=\bm{L}_{\star}\bm{H}\bm{H}^{-1}\bm{R}_{\star}^{\top}$. Throughout the paper, we denote by 
\begin{equation}
	\label{eq:ground_truth_factors}
	\bL_{\star }:=\bm{U}_{\star}\bm{\Sigma}_{\star}^{1/2}\quad \text{and} \quad  \bR_{\star} :=\bm{V}_{\star}\bm{\Sigma}_{\star}^{1/2}
\end{equation}
the true low-rank factors, where $\bm{M}_{\star}=\bm{U}_{\star}\bm{\Sigma}_{\star}\bm{V}_{\star}^\top$ stands for its singular value decomposition.

In order to make the problem well-posed, we need to make proper assumptions on the sensing operator $\mathcal{A}:\mathbb{R}^{n_1\times n_2}\mapsto\mathbb{R}^{m}$ defined by:
\begin{equation}
	\mathcal{A}(\bm{X}):=\big[m^{-1/2}\langle\bm{A}_{i},\bm{X}\rangle\big]_{1\leq i\leq m}. \label{eq:defn-A-sensing}
\end{equation}
%
A useful property for the sensing operator that enables tractable algorithmic solutions is the {\em Restricted Isometry Property (RIP)}, which says that the operator preserves approximately the Euclidean norm of the input matrix when restricted to the set of low-rank matrices. More formally:  
\begin{definition}[\textsf{Restricted isometry property} \cite{candes2008restricted}]
	\label{defn:RIPs}
	An operator $\mathcal{A}:\mathbb{R}^{n_1\times n_2}\mapsto\mathbb{R}^{m}$ is said to satisfy the $r$-RIP with RIP constant $\delta_r<1$ if
	\begin{equation}
(1-\delta_{r})\|\bm{X}\|_{\mathrm{F}}^2\leq\|\mathcal{A}(\bm{X})\|_{2}^2\leq(1+\delta_{r})\|\bm{X}\|_{\mathrm{F}}^2 \label{eq:defn-RIPr}
\end{equation}
	holds simultaneously for all $\bm{X}$ of rank at most $r$. 
\end{definition}

\noindent As an immediate consequence, the inner product between two low-rank matrices is also nearly preserved if $\mathcal{A}$ satisfies the RIP. 
Therefore, $\mathcal{A}$ behaves approximately like an isometry when restricting its operations over low-rank matrices.
\begin{lemma}[\cite{candes2008restricted}]
	\label{lemmq:RIP-cross}	
If an operator $\mathcal{A}$ satisfies the $2r$-RIP with RIP constant $\delta_{2r}<1$, then 
\begin{equation}
	\big|\big\langle\mathcal{A}(\bm{X}),\mathcal{A}(\bm{Y})\big\rangle-\langle\bm{X},\bm{Y}\rangle\big|\leq\delta_{2r}\|\bm{X}\|_{\mathrm{F}}\|\bm{Y}\|_{\mathrm{F}}\label{eq:defn-RIP2r}
\end{equation}
	holds simultaneously for all $\bm{X}$ and $\bm{Y}$ of rank at most $r$. 
\end{lemma}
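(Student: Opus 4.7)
The plan is to derive the bilinear inequality from the quadratic RIP bound \eqref{eq:defn-RIPr} via a standard polarization argument, exploiting the fact that sums and differences of rank-$r$ matrices have rank at most $2r$.

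First, I would reduce to the normalized case. If either $\bm{X}$ or $\bm{Y}$ vanishes, both sides of \eqref{eq:defn-RIP2r} equal zero, so we may assume $\|\bm{X}\|_{\mathrm{F}}, \|\bm{Y}\|_{\mathrm{F}} > 0$. By the bilinearity of both $\langle \mathcal{A}(\cdot), \mathcal{A}(\cdot)\rangle$ and $\langle\cdot,\cdot\rangle$, the claim is homogeneous of degree $(1,1)$ in $(\bm{X}, \bm{Y})$, so it suffices to establish \eqref{eq:defn-RIP2r} under the normalization $\|\bm{X}\|_{\mathrm{F}} = \|\bm{Y}\|_{\mathrm{F}} = 1$.

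Next, I would apply the polarization identity in both the data space and the matrix space:
\begin{align*}
\langle \bm{X}, \bm{Y} \rangle &= \tfrac{1}{4}\bigl(\|\bm{X}+\bm{Y}\|_{\mathrm{F}}^{2} - \|\bm{X}-\bm{Y}\|_{\mathrm{F}}^{2}\bigr), \\
\langle \mathcal{A}(\bm{X}), \mathcal{A}(\bm{Y}) \rangle &= \tfrac{1}{4}\bigl(\|\mathcal{A}(\bm{X}+\bm{Y})\|_{2}^{2} - \|\mathcal{A}(\bm{X}-\bm{Y})\|_{2}^{2}\bigr),
\end{align*}
where the second line uses linearity of $\mathcal{A}$. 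Subtracting these and applying the triangle inequality yields
\[
\bigl|\langle \mathcal{A}(\bm{X}), \mathcal{A}(\bm{Y})\rangle - \langle\bm{X},\bm{Y}\rangle\bigr| \leq \tfrac{1}{4}\sum_{\pm} \bigl|\|\mathcal{A}(\bm{X}\pm\bm{Y})\|_{2}^{2} - \|\bm{X}\pm\bm{Y}\|_{\mathrm{F}}^{2}\bigr|.
\]

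Since $\bm{X}$ and $\bm{Y}$ each have rank at most $r$, the matrices $\bm{X}\pm\bm{Y}$ have rank at most $2r$, so the $2r$-RIP \eqref{eq:defn-RIPr} bounds each term on the right by $\delta_{2r}\|\bm{X}\pm\bm{Y}\|_{\mathrm{F}}^{2}$. Finally I would invoke the parallelogram identity, $\|\bm{X}+\bm{Y}\|_{\mathrm{F}}^{2} + \|\bm{X}-\bm{Y}\|_{\mathrm{F}}^{2} = 2(\|\bm{X}\|_{\mathrm{F}}^{2} + \|\bm{Y}\|_{\mathrm{F}}^{2}) = 4$, which collapses the bound to $\delta_{2r}$, matching $\delta_{2r}\|\bm{X}\|_{\mathrm{F}}\|\bm{Y}\|_{\mathrm{F}}$ under our normalization. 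Undoing the normalization completes the proof. There is no real obstacle here; the only subtle point is making sure the rank of $\bm{X}\pm\bm{Y}$ is correctly controlled so that the $2r$-RIP (rather than some stronger hypothesis) suffices, which is exactly why the statement uses $\delta_{2r}$ and not $\delta_{r}$.
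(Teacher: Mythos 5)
Your proof is correct and is the standard polarization argument for the RIP cross-term bound. The paper itself does not reprove this lemma — it simply cites \cite{candes2008restricted} — and your argument is essentially the one found in that source: normalize, apply the polarization identity in both the data space and the matrix space, invoke the $2r$-RIP on $\bm{X}\pm\bm{Y}$ (each of rank at most $2r$), and collapse via the parallelogram law. All steps check out, including the final arithmetic $\tfrac{1}{4}\delta_{2r}\bigl(\|\bm{X}+\bm{Y}\|_{\mathrm{F}}^2 + \|\bm{X}-\bm{Y}\|_{\mathrm{F}}^2\bigr) = \delta_{2r}$ under the unit-norm normalization.
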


Notably, many random sensing designs are known to satisfy the RIP with high probability, with one remarkable example given below. 
\begin{fact}[$\mathsf{RIP~for~Gaussian~matrices}$ \cite{recht2010guaranteed,candes2011tight}] 
\label{fact:gaussian_rip}
	If the entries of $\bm{A}_i$ are i.i.d.~Gaussian entries $\mathcal{N}(0,1)$, then $\mathcal{A}$ as defined in \eqref{eq:defn-A-sensing} satisfies the $r$-RIP with RIP constant $\delta_{r}$ with high probability as soon as $m \gtrsim   (n_1+n_2)r/\delta_r^2 $.
\end{fact}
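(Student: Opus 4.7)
The plan is to establish the RIP via the classical two-step argument: prove pointwise concentration for each fixed low-rank matrix, then upgrade this to a uniform bound over all rank-$r$ matrices via an $\varepsilon$-net and a union bound. The normalization $\mathcal{A}(\bm{X}) = [m^{-1/2}\langle \bm{A}_i,\bm{X}\rangle]_{1\le i\le m}$ is exactly what makes $\|\mathcal{A}(\bm{X})\|_2^2$ an unbiased estimator of $\|\bm{X}\|_{\mathrm{F}}^2$ in the i.i.d.~Gaussian model, so everything is tuned for concentration around the correct value.

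First I would establish pointwise concentration. Fix any $\bm{X}\in\mathbb{R}^{n_1\times n_2}$ with $\|\bm{X}\|_{\mathrm{F}}=1$. Because each $\bm{A}_i$ has i.i.d.~$\mathcal{N}(0,1)$ entries, the scalar $\langle \bm{A}_i,\bm{X}\rangle$ is $\mathcal{N}(0,1)$, and these $m$ variables are independent across $i$. Consequently $m\|\mathcal{A}(\bm{X})\|_2^2 = \sum_{i=1}^m \langle \bm{A}_i,\bm{X}\rangle^2$ follows a $\chi_m^2$ distribution, and a standard Laurent--Massart type tail bound gives
\begin{equation*}
	\Pr\Big( \big| \|\mathcal{A}(\bm{X})\|_2^2 - \|\bm{X}\|_{\mathrm{F}}^2 \big| > t\,\|\bm{X}\|_{\mathrm{F}}^2 \Big) \leq 2\exp(-c_0 m t^2)
\end{equation*}
for any $t\in(0,1)$ and a universal constant $c_0>0$. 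This is the only place where Gaussianity is really used; what matters is sub-Gaussian concentration of the squared projections.

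Next I would build an $\varepsilon$-net for the set $\mathcal{S}_r := \{\bm{X}\in\mathbb{R}^{n_1\times n_2}: \mathrm{rank}(\bm{X})\leq r,\,\|\bm{X}\|_{\mathrm{F}}=1\}$. Using the SVD $\bm{X}=\bm{U}\bm{\Sigma}\bm{V}^{\top}$ with $\bm{U}\in \mathbb{R}^{n_1\times r}$, $\bm{V}\in\mathbb{R}^{n_2\times r}$ having orthonormal columns and $\bm{\Sigma}$ diagonal with $\|\bm{\Sigma}\|_{\mathrm{F}}=1$, a standard volumetric estimate produces nets for each factor of sizes at most $(C/\varepsilon)^{n_1 r}$, $(C/\varepsilon)^{n_2 r}$, and $(C/\varepsilon)^{r}$ respectively. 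Gluing these together and using that the map $(\bm{U},\bm{\Sigma},\bm{V})\mapsto \bm{U}\bm{\Sigma}\bm{V}^{\top}$ is Lipschitz (in Frobenius norm) on the relevant bounded set yields a net $\mathcal{N}_\varepsilon\subset \mathcal{S}_r$ of cardinality $|\mathcal{N}_\varepsilon|\leq (C'/\varepsilon)^{(n_1+n_2+1)r}$ such that every $\bm{X}\in\mathcal{S}_r$ is within Frobenius distance $\varepsilon$ of some element of $\mathcal{N}_\varepsilon$.

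Finally I would combine these pieces. Applying the pointwise bound with $t=\delta_r/2$ and taking a union bound over $\mathcal{N}_\varepsilon$, the event
\begin{equation*}
	\big|\|\mathcal{A}(\bm{X})\|_2^2 - 1\big| \leq \delta_r/2, \qquad \forall \bm{X}\in \mathcal{N}_\varepsilon
\end{equation*}
holds with probability at least $1-2|\mathcal{N}_\varepsilon|\exp(-c_0 m \delta_r^2/4)$, which is $1-e^{-\Omega(m\delta_r^2)}$ provided $m \gtrsim (n_1+n_2)r/\delta_r^2$ with a sufficiently large implicit constant. The main obstacle, and the step I would spend most care on, is the approximation argument extending the bound from $\mathcal{N}_\varepsilon$ to all of $\mathcal{S}_r$: a naive extension loses a factor proportional to the operator norm of $\mathcal{A}$ restricted to rank-$2r$ matrices, which is itself what we are trying to bound. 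The standard trick is to write any $\bm{X}\in\mathcal{S}_r$ as $\bm{X}=\bm{X}_0+\bm{\Delta}$ with $\bm{X}_0\in\mathcal{N}_\varepsilon$ and $\|\bm{\Delta}\|_{\mathrm{F}}\leq\varepsilon$, note that $\bm{\Delta}$ has rank at most $2r$, and set up a self-bounding inequality for $B:=\sup_{\bm{Y}\in\mathcal{S}_{2r}}\big|\|\mathcal{A}(\bm{Y})\|_2^2-1\big|$ of the form $B\leq \delta_r/2 + C''\varepsilon B + C'''\varepsilon$. Choosing $\varepsilon$ a small absolute constant (and inflating the net to cover rank-$2r$, which only affects constants) lets us solve for $B\leq \delta_r$, completing the proof.
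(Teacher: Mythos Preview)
The paper does not actually prove this statement; it is labeled a ``Fact'' and simply cited to \cite{recht2010guaranteed,candes2011tight} without any accompanying argument. Your proposal is correct and is precisely the classical proof from those references: pointwise $\chi^2$ concentration, an $\varepsilon$-net of cardinality $(C/\varepsilon)^{O((n_1+n_2)r)}$ on the unit-Frobenius rank-$r$ manifold built from nets on the SVD factors, and a union bound followed by the self-bounding approximation step to pass from the net to all of $\mathcal{S}_r$. Your identification of the approximation step as the only delicate point, and your resolution via the rank-$2r$ supremum and a small-constant choice of $\varepsilon$, are both standard and sound.
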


\subsection{Phase retrieval and quadratic sensing}
\label{sec:PR}

Imagine that we have access to $m$ quadratic measurements of a rank-1 matrix $\bm{M}_{\star}:=\bm{x}_{\star}\bm{x}_{\star}^{\top}\in \mathbb{R}^{n\times n}$:
\begin{equation}
	\label{eq:PR-samples}
	y_i = (\bm{a}_i^{\top} \bm{x}_{\star})^2 = \bm{a}_i^{\top} \bm{M}_{\star} \bm{a}_i,  \qquad 1\leq i\leq m,
\end{equation}
where $\bm{a}_i\in \mathbb{R}^n$ is the design vector known {\em a priori}. How can we reconstruct $\bm{x}_{\star}\in \mathbb{R}^n$ --- or equivalently, $\bm{M}_{\star}=\bm{x}_{\star}\bm{x}_{\star}^{\top}$ --- from this collection of quadratic equations about $\bm{x}_{\star}$? 
This problem, often dubbed as {\em phase retrieval}, arises for example in X-ray
crystallography, where one needs to recover a specimen based on intensities of the diffracted waves scattered by the object \cite{fienup1982phase,shechtman2015phase,candes2012phaselift,jaganathan2015phase}. Mathematically, the problem can be posed as finding a solution to the following program
%
%
\begin{equation}
	\underset{\bm{x}\in\mathbb{R}^{n}}{\text{minimize}}\,\,\myquadinv f(\bm{x})=\frac{1}{4m}\sum_{i=1}^{m}\big((\bm{a}_{i}^{\top}\bm{x})^{2}- y_i \big)^{2} .
	\label{eq:min-PR}
\end{equation}

More generally, consider the {\em quadratic sensing} problem, where we collect $m$ quadratic measurements of a rank-$r$ matrix $\bm{M}_{\star}:=\bm{X}_{\star}\bm{X}_{\star}^{\top}$ with $\bm{X}_{\star}\in\mathbb{R}^{n\times r}$:
\begin{equation}
	y_i =  \| \bm{a}_i^{\top} \bm{X}_{\star} \|_2^2 = \bm{a}_i^{\top} \bm{M}_{\star} \bm{a}_i,  \qquad 1\leq i\leq m.
\end{equation}
This subsumes phase retrieval as a special case, and comes up in applications such as covariance sketching  for  streaming data \cite{chen2015exact,cai2015rop}, and phase space tomography under the name {\em coherence retrieval} \cite{tian2012experimental,bao2018coherence}. Here, we wish to solve
%
%
\begin{equation}
	\underset{\bm{X}\in\mathbb{R}^{n\times r}}{\text{minimize}}\,\,\myquadinv f(\bm{X})=\frac{1}{4m}\sum_{i=1}^{m}\big(\|\bm{a}_{i}^{\top}\bm{X}\|_2^{2}- y_i \big)^{2} .
	\label{eq:min-PR_lowrank}
\end{equation}
Clearly, this is equivalent to the matrix sensing problem by taking $\bm{A}_i = \bm{a}_i \bm{a}_i^{\top}$.   Here and throughout, we assume i.i.d.~Gaussian design as follows,  a tractable model that has been extensively studied recently. 

\begin{assumption}[$\mathsf{Gaussian~design~in~phase~retrieval}$]  Suppose that the design vectors $\bm{a}_i \overset{\text{i.i.d.}}{\sim} \mathcal{N}(\bm{0},\bm{I}_n)$.  \end{assumption}

\subsection{Matrix completion} 
\label{sec:MC}

Suppose we observe partial entries of a low-rank matrix $\bm{M}_{\star}\in\mathbb{R}^{n_1\times n_2}$ of rank $r$, indexed by the sampling location set $\Omega$.  It is convenient to introduce a projection operator $\mathcal{P}_{\Omega}:\mathbb{R}^{n_1\times n_2} \mapsto \mathbb{R}^{n_1\times n_2}$ such that for an input matrix $\bm{M}\in\mathbb{R}^{n_1\times n_2}$, 
\begin{equation}
\big(\mathcal{P}_{\Omega}(\bm{M})\big)_{i,j}=\begin{cases}
M_{i,j},\quad & \text{if }(i,j)\in\Omega,\\
0, & \text{else}.
\end{cases}
	\label{defn:Pomega}
\end{equation}
%
%
The matrix completion problem then boils down to recovering $\bm{M}_{\star}$ from $\mathcal{P}_{\Omega}(\bm{M}_{\star})$ (or equivalently, from the partially observed entries of $\bm{M}_{\star}$) \cite{candes2009exact,CanTao10}. 
This arises in numerous scenarios; for instance, in collaborative filtering, we may want to predict the preferences of all users about a collection of movies, based on partially revealed ratings from the users.  Throughout this paper, we adopt the following random sampling model:

\begin{assumption}[$\mathsf{Random~sampling~in~matrix~completion}$] Each entry is observed independently with probability $0<p\leq 1$, i.e.
\begin{equation}
	(i,j)\in\Omega \qquad \text{ independently with probability } p.  \label{eq:random-sample-MC}
\end{equation}
\end{assumption}

%
For the positive semidefinite case where $\bm{M}_{\star}=\bm{X}_{\star}\bm{X}_{\star}^{\top}$, the task can be cast as solving
\begin{equation}
	\underset{\bm{X}\in\mathbb{R}^{n\times r}}{\text{minimize}}~~\myquadinv f(\bm{X})
	= \frac{1}{4p}\left\| \mathcal{P}_{\Omega}( \bm{X}\bm{X}^{\top} - \bM_{\star} ) \right\|_{\mathrm{F}}^{2} .
	\label{eq:MC-empirical-risk}
\end{equation}
When it comes to the more general case where $\bm{M}_{\star}=\bm{L}_{\star}\bm{R}_{\star}^{\top}$, the task boils down to solving
\begin{equation}
	\underset{\bm{L}\in\mathbb{R}^{n_1\times r},\bm{R}\in\mathbb{R}^{n_2\times r}} {\text{minimize}} \myquadinv f(\bm{L},\bm{R})
	 = \frac{1}{4p}  \left\| \mathcal{P}_{\Omega}( \bm{L}\bm{R}^{\top} - \bM_{\star} ) \right\|_{\mathrm{F}}^{2}. \label{eq:MC-empirical-risk-asym}
\end{equation}

One parameter that plays a crucial role in determining the feasibility of matrix completion is a certain coherence measure, defined as follows \cite{candes2009exact}.
\begin{definition}[$\mathsf{Incoherence~for~matrix~completion}$]
	\label{def:mc-incoherence}
	A rank-$r$ matrix $\bm{M}_{\star}\in \mathbb{R}^{n_1\times n_2}$  with singular value decomposition (SVD) $\bm{M}_{\star}=\bm{U}_{\star}\bm{\Sigma}_{\star}\bm{V}_{\star}^\top$
is said to be $\mu$-incoherent if 
\begin{subequations}
\begin{align}
	\left\Vert \bm{U}_{\star}\right\Vert _{2,\infty}\leq\sqrt{{\mu}/{n_1}}\left\Vert \bm{U}_{\star}\right\Vert _{\mathrm{F}}=\sqrt{{\mu r}/{n_1}};
	\label{eq:incoherence-U-MC} \\
	\left\Vert \bm{V}_{\star}\right\Vert _{2,\infty}\leq\sqrt{{\mu}/{n_2}}\left\Vert \bm{V}_{\star}\right\Vert _{\mathrm{F}}=\sqrt{{\mu r}/{n_2}}. 
	\label{eq:incoherence-V-MC}
\end{align}
\end{subequations}
\end{definition}
\noindent As shown in \cite{CanTao10}, a low-rank matrix cannot be recovered from a highly incomplete set of entries, unless the matrix satisfies the incoherence condition with a small $\mu$.

Throughout this paper, we let $n:= \max\{n_1,n_2\}$ when referring to the matrix completion problem, and set $\kappa =\sigma_1(\bm{M}_{\star})/\sigma_r(\bm{M}_{\star})$ to be the condition number of $\bm{M}_{\star}$. 
%

\subsection{Blind deconvolution (the subspace model)} 
\label{sec:BD}

Suppose that we want to recover two objects $\bm{h}_{\star}\in \mathbb{C}^K$ and $\bm{x}_{\star}\in \mathbb{C}^N$ --- or equivalently, the outer product $\bm{M}_{\star} = \bm{h}_{\star} \bm{x}_{\star}^{\conj}$ --- from $m$ bilinear measurements of the form
\begin{equation}
	y_i = \bm{b}_i^{\conj}\bm{h}_{\star} \bm{x}_{\star}^{\conj} \bm{a}_i,  \quad i = 1,\cdots,m. 
	\label{eq:samples-BD}
\end{equation}
To explain why this is called blind deconvolution, imagine  we would like to recover two signals $\bm{g}\in\mathbb{C}^m$ and $\bm{d}\in\mathbb{C}^m$ from their circulant convolution \cite{ahmed2014blind}. In the frequency domain, the outputs can be written as $\bm{y} = \mbox{diag}(\hat{\bm{g}})\, \hat{\bm{d}}$, 
where $\hat{\bm{g}}$ (resp.~$\hat{\bm{d}}$) is the Fourier transform of $\bm{g}$ (resp.~$\bm{d}$). 
If we have  additional knowledge that $\hat{\bm{g}}=\overline{\bm{A}\bm{x}_{\star}}$ and $\hat{\bm{d}}=\bm{B}\bm{h}_{\star}$ lie in some known  subspace characterized by $\bm{A}=[\bm{a}_{1},\cdots,\bm{a}_{m}]^{\conj}$ and $\bm{B}=[\bm{b}_{1},\cdots,\bm{b}_{m}]^{\conj}$, then $\bm{y}$ reduces to the bilinear form \eqref{eq:samples-BD}.  In this paper, we assume the following {\em semi-random} design, a common {\em subspace model} studied in the literature \cite{ahmed2014blind,li2016deconvolution}.

\begin{assumption}[$\mathsf{Semi}\text{-}\mathsf{random~design~in~blind~deconvolution}$]
Suppose that $\bm{a}_{j}\overset{\mathrm{i.i.d.}}{\sim}\mathcal{N}\left(\bm{0},\frac{1}{2}\bm{I}_{N}\right)+i\mathcal{N}\left(\bm{0},\frac{1}{2}\bm{I}_{N}\right)$, 
and that $\bm{B} \in\mathbb{C}^{m\times K}$ is formed by the first $K$ columns of a unitary discrete Fourier
transform (DFT) matrix $\bm{F}\in\mathbb{C}^{m\times m}$.   
\end{assumption}


To solve this problem, one seeks a solution to
\begin{equation}
	\label{eq:fmin-BD}
	\underset{\bm{h}\in \mathbb{C}^K,\bm{x}\in\mathbb{C}^{N}}{ \text{minimize}}~~\myquadinv f(\bm{h},\bm{x})=  \sum_{j=1}^{m}\big| \bm{b}_{j}^{\conj}\bm{h}\bm{x}^{\conj}\bm{a}_{j}- y_i \big|^{2}.
\end{equation}
The recovery performance typically depends on an incoherence
measure crucial for blind deconvolution.
\begin{definition}[$\mathsf{Incoherence~for~blind~deconvolution}$]
	\label{def:BD-mu}
	Let the incoherence parameter $\mu$ of $\bh_{\star}$ be the smallest number such that 
\begin{equation}
\max_{1\leq j\leq m}\left|\bm{b}_{j}^{\conj}\bm{h}_{\star}\right|\leq\frac{\mu}{\sqrt{m}}\left\Vert \bm{h}_{\star}\right\Vert _{2}.\label{eq:incoherence-BD}
\end{equation}
\end{definition}


\subsection{Low-rank and sparse matrix decomposition\,/\,robust principal component analysis}
\label{sec:RPCA}

Suppose we are given a matrix $\bm{\Gamma}_{\star}\in\mathbb{R}^{n_1\times n_2}$ that is a superposition of a rank-$r$ matrix $\bm{M}_{\star}\in\mathbb{R}^{n_1\times n_2}$ and a sparse matrix $\bm{S}_{\star}\in\mathbb{R}^{n_1\times n_2}$:
\begin{equation}\label{eq:rpca_decomp}
\bm{\Gamma}_{\star} = \bm{M}_{\star} + \bm{S}_{\star}.
\end{equation}
The goal is to separate $\bm{M}_{\star}$ and $\bm{S}_{\star}$ from the (possibly partial) entries of $\bm{\Gamma}_{\star}$. This problem is also known as {\em robust principal component analysis} \cite{chandrasekaran2011siam,candes2009robustPCA,vaswani2017robust}, since we can think of it as recovering the low-rank factors of  $\bm{M}_{\star}$ when the observed entries are corrupted by {\em sparse outliers} (modeled by $\bm{S}_{\star}$). The problem spans numerous applications in computer vision, medical imaging, and surveillance. 

Similar to the matrix completion problem, we assume the random sampling model \eqref{eq:random-sample-MC}, where $\Omega$ is the set of observed entries. 
In order to make the problem well-posed, we need the incoherence parameter of $\bM_{\star}$ as defined in Definition~\ref{def:mc-incoherence} as well, which precludes $\bM_{\star}$ from being too spiky.  In addition, it is sometimes convenient to introduce the following deterministic condition on the sparsity pattern and sparsity level of $\bm{S}_{\star}$,  originally proposed by \cite{chandrasekaran2011siam}. Specifically, it is assumed that the non-zero entries of $\bm{S}_{\star}$ are ``spread out'', where there are at most a  fraction $\alpha$ of non-zeros per row\,/\,column. Mathematically, this means:

\begin{assumption}
	It is assumed that $\bS_{\star}\in\mathcal{S}_\alpha\subseteq \mathbb{R}^{n_1\times n_2}$, where
\begin{equation}
	\mathcal{S}_\alpha := \left\{\bS \,:\,  \|(\bS_{\star})_{i,\cdot}\|_0 \leq \alpha n_2, \; \|(\bS_{\star})_{\cdot,j}\|_0 \leq \alpha n_1, \forall i,j \right\}. 
\end{equation}
\end{assumption}

For the positive semidefinite case where $\bm{M}_{\star}=\bm{X}_{\star}\bm{X}_{\star}^{\top}$ with $\bm{X}_{\star}\in \mathbb{R}^{n\times r}$, the task can be cast as solving
\begin{equation}
	\underset{\bm{X}\in\mathbb{R}^{n\times r},\,\bm{S}\in\mathcal{S}_{\alpha}}{\text{minimize}}\myquadinv f(\bm{X},\bm{S})
	= \frac{1}{4p}
	\left\| \mathcal{P}_{\Omega}(\bm{\Gamma}_{\star} - \bm{X}\bm{X}^{\top}- \bm{S} )\right\|_{\mathrm{F}}^2.
	\label{eq:RPCA-empirical-risk}
\end{equation}
The general case where $\bm{M}_{\star}=\bm{L}_{\star}\bm{R}_{\star}^{\top}$ can be formulated similarly by replacing $\bm{X}\bm{X}^{\top}$ with $\bm{L}\bm{R}^{\top}$ in \eqref{eq:RPCA-empirical-risk} and optimizing over $\bm{L}\in \mathbb{R}^{n_1\times r}$, $\bm{R}\in \mathbb{R}^{n_2\times r}$ and $\bm{S}\in\mathcal{S}_{\alpha}$, namely, 
\begin{equation}
	\underset{\bm{L}\in\mathbb{R}^{n_1\times r},\, \bm{R}\in\mathbb{R}^{n_2\times r},\, \bm{S}\in\mathcal{S}_{\alpha}}{\text{minimize}}\myquadinv f(\bm{L}, \bm{R},\bm{S})
	= \frac{1}{4p}
	\left\| \mathcal{P}_{\Omega}(\bm{\Gamma}_{\star} - \bm{L}\bm{R}^{\top}- \bm{S} )\right\|_{\mathrm{F}}^2.
	\label{eq:RPCA-empirical-risk-LR}
\end{equation}
%



\section{Local refinement via gradient descent}
\label{sec:gd}

This section contains extensive discussions of local convergence analysis of gradient descent. GD is perhaps the most basic optimization algorithm,  and its practical importance cannot be overstated.  Developing fundamental understanding  of this algorithm sheds light on the effectiveness of many other iterative algorithms for solving nonconvex problems.  

In the sequel, we will first examine what standard GD theory (cf.~Lemma \ref{lem:GD-convergence}) yields for matrix factorization problems; see Section \ref{sec:GD-standard-MF}. While the resulting computational guarantees are optimal for nearly isotropic sampling operators, they become highly pessimistic for most of other problems. We diagnose the cause in Section \ref{sec:restricted_geometry} and isolate an incoherence condition that is crucial to enable fast convergence of GD.   Section \ref{sec:Reg-GD} discusses how to enforce proper regularization to promote such an incoherence condition, while Section \ref{sec:Implicit-Reg} illustrates an implicit regularization phenomenon that allows unregularized GD to converge fast as well. We emphasize that generic optimization theory alone yields overly pessimistic convergence bounds;   one needs to blend computational and statistical analyses in order to understand the intriguing  performance of GD.



\subsection{Computational analysis via strong convexity and smoothness} 
\label{sec:GD-standard-MF}

To analyze  local convergence of GD, a natural strategy  is to resort to the standard GD theory in Lemma \ref{lem:GD-convergence}. This requires 
checking whether strong convexity and smoothness hold locally, as done in Section~\ref{sec:local_convergence_mf}. If so, then  Lemma \ref{lem:GD-convergence} yields an upper bound on the iteration complexity. This simple strategy works well when, for example, the sampling operator is nearly isotropic. In the sequel, we use a few examples to illustrate the applicability and potential drawback of this analysis strategy. 

\subsubsection{Measurements that satisfy the RIP (the rank-1 case)} 
We begin with the  {\em matrix sensing} problem \eqref{eq:matrix-sensing-samples} and consider the case where the  truth has rank 1, i.e.~$\bm{M}_{\star}=\bm{x}_{\star}\bm{x}_{\star}^{\top}$ for some vector $\bm{x}_{\star}\in \mathbb{R}^n$. This requires us to solve
\begin{align}
	\underset{\bm{x}\in \mathbb{R}^n}{\text{minimize}}~\myquadinv f(\bm{x})=\frac{1}{4m}\sum_{i=1}^{m}\big(\langle\bm{A}_{i},\bm{x}\bm{x}^{\top}\rangle- y_i \big)^{2}.    
	\label{eq:min-matrix-sensing}	
\end{align}
For notational simplicity, this subsection  focuses on the {\em symmetric} case where $\bm{A}_i=\bm{A}_i^{\top}$. 
The gradient  update rule \eqref{eq:GD-general} for this problem reads 
\begin{align} 
	\label{eq:GD-rank1-sensing}
	\bm{x}_{t+1} & =\bm{x}_{t}- \frac{\eta_{t}}{m} \sum_{i=1}^{m} \big( \langle\bm{A}_{i},\bm{x}_t\bm{x}_t^{\top}\rangle - y_i \big) \bm{A}_{i}\bm{x}_t  \nonumber \\
	& =\bm{x}_{t}-  \eta_{t} \mathcal{A}^*\mathcal{A}(\bm{x}_t\bm{x}_t^{\top}  - \bm{x}_{\star}\bm{x}_{\star}^{\top} ) \cdot \bm{x}_t ,
\end{align}
where $\mathcal{A}$ is defined in Section \ref{sec:matrix_sensing}, and $\mathcal{A}^*$ is the conjugate operator of $\mathcal{A}$. 

When the sensing matrices $[\bA_i]_{1\leq i\leq m}$ are random and isotropic,
\eqref{eq:min-matrix-sensing} can be viewed as a randomized version
of  the rank-1 matrix factorization problem discussed
in Section \ref{sec:noncvx_eg}. 
To see this, consider for instance the case where the $\bA_i$'s are drawn from the symmetric  Gaussian design, i.e.~the diagonal entries of $\bm{A}_i$ are i.i.d.~$\mathcal{N}(0,1)$ and the off-diagonal entries are i.i.d.~$\mathcal{N}(0,1/2)$. For any
fixed $\bm{x}$, one has
\[
	\mathbb{E}\left[\nabla f(\bm{x})\right]=(\bm{x}\bm{x}^{\top}-\bm{x}_{\star}\bm{x}_{\star}^{\top})\bm{x},
\]
which coincides with the warm-up example \eqref{eq:grad-rank1} by taking $\bm{M}=\bm{x}_{\star}\bm{x}_{\star}^{\top}$. This bodes well for fast local convergence of GD, at least at the population level (i.e.~the case when the sample size $m\rightarrow \infty$).

What happens in the finite-sample regime? It turns out that if the sensing operator satisfies the RIP (cf.~Definition~\ref{defn:RIPs}), then $\nabla^2 f(\cdot)$ does not deviate too much from its population-level counterpart, and hence $f(\cdot)$ remains locally strongly convex and smooth. This in turn allows one to invoke the standard GD theory to establish local linear convergence. 
\begin{theorem}[$\mathsf{GD~for~matrix~sensing~(rank\text{-}1)}$]
	\label{thm:convergence-rank1-sensing}
	Consider the  problem \eqref{eq:min-matrix-sensing}, and suppose the operator \eqref{eq:defn-A-sensing} satisfies $4$-RIP for RIP constant $\delta_4\leq {1}/{44}$. If $\|\bm{x}_0 - \bm{x}_{\star}\|_2\leq {\|\bm{x}_{\star}\|_2}/{12}$, then GD with $\eta_t\equiv 1/(3\|\bm{x}_{\star}\|_2^2)$ obeys
	\begin{equation}
		\|\bm{x}_{t}-\bm{x}_{\star}\|_{2}\leq\left(11/12\right)^{t}\|\bm{x}_{0}-\bm{x}_{\star}\|_{2}, \quad t = 0,1,\cdots
	\end{equation}
\end{theorem}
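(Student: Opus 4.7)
The plan is to mirror the proof of Theorem~\ref{thm:rank1-local} (the noiseless warm-up), turning the population-level strong convexity and smoothness of the rank-$1$ factorization Hessian into finite-sample counterparts via the RIP. The end goal is to verify the hypotheses of Lemma~\ref{lem:GD-convergence} in the ball $\mathcal{B}_{\zeta}(\bm{x}_{\star})$ with $\zeta = \|\bm{x}_{\star}\|_{2}/12$, with strong convexity parameter $\alpha$ and smoothness parameter $\beta$ satisfying $\beta \leq 3\|\bm{x}_{\star}\|_{2}^{2}$ and $\alpha/\beta \geq 1/12$, which both justifies the step size $\eta_{t}\equiv 1/(3\|\bm{x}_{\star}\|_{2}^{2})$ and produces the claimed contraction factor $11/12$.

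First, I would compute the gradient $\nabla f(\bm{x})=\mathcal{A}^{*}\mathcal{A}(\bm{x}\bm{x}^{\top}-\bm{x}_{\star}\bm{x}_{\star}^{\top})\bm{x}$ and, via a second-order Taylor expansion of $f(\bm{x}+\epsilon\bm{v})$, express the Hessian as
\[
\bm{v}^{\top}\nabla^{2}f(\bm{x})\bm{v}
=\tfrac{1}{2}\|\mathcal{A}(\bm{x}\bm{v}^{\top}+\bm{v}\bm{x}^{\top})\|_{2}^{2}
+\bm{v}^{\top}\mathcal{A}^{*}\mathcal{A}(\bm{x}\bm{x}^{\top}-\bm{x}_{\star}\bm{x}_{\star}^{\top})\bm{v}.
\]
The matrices $\bm{x}\bm{v}^{\top}+\bm{v}\bm{x}^{\top}$, $\bm{v}\bm{v}^{\top}$ and $\bm{x}\bm{x}^{\top}-\bm{x}_{\star}\bm{x}_{\star}^{\top}$ all have rank at most two, so Lemma~\ref{lemmq:RIP-cross} with $r=2$ (hence the $4$-RIP hypothesis) implies that each of these two terms deviates from its ``ideal'' value by at most $\delta_{4}$ times a product of Frobenius norms. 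After bounding $\|\bm{x}\bm{v}^{\top}+\bm{v}\bm{x}^{\top}\|_{\mathrm{F}}^{2}\leq 4\|\bm{x}\|_{2}^{2}\|\bm{v}\|_{2}^{2}$ and $\|\bm{x}\bm{x}^{\top}-\bm{x}_{\star}\bm{x}_{\star}^{\top}\|_{\mathrm{F}}\lesssim \|\bm{x}_{\star}\|_{2}\,\|\bm{x}-\bm{x}_{\star}\|_{2}$, this yields
\[
\bigl\|\nabla^{2}f(\bm{x})-\bm{H}_{\mathrm{pop}}(\bm{x})\bigr\|\lesssim \delta_{4}\,\|\bm{x}_{\star}\|_{2}^{2}
\]
throughout $\mathcal{B}_{\zeta}(\bm{x}_{\star})$, where $\bm{H}_{\mathrm{pop}}(\bm{x}):=\|\bm{x}\|_{2}^{2}\bm{I}_{n}+2\bm{x}\bm{x}^{\top}-\bm{x}_{\star}\bm{x}_{\star}^{\top}$ is precisely the Hessian from the warm-up with $\bm{M}=\bm{x}_{\star}\bm{x}_{\star}^{\top}$.

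Next, I would specialize the strong-convexity and smoothness calculations inside the proof of Theorem~\ref{thm:rank1-local} to $\lambda_{1}=\|\bm{x}_{\star}\|_{2}^{2}$ and $\lambda_{2}=0$. The same algebra — now over the slightly larger ball of radius $\|\bm{x}_{\star}\|_{2}/12$ — produces two-sided bounds of the form $c_{1}\|\bm{x}_{\star}\|_{2}^{2}\bm{I}_{n}\preceq \bm{H}_{\mathrm{pop}}(\bm{x})\preceq c_{2}\|\bm{x}_{\star}\|_{2}^{2}\bm{I}_{n}$ with explicit absolute constants $c_{1},c_{2}>0$. Combining with the RIP perturbation from the previous step and choosing $\delta_{4}\leq 1/44$ small enough to absorb the perturbation, I obtain $\alpha\bm{I}_{n}\preceq \nabla^{2}f(\bm{x})\preceq\beta\bm{I}_{n}$ on $\mathcal{B}_{\zeta}(\bm{x}_{\star})$ with $\beta\leq 3\|\bm{x}_{\star}\|_{2}^{2}$ and $\alpha\geq \tfrac{1}{4}\|\bm{x}_{\star}\|_{2}^{2}$, so $1-\alpha/\beta\leq 11/12$.

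Finally, I would invoke Lemma~\ref{lem:GD-convergence} with $\eta_{t}=1/\beta=1/(3\|\bm{x}_{\star}\|_{2}^{2})$; the same induction as in the warm-up — that each GD iterate stays in $\mathcal{B}_{\zeta}(\bm{x}_{\star})$ as long as it started there — delivers the stated geometric decay for all $t\geq 0$. The main obstacle will be the bookkeeping in the RIP perturbation step: the population strong-convexity margin is only $\Theta(\|\bm{x}_{\star}\|_{2}^{2})$, of the same order as the Hessian perturbation $\delta_{4}\|\bm{x}_{\star}\|_{2}^{2}$, so the small constants have to be tracked carefully to ensure strong convexity survives after perturbation. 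The sharp-looking condition $\delta_{4}\leq 1/44$ is almost certainly what enforces this balance.
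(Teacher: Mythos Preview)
Your proposal is correct and follows essentially the same approach as the paper: compute the Hessian quadratic form $\bm{v}^{\top}\nabla^{2}f(\bm{x})\bm{v}$, use the $4$-RIP (via Lemma~\ref{lemmq:RIP-cross}) to show it deviates from the population version $\bm{v}^{\top}\bm{H}_{\mathrm{pop}}(\bm{x})\bm{v}$ by at most $11\delta_{4}\|\bm{x}_{\star}\|_{2}^{2}\|\bm{v}\|_{2}^{2}$, bound the population quadratic form directly to obtain $0.25\|\bm{x}_{\star}\|_{2}^{2}\leq \bm{v}^{\top}\nabla^{2}f(\bm{x})\bm{v}\leq 3\|\bm{x}_{\star}\|_{2}^{2}$ on the ball of radius $\|\bm{x}_{\star}\|_{2}/12$, and then invoke Lemma~\ref{lem:GD-convergence}. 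The only cosmetic difference is that the paper does a fresh elementary calculation for the population bounds rather than specializing the proof of Theorem~\ref{thm:rank1-local}; either route yields exactly the constants $\alpha=0.25\|\bm{x}_{\star}\|_{2}^{2}$, $\beta=3\|\bm{x}_{\star}\|_{2}^{2}$ and hence the contraction rate $11/12$.
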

This theorem, which is a deterministic result, 
is established in Appendix~\ref{appendix:proof-thm:convergence-rank1-sensing}. 
An appealing feature is that: it is possible for such RIP to hold as long as the sample size $m$ is on the order of the information-theoretic limits (i.e.~$O(n)$), in view of Fact~\ref{fact:gaussian_rip}. The take-home message is: for highly random and nearly isotropic sampling schemes, local strong convexity and smoothness continue to hold even in the sample-limited regime.

\subsubsection{Measurements that satisfy the RIP (the rank-$r$ case)} 
\label{sec:GD-RIP-rankr}

The rank-1 case is singled out above due to its simplicity. The result certainly goes well beyond the rank-1 case. Again, we focus on the symmetric case\footnote{If the $\bA_i$'s are asymmetric, the gradient is given by $\nabla f(\bm{X})=\frac{1}{2m} \sum_{i=1}^{m} ( \langle\bm{A}_{i},\bm{X}\bm{X}^{\top}\rangle - y_i ) (\bm{A}_{i}+\bm{A}_i^{\top})\bm{X}$, although the  update rule \eqref{eq:GD-rankr-sensing} remains applicable.} \eqref{eq:min-matrix-sensing-rank-r} with $\bm{A}_i=\bm{A}_i^{\top}$, in which the gradient update rule \eqref{eq:GD-general} satisfies
\begin{align} 
	\label{eq:GD-rankr-sensing}
	\bm{X}_{t+1}=\bm{X}_{t}- \eta_{t} \underset{=\nabla f(\bm{X}_t)}{\underbrace{ \frac{1}{m} \sum\nolimits_{i=1}^{m} \big( \langle\bm{A}_{i},\bm{X}_t\bm{X}_t^{\top}\rangle - y_i \big) \bm{A}_{i}\bm{X}_t }}.
\end{align}
%

At first, one might imagine that $f(\cdot)$ remains locally strongly convex. This is, unfortunately, not true, as demonstrated by the following example.  
\begin{example}
\label{example:strong-cvx}
Suppose $\mathcal{A}^*\mathcal{A}(\cdot)$ is identity, then $f(\cdot)$ reduces to 
\begin{equation}
	f_{\infty}(\bX) =  \frac{1}{4}\|\bm{X}\bm{X}^{\top}-\bm{X}_{\star}\bm{X}_{\star}^{\top}\|_{\mathrm{F}}^{2}. \label{eq:pop-level-factorization}
\end{equation}
It can be shown that for any $\bm{Z}\in \mathbb{R}^{n\times r}$ (see \cite{ge2016matrix,ma2017implicit}):
\begin{align}
 	&\mathsf{vec}(\bm{Z})^{\top}\nabla^{2}f_{\infty}(\bm{X})\,\mathsf{vec}(\bm{Z}) \mynonumber \mylinebreak
	\myalign = 0.5\|\bm{X}\bm{Z}^{\top}+\bm{Z}\bm{X}^{\top}\|_{\mathrm{F}}^{2} + \langle\bm{X}\bm{X}^{\top}-\bm{X}_{\star}\bm{X}_{\star}^{\top},\bm{Z}\bm{Z}^{\top}\rangle.  \label{eq:Hess-infty}
\end{align}
Think of the following example
\begin{align}
	\label{eq:example-X-Z}
	\bm{X}_{\star}=[\bm{u},\bm{v}], ~~\bm{X}=(1-\delta)\bm{X}_{\star},~ \text{ and } ~\bm{Z}=[\bm{v},-\bm{u}]
\end{align}
for two unit vectors $\bm{u}, \bm{v}$ obeying $\bm{u}^{\top}\bm{v}=0$ and any $0<\delta<1$. 
It is straightforward  to verify that
\[
\mathsf{vec}(\bm{Z})^{\top}\nabla^{2}f_{\infty}(\bm{X})\,\mathsf{vec}(\bm{Z})=-2(2\delta-\delta^{2})<0 ,
\]
which violates convexity. Moreover, this happens even when $\bm{X}$ is arbitrarily close
	to $\bm{X}_{\star}$ (by taking $\delta\rightarrow0$). 
\end{example}

Fortunately, the above issue can be easily addressed. The key is to recognize that: one can only hope to recover $\bm{X}_{\star}$ up to global orthonormal transformation, unless further constraints are imposed. Hence, a more suitable error metric is
\begin{equation}
	\label{defn:dist-matrix}
	\mathsf{dist}(\bm{X},\bm{X}_{\star}):=\min_{\bm{H}\in \mathcal{O}^{r\times r}} \| \bm{X}\bm{H} - \bm{X}_{\star} \|_{\mathrm{F}},
\end{equation}
a counterpart of the Euclidean error when accounting for global ambiguity. For notational convenience, we let
\begin{align}\label{defn:HX}
	\bm{H}_{\bm{X}}:= \argmin_{\bm{H}\in\mathcal{O}^{r\times r}}\|\bm{X}\bm{H}-\bm{X}_{\star}\|_{\mathrm{F}}.
\end{align}
Finding $\bm{H}_{\bm{X}}$ is a classical problem called the {\em orthogonal Procrustes problem} \cite{ten1977orthogonal}. 

With these metrics in mind, we are ready to generalize the standard GD theory in Lemma \ref{lem:GD-convergence} and Lemma \ref{lem:convergence-RC}. In what follows, we assume that $\bm{X}_{\star}$ is a global minimizer of $f(\cdot)$, and make the further homogeneity assumption $\nabla f(\bm{X})\bm{H}=\nabla f(\bm{X}\bm{H})$ for any orthonormal matrix $\bm{H}\in \mathcal{O}^{r\times r}$ --- a common fact that arises in matrix factorization problems.   

\begin{lemma}
	\label{lem:GD-convergence-dist} 
	Suppose that $f$ is $\beta$-smooth within a  ball $\mathcal{B}_{\zeta}(\bm{X}_{\star}):=\left\{ \bm{X}: \|\bm{X}-\bm{X}_{\star} \|_{\mathrm{F}}\leq\zeta\right\} $, and that $\nabla f(\bm{X})\bm{H}=\nabla f(\bm{X}\bm{H})$ for any orthonormal matrix $\bm{H}\in \mathcal{O}^{r\times r}$. 
	Assume that for any $\bm{X}\in\mathcal{B}_{\zeta}(\bm{X}_{\star})$ and any $\bm{Z}$,
	\begin{align}
		& \mathsf{vec}(\bm{Z}\bm{H}_{\bm{Z}}-\bm{X}_{\star})^{\top}\nabla^{2}f(\bm{X})\,\mathsf{vec}(\bm{Z}\bm{H}_{\bm{Z}}-\bm{X}_{\star}) \mynonumber\mylinebreak
		\myalign \myquad \myquad \geq\alpha\| \bm{Z} \bm{H}_{\bm{Z}} - \bm{X}_{\star}\|_{\mathrm{F}}^{2}. 	\label{eq:strong-cvx-rotate-1point}
	\end{align}
	If $\eta_{t}\equiv 1/\beta$, then GD with  $\bm{X}_{0}\in\mathcal{B}_{\zeta}(\bm{X}_{\star})$ obeys
	\begin{equation*}
		\mathsf{dist}^2\big( \bm{X}_{t}, \bm{X}_{\star} \big) \leq \left(1-\frac{\alpha}{\beta}\right)^{t} \mathsf{dist}^2\big( \bm{X}_{0}, \bm{X}_{\star} \big),\quad t\geq 0.
	\end{equation*}
\end{lemma}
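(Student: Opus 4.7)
The plan is to repeat the argument behind Lemma~\ref{lem:GD-convergence} in a rotation-aligned reference frame, using the homogeneity hypothesis to transfer the GD update cleanly. Define $\widetilde{\bm{X}}_{t}:=\bm{X}_{t}\bm{H}_{\bm{X}_{t}}$, so that $\|\widetilde{\bm{X}}_{t}-\bm{X}_{\star}\|_{\mathrm{F}}=\mathsf{dist}(\bm{X}_{t},\bm{X}_{\star})$ by construction. Right-multiplying the GD update by $\bm{H}_{\bm{X}_{t}}$ and invoking $\nabla f(\bm{X})\bm{H}=\nabla f(\bm{X}\bm{H})$ yields the aligned-frame identity
\[
\bm{X}_{t+1}\bm{H}_{\bm{X}_{t}}=\widetilde{\bm{X}}_{t}-\eta_{t}\nabla f\big(\widetilde{\bm{X}}_{t}\big).
\]
Since $\bm{H}_{\bm{X}_{t}}$ is only a feasible---not necessarily optimal---alignment for $\bm{X}_{t+1}$, one has $\mathsf{dist}(\bm{X}_{t+1},\bm{X}_{\star})\leq\|\bm{X}_{t+1}\bm{H}_{\bm{X}_{t}}-\bm{X}_{\star}\|_{\mathrm{F}}$, so it suffices to contract the latter quantity.

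Next, I would mirror the Taylor-expansion step of Lemma~\ref{lem:GD-convergence}. Since $\bm{X}_{\star}$ is a global minimizer, $\nabla f(\bm{X}_{\star})=\bm{0}$, and with $\bm{\Delta}:=\widetilde{\bm{X}}_{t}-\bm{X}_{\star}$ and $\bm{X}(\tau):=\bm{X}_{\star}+\tau\bm{\Delta}$ the fundamental theorem of calculus gives
\[
\mathsf{vec}\big(\bm{X}_{t+1}\bm{H}_{\bm{X}_{t}}-\bm{X}_{\star}\big)=\Big(\bm{I}-\eta_{t}\!\int_{0}^{1}\!\nabla^{2}f\big(\bm{X}(\tau)\big)\,\mathrm{d}\tau\Big)\mathsf{vec}(\bm{\Delta}).
\]
Whenever $\bm{X}_{t}\in\mathcal{B}_{\zeta}(\bm{X}_{\star})$, the segment $\bm{X}(\tau)$ also lies in this ball, so $\|\nabla^{2}f(\bm{X}(\tau))\|\leq\beta$ by smoothness. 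For the matching lower bound, I would take $\bm{Z}=\widetilde{\bm{X}}_{t}$ in \eqref{eq:strong-cvx-rotate-1point}; a short argument using the definition of the Procrustes alignment shows $\bm{H}_{\widetilde{\bm{X}}_{t}}=\bm{I}$, so $\bm{Z}\bm{H}_{\bm{Z}}-\bm{X}_{\star}=\bm{\Delta}$ and therefore $\mathsf{vec}(\bm{\Delta})^{\top}\nabla^{2}f(\bm{X}(\tau))\mathsf{vec}(\bm{\Delta})\geq\alpha\|\bm{\Delta}\|_{\mathrm{F}}^{2}$ for all $\tau\in[0,1]$.

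To turn these estimates into the claimed $(1-\alpha/\beta)$ contraction, my plan is to recast them as a regularity condition $\mathsf{RC}(1/\beta,\alpha,\zeta)$ in the sense of Definition~\ref{def:reg-condition} and then invoke Lemma~\ref{lem:convergence-RC}. Integrating the Hessian along $\bm{X}(\tau)$ delivers $\langle\nabla f(\widetilde{\bm{X}}_{t}),\bm{\Delta}\rangle\geq\alpha\|\bm{\Delta}\|_{\mathrm{F}}^{2}$ together with the Taylor identity $f(\widetilde{\bm{X}}_{t})-f(\bm{X}_{\star})\leq\langle\nabla f(\widetilde{\bm{X}}_{t}),\bm{\Delta}\rangle-\tfrac{\alpha}{2}\|\bm{\Delta}\|_{\mathrm{F}}^{2}$, while the smoothness-based descent step \eqref{eq:smoothness_consequence}, together with global optimality of $\bm{X}_{\star}$, yields $\|\nabla f(\widetilde{\bm{X}}_{t})\|_{\mathrm{F}}^{2}\leq 2\beta[f(\widetilde{\bm{X}}_{t})-f(\bm{X}_{\star})]$. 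Chaining the two produces
\[
2\langle\nabla f(\widetilde{\bm{X}}_{t}),\bm{\Delta}\rangle\geq\tfrac{1}{\beta}\|\nabla f(\widetilde{\bm{X}}_{t})\|_{\mathrm{F}}^{2}+\alpha\|\bm{\Delta}\|_{\mathrm{F}}^{2},
\]
which is exactly the regularity condition, and Lemma~\ref{lem:convergence-RC} with $\eta_{t}=1/\beta$ yields $\|\bm{X}_{t+1}\bm{H}_{\bm{X}_{t}}-\bm{X}_{\star}\|_{\mathrm{F}}^{2}\leq(1-\alpha/\beta)\,\mathsf{dist}^{2}(\bm{X}_{t},\bm{X}_{\star})$. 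The resulting monotone decrease of $\mathsf{dist}$ keeps every subsequent iterate in $\mathcal{B}_{\zeta}(\bm{X}_{\star})$, so the argument can be iterated to close the proof.

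The step I expect to be the main obstacle is precisely the passage from the \emph{one-point} condition \eqref{eq:strong-cvx-rotate-1point}---which only controls the Hessian along the specific direction $\mathsf{vec}(\bm{\Delta})$---to a contraction rate matching that of fully strongly convex GD. Naively combining one-point strong convexity with the crude bound $\|\nabla f\|_{\mathrm{F}}\leq\beta\|\bm{\Delta}\|_{\mathrm{F}}$ from smoothness alone yields a per-step factor $1-2\eta\alpha+\eta^{2}\beta^{2}$, which fails to contract at $\eta=1/\beta$. The regularity-condition detour above, which uses the smoothness-based function-value descent lemma to sharpen the gradient-norm estimate, is the key technical device that rescues the tight $(1-\alpha/\beta)$ rate.
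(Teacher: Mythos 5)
Your proof is correct and follows essentially the same route as the paper's: reduce to the Procrustes-aligned iterate, establish the regularity condition $\mathsf{RC}(1/\beta,\alpha,\zeta)$ at that point by combining the directional Hessian lower bound \eqref{eq:strong-cvx-rotate-1point} (via Taylor's theorem) with the smoothness-based descent inequality \eqref{eq:smoothness_consequence}, then invoke Lemma~\ref{lem:convergence-RC} and close by induction. You make explicit two steps the paper leaves implicit, namely the homogeneity-based transport of the GD update into the aligned frame and the observation that one-point strong convexity cannot be pushed directly through the Hessian-integral argument of Lemma~\ref{lem:GD-convergence}.
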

\begin{proof}[Proof of Lemma \ref{lem:GD-convergence-dist}]
For notational simplicity, let $$\bm{H}_{t}:=\argmin_{\bm{H}\in\mathcal{O}^{r\times r}}\|\bm{X}_{t}\bm{H}-\bm{X}_{\star}\|_{\mathrm{F}}.$$
	First, by definition of $\mathsf{dist}(\cdot,\cdot)$ we have 
$$ \mathsf{dist}^{2}(\bm{X}_{t+1},\bm{X}_{\star}) \leq\|\bm{X}_{t+1}\bm{H}_{t}-\bm{X}_{\star}\|_{\mathrm{F}}^{2}. $$
Next, we claim that a modified regularity condition  $\mathsf{RC}( 1/\beta ,\alpha, \zeta )$  (cf.~Definition \ref{def:reg-condition}) holds for all $t\geq 0$, that is, 
\begin{align}\label{eq:RC-rotate-1point}
	2 \langle\, \nabla f(\bm{X}_{t}\bm{H}_{t}), \bm{X}_t\bm{H}_{t}-\bm{X}_{\star} \rangle\geq \frac{1}{\beta}\|  \nabla f(\bm{X}_t\bm{H}_t) \|^2_{\mathrm{F}} + \alpha \| \bm{X}_t\bm{H}_t- \bm{X}_{\star}\|_{\mathrm{F}}^{2}, \qquad  t\geq 0.
\end{align}
If this claim is valid, then it is straightforward to adapt Lemma \ref{lem:convergence-RC} to obtain 
\begin{align}
	\mathsf{dist}^{2}(\bm{X}_{t+1},\bm{X}_{\star}) \leq\|\bm{X}_{t+1}\bm{H}_{t}-\bm{X}_{\star}\|_{\mathrm{F}}^{2} \leq \left(1- \frac{\alpha}{\beta} \right) \|\bm{X}_{t}\bm{H}_{t}-\bm{X}_{\star}\|_{\mathrm{F}}^{2} = \left(1- \frac{\alpha}{\beta} \right) \mathsf{dist}^{2}(\bm{X}_{t},\bm{X}_{\star}),
	\label{eq:dist-shrinkage-sensing}
\end{align}
thus establishing the advertised linear convergence. We omit this part for brevity.  



The rest of the proof is thus dedicated to justifying \eqref{eq:RC-rotate-1point}. First,  Taylor's theorem reveals that
\begin{align}
	f(\bm{X}_{\star}) & = f(\bm{X}_{t}\bm{H}_t) - \left\langle \nabla f(\bm{X}_{t}\bm{H}_t) , \bm{X}_{t}\bm{H}_t - \bm{X}_{\star} \right\rangle +\frac{1}{2}\mathsf{vec}(\bm{X}_{t}\bm{H}_{t}-\bm{X}_{\star})^{\top} \nabla^{2}f(\bm{X}(\tau) ) \mathsf{vec}(\bm{X}_{t}\bm{H}_{t}-\bm{X}_{\star}), \label{eq:f-taylor-sensing}
\end{align}
where $\bm{X}(\tau):=\bm{X}_{t}\bm{H}_{t}+\tau(\bm{X}_{\star} -\bm{X}_{t}\bm{H}_{t})$ for some $0\leq \tau \leq 1$. 
If $\bm{X}_{t}\bm{H}_{t}$ lies within $\mathcal{B}_{\zeta}(\bm{X}_{\star})$, then one has $\bm{X}(\tau) \in \mathcal{B}_{\zeta}(\bm{X}_{\star})$ as well. 
We can then substitute the condition (\ref{eq:strong-cvx-rotate-1point}) into \eqref{eq:f-taylor-sensing} to reach
\begin{align} \label{eq:modified_one_point_convexity}
f(\bm{X}_{\star}) & \geq f(\bm{X}_{t}\bm{H}_t) - \left\langle \nabla f(\bm{X}_{t}\bm{H}_t) , \bm{X}_{t}\bm{H}_t - \bm{X}_{\star} \right\rangle + \frac{\alpha}{2} \left\|\bm{X}_{t}\bm{H}_{t}-\bm{X}_{\star} \right\|_{\mathrm{F}}^2,
\end{align}
which can be regarded as a modified version of the one point convexity \eqref{eq:one_point_convexity}. In addition, repeating the argument in \eqref{eq:smoothness_consequence} yields
\begin{align}
f(\bm{X}_{\star})- f(\bm{X}_{t}\bm{H}_t) &\leq -\frac{1}{2\beta} \left\| \nabla f(\bm{X}_{t}\bm{H}_t) \right\|_{\mathrm{F}}^2  \label{eq:modified_smoothness_consequence}
\end{align}
which is a consequence of the smoothness assumption. Combining \eqref{eq:modified_one_point_convexity} and \eqref{eq:modified_smoothness_consequence} establishes  \eqref{eq:RC-rotate-1point} for the $t$th iteration, provided that $\bm{X}_t\bm{H}_t\in \mathcal{B}_{\zeta}(\bm{X}_{\star})$.
Finally, since the initial point is assumed to fall within $\mathcal{B}_{\zeta}(\bm{X}_{\star})$,  we immediately learn from 
	\eqref{eq:dist-shrinkage-sensing} and induction that $\bm{X}_t\bm{H}_t\in \mathcal{B}_{\zeta}(\bm{X}_{\star})$ for all $t\geq 0$. This concludes the proof. 
\end{proof}

The condition \eqref{eq:strong-cvx-rotate-1point} 
is a modification of strong convexity to account for global rotation. In particular, 
it restricts attention to directions of the form $\bm{Z}\bm{H}_{\bm{Z}}-\bm{X}_{\star}$, where one first adjusts the orientation of $\bm{Z}$ to best align with the global minimizer. To confirm that such restriction is sensible, we revisit Example \ref{example:strong-cvx}. With proper rotation, one has
$\bm{Z}\bm{H}_{\bm{Z}}=\bm{X}_{\star}$ and hence
\[
	\mathsf{vec}(\bm{Z}\bm{H}_{\bm{Z}})^{\top}\nabla^{2}f(\bm{X})\,\mathsf{vec}(\bm{Z}\bm{H}_{\bm{Z}})= 2(2-6\delta+3\delta^2),
\]
which becomes strictly positive for $\delta\leq 1/3$. In fact, 
 if $\bm{X}$ is sufficiently close to $\bm{X}_{\star}$, then the condition \eqref{eq:strong-cvx-rotate-1point} is valid for \eqref{eq:pop-level-factorization}.  Details are deferred to Appendix \ref{appendix-strong-cvx-rotation-pop}.

 Further, similar to the analysis for the rank-1 case, we can  demonstrate that if  $\mathcal{A}$ satisfies the RIP for some sufficiently small RIP constant, then $\nabla^2 f(\cdot)$ is locally not far from $\nabla^2 f_{\infty}(\cdot)$ in Example \ref{example:strong-cvx}, meaning that the condition \eqref{eq:strong-cvx-rotate-1point} continues to hold for some $\alpha > 0$. 
This leads to the following result. It is assumed that 
the ground truth $\bm{M}_{\star}$ has condition number $\kappa$.   
\begin{theorem}[$\mathsf{GD~for~matrix~sensing~(rank\text{-}}r\mathsf{)}$ \cite{tu2015low,zheng2015convergent}]
	\label{thm:convergence-rankr-sensing}
	Consider the  problem \eqref{eq:min-matrix-sensing-rank-r}, and suppose the operator \eqref{eq:defn-A-sensing} satisfies the $6r$-RIP with  RIP constant $\delta_{6r}\leq {1}/{10}$. Then there exist some universal constants $c_0,c_1>0$ such that if $\mathsf{dist}^2( \bm{X}_{0}, \bm{X}_{\star} )\leq \sigma_{r}(\bm{M}_{\star})/16$, then GD with $\eta_t\equiv c_0/\sigma_{1}(\bm{M}_{\star})$ obeys
	\[
		\mathsf{dist}^2( \bm{X}_{t}, \bm{X}_{\star} ) \leq \left(1-\frac{c_1}{\kappa}\right)^{t} \mathsf{dist}^2( \bm{X}_{0}, \bm{X}_{\star} ), \quad t = 0,1,\cdots
	\]
\end{theorem}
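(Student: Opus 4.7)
The plan is to invoke Lemma~\ref{lem:GD-convergence-dist}, which requires verifying on a suitable ball $\mathcal{B}_\zeta(\bm{X}_\star)$ with $\zeta^2$ proportional to $\sigma_r(\bm{M}_\star)$ (chosen so the stated initialization hypothesis $\mathsf{dist}^2(\bm{X}_0,\bm{X}_\star)\le \sigma_r(\bm{M}_\star)/16$ lies inside the basin): (i) smoothness $\nabla^2 f(\bm{X})\preceq \beta \bm{I}$ with $\beta \asymp \sigma_1(\bm{M}_\star)$, and (ii) the one-point restricted strong convexity \eqref{eq:strong-cvx-rotate-1point} with $\alpha \asymp \sigma_r(\bm{M}_\star)$. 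The rotational homogeneity $\nabla f(\bm{X}\bm{H}) = \nabla f(\bm{X})\bm{H}$ for $\bm{H}\in\mathcal{O}^{r\times r}$ is immediate from the symmetric structure of $f$. Once (i) and (ii) are in place, Lemma~\ref{lem:GD-convergence-dist} delivers linear contraction at the advertised rate $1-\alpha/\beta \asymp 1-1/\kappa$.

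The first reduction is to transfer the Hessian analysis from the empirical loss to its population counterpart \eqref{eq:Hess-infty} via the RIP. A direct calculation yields
\begin{equation*}
\mathsf{vec}(\bm{W})^\top \nabla^2 f(\bm{X})\, \mathsf{vec}(\bm{W}) = \tfrac{1}{2}\|\mathcal{A}(\bm{X}\bm{W}^\top + \bm{W}\bm{X}^\top)\|_2^2 + \big\langle \mathcal{A}^*\mathcal{A}(\bm{X}\bm{X}^\top - \bm{X}_\star\bm{X}_\star^\top),\, \bm{W}\bm{W}^\top\big\rangle.
\end{equation*}
Since $\bm{X}\bm{W}^\top + \bm{W}\bm{X}^\top$, $\bm{W}\bm{W}^\top$, and $\bm{X}\bm{X}^\top - \bm{X}_\star\bm{X}_\star^\top$ each have rank at most $2r$, the $6r$-RIP together with the cross-inner-product form in Lemma~\ref{lemmq:RIP-cross} reduces both summands to their population counterparts up to an $O(\delta_{6r})$ multiplicative slack. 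This immediately yields the smoothness bound via the crude estimates $\|\bm{X}\|^2\lesssim \sigma_1(\bm{M}_\star)$ and $\|\bm{X}\bm{X}^\top - \bm{X}_\star\bm{X}_\star^\top\|_\mathrm{F} \lesssim \sqrt{\sigma_1(\bm{M}_\star)}\,\zeta$ valid on $\mathcal{B}_\zeta(\bm{X}_\star)$; the remaining task is to establish (ii) for $f_\infty$ with $\alpha \asymp \sigma_r(\bm{M}_\star)$.

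To prove the restricted strong convexity for $f_\infty$, set $\bm{W} := \bm{Z}\bm{H}_{\bm{Z}} - \bm{X}_\star$, and note that the first-order optimality of the Procrustes rotation $\bm{H}_{\bm{Z}}$ forces $\bm{W}^\top \bm{X}_\star$ to be symmetric. Using the rotational invariance of $f_\infty$, we may further replace $\bm{X}$ by $\bm{X}\bm{H}_{\bm{X}}$ so that $\bm{D}:=\bm{X}\bm{H}_{\bm{X}} - \bm{X}_\star$ satisfies $\|\bm{D}\|_\mathrm{F}\le \zeta$. Substituting $\bm{X} = \bm{X}_\star + \bm{D}$ into \eqref{eq:Hess-infty}, the leading contribution (obtained by setting $\bm{D}=\bm{0}$ in the quadratic term) is
\begin{equation*}
\tfrac{1}{2}\|\bm{X}_\star \bm{W}^\top + \bm{W}\bm{X}_\star^\top\|_\mathrm{F}^2 \geq \sigma_r(\bm{M}_\star)\|\bm{W}\|_\mathrm{F}^2,
\end{equation*}
which follows from the identity $\|\bm{A}+\bm{A}^\top\|_\mathrm{F}^2 = 2\|\bm{A}\|_\mathrm{F}^2 + 2\,\mathsf{Tr}(\bm{A}^2)$ applied to $\bm{A} = \bm{X}_\star \bm{W}^\top$: the Procrustes symmetry makes $\mathsf{Tr}(\bm{A}^2) = \mathsf{Tr}(\bm{M}_\star \bm{W}\bm{W}^\top) \geq 0$, while $\|\bm{A}\|_\mathrm{F}^2 = \mathsf{Tr}(\bm{X}_\star^\top \bm{X}_\star \bm{W}^\top \bm{W}) \geq \sigma_r(\bm{M}_\star)\|\bm{W}\|_\mathrm{F}^2$ since $\bm{X}_\star^\top\bm{X}_\star \succeq \sigma_r(\bm{M}_\star)\bm{I}_r$. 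The remaining terms---the sign-indefinite inner product $\langle \bm{X}\bm{X}^\top - \bm{X}_\star\bm{X}_\star^\top,\, \bm{W}\bm{W}^\top\rangle$, the $\bm{D}$-cross contributions produced by $\bm{X} = \bm{X}_\star + \bm{D}$, and the $O(\delta_{6r})$ RIP slack---can each be bounded via Cauchy--Schwarz and absorbed into the leading $\sigma_r(\bm{M}_\star)\|\bm{W}\|_\mathrm{F}^2$ term by choosing $\zeta$ a sufficiently small constant multiple of $\sqrt{\sigma_r(\bm{M}_\star)}$ and taking $\delta_{6r}\le 1/10$.

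The principal obstacle is twofold. First, without the Procrustes symmetry $\bm{W}^\top\bm{X}_\star = \bm{X}_\star^\top\bm{W}$, the Hessian $\nabla^2 f_\infty(\bm{X}_\star)$ is genuinely rank-deficient along the tangent directions of the orbit $\{\bm{X}_\star\bm{H}:\bm{H}\in\mathcal{O}^{r\times r}\}$, so no unrestricted lower bound on the curvature can scale with $\sigma_r$; it is precisely the role of the rotation $\bm{H}_{\bm{Z}}$ to quotient out these flat directions and restore positive curvature. Second, the perturbation bookkeeping that absorbs the $\bm{D}$-cross and RIP-slack corrections into the leading $\sigma_r(\bm{M}_\star)$ curvature requires a careful split into spectral- versus Frobenius-norm factors, and is the step where the basin radius $\zeta$, the condition $\delta_{6r}\le 1/10$, and the final contraction rate $1-c_1/\kappa$ are simultaneously pinned down.
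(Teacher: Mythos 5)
Your proposal follows essentially the same route as the paper and the works it cites: transfer the empirical Hessian to the population Hessian $\nabla^2 f_\infty$ via the $6r$-RIP, establish the modified one-point restricted strong convexity \eqref{eq:strong-cvx-rotate-1point} using the Procrustes optimality condition that $\bm{X}_\star^\top\bm{Z}\bm{H}_{\bm{Z}}$ is symmetric PSD, verify smoothness, and invoke Lemma~\ref{lem:GD-convergence-dist} — this is exactly the strategy the paper sketches in Section~\ref{sec:GD-RIP-rankr} and details in Appendix~\ref{appendix-strong-cvx-rotation-pop}. Your algebraic derivation of the population-level curvature bound $\tfrac12\|\bm{X}_\star\bm{W}^\top+\bm{W}\bm{X}_\star^\top\|_\mathrm{F}^2 = \|\bm{X}_\star\bm{W}^\top\|_\mathrm{F}^2 + \mathsf{Tr}\big((\bm{W}^\top\bm{X}_\star)^2\big) \ge \sigma_r(\bm{M}_\star)\|\bm{W}\|_\mathrm{F}^2$ is in fact a slightly cleaner rendition of the paper's Appendix~\ref{appendix-strong-cvx-rotation-pop} computation (avoiding an unnecessary $1/\sqrt{r}$ factor that appears in the last line there), but it is the same idea, not a different proof.
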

\begin{remark} The algorithm \eqref{eq:GD-rankr-sensing} is referred to as {\em Procrustes flow} in \cite{tu2015low}. \end{remark}
Three implications of Theorem \ref{thm:convergence-rankr-sensing} merit particular attention: (1) the quality of the initialization depends on the least singular value of the truth, so as to ensure that the estimation error does not overwhelm any of the important signal direction; (2) the convergence rate becomes a function of the condition number $\kappa$: the better conditioned the truth is, the faster GD converges; (3) when a good initialization is present, GD converges linearly as long as the sample size $m$ is on the order of the information-theoretic limits (i.e.~$O(nr)$), in view of Fact~\ref{fact:gaussian_rip}.

\begin{remark}[$\mathsf{Asymmetric~case}$]\label{rmk:GD_asymetric}
	Our discussion continues to hold for the more general case where $\bM_{\star}=\bL_{\star}\bR_{\star}^{\top}$, although an extra regularization term has been suggested to balance the size of the two factors. Specifically, we introduce a regularized version of the loss \eqref{eq:min-matrix-sensing-rank-r-asymmetric} as follows
\begin{equation}\label{eq:regularized_loss_asymmetric}
f_{\mathsf{reg}}(\bm{L},\bm{R}) = f(\bm{L},\bm{R})  + \lambda \left\|\bL^{\top}\bL - \bR^{\top}\bR \right\|_{\mathrm{F}}^2
\end{equation}
with $\lambda$ a regularization parameter, e.g. $\lambda=1/32$ as suggested in~\cite{tu2015low,bhojanapalli2016dropping}.\footnote{In practice, $\lambda=0$ also works, indicating a regularization term might not be necessary.}
	Here, the regularization term 	$\left\|\bL^{\top}\bL - \bR^{\top}\bR \right\|_{\mathrm{F}}^2$ is included in order to balance the size of the two low-rank factors $\bm{L}$ and $\bm{R}$. 
	If one applies GD to the regularized loss:
\begin{subequations}\label{eq:GD-rankr-asym-sensing}
\begin{align} 
	\bm{L}_{t+1}  =\bm{L}_{t} &- \eta_{t} \nabla_{\bL} f_{\mathsf{reg}}(\bm{L}_t,\bm{R}_t), \\
	\bm{R}_{t+1}  =\bm{R}_{t} &- \eta_{t} \nabla_{\bR} f_{\mathsf{reg}}(\bm{L}_t,\bm{R}_t),
\end{align}
\end{subequations}
	then the convergence rate for the symmetric case remains valid by replacing $\bX_{\star}$ (resp.~$\bX_t$) with $\bX_{\star}={\scriptsize\begin{bmatrix}
\bm{L}_{\star} \\
\bm{R}_{\star}
	\end{bmatrix}}$ (resp.~$\bX_{t}={\scriptsize\begin{bmatrix}
\bm{L}_{t} \\
\bm{R}_{t}
	\end{bmatrix}}$) in the error metric.
\end{remark}


\subsubsection{Measurements that do not obey the RIP}
There is no shortage of important examples where the sampling operators fail to satisfy the standard RIP at all. For these cases, the standard theory in Lemma~\ref{lem:GD-convergence} (or Lemma \ref{lem:convergence-RC}) either is not directly applicable or leads to pessimistic computational guarantees. This subsection presents a few such cases.  

We start with {\em phase retrieval} \eqref{eq:min-PR}, for which the gradient update rule is given by
\begin{equation}
	\bm{x}_{t+1}=\bm{x}_{t}- \frac{\eta_{t}}{m}\sum_{i=1}^{m} \big((\bm{a}_{i}^{\top}\bm{x}_t)^{2}-y_{i}\big) \bm{a}_{i}\bm{a}_{i}^{\top}\bm{x}_{t}.
	\label{eq:PR-GD}
\end{equation}
This algorithm, also dubbed as {\em Wirtinger flow}, was first investigated in \cite{candes2015phase}. The name ``Wirtinger flow'' stems from the fact that Wirtinger calculus is used to calculate the gradient in the complex-valued case.    

The associated sampling operator $\mathcal{A}$ (cf.~\eqref{eq:defn-A-sensing}), unfortunately, does not satisfy the standard RIP (cf.~Definition~\ref{defn:RIPs}) unless the sample size far exceeds the statistical limit; see \cite{candes2012phaselift,chen2015exact}.\footnote{
	More specifically, $\|\mathcal{A}(\bm{x}\bm{x}^{\top})\|_2$ cannot be uniformly  controlled from above, a fact that is closely related to the large smoothness parameter of $f(\cdot)$. 
	We note, however, that $\mathcal{A}$ satisfies other variants of RIP (like $\ell_1/\ell_1$ RIP w.r.t.~rank-1 matrices \cite{candes2012phaselift} and $\ell_1/\ell_2$ RIP w.r.t.~rank-$r$ matrices \cite{chen2015exact}) with high probability after proper de-biasing. } 
We can, however, still evaluate the local strong convexity and smoothness parameters to see what computational bounds they produce. Recall that the Hessian of \eqref{eq:min-PR} is given by
\begin{equation}
	\nabla^{2}f(\bm{x})=\frac{1}{m}\sum_{i=1}^{m}\left[3(\bm{a}_{i}^{\top}\bm{x})^{2}-(\bm{a}_{i}^{\top}\bm{x}^{\star})^{2}\right]\bm{a}_{i}\bm{a}_{i}^{\top}.
	\label{eq:Hessian-PR}
\end{equation}
Using standard concentration inequalities for random matrices  \cite{tropp2015introduction,vershynin2010nonasym}, one  derives the following strong convexity and smoothness  bounds  \cite{soltanolkotabi2014algorithms,sanghavi2017local,ma2017implicit}.\footnote{While \cite[Chapter 15.4.3]{soltanolkotabi2014algorithms}  presents the  bounds only for the complex-valued case, all arguments immediately extend to the real case.} 
\begin{lemma}[$\mathsf{Local~strong~convexity~and~smoothness }$ $\mathsf{for}$ $\mathsf{phase}$ $\mathsf{retrieval}$]\label{lem:strong-cvx-PR}
	Consider the problem \eqref{eq:min-PR}.
	There exist some constants $c_0,c_1,c_2>0$ such that with probability at least $1- O(n^{-10})$, 
	\begin{equation}
		0.5 \bm{I}_n \preceq \nabla^2 f(\bm{x}) \preceq c_2 n \bm{I}_n 
	\end{equation}
	 holds simultaneously for all $\bm{x}$ obeying $\|\bm{x}-\bm{x}_{\star}\|_2 \leq c_1\|\bm{x}_{\star}\|_2$,   provided that $m \geq c_0n \log n$. 
\end{lemma}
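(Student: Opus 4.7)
My plan is to establish the upper and lower bounds separately. The (rather loose) upper bound $c_2 n\,\bm{I}_n$ admits a nearly deterministic reduction once two classical Gaussian concentration facts are in place, while the much tighter lower bound $0.5\,\bm{I}_n$ requires combining a closed-form computation of $\mathbb{E}[\nabla^{2}f(\bm{x})]$ with a uniform concentration argument over $\bm{x}$ in the local ball. Throughout I assume $\|\bm{x}_{\star}\|_{2}=1$ without loss of generality, so that any factors of $\|\bm{x}_{\star}\|_{2}^{2}$ can be absorbed into the absolute constants $c_1, c_2$.

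For the upper bound I would use the fact that $(\bm{a}_i^{\top}\bm{x}_{\star})^{2}\bm{a}_i\bm{a}_i^{\top}\succeq\bm{0}$, so dropping this term from \eqref{eq:Hessian-PR} and invoking $(\bm{a}_i^{\top}\bm{x})^{2}\leq\|\bm{a}_i\|_{2}^{2}\|\bm{x}\|_{2}^{2}$ gives
\[
\nabla^{2}f(\bm{x})\preceq \frac{3}{m}\sum_{i=1}^{m}(\bm{a}_i^{\top}\bm{x})^{2}\bm{a}_i\bm{a}_i^{\top} \preceq 3\|\bm{x}\|_{2}^{2}\Big(\max_{1\leq i\leq m}\|\bm{a}_i\|_{2}^{2}\Big)\cdot\frac{1}{m}\sum_{i=1}^{m}\bm{a}_i\bm{a}_i^{\top}.
\]
Two textbook estimates finish the job: $\chi^{2}$-concentration with a union bound gives $\max_i\|\bm{a}_i\|_{2}^{2}\leq 2n$ with probability $1-O(n^{-10})$ whenever $m=O(\mathrm{poly}(n))$, and the standard Gaussian sample-covariance bound gives $\|\tfrac{1}{m}\sum_i \bm{a}_i\bm{a}_i^{\top}\|\leq 2$ with the same probability when $m\gtrsim n$. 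Since $\|\bm{x}\|_{2}\leq 2$ throughout the local ball, the product is bounded by $c_2 n$ for a suitable absolute constant.

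For the lower bound I would first compute the population Hessian via the Gaussian moment identity $\mathbb{E}[(\bm{a}^{\top}\bm{y})^{2}\bm{a}\bm{a}^{\top}]=\|\bm{y}\|_{2}^{2}\bm{I}_n + 2\bm{y}\bm{y}^{\top}$, yielding
\[
\mathbb{E}[\nabla^{2}f(\bm{x})]=(3\|\bm{x}\|_{2}^{2}-\|\bm{x}_{\star}\|_{2}^{2})\bm{I}_n + 6\bm{x}\bm{x}^{\top} - 2\bm{x}_{\star}\bm{x}_{\star}^{\top}.
\]
A short perturbation argument shows that for $\|\bm{x}-\bm{x}_{\star}\|_{2}\leq c_1$ with $c_1$ sufficiently small, this is $\succeq (2-O(c_1))\bm{I}_n$, leaving comfortable room for the statistical fluctuation. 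It then suffices to establish the uniform fluctuation bound
\[
\sup_{\bm{x}:\|\bm{x}-\bm{x}_{\star}\|_{2}\leq c_1}\big\|\nabla^{2}f(\bm{x})-\mathbb{E}[\nabla^{2}f(\bm{x})]\big\| \leq 1
\]
with probability $1-O(n^{-10})$, since combining the two bounds delivers $\nabla^{2}f(\bm{x})\succeq 0.5\,\bm{I}_n$.

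The main obstacle is precisely this uniform concentration. The cleanest route is to linearize via $(\bm{a}_i^{\top}\bm{x})^{2}=\langle\bm{x}\bm{x}^{\top},\bm{a}_i\bm{a}_i^{\top}\rangle$, reducing the task to a concentration bound for the random self-adjoint operator $\bm{M}\mapsto\frac{1}{m}\sum_i\langle\bm{a}_i\bm{a}_i^{\top},\bm{M}\rangle\bm{a}_i\bm{a}_i^{\top}$ evaluated at $\bm{M}=\bm{x}\bm{x}^{\top}$. Pointwise concentration is delicate because the summands $(\bm{a}_i^{\top}\bm{u})^{2}(\bm{a}_i^{\top}\bm{v})^{2}$ are products of $\chi^{2}$ variables and therefore only sub-exponential, so one must invoke matrix Bernstein for sub-exponential summands (or a truncation-plus-Bernstein scheme); this is what produces the $\log n$ factor in the sample complexity. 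Uniformity over $\bm{x}$ is then handled by an $\varepsilon$-net on the local ball combined with a Lipschitz estimate on $\bm{x}\mapsto\nabla^{2}f(\bm{x})$, whose Lipschitz constant is itself controlled by the same crude bounds used in the smoothness step. Putting the expectation bound and the fluctuation bound together yields the claim uniformly over the ball.
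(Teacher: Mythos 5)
Your upper-bound argument is clean and correct. The issue is with the lower bound, where the proposed intermediate step
\[
\sup_{\|\bm{x}-\bm{x}_{\star}\|_{2}\leq c_1}\big\|\nabla^{2}f(\bm{x})-\mathbb{E}[\nabla^{2}f(\bm{x})]\big\| \leq 1
\]
is in fact \emph{false} in the sample regime $m\asymp n\log n$ that the lemma is designed to cover, so a net-plus-Bernstein attack on this quantity cannot go through. Concretely, normalize $\|\bm{x}_{\star}\|_2=1$ and take $\bm{x}=\bm{x}_{\star}+c_1\bm{a}_j/\|\bm{a}_j\|_2$ for some index $j$; this lies in the local ball. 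Testing $\nabla^2 f(\bm{x})$ against $\bm{u}=\bm{a}_j/\|\bm{a}_j\|_2$ and keeping only the $i=j$ summand in \eqref{eq:Hessian-PR} already gives a contribution
\[
\frac{3(\bm{a}_j^\top\bm{x})^2\,\|\bm{a}_j\|_2^2}{m} \gtrsim \frac{c_1^2 n^2}{m} \asymp \frac{c_1^2\,n}{\log n},
\]
using $|\bm{a}_j^\top\bm{x}|\approx c_1\|\bm{a}_j\|_2\approx c_1\sqrt{n}$ and $m\asymp n\log n$; meanwhile $\|\mathbb{E}[\nabla^2 f(\bm{x})]\|=O(1)$, so the deviation at this $\bm{x}$ is of order $n/\log n$, not $O(1)$. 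This also explains why the $\varepsilon$-net step cannot rescue the argument: a fixed net point is unlikely to align with any $\bm{a}_j$, but the Lipschitz constant of $\bm{x}\mapsto\nabla^2 f(\bm{x})$ you would use to interpolate is itself of order $n$, and that is exactly where the alignment instances live.

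The crucial observation you are missing is that these order-$n/\log n$ spikes are \emph{one-sided}: the offending summands $(\bm{a}_i^\top\bm{x})^2\bm{a}_i\bm{a}_i^\top$ are positive semidefinite, so they only push the Hessian \emph{up}. That is harmless for the crude $c_2 n$ upper bound but fatal for a symmetric ``population plus small fluctuation'' argument for the $0.5\,\bm{I}_n$ lower bound. The standard route is therefore asymmetric. Write $\nabla^2 f(\bm{x}) = \frac{3}{m}\sum_i(\bm{a}_i^\top\bm{x})^2\bm{a}_i\bm{a}_i^\top - \frac{1}{m}\sum_i(\bm{a}_i^\top\bm{x}_\star)^2\bm{a}_i\bm{a}_i^\top$. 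The subtracted sum involves only the fixed $\bm{x}_\star$ and admits a single two-sided concentration bound around $\bm{I}_n+2\bm{x}_\star\bm{x}_\star^\top$ (this is where $m\gtrsim n\log n$ enters). For the first sum, exploit positivity: discarding the indices $i$ with $(\bm{a}_i^\top\bm{x})^2$ above a fixed threshold only \emph{lowers} the sum, and the truncated sum of now-bounded summands does concentrate uniformly over the local ball. You already name the right tools (truncation, Bernstein, nets), but the decomposition has to be rearranged so that truncation is used as a one-sided lower bound on a PSD sum, rather than as a step toward a uniform two-sided operator-norm fluctuation control that simply does not hold.
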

This lemma says that $f(\cdot)$ is locally 0.5-strongly convex and $c_2n$-smooth when the sample size $m\gtrsim n\log n$. The sample complexity  only exceeds the information-theoretic limit by a logarithmic factor. Applying Lemma~\ref{lem:GD-convergence} then reveals that:
\begin{theorem}[$\mathsf{GD~for~phase~retrieval~(loose~bound)}$ \cite{candes2015phase}]
\label{thm:GD-PR-loose}
Under the assumptions of Lemma \ref{lem:strong-cvx-PR},   the GD iterates  obey
\begin{equation}
\|\bm{x}_{t}-\bm{x}_{\star}\|_{2}\leq\left(1-\frac{1}{2c_{2}n}\right)^{t}\|\bm{x}_{0}-\bm{x}_{\star}\|_{2},\quad t\geq 0, \label{eq:PR-loose}
\end{equation}
with probability at least $1-O(n^{-10})$, provided that $\eta_t \equiv 1/(c_2n\|\bm{x}_{\star}\|_2^2)$ and $\|\bm{x}_0-\bm{x}_{\star}\|_2 \leq c_1\|\bm{x}_{\star}\|_2$. 
\end{theorem}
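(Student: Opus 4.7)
The plan is to derive this theorem as an essentially immediate corollary of the generic GD contraction result in Lemma~\ref{lem:GD-convergence}, instantiated with the local strong-convexity and smoothness parameters produced by Lemma~\ref{lem:strong-cvx-PR}. Concretely, I would work on the high-probability event $\mathcal{E}$ (of probability at least $1-O(n^{-10})$) on which Lemma~\ref{lem:strong-cvx-PR} holds, so that
$$\alpha\,\bm{I}_n \preceq \nabla^{2}f(\bm{x}) \preceq \beta\,\bm{I}_n, \qquad \forall\,\bm{x}\in\mathcal{B}_{c_1\|\bm{x}_\star\|_2}(\bm{x}_\star),$$
with $\alpha \asymp \|\bm{x}_\star\|_2^{2}$ and $\beta \asymp c_{2}n\|\bm{x}_\star\|_2^{2}$. (Although Lemma~\ref{lem:strong-cvx-PR} as displayed suppresses the $\|\bm{x}_\star\|_2^{2}$ factor, the homogeneity of the Hessian~\eqref{eq:Hessian-PR} forces it to appear, and this is consistent with the step size $\eta_t=1/(c_2 n\|\bm{x}_\star\|_2^{2})$ declared in the theorem statement.)

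The next step is to check the hypotheses of Lemma~\ref{lem:GD-convergence}. A direct substitution into the gradient formula underlying \eqref{eq:PR-GD} shows $\nabla f(\bm{x}_\star)=\bm{0}$, and combined with the local strong convexity this identifies $\bm{x}_\star$ as the unique minimizer of $f$ inside $\mathcal{B}_{c_1\|\bm{x}_\star\|_2}(\bm{x}_\star)$, so we may take $\bm{x}_{\mathsf{opt}}=\bm{x}_\star$ in Lemma~\ref{lem:GD-convergence}. The initialization hypothesis $\|\bm{x}_0-\bm{x}_\star\|_2\leq c_1\|\bm{x}_\star\|_2$ is precisely the requirement $\bm{x}_0\in\mathcal{B}_{c_1\|\bm{x}_\star\|_2}(\bm{x}_\star)$, and the prescribed step size $\eta_t = 1/\beta$ is exactly $1/(c_2 n\|\bm{x}_\star\|_2^{2})$. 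Lemma~\ref{lem:GD-convergence} then yields
$$\|\bm{x}_t-\bm{x}_\star\|_2 \leq (1-\alpha/\beta)^{t}\,\|\bm{x}_0-\bm{x}_\star\|_2,$$
and plugging in $\alpha/\beta = 1/(2c_2 n)$ produces the advertised rate \eqref{eq:PR-loose}. The failure probability $O(n^{-10})$ is inherited verbatim from Lemma~\ref{lem:strong-cvx-PR}.

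The main obstacle is not analytic but bookkeeping: one must carefully confirm that the local quadratic model around $\bm{x}_\star$ has both curvature parameters scaling like $\|\bm{x}_\star\|_2^{2}$, so that the condition number $\beta/\alpha$ becomes the purely dimensional factor $O(n)$ that appears in the rate. It is also worth flagging---as the theorem title ``loose bound'' already signals---that the resulting iteration complexity $O(n\log(1/\varepsilon))$ is pessimistic precisely because the smoothness estimate $\beta\asymp n\|\bm{x}_\star\|_2^{2}$ is driven by atypical directions of the random quartic form and vastly overstates the curvature seen along actual gradient directions; this is exactly what later analyses (via the regularity condition of Definition~\ref{def:reg-condition} or implicit regularization) are designed to repair.
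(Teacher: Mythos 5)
Your proposal is correct and follows the paper's own route exactly: the paper introduces Theorem~\ref{thm:GD-PR-loose} with the phrase ``Applying Lemma~\ref{lem:GD-convergence} then reveals that,'' i.e.\ the proof is precisely the instantiation of Lemma~\ref{lem:GD-convergence} with the curvature bounds $\alpha \asymp \|\bm{x}_\star\|_2^{2}$ and $\beta \asymp c_2 n \|\bm{x}_\star\|_2^{2}$ from Lemma~\ref{lem:strong-cvx-PR} (noting $\nabla f(\bm{x}_\star)=\bm{0}$), giving $\alpha/\beta = 1/(2c_2 n)$ and step size $1/\beta$. Your observation about the suppressed $\|\bm{x}_\star\|_2^{2}$ normalization in Lemma~\ref{lem:strong-cvx-PR}, reconciled via the homogeneity of \eqref{eq:Hessian-PR}, is the right reading and exactly what makes the condition-number ratio come out purely dimensional.
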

This is precisely the computational guarantee given in \cite{candes2015phase}, albeit derived via a different argument.  The above iteration complexity bound, however, is not appealing in practice: it requires $O(n\log \frac{1}{\varepsilon})$ iterations to guarantee $\varepsilon$-accuracy (in a relative sense). For large-scale problems where $n$ is very large, such an iteration complexity could be prohibitive.   

Phase retrieval is certainly not the only problem where classical results in Lemma \ref{lem:GD-convergence} yield unsatisfactory answers. The situation is even worse for other important problems like matrix completion and blind deconvolution, where strong convexity (or the modified version accounting for global ambiguity) does not hold at all unless we restrict attention to a constrained class of decision variables.   All  this calls for new ideas in establishing computational guarantees that match practical performances.

\subsection{Improved computational guarantees via restricted geometry and regularization} 
\label{sec:improved-GD-reg}

As emphasized in the preceding subsection, two issues stand out in the absence of RIP: 
\begin{itemize}
	\item The smoothness condition may not be well-controlled;
	\item Local strong convexity may fail, even if we account for global ambiguity.
\end{itemize}
We discuss how to address these issues, by first identifying a restricted region with amenable geometry for fast convergence (Section~\ref{sec:restricted_geometry}) and then applying regularized gradient descent to ensure the iterates stay in the restricted region (Section~\ref{sec:Reg-GD}).

\subsubsection{Restricted strong convexity and smoothness}\label{sec:restricted_geometry}

While desired strong convexity and smoothness (or regularity conditions) may fail to hold in the entire local ball, it is possible for them to arise when we restrict ourselves to a small subset of the local ball and\,/\,or a set of special directions.  

Take phase retrieval for example: $f(\cdot)$ is locally $0.5$-strongly convex, but the smoothness parameter is exceedingly large (see Lemma \ref{lem:strong-cvx-PR}).  On closer inspection, those points $\bm{x}$ that are too aligned with any of the sampling vector $\{\bm{a}_i\}$ incur ill-conditioned Hessians. For instance, suppose $\bm{x}_{\star}$ is a unit vector independent of $\{\bm{a}_i\}$. Then the point $\bm{x}=\bm{x}_{\star}+\delta\frac{\bm{a}_{j}}{\|\bm{a}_{j}\|_{2}}$ for some constant $\delta$ often results in extremely large $\bm{x}^{\top}\nabla^{2}f(\bm{x})\bm{x}$.\footnote{  
With high probability, 
$(\bm{a}_{j}^{\top}\bm{x})^{2}=\left(1-o(1)\right)\delta^{2}n,$
and hence the Hessian (cf.~\eqref{eq:Hessian-PR}) at this point $\bm{x}$ satisfies 
%
$\bm{x}^{\top}\nabla^{2}f(\bm{x})\bm{x}  \geq\frac{3}{m}(\bm{a}_{j}^{\top}\bm{x})^{4}- O( \|\frac{1}{m}\sum\nolimits _{j} (\bm{a}_{j}^{\top}\bm{x}^{\star})^{2} \bm{a}_{j}\bm{a}_{j}^{\top} \| )
  =\left(3-o(1)\right)\delta^{4} {n^2}/{m}$,
%
much larger than the strong convexity parameter when $m\ll n^{2}$.} This simple instance suggests that: in order to ensure well-conditioned Hessians, one needs to preclude points that are too ``coherent'' with the sampling vectors, as formalized below. 
\begin{lemma}[$\mathsf{Restricted~smoothness~for~phase~retrieval}$ \cite{ma2017implicit}]
\label{lemma:wf_hessian}
Under the assumptions of Lemma \ref{lem:strong-cvx-PR}, 
there exist some constants  $c_0, \cdots, c_3>0$ such that if 
$m\geq c_{0}n\log n$, then with probability at least $1-O(mn^{-10})$, 
\[
	\nabla^{2}f\left(\bm{x}\right)\preceq c_1\log n\cdot\bm{I}_{n}
\]
holds simultaneously for all $\bm{x}\in\mathbb{R}^{n}$  obeying
\begin{subequations}
\label{eq:WF-hessian-condition} 
\begin{align} \left\Vert \bm{x} -\bm{x}_{\star}\right\Vert _{2} & \leq c_2,\label{eq:WF-induction-L2error-hessian}\\
\max_{1\leq j\leq m}\left|\bm{a}_{j}^{\top}\left(\bm{x} -\bm{x}_{\star}\right)\right| & \leq c_3\sqrt{\log n}.
\label{eq:WF-induction-incoherence-t-hessian}
\end{align}
\end{subequations}
\end{lemma}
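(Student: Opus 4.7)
Write $\bm{x} = \bm{x}_{\star} + \bm{h}$, so the conditions \eqref{eq:WF-hessian-condition} become $\|\bm{h}\|_2 \leq c_2$ and $\max_j |\bm{a}_j^{\top}\bm{h}| \leq c_3\sqrt{\log n}$. Since the second summand in the Hessian expression \eqref{eq:Hessian-PR} contributes a PSD matrix with a minus sign, I would first discard it to obtain the pointwise bound
\[
\nabla^{2}f(\bm{x}) \;\preceq\; \frac{3}{m}\sum_{i=1}^{m}(\bm{a}_{i}^{\top}\bm{x})^{2}\,\bm{a}_{i}\bm{a}_{i}^{\top},
\]
and then apply the elementary inequality $(\bm{a}_i^{\top}\bm{x})^2 \leq 2(\bm{a}_i^{\top}\bm{x}_\star)^2 + 2(\bm{a}_i^{\top}\bm{h})^2$ to split the right-hand side into two pieces $\bm{T}_1 + \bm{T}_2$, where
\[
\bm{T}_1 := \frac{6}{m}\sum_i (\bm{a}_i^{\top}\bm{x}_\star)^2 \bm{a}_i\bm{a}_i^{\top},\qquad \bm{T}_2 := \frac{6}{m}\sum_i (\bm{a}_i^{\top}\bm{h})^2 \bm{a}_i\bm{a}_i^{\top}.
\]
The key observation is that this decomposition \emph{decouples} the $\bm{x}$-dependence: $\bm{T}_1$ does not depend on $\bm{x}$ at all, and $\bm{T}_2$ depends on $\bm{h}$ only through the scalars $(\bm{a}_i^{\top}\bm{h})^2$, for which we have the uniform bound $c_3^2 \log n$.

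For $\bm{T}_1$, I would invoke a standard matrix concentration inequality (e.g.~matrix Bernstein after truncating $|\bm{a}_i^{\top}\bm{x}_\star|$ at $O(\sqrt{\log n})$): since $\mathbb{E}[(\bm{a}_i^{\top}\bm{x}_\star)^2\bm{a}_i\bm{a}_i^{\top}] = \|\bm{x}_\star\|_2^2 \bm{I}_n + 2\bm{x}_\star\bm{x}_\star^{\top}$ has spectral norm $3\|\bm{x}_\star\|_2^2$, when $m \gtrsim n\log n$ one obtains $\|\bm{T}_1\| \lesssim \|\bm{x}_\star\|_2^2 = O(1)$ with probability at least $1-O(n^{-10})$. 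For $\bm{T}_2$, the incoherence assumption \eqref{eq:WF-induction-incoherence-t-hessian} gives
\[
\bm{T}_2 \;\preceq\; 6c_3^2 \log n \cdot \frac{1}{m}\sum_i \bm{a}_i\bm{a}_i^{\top},
\]
and standard covariance concentration yields $\frac{1}{m}\sum_i \bm{a}_i\bm{a}_i^{\top} \preceq 2\bm{I}_n$ with probability $\geq 1-O(n^{-10})$ once $m \gtrsim n$. Combining, $\nabla^2 f(\bm{x}) \preceq (O(1) + O(\log n))\bm{I}_n \preceq c_1 \log n \cdot \bm{I}_n$ for suitable $c_1$.

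Uniformity over $\bm{x}$ satisfying \eqref{eq:WF-hessian-condition} is automatic from the above, since the final bounds on $\bm{T}_1$ and $\bm{T}_2$ invoke only (i) a single realization of the random matrices $\frac{1}{m}\sum_i (\bm{a}_i^{\top}\bm{x}_\star)^2 \bm{a}_i\bm{a}_i^{\top}$ and $\frac{1}{m}\sum_i \bm{a}_i\bm{a}_i^{\top}$ (depending only on $\bm{x}_\star$ and the $\bm{a}_i$), and (ii) the scalar inequality $|\bm{a}_i^{\top}\bm{h}|^2 \leq c_3^2 \log n$ supplied by hypothesis. The main technical obstacle is the concentration step for $\bm{T}_1$: a naive application of matrix Bernstein fails because $(\bm{a}_i^{\top}\bm{x}_\star)^2 \bm{a}_i\bm{a}_i^{\top}$ has heavy (sub-exponential, not sub-Gaussian) tails, which is exactly why a $\log n$ rather than constant oversampling factor is required. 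The standard fix is to truncate each summand at level $O(\log n)$ (the Gaussian tail makes this truncation essentially free up to probability $O(mn^{-10})$, explaining the stated failure probability) and then apply the bounded matrix Bernstein inequality; the truncation error is controlled via a union bound over the $m$ sampling vectors, accounting for the $O(mn^{-10})$ term in the probability estimate.
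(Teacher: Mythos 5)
Your proposal is correct and the core decomposition matches the spirit of the proof in \cite{ma2017implicit}. Dropping the PSD term with negative sign to get $\nabla^2 f(\bm{x}) \preceq \tfrac{3}{m}\sum_i (\bm{a}_i^{\top}\bm{x})^2 \bm{a}_i\bm{a}_i^{\top}$, then splitting via $(\bm{a}_i^{\top}\bm{x})^2 \leq 2(\bm{a}_i^{\top}\bm{x}_\star)^2 + 2(\bm{a}_i^{\top}\bm{h})^2$, is exactly the right move, and your observation that the resulting upper bound is $\bm{x}$-independent (given the scalar incoherence hypothesis) is precisely what makes the conclusion hold uniformly without any covering argument.

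One place where you do more work than is needed: you go through truncated matrix Bernstein to establish $\|\bm{T}_1\| = O(1)$, which is the delicate heavy-tailed concentration result of \cite[Lemma~7.4]{candes2015phase} and not something that follows from a routine ``truncate at $O(\sqrt{\log n})$ and apply bounded Bernstein'' (the truncated summands still have operator norm $\approx n\log n$, so a naive application does not close at $m \asymp n\log n$; the actual proof needs a finer bucketing of the coefficients $(\bm{a}_i^{\top}\bm{x}_\star)^2$). Since the lemma only asks for a $c_1\log n$ smoothness constant, none of this is necessary: once you have the union-bound event $\max_i |\bm{a}_i^{\top}\bm{x}_\star| \lesssim \sqrt{\log n}$ (which already accounts for the $O(mn^{-10})$ failure probability), you can bound $\bm{T}_1$ the same way you bound $\bm{T}_2$, namely
\[
\bm{T}_1 = \frac{6}{m}\sum_i (\bm{a}_i^{\top}\bm{x}_\star)^2\,\bm{a}_i\bm{a}_i^{\top} \;\preceq\; 6\big(\max_i (\bm{a}_i^{\top}\bm{x}_\star)^2\big)\cdot \frac{1}{m}\sum_i \bm{a}_i\bm{a}_i^{\top} \;\preceq\; O(\log n)\,\bm{I}_n,
\]
using only Wishart concentration for $\tfrac{1}{m}\sum_i\bm{a}_i\bm{a}_i^{\top}$, which holds already for $m \gtrsim n$. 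Equivalently, one can bypass the $\bm{T}_1/\bm{T}_2$ split entirely: the triangle inequality gives $|\bm{a}_i^{\top}\bm{x}| \leq |\bm{a}_i^{\top}\bm{x}_\star| + |\bm{a}_i^{\top}\bm{h}| \lesssim \sqrt{\log n}$ for every $i$ on the same event, so $\nabla^2 f(\bm{x}) \preceq 3\,\max_i(\bm{a}_i^{\top}\bm{x})^2 \cdot \tfrac{1}{m}\sum_i\bm{a}_i\bm{a}_i^{\top} \preceq c_1\log n \cdot \bm{I}_n$ in a single step. This cruder bound both avoids the heavy-tailed concentration issue and makes it transparent why the only two probabilistic ingredients are a union bound over the $m$ vectors and Wishart concentration, consistent with the stated failure probability.
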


In words,  desired smoothness is guaranteed when considering only points sufficiently  near-orthogonal to all sampling vectors. Such a near-orthogonality property will be referred to as ``{\em incoherence}'' between $\bm{x}$ and  the sampling vectors.

Going beyond phase retrieval, the  notion of incoherence  is not only crucial to control smoothness, but also plays a critical role in ensuring local strong convexity (or regularity conditions). A partial list of examples include matrix completion, quadratic sensing, blind deconvolution and demixing, etc \cite{sun2016guaranteed, chen2015fast, li2018nonconvex, ma2017implicit, li2016deconvolution, ling2017regularized, shi2018demising}. In the sequel, we single out the matrix completion problem to illustrate this fact. The interested readers are referred to \cite{sun2016guaranteed, chen2015fast,zheng2015convergent} for regularity conditions for matrix completion, to \cite{li2018nonconvex} for strong convexity and smoothness for quadratic sensing, to \cite{ma2017implicit,shi2018demising} (resp.~\cite{li2016deconvolution, ling2017regularized}) for strong convexity and smoothness (resp.~regularity condition) for blind deconvolution and demixing.  
\begin{lemma}
	[$\mathsf{Restricted~strong~convexity~and~smoothness}$ $\mathsf{for}$ $\mathsf{matrix}$ $\mathsf{completion}$ \cite{ma2017implicit}] 
	\label{lemma:hessian-MC}
	Consider the problem \eqref{eq:MC-empirical-risk}. 
	Suppose that $n^{2}p\geq c_0\kappa^{2}\mu rn\log n$ for some large
constant $c_0>0$. Then with probability  $1-O\left(n^{-10}\right)$,
the Hessian  obeys 
\begin{subequations}
\begin{align}
	\myalign \mathsf{vec}\left(\bm{Z}\bm{H}_{\bm{Z}}-\bm{X}_{\star}\right)^{\top}\nabla^{2}f\left(\bm{X}\right)\mathsf{vec}\left(\bm{Z}\bm{H}_{\bm{Z}}-\bm{X}_{\star}\right)  \mynonumber \mylinebreak
	\myalign \myquad\myquad  \myaligninv \geq 0.5 \sigma_r(\bm{M}_{\star})\left\Vert \bm{Z}\bm{H}_{Z}-\bm{X}_{\star} \right\Vert _{\mathrm{F}}^{2}  \\
	\myalign \left\Vert \nabla^{2}f \left(\bm{X}\right)\right\Vert \myaligninv \leq 2.5 \sigma_1(\bm{M}_{\star})\label{eq:strong-convexity-smoothness-MC}
\end{align}
\end{subequations}
	for all $\bm{Z}$ (with $\bm{H}_{\bm{Z}}$ defined in \eqref{defn:HX}) and all $\bm{X}$  satisfying
\begin{align}
 	\left\Vert \bm{X}-\bm{X}_{\star}\right\Vert _{2,\infty} & \leq\epsilon\left\Vert \bm{X}_{\star}\right\Vert _{2,\infty}, 
	\label{eq:MC_incoherence_neighborhood}    
\end{align}
where $\epsilon\leq c_1/\sqrt{\kappa^{3}\mu r\log^{2}n}$ for some constant $c_1>0$.\footnote{This is a simplified version of \cite[Lemma 7]{ma2017implicit} where we restrict the descent direction to point to $\bX_{\star}$.}
\end{lemma}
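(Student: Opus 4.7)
My plan begins with explicitly computing the Hessian of $f(\bm{X}) = \frac{1}{4p}\|\mathcal{P}_{\Omega}(\bm{X}\bm{X}^{\top} - \bm{M}_{\star})\|_{\mathrm{F}}^{2}$. Direct differentiation yields, for any direction $\bm{V}\in\mathbb{R}^{n\times r}$,
\begin{equation*}
\mathsf{vec}(\bm{V})^{\top}\nabla^{2}f(\bm{X})\mathsf{vec}(\bm{V}) = \frac{1}{2p}\|\mathcal{P}_{\Omega}(\bm{V}\bm{X}^{\top} + \bm{X}\bm{V}^{\top})\|_{\mathrm{F}}^{2} + \frac{1}{p}\langle \mathcal{P}_{\Omega}(\bm{X}\bm{X}^{\top} - \bm{M}_{\star}),\, \bm{V}\bm{V}^{\top}\rangle,
\end{equation*}
whose expectation recovers precisely the population Hessian $\nabla^{2}f_{\infty}(\bm{X})$ from \eqref{eq:Hess-infty}, because $\mathbb{E}[\tfrac{1}{p}\mathcal{P}_{\Omega}(\bm{A})] = \bm{A}$. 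The overall strategy is thus to split the Hessian into a population piece plus a deviation, show that the population piece already obeys the desired bounds with a constant-factor slack, and argue that the deviation is small uniformly over the incoherence neighborhood.

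For the population analysis, setting $\bm{V} = \bm{Z}\bm{H}_{\bm{Z}} - \bm{X}_{\star}$ as required by the strong convexity claim, a rank-$r$ extension of the argument foreshadowed after Example \ref{example:strong-cvx} (in which the rotation $\bm{H}_{\bm{Z}}$ neutralizes the rotational ambiguity that spoiled convexity in \eqref{eq:example-X-Z}) delivers
\begin{equation*}
\mathsf{vec}(\bm{V})^{\top}\nabla^{2}f_{\infty}(\bm{X})\mathsf{vec}(\bm{V}) \geq c_{\mathrm{low}}\,\sigma_{r}(\bm{M}_{\star})\|\bm{V}\|_{\mathrm{F}}^{2}
\end{equation*}
for some constant $c_{\mathrm{low}}>0.5$, and symmetrically $\|\nabla^{2}f_{\infty}(\bm{X})\| \leq c_{\mathrm{up}}\sigma_{1}(\bm{M}_{\star})$ for some $c_{\mathrm{up}}<2.5$, provided $\bm{X}$ is close enough to $\bm{X}_{\star}$ in Frobenius norm, which is automatic from the $\ell_{2,\infty}$ hypothesis \eqref{eq:MC_incoherence_neighborhood}. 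This leaves constant-factor room in both directions to absorb the upcoming deviation bounds.

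The heart of the proof is then to control
\begin{equation*}
\Bigl|\tfrac{1}{p}\|\mathcal{P}_{\Omega}(\bm{A})\|_{\mathrm{F}}^{2} - \|\bm{A}\|_{\mathrm{F}}^{2}\Bigr|,\qquad \bm{A}=\bm{V}\bm{X}^{\top}+\bm{X}\bm{V}^{\top},
\end{equation*}
and the analogous cross-term deviation $\bigl|\langle \tfrac{1}{p}\mathcal{P}_{\Omega}(\bm{E})-\bm{E},\,\bm{V}\bm{V}^{\top}\rangle\bigr|$ with $\bm{E}=\bm{X}\bm{X}^{\top}-\bm{M}_{\star}$. The crucial structural observation is that $\bm{A}$ has rank at most $2r$ regardless of $\bm{V}$ (since $\bm{X}$ has rank at most $r$), and that the assumption \eqref{eq:MC_incoherence_neighborhood} combined with the $\mu$-incoherence of $\bm{X}_{\star}$ propagates to
\begin{equation*}
\|\bm{X}\|_{2,\infty} \lesssim \sqrt{\mu r\, \sigma_{1}(\bm{M}_{\star})/n},\qquad \|\bm{E}\|_{\infty} \lesssim \epsilon\,\mu r\, \sigma_{1}(\bm{M}_{\star})/n,
\end{equation*}
where the $\bm{E}$ bound uses the decomposition $\bm{E} = (\bm{X}-\bm{X}_{\star})\bm{X}^{\top} + \bm{X}_{\star}(\bm{X}-\bm{X}_{\star})^{\top}$. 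On the class of low-rank matrices carrying such controlled $\ell_{2,\infty}$\,/\,$\ell_{\infty}$ norms, $\tfrac{1}{p}\mathcal{P}_{\Omega}$ concentrates uniformly around the identity via matrix Bernstein, and the sample-size assumption $n^{2}p \gtrsim \kappa^{2}\mu r n\log n$ is calibrated precisely so that the resulting deviation is at most a $1/\kappa$-fraction of the population quantities $\sigma_{r}(\bm{M}_{\star})\|\bm{V}\|_{\mathrm{F}}^{2}$ and $\sigma_{1}(\bm{M}_{\star})\|\bm{V}\|_{\mathrm{F}}^{2}$ respectively, yielding the advertised constants $0.5$ and $2.5$ after absorption.

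The main obstacle is achieving these concentration statements \emph{uniformly} over the continuum of admissible $\bm{X}$ (across the $\ell_{2,\infty}$ neighborhood) and over arbitrary $\bm{V}$. A naive covering of the full $n\times r$ space would blow up dimension factors; the standard remedy is to exploit the rank-$2r$ factorizations of $\bm{A}$ and $\bm{E}$ so that the uniform statement reduces to concentration over a structured set of incoherent low-rank matrices whose metric entropy scales with $nr$ rather than $n^{2}$. For the strong convexity lower bound, the signed cross term needs extra care: the small $\ell_{\infty}$ bound on $\bm{E}$ prevents it from cancelling more than a constant fraction of the first (positive) term. For the smoothness upper bound, arbitrary $\bm{V}$ causes no additional difficulty because $\bm{V}\bm{X}^{\top}+\bm{X}\bm{V}^{\top}$ still has rank at most $2r$. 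Combining the population estimates with these uniform deviation bounds establishes both inequalities simultaneously for all $\bm{X}$ in the neighborhood and all $\bm{V}$, completing the proof.
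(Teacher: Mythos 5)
The paper provides no proof of this lemma; it simply records the statement and cites \cite[Lemma 7]{ma2017implicit}, so there is no in-paper argument to compare against. That said, your blueprint contains a genuine flaw at the step you yourself flag as ``the main obstacle.'' The population-plus-deviation split of the Hessian, and the reliance on the $\ell_{2,\infty}$ control on $\bm{X}$, are the right frame. But the mechanism you propose for the uniform deviation bound --- a net over ``a structured set of incoherent low-rank matrices whose metric entropy scales with $nr$'' --- does not work, because the direction $\bm{V}$ (equivalently $\bm{Z}$, which is unrestricted in the statement) need not be incoherent. For a spiky $\bm{V}$ supported on a single row, the matrix $\bm{W}:=\bm{V}\bm{X}^{\top}+\bm{X}\bm{V}^{\top}$ has $\|\bm{W}\|_{\infty}$ of order $\|\bm{V}\|_{\mathrm{F}}\|\bm{X}\|_{2,\infty}$; plugging this into the Bernstein variance term shows the pointwise failure probability is nowhere near small enough to survive a union bound over an $e^{\Theta(nr)}$ net. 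Your remark that ``arbitrary $\bm{V}$ causes no additional difficulty because $\bm{V}\bm{X}^{\top}+\bm{X}\bm{V}^{\top}$ still has rank at most $2r$'' is precisely where this goes astray: low rank alone does not make $\tfrac{1}{p}\mathcal{P}_{\Omega}$ a near-isometry --- that would require incoherence of \emph{both} factors, and $\bm{V}$ does not supply it.

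The mechanism that actually delivers uniformity over all $\bm{V}$ requires no net over $\bm{V}$ at all. Condition on $\Omega$ and factor the quadratic form row by row:
\[
\tfrac{1}{p}\big\|\mathcal{P}_{\Omega}(\bm{V}\bm{X}^{\top})\big\|_{\mathrm{F}}^{2}
=\sum_{i}\bm{V}_{i,\cdot}\,\bm{G}_{i}\,\bm{V}_{i,\cdot}^{\top},
\qquad
\bm{G}_{i}:=\tfrac{1}{p}\sum_{j:(i,j)\in\Omega}\bm{X}_{j,\cdot}^{\top}\bm{X}_{j,\cdot},
\]
where $\bm{G}_{i}$ depends only on $\Omega$ and on the incoherent $\bm{X}$, not on $\bm{V}$. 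Matrix Bernstein, with each summand bounded via $\|\bm{X}\|_{2,\infty}$, shows $\max_{i}\|\bm{G}_{i}-\bm{X}^{\top}\bm{X}\|$ is small on the claimed event under the stated sample-size assumption; afterward, the deviation obeys $\bigl|\sum_{i}\bm{V}_{i,\cdot}(\bm{G}_{i}-\bm{X}^{\top}\bm{X})\bm{V}_{i,\cdot}^{\top}\bigr|\le\bigl(\max_{i}\|\bm{G}_{i}-\bm{X}^{\top}\bm{X}\|\bigr)\|\bm{V}\|_{\mathrm{F}}^{2}$ \emph{deterministically} and hence for every $\bm{V}$ simultaneously, spiky or not. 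The $\bm{X}\bm{V}^{\top}$ contribution uses the analogous column-indexed operators, and the cross term and the $\langle\cdot,\bm{V}\bm{V}^{\top}\rangle$ term are treated in the same spirit, with the $\ell_{\infty}$ control on $\bm{E}=\bm{X}\bm{X}^{\top}-\bm{M}_{\star}$ entering exactly as you anticipate. Replace your covering step with this factored-conditioning argument, and handle the uniformity over $\bm{X}$ by perturbing from $\bm{X}_{\star}$ using \eqref{eq:MC_incoherence_neighborhood}; the rest of your plan is sound.
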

This lemma confines attention to the set of points obeying  
$$\|\bm{X}-\bm{X}_{\star}\|_{2,\infty}= \max_{1\leq i\leq n}\| (\bm{X}-\bm{X}_{\star})\bm{e}_i \|_2 \leq  \epsilon\left\Vert \bm{X}_{\star}\right\Vert _{2,\infty}.$$ 
Given that each observed entry $M_{i,j}$ can be viewed as $\bm{e}_i^{\top}\bm{M}\bm{e}_j$, the sampling basis relies heavily on the  standard basis vectors. As a result, the above lemma is essentially imposing conditions on the incoherence between $\bm{X}$ and the sampling basis.

%

%
	


\subsubsection{Regularized gradient descent}
\label{sec:Reg-GD}

While we have demonstrated  favorable geometry within  the set of local points satisfying the desired incoherence condition,  the challenge remains as to    how to ensure the GD iterates fall within this set. A natural strategy is to enforce proper regularization. Several auxiliary regularization procedures have been proposed to explicitly promote the incoherence constraints \cite{keshavan2010matrix, sun2016guaranteed, chen2015solving, chen2015fast, li2016deconvolution, ling2017regularized, zheng2016convergence,yi2016fast}, in the hope of improving computational guarantees. Specifically, one can {\em regularize the loss function}, by adding additional regularization term $G(\cdot)$ to the objective function and designing the GD update rule w.r.t.~the  regularized problem 
\begin{align}
	\text{minimize}_{\bm{x}} \quad f_{\mathsf{reg}}(\bx) := f(\bm{x}) + \lambda G(\bm{x})
\end{align}
with $\lambda>0$ the regularization parameter.  For example:   
\begin{itemize}
	\item {\em Matrix completion \eqref{eq:MC-empirical-risk-asym}:} the following regularized loss has been proposed \cite{keshavan2010matrix, keshavan2010noisy, sun2016guaranteed}	
\begin{align} \label{eq:regularized_mc_loss}
&  f_{\mathrm{reg}}(\bm{L},\bm{R})=f(\bm{L},\bm{R})+\lambda\left\{ G_{0}(\alpha_{1}\|\bm{L}\|_{\mathrm{F}}^{2})+G_{0}(\alpha_{2}\|\bm{R}\|_{\mathrm{F}}^{2})\right\} \nonumber\\
 & \qquad\qquad \qquad +\lambda\Bigg\{\sum_{i=1}^{n_{1}}G_{0}\big(\alpha_{3}\|\bm{L}_{i,\cdot}\|_{2}^{2}\big)+\sum_{i=1}^{n_{2}}G_{0}\big(\alpha_{4}\|\bm{R}_{i,\cdot}\|_{2}^{2}\big)\Bigg\}
\end{align}
for some scalars $\alpha_1,\cdots,\alpha_4>0$. 
There are numerous choices of $G_0$; the one suggested by \cite{sun2016guaranteed} is $G_0(z)=\max\{z-1,0\}^2$.
With suitable learning rates and proper initialization, GD w.r.t.~the  regularized loss  provably yields $\varepsilon$-accuracy in $O(\mathsf{poly}(n)\log \frac{1}{\varepsilon})$ iterations, provided that the sample size  $n^2 p \gtrsim \mu^2 \kappa^6 nr^7 \log n$ \cite{sun2016guaranteed}.  

	\item {\em Blind deconvolution~\eqref{eq:fmin-BD}:} \cite{li2016deconvolution, ling2017regularized,huang2017blind} suggest the following regularized loss
	%
\begin{align*}
 		& f_{\mathrm{reg}}(\bm{h},\bm{x})=f(\bm{h},\bm{x}) +\lambda \sum_{i=1}^{m}G_{0}\Big(\frac{m |\bm{b}_{i}^{*}\bm{h} |^{2}}{8\mu^{2}\|\bm{h}_{\star}\|_{2}\|\bm{x}_{\star}\|_{2}}\Big) \mylinebreak
		\myalign \myquad\myquad +\lambda G_{0}\Big(\frac{ \|\bm{h} \|_2^{2}}{2\|\bm{h}_{\star}\|_{2}\|\bm{x}_{\star}\|_{2}}\Big)
		+ \lambda G_{0}\Big(\frac{ \|\bm{x} \|_2^{2}}{2\|\bm{h}_{\star}\|_{2}\|\bm{x}_{\star}\|_{2}}\Big)
\end{align*}
with  $G_0(z):=\max\{z-1,0\}^2$. It has been demonstrated that under proper initialization and step size, gradient methods w.r.t.~the regularized loss reach $\varepsilon$-accuracy in $O(\mathsf{poly}(m) \log\frac{1}{\varepsilon})$ iterations, provided that the sample size $m\gtrsim {(K+N)\log ^2 m}$ \cite{li2016deconvolution}. 
	
\end{itemize}
In both cases, the regularization terms penalize, among other things, the incoherence measure between the decision variable and the corresponding sampling basis. 
We note, however, that the regularization terms are often found unnecessary in both theory and practice, and the theoretical guarantees derived in this line of works are also subject to improvements, as unveiled in the next subsection (cf. Section~\ref{sec:Implicit-Reg}).

Two other regularization approaches are also worth noting:  (1) {\em truncated gradient descent}; (2) {\em projected gradient descent}. Given that they are extensions of vanilla GD and might enjoy additional benefits, we postpone the discussions of them to  Section \ref{sec:GD_variants}.



\subsection{The phenomenon of implicit regularization}
\label{sec:Implicit-Reg}

 
 
Despite the theoretical success of regularized GD,  it is often observed  that vanilla GD --- in the absence of any regularization --- converges geometrically fast in practice. One intriguing fact is this:  for the problems mentioned above, GD automatically forces its iterates to stay incoherent with the sampling vectors\,/\,matrices, without any need of explicit regularization  \cite{ma2017implicit,li2018nonconvex,shi2018demising}. This means that with high probability, the entire GD trajectory lies within a nice region that enjoys desired strong convexity and smoothness,  thus enabling fast convergence.  

To illustrate this fact, we display in Fig.~\ref{fig:implicit-reg} a typical GD trajectory. The incoherence region --- which  enjoys local strong convexity and smoothness --- is often a polytope (see the shaded region in the right panel of Fig.~\ref{fig:implicit-reg}). The implicit regularization suggests that with high probability, the entire GD trajectory is constrained within this polytope, thus exhibiting linear convergence. It is worth noting that this cannot be derived from generic GD theory like Lemma~\ref{lem:GD-convergence}. For instance, Lemma~\ref{lem:GD-convergence} implies that starting with a good initialization, the next iterate experiences $\ell_2$ error contraction,  but it falls short of enforcing the incoherence condition and hence does not preclude the iterates from leaving the polytope.

\begin{figure}
	\centering
	\includegraphics[width=0.24\textwidth]{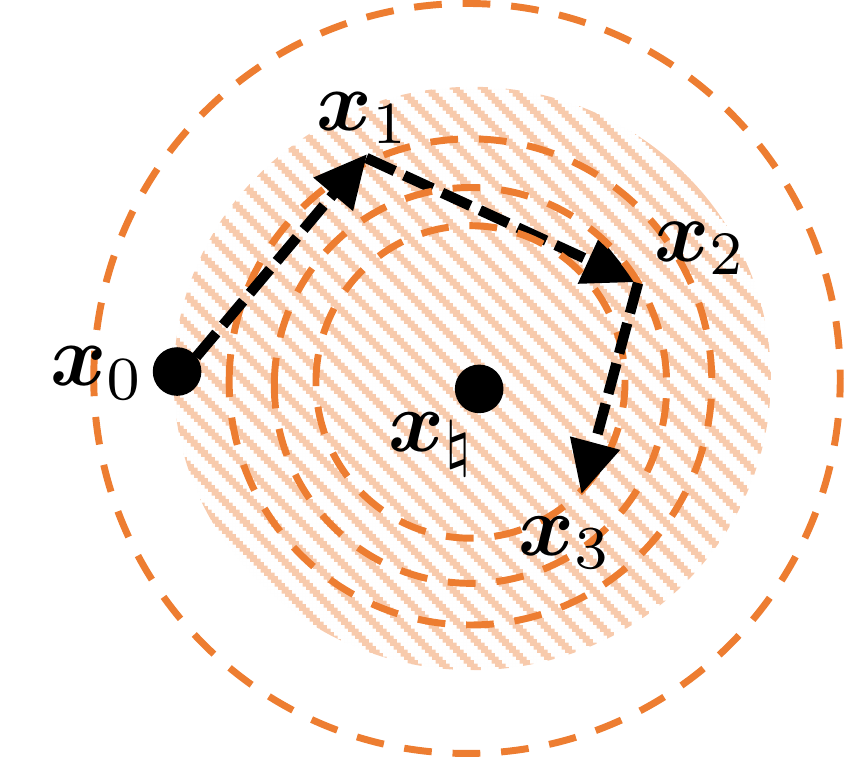}  \myquadinv \includegraphics[width=0.24\textwidth]{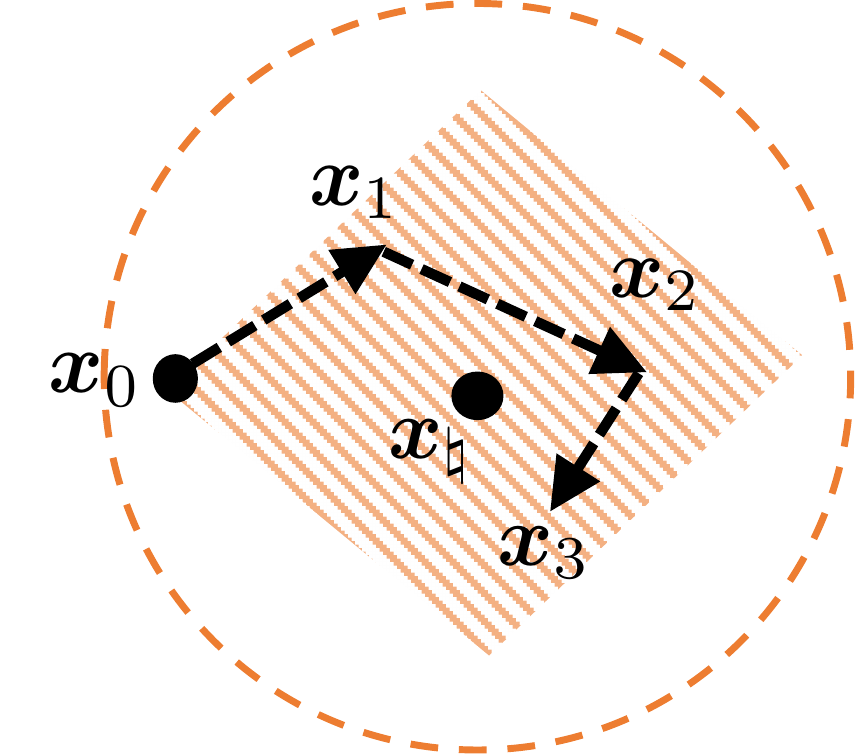}
	\caption{The GD iterates and the locally strongly convex and smooth region (the shaded region). (Left) When this region is an $\ell_2$ ball, then standard GD theory implies $\ell_2$ convergence. (Right) When this region is a polytope, the implicit regularization phenomenon implies that the GD iterates still stay within this nice region.   \label{fig:implicit-reg}}
\end{figure}

In the sequel, we start with  phase retrieval as the first example:  
\begin{theorem}[$\mathsf{GD~for~phase~retrieval~(improved~bound)}$ \cite{ma2017implicit}]
\label{thm:GD-PR-improved}
	Under the assumptions of Lemma \ref{lem:strong-cvx-PR},  the GD iterates with proper initialization (see, e.g., spectral initialization in Section \ref{sec:vanilla-spectral}) and $\eta_t \equiv 1/(c_3\|\bm{x}_{\star}\|_2^2\log n)$ obey
\begin{subequations}
\begin{align}
	& \|\bm{x}_{t}-\bm{x}_{\star}\|_{2} \leq\left(1- \frac{1}{c_4\log n}\right)^{t}\|\bm{x}_{0}-\bm{x}_{\star}\|_{2}  \label{eq:PR-tight} \\
	& \quad \max_{1\leq i\leq m} |\bm{a}_i^{\top}(\bm{x}_t - \bm{x}_\star)| \lesssim \sqrt{\log m} \|\bm{x}_{\star}\|_2  \label{eq:PR-incoherent}
\end{align}
\end{subequations}
for all $t\geq 0$
with probability $1-O(n^{-10})$. Here, $c_3,c_4>0$ are some constants, and  we assume $\|\bm{x}_{0}-\bm{x}_{\star}\|_2 \leq \|\bm{x}_{0}+\bm{x}_{\star}\|_2$. 
\end{theorem}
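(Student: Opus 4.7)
The plan is to establish both conclusions \eqref{eq:PR-tight} and \eqref{eq:PR-incoherent} by a joint induction on $t$, where the incoherence estimate \eqref{eq:PR-incoherent} unlocks the sharp restricted-smoothness bound of Lemma~\ref{lemma:wf_hessian}, and the improved smoothness parameter $\beta \asymp \log n$ (rather than $\beta \asymp n$ as in Theorem~\ref{thm:GD-PR-loose}) then propagates the $\ell_{2}$-contraction \eqref{eq:PR-tight}. The key qualitative difference from the naive argument leading to Theorem~\ref{thm:GD-PR-loose} is that we cannot simply apply Lemma~\ref{lem:GD-convergence} in an $\ell_{2}$ ball: the incoherence constraint \eqref{eq:WF-induction-incoherence-t-hessian} is \emph{not} invariant under a single GD step, and must be maintained by a separate argument.

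First, I would set up the induction hypotheses: at iteration $t$, assume (i) $\|\bm{x}_{t}-\bm{x}_{\star}\|_{2}\leq (1-\rho)^{t}\|\bm{x}_{0}-\bm{x}_{\star}\|_{2}$ with $\rho = 1/(c_{4}\log n)$, and (ii) $\max_{1\le j\le m}|\bm{a}_{j}^{\top}(\bm{x}_{t}-\bm{x}_{\star})|\le C\sqrt{\log m}\,\|\bm{x}_{\star}\|_{2}$. Both are satisfied at $t=0$ by the spectral initialization (Section~\ref{sec:spectral-initialization}, taken as given). Under (i)--(ii) the iterate $\bm{x}_{t}$ sits in the region \eqref{eq:WF-hessian-condition}, so Lemma~\ref{lemma:wf_hessian} delivers $\nabla^{2}f(\bm{x})\preceq c_{1}\log n \cdot\bm{I}_{n}$ along the segment connecting $\bm{x}_{t}$ and $\bm{x}_{\star}$, while Lemma~\ref{lem:strong-cvx-PR} gives $\nabla^{2}f\succeq 0.5\bm{I}_{n}$. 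A direct mean-value expansion $\bm{x}_{t+1}-\bm{x}_{\star} = [\bm{I}_{n}-\eta_{t}\int_{0}^{1}\nabla^{2}f(\bm{x}(\tau))\,d\tau]\,(\bm{x}_{t}-\bm{x}_{\star})$, combined with the step size $\eta_{t}=1/(c_{3}\|\bm{x}_{\star}\|_{2}^{2}\log n)$, then yields the contraction factor $1-\Theta(1/\log n)$ and propagates (i) to step $t+1$.

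The hard part is propagating the incoherence (ii), since a uniform union bound directly on $\bm{a}_{j}^{\top}(\bm{x}_{t+1}-\bm{x}_{\star})$ fails: $\bm{x}_{t}$ depends on $\bm{a}_{j}$ through the loss. The standard cure is a \emph{leave-one-out} construction: for each $1\le j\le m$, define an auxiliary sequence $\{\bm{x}_{t}^{(j)}\}$ that runs GD on the loss with the $j$-th sample removed (or replaced by its population counterpart), and is initialized from an analogously leave-one-out spectral estimator. By construction $\bm{x}_{t}^{(j)}$ is statistically independent of $\bm{a}_{j}$, so a single Gaussian concentration step plus a union bound over $j$ gives $\max_{j}|\bm{a}_{j}^{\top}(\bm{x}_{t}^{(j)}-\bm{x}_{\star})|\lesssim \sqrt{\log m}\,\|\bm{x}_{\star}\|_{2}$. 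Incoherence for the true iterate then follows from the triangle inequality $|\bm{a}_{j}^{\top}(\bm{x}_{t}-\bm{x}_{\star})|\le |\bm{a}_{j}^{\top}(\bm{x}_{t}^{(j)}-\bm{x}_{\star})|+\|\bm{a}_{j}\|_{2}\|\bm{x}_{t}-\bm{x}_{t}^{(j)}\|_{2}$, where $\|\bm{a}_{j}\|_{2}\lesssim \sqrt{n}$ with high probability.

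The crux is therefore a third inductive bound, $\|\bm{x}_{t}-\bm{x}_{t}^{(j)}\|_{2}\lesssim \sqrt{(\log m)/n}\,\|\bm{x}_{\star}\|_{2}$, uniformly in $j$ and $t$. I would prove this by expanding the one-step difference $\bm{x}_{t+1}-\bm{x}_{t+1}^{(j)}$ and bounding two groups of terms: a \emph{contraction} term of the form $[\bm{I}_{n}-\eta_{t}\nabla^{2}f(\cdot)](\bm{x}_{t}-\bm{x}_{t}^{(j)})$, which shrinks by the same factor $1-\Theta(1/\log n)$ thanks to Lemmas~\ref{lem:strong-cvx-PR} and \ref{lemma:wf_hessian} (applied to the leave-one-out iterate, which also stays incoherent and close to $\bm{x}_{\star}$), and a \emph{perturbation} term accounting for the single extra gradient summand involving $\bm{a}_{j}$, which, using that $|\bm{a}_{j}^{\top}\bm{x}_{t}^{(j)}|\lesssim \sqrt{\log m}\,\|\bm{x}_{\star}\|_{2}$ by $\bm{a}_{j}\perp\!\!\!\perp\bm{x}_{t}^{(j)}$, contributes at most $\eta_{t}\cdot \mathsf{poly}(\log n)\cdot\|\bm{x}_{\star}\|_{2}^{3}\cdot \sqrt{n}$ in norm. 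Summing the geometric series over $t$ and using $\eta_{t}\asymp 1/(\|\bm{x}_{\star}\|_{2}^{2}\log n)$ closes the bound. The main obstacle, and where the delicacy lies, is orchestrating this three-way induction --- the $\ell_{2}$ contraction for $\bm{x}_{t}$, the $\ell_{2}$ proximity $\bm{x}_{t}\approx \bm{x}_{t}^{(j)}$, and the incoherence of both --- so that each constant is consistent and the high-probability event holds uniformly over all $t$ up to a prescribed horizon (or, by re-randomization arguments, for all $t\ge 0$).
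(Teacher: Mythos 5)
Your proposal is essentially the leave-one-out argument of Ma et al.\ to which the paper defers for the proof of Theorem~\ref{thm:GD-PR-improved}: a coupled induction on (i) the $\ell_2$-error contraction, (ii) the incoherence of the iterates, and (iii) the proximity $\|\bm{x}_t-\bm{x}_t^{(j)}\|_2$ between the true and leave-one-out sequences, using Lemmas~\ref{lem:strong-cvx-PR} and~\ref{lemma:wf_hessian} to obtain restricted strong convexity and $O(\log n)$-smoothness inside the incoherence region, and then appealing to independence of $\bm{a}_j$ and $\bm{x}_t^{(j)}$ to run the union bound. This is the correct and essentially complete plan, and it coincides with the paper's route.

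One slip worth fixing: the perturbation term you isolate --- the single extra summand of $\nabla f$ involving $\bm{a}_j$ --- carries the prefactor $\eta_t/m$, not $\eta_t$. As written, ``$\eta_t\cdot\mathsf{poly}(\log n)\cdot\|\bm{x}_\star\|_2^3\cdot\sqrt{n}$'' omits the $1/m$; without it, the geometric-series sum gives a quantity of order $\mathsf{poly}(\log n)\sqrt{n}\,\|\bm{x}_\star\|_2$, which is far too large and the induction on $\|\bm{x}_t-\bm{x}_t^{(j)}\|_2$ would not close. Restoring the $1/m$ and using $m\gtrsim n\log n$ makes the per-step perturbation $\lesssim\frac{\eta_t}{m}(\log m)^{3/2}\sqrt{n}\,\|\bm{x}_\star\|_2^3\asymp\frac{\|\bm{x}_\star\|_2\,\mathsf{poly}(\log n)}{n\log n}$, and the geometric series (with ratio $1-\Theta(1/\log n)$) sums to $\lesssim\sqrt{(\log m)/n}\,\|\bm{x}_\star\|_2$, exactly what you need.
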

In words, \eqref{eq:PR-incoherent} reveals that all iterates are incoherent w.r.t.~the sampling vectors, and hence fall within the nice region characterized in Lemma \ref{lemma:wf_hessian}. With this observation in mind,  it is shown in \eqref{eq:PR-tight}  that vanilla GD converges in $O(\log n\log \frac{1}{\varepsilon})$ iterations. This significantly improves upon the computational bound in Theorem \ref{thm:GD-PR-loose} derived based on the smoothness property without restricting attention to the incoherence region. 

Similarly, for quadratic sensing~\eqref{eq:min-PR_lowrank} where the GD update rule is given by
\begin{equation}\label{eq:gradient-QS-formula}
\bX_{t+1} = \bX_{t} -  \frac{\eta_{t}}{m} \sum_{i=1}^{m } \left( \big\Vert \ba_{i}^{\top}\bX_{t} \big\Vert_{2}^{2} - y_{i} \right) \ba_{i}\ba_{i}^{\top}\bX_{t},
\end{equation}
we have the following result, which generalizes Theorem~\ref{thm:GD-PR-improved} to the low-rank setting.
\begin{theorem}[$\mathsf{GD~for~quadratic~sensing}$ \cite{li2018nonconvex}]
\label{thm:GD-QS-improved}
	Consider the problem~\eqref{eq:min-PR_lowrank}. Suppose the sample size satisfies $m\geq c_0  n r^{4} \kappa^{3} \log{ n} $ for some large constant $c_0$, then with probability $1-O(mn^{-10})$, the GD iterates with proper initialization (see, e.g., spectral initialization in Section \ref{sec:vanilla-spectral}) and $\eta_t =\eta \equiv  1/(c_1( r\kappa + \log{n})^{2} \sigma_1(\bM_{\star}))$ obey
\begin{align}
	& 	\mathsf{dist}( \bm{X}_{t}, \bm{X}_{\star})  \leq\left(1- \frac{\eta\sigma_1(\bM_{\star})}{2}\right)^{t}  \left\|\bX_{\star}\right\|_{\mathrm{F}} \label{eq:QS-tight} 
\end{align}
for all $t\geq 0$. Here, $c_1>0$ is some absolute constant. 
\end{theorem}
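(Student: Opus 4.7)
The plan is to emulate the implicit-regularization blueprint that underpins Theorem~\ref{thm:GD-PR-improved}, adapting it to the rank-$r$ setting with the rotation-aware error metric \eqref{defn:dist-matrix}. The overall strategy is: (i) identify a ``nice'' region where a rank-$r$ analogue of restricted strong convexity and smoothness (in the sense of \eqref{eq:strong-cvx-rotate-1point}) holds; (ii) verify that spectral initialization lies inside this region; (iii) show by induction along the whole trajectory that the GD iterates never leave this region, so that the contraction in Lemma~\ref{lem:GD-convergence-dist} applies at every step. The key quantity that controls the smoothness parameter is the same ``incoherence with the sampling vectors'' that appeared for phase retrieval, and ensuring it propagates will be the main technical burden.

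First I would define a region of the form
\begin{align*}
\mathcal{R} := \Big\{ \bm{X}:~ & \mathsf{dist}(\bm{X},\bm{X}_{\star}) \le c_2 \sigma_r(\bm{M}_\star)^{1/2}/\sqrt{\kappa}, \\
 &\ \max_{1\le i\le m} \big\| \bm{a}_i^\top(\bm{X}\bm{H}_{\bm X}-\bm{X}_\star)\big\|_2 \le c_3\sqrt{r\kappa\log n}\,\|\bm{X}_\star\| \Big\},
\end{align*}
and prove two geometric facts on $\mathcal{R}$. Facts of the second-moment type ($\mathbb{E}[\bm{a}_i\bm{a}_i^\top \cdot \text{quadratic in }\bm{X},\bm{X}_\star]$) together with standard covering/concentration for quadratic forms in Gaussian vectors should yield: (a) the Hessian satisfies a rank-$r$ restricted strong convexity lower bound $\gtrsim \sigma_r(\bm{M}_\star)$ along the rotated error direction $\mathsf{vec}(\bm{Z}\bm{H}_{\bm Z}-\bm{X}_\star)$ exactly as in Lemma~\ref{lemma:hessian-MC}; (b) a restricted smoothness upper bound of order $(r\kappa+\log n)^2\sigma_1(\bm{M}_\star)$ whenever $\bm{X}\in\mathcal{R}$, coming from bounding $\tfrac{1}{m}\sum_i \|\bm{a}_i^\top \bm{X}\|_2^2 \,\bm{a}_i\bm{a}_i^\top$ via the incoherence term $\max_i\|\bm{a}_i^\top\bm{X}\|_2$. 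Combined with Lemma~\ref{lem:GD-convergence-dist}, any iteration whose current point lies in $\mathcal{R}$ contracts in $\mathsf{dist}$ by the factor $1-\eta\sigma_r(\bm{M}_\star)$, which, after absorbing $\kappa$, gives the stated rate $1-\eta\sigma_1(\bm{M}_\star)/2$ up to constants.

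The initialization step is standard: the spectral method of Section~\ref{sec:spectral-initialization} applied to $\tfrac{1}{2m}\sum_i y_i\bm{a}_i\bm{a}_i^\top - \tfrac{1}{2}\sum_i y_i \bm{I}_n/m$ (or a suitable debiased variant for the quadratic-sensing model) produces $\bm{X}_0$ with $\mathsf{dist}(\bm{X}_0,\bm{X}_\star)$ as small as required once $m\gtrsim nr^4\kappa^3\log n$; incoherence of $\bm{X}_0$ with $\{\bm{a}_i\}$ follows from the fact that $\bm{X}_0$ is an eigen-object of a matrix depending only mildly on any single $\bm{a}_i$, so a simple leave-one-out bound suffices there.

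The hard part is the inductive propagation of the incoherence bound along the GD trajectory --- this is what standard arguments like Lemma~\ref{lem:GD-convergence-dist} do not give. The plan is a leave-one-out device: for each $1\le i\le m$, construct an auxiliary sequence $\{\bm{X}_t^{(i)}\}$ running the same GD update but on the loss with the $i$th sample removed (and, as usual for rank-$r$ problems, aligned to $\bm{X}_t$ through an auxiliary orthogonal factor $\bm{R}_t^{(i)}\in\mathcal{O}^{r\times r}$ to kill the rotational ambiguity). Two quantities must be controlled at every iterate, by a simultaneous induction: (1) the leave-one-out proximity $\|\bm{X}_t\bm{R}_t^{(i)}-\bm{X}_t^{(i)}\|_{\mathrm{F}}$, which propagates through the Hessian-contraction bound from step (i) plus a perturbation term driven by the single dropped sample; (2) the pseudo-incoherence $\|\bm{a}_i^\top(\bm{X}_t^{(i)}\bm{H}_{\bm{X}_t^{(i)}}-\bm{X}_\star)\|_2$, which is small because $\bm{X}_t^{(i)}$ is independent of $\bm{a}_i$ and can therefore be bounded by Gaussian concentration $\lesssim \sqrt{\log n}\,\|\bm{X}_t^{(i)}\bm{H}_{\bm{X}_t^{(i)}}-\bm{X}_\star\|_{\mathrm{F}}$. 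A triangle inequality then yields $\|\bm{a}_i^\top(\bm{X}_t\bm{H}_{\bm{X}_t}-\bm{X}_\star)\|_2 \lesssim \sqrt{r\kappa\log n}\,\|\bm{X}_\star\|$, keeping $\bm{X}_t$ inside $\mathcal{R}$. Carrying $r$ and $\kappa$ through these perturbation terms --- especially calibrating the alignment matrices $\bm{H}_{\bm X}, \bm{R}_t^{(i)}$ so that the two ambiguities compose consistently --- is the primary source of technical difficulty and is what drives the sample complexity $m\gtrsim nr^4\kappa^3\log n$. Once the induction closes, the contraction from step (i) holds for every $t$, and \eqref{eq:QS-tight} follows.
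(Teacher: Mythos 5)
The paper does not prove Theorem~\ref{thm:GD-QS-improved}; it is stated as a citation to \cite{li2018nonconvex}, so there is no in-paper proof to compare against. That said, the three-step blueprint you describe --- (i) identify an incoherence region on which restricted strong convexity and smoothness hold, (ii) apply the rotation-aware contraction of Lemma~\ref{lem:GD-convergence-dist}, and (iii) propagate membership in that region along the GD trajectory by a leave-one-out induction with rank-$r$ alignment --- is exactly the implicit-regularization recipe the paper sketches in Section~\ref{sec:Implicit-Reg} and attributes to \cite{ma2017implicit} (and, by extension, \cite{li2018nonconvex}). So you have correctly identified both the strategy and the main technical bottleneck, namely calibrating the alignment matrices $\bm{H}_{\bm{X}_t}$ and $\bm{R}_t^{(i)}$ so the leave-one-out surrogates stay comparable to the true iterates.

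There is, however, a concrete gap in your final step. Your geometric estimates give a restricted strong-convexity parameter $\alpha \asymp \sigma_r(\bm{M}_\star)$ (as one expects for low-rank factorization, cf.~Lemma~\ref{lemma:hessian-MC} and Appendix~\ref{appendix-strong-cvx-rotation-pop}) and a smoothness parameter $\beta \asymp (r\kappa+\log n)^2\sigma_1(\bm{M}_\star)$. Lemma~\ref{lem:GD-convergence-dist} then yields per-iteration contraction $1-\alpha/\beta \asymp 1-\eta\,\sigma_r(\bm{M}_\star)$, which is a factor of $\kappa$ \emph{weaker} than the theorem's claimed $1-\eta\,\sigma_1(\bm{M}_\star)/2$. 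The remark that ``absorbing $\kappa$'' converts the former into the latter is backwards: you can lose a factor of $\kappa$ by conservative bounding, but you cannot gain one. To recover the advertised rate, one must either show that the lower bound \eqref{eq:strong-cvx-rotate-1point} actually holds at scale $\sigma_1(\bm{M}_\star)$ on the incoherence region (atypical for factored low-rank losses), or use a regularity condition or reweighted error metric that compensates for the spectral spread, or concede the stated rate only when $\kappa=O(1)$ (consistent with the paper's informal iteration-complexity summary $O(\max\{r,\log n\}^2\log\tfrac{1}{\varepsilon})$, but not with the theorem statement, which tracks $\kappa$ explicitly). As written, your derivation does not deliver the contraction factor the theorem asserts, and the last sentence of your step (iii) papers over a real $\kappa$-sized discrepancy rather than resolving it.
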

This theorem demonstrates that vanilla GD converges within $O\big(\max\{r,\log n\}^2 \log \frac{1}{\varepsilon} \big)$ iterations for quadratic sensing of a rank-$r$ matrix. This significantly improves upon the computational bounds in \cite{sanghavi2017local} which do not consider the incoherence region.

The next example is matrix completion \eqref{eq:MC-empirical-risk}, for which the GD update rule reads
\begin{align}
	\bm{X}_{t+1}=\bm{X}_t - \frac{\eta_t}{p}\mathcal{P}_{\Omega}\left(\bm{X}_t\bm{X}_t^{\top}-\bm{M}_{\star}\right)\bm{X}_t 
	\label{eq:gradient-MC-formula-clean}
\end{align}
with $\mathcal{P}_{\Omega}$ defined in \eqref{defn:Pomega}. The theory for this update rule is: 

\begin{theorem}[$\mathsf{GD~for~matrix~completion}$ \cite{ma2017implicit,chen2019nvx}]
\label{thm:main-MC}
Consider the problem \eqref{eq:MC-empirical-risk}.  Suppose that
the sample size satisfies $n^{2}p\geq c_0\mu^{2}r^{2}n\log n$ for
	some large constant $c_0>0$, and that the condition number $\kappa$ of $\bm{M}_{\star}= \bm{X}_{\star}\bm{X}_{\star}^{\top}$ is a fixed constant. 
With probability at least $1-O\left(n^{-3}\right)$, the GD iterates \eqref{eq:gradient-MC-formula-clean} with proper initialization (see, e.g., spectral initialization in Section \ref{sec:vanilla-spectral})
 satisfy 
\begin{subequations} \label{eq:induction_original_MC_thm}
\begin{align}
	\big\|\bm{X}_{t} {\bm{H}}_{\bm{X}_{t}}-\bm{X}_{\star}\big\|_{\mathrm{F}} & \lesssim \left( 1 - c_1 \right)^{t}\mu r\frac{1}{\sqrt{np}} \big\|\bm{X}_{\star}\big\|_{\mathrm{F}} \label{eq:induction_original_ell_2-MC_thm}\\
	\big\|\bm{X}_{t} {\bm{H}}_{\bm{X}_{t}}-\bm{X}_{\star}\big\|_{2,\infty} & \lesssim \left( 1 - c_1 \right)^{t} \mu r\sqrt{\frac{\log n}{np}} \big\|\bm{X}_{\star}\big\|_{2,\infty} \label{eq:induction_original_ell_infty-MC_thm}
\end{align}
\end{subequations}
for all $t\geq 0$, with $\bm{H}_{\bm{X}_t}$ defined in \eqref{defn:HX}. Here, $c_{1}>0$ is some constant, and $\eta_{t}\equiv {c_2} / (\kappa \sigma_{1}(\bm{M}_{\star}))$ for some constant $c_2>0$. \end{theorem}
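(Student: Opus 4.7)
The plan is to establish both bounds in \eqref{eq:induction_original_MC_thm} simultaneously by induction, leveraging the restricted strong convexity and smoothness stated in Lemma~\ref{lemma:hessian-MC}. The fundamental obstacle is that this restricted geometry only applies inside the incoherence region $\{\bm{X}:\|\bm{X}-\bm{X}_\star\|_{2,\infty}\leq \epsilon\|\bm{X}_\star\|_{2,\infty}\}$, so the $\ell_{2,\infty}$ bound \eqref{eq:induction_original_ell_infty-MC_thm} is not merely a byproduct but a \emph{prerequisite} for the $\ell_2$ bound \eqref{eq:induction_original_ell_2-MC_thm}. A naive Frobenius-only induction (in the style of Lemma~\ref{lem:GD-convergence-dist}) is doomed because shrinking $\|\bm{X}_{t}\bm{H}_{\bm{X}_t}-\bm{X}_\star\|_{\mathrm{F}}$ does not itself prevent the iterates from drifting into directions too aligned with the standard basis vectors that underlie $\mathcal{P}_\Omega$.

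My approach is the \textbf{leave-one-out} device. For each $l\in[n]$, I would introduce an auxiliary sequence $\{\bm{X}_t^{(l)}\}$ produced by the GD update rule \eqref{eq:gradient-MC-formula-clean} applied to a surrogate loss where the sampling operator on the $l$-th row and column is replaced by its expectation, i.e.~one uses $\bm{M}_\star$ exactly on row/column $l$ and $p^{-1}\mathcal{P}_\Omega(\cdot)$ elsewhere. The key statistical property is that $\bm{X}_t^{(l)}$ is independent of $\{\charfn_{(i,j)\in\Omega}\}$ restricted to the $l$-th row/column. The induction then carries three coupled claims at each step $t$:
\begin{enumerate}
\item[(a)] \emph{Frobenius contraction:} $\|\bm{X}_t\bm{H}_{\bm{X}_t}-\bm{X}_\star\|_{\mathrm{F}}\lesssim (1-c_1)^t\, \frac{\mu r}{\sqrt{np}}\|\bm{X}_\star\|_{\mathrm{F}}$;
\item[(b)] \emph{leave-one-out proximity:} $\|\bm{X}_t\bm{H}_{\bm{X}_t}-\bm{X}_t^{(l)}\bm{H}_{\bm{X}_t^{(l)}}\|_{\mathrm{F}}\lesssim (1-c_1)^t\,\mu r\sqrt{\tfrac{\log n}{np}}\|\bm{X}_\star\|_{2,\infty}$ for every $l$;
\item[(c)] \emph{row-wise control of the auxiliary sequence:} $\|(\bm{X}_t^{(l)}\bm{H}_{\bm{X}_t^{(l)}}-\bm{X}_\star)_{l,\cdot}\|_2\lesssim (1-c_1)^t\,\mu r\sqrt{\tfrac{\log n}{np}}\|\bm{X}_\star\|_{2,\infty}$.
\end{enumerate}
Combining (b) and (c) via the triangle inequality $\|(\bm{X}_t\bm{H}_{\bm{X}_t}-\bm{X}_\star)_{l,\cdot}\|_2\leq \|\bm{X}_t\bm{H}_{\bm{X}_t}-\bm{X}_t^{(l)}\bm{H}_{\bm{X}_t^{(l)}}\|_{\mathrm{F}}+\|(\bm{X}_t^{(l)}\bm{H}_{\bm{X}_t^{(l)}}-\bm{X}_\star)_{l,\cdot}\|_2$ yields \eqref{eq:induction_original_ell_infty-MC_thm}, which in turn places $\bm{X}_t$ in the incoherence region so Lemma~\ref{lemma:hessian-MC} applies and a Lemma~\ref{lem:convergence-RC}-style argument delivers (a).

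The execution goes step by step. For the inductive step on (a), once (b) and (c) guarantee $\bm{X}_t$ lies in the incoherence region, Lemma~\ref{lemma:hessian-MC} provides restricted strong convexity with parameter $\asymp \sigma_r(\bm{M}_\star)$ and smoothness $\asymp \sigma_1(\bm{M}_\star)$; with $\eta_t\asymp 1/(\kappa\sigma_1(\bm{M}_\star))$ this gives contraction rate $1-\Theta(1/\kappa)=1-c_1$ for constant $\kappa$, mimicking the proof of Lemma~\ref{lem:GD-convergence-dist}. For (c), one exploits independence: conditional on $\bm{X}_t^{(l)}$, the $l$-th row of the update involves only the sampling indicators on row $l$, and matrix Bernstein/vector-Hoeffding inequalities produce the $\sqrt{\log n/(np)}$ concentration. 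The delicate part is (b): one must compare two nonconvex GD trajectories $\bm{X}_{t+1}-\bm{X}_{t+1}^{(l)}$ started from close points and driven by perturbed operators, accounting for the different Procrustes alignments $\bm{H}_{\bm{X}_t}$ and $\bm{H}_{\bm{X}_t^{(l)}}$. This reduces to a Jacobian-perturbation bound that uses both the RIP-like concentration of $p^{-1}\mathcal{P}_\Omega-\mathcal{I}$ on low-rank matrices with incoherent factors and the inductive $\ell_{2,\infty}$ control.

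The main obstacles will be (i) propagating (b) — the two trajectories are coupled through a nonconvex, non-contractive map, and one must extract contraction only after aligning rotations, which requires a careful perturbation identity for $\bm{H}_{\bm{X}_t}$ vs.~$\bm{H}_{\bm{X}_t^{(l)}}$; and (ii) verifying the base case $t=0$, namely that spectral initialization simultaneously satisfies all three bounds. The latter follows from entrywise concentration of $p^{-1}\mathcal{P}_\Omega(\bm{M}_\star)$ around $\bm{M}_\star$ together with Davis--Kahan-type perturbation of its leading singular subspace, plus a parallel leave-one-out analysis of the spectral estimator to secure (b) and (c) at $t=0$. Once these ingredients are in place, the induction closes and Theorem~\ref{thm:main-MC} follows by a union bound over $l\in[n]$ and $t\leq T$ for any polynomial horizon $T$.
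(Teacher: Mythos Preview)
Your proposal is correct and matches the approach the paper points to: the paper does not give a self-contained proof of Theorem~\ref{thm:main-MC} but explicitly attributes it to the leave-one-out argument of \cite{ma2017implicit,chen2019nvx}, highlighting exactly the mechanism you describe --- maintaining the $\ell_{2,\infty}$ incoherence \eqref{eq:induction_original_ell_infty-MC_thm} along the trajectory so that Lemma~\ref{lemma:hessian-MC} applies, via auxiliary leave-one-out sequences and a coupled induction. Your three-part induction (Frobenius contraction, leave-one-out proximity, row-wise control of the surrogate) and the identified obstacles (rotation alignment when comparing trajectories, and the base case for spectral initialization) are precisely the ingredients of the cited proof.
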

This theorem demonstrates that vanilla GD converges within $O\big( \log \frac{1}{\varepsilon} \big)$ iterations.  The key enabler of such a convergence rate is the property \eqref{eq:induction_original_ell_infty-MC_thm}, which basically implies that the GD iterates stay incoherent with the standard basis vectors.  A byproduct is that:  GD converges not only in Euclidean norm, it also converges in other more refined error metrics, e.g.~the one measured by the $\ell_2/\ell_{\infty}$ norm.

The last example is  blind deconvolution. To measure the discrepancy between any
$\bm{z}:=${\scriptsize$\begin{bmatrix}
\bm{h} \\
\bm{x}
\end{bmatrix}$} 
and
$\bm{z}_{\star}:=${\scriptsize$\begin{bmatrix}
	\bm{h}_{\star} \\
	\bm{x}_{\star}
\end{bmatrix}$}, we define 
\begin{equation}
	\mathsf{dist}_{\mathsf{bd}}\left(\bm{z},\bm{z}_{\star}\right):= \min_{\alpha\in\mathbb{C}} \sqrt{\Big\Vert \frac{1}{\overline{\alpha}}\bm{h}-\bm{h}_{\star}\Big\Vert _{2}^{2}+\left\Vert \alpha\bm{x}-\bm{x}_{\star}\right\Vert _{2}^{2}},\label{eq:defn-dist-BD}
\end{equation}
which accounts for unrecoverable global scaling and phase. The gradient method, also called {\em Wirtinger flow (WF)}, is 
\begin{subequations}\label{eq:gradient-update-Bd-explicit}
\begin{align}
\bm{h}_{t+1} & =\bm{h}_{t}-\frac{\eta_t}{\left\Vert \bm{x}_{t}\right\Vert _{2}^{2}}\sum_{j=1}^{m}\left(\bm{b}_{j}^{\conj}\bm{h}_{t}\bm{x}_t^{\conj}\bm{a}_{j}-y_{j}\right)\bm{b}_{j}\bm{a}_{j}^{\conj}\bm{x}_{t},\label{eq:gradient-update-h-Bd}\\
\bm{x}_{t+1} & =\bm{x}_{t}-\frac{\eta_t}{\left\Vert \bm{h}_{t}\right\Vert _{2}^{2}}\sum_{j=1}^{m}\overline{(\bm{b}_{j}^{\conj}\bm{h}_{t}\bm{x}_t^{\conj}\bm{a}_{j}-y_{j})}\,\bm{a}_{j}\bm{b}_{j}^{\conj}\bm{h}_{t},\label{eq:gradient-update-x-BD}
\end{align}
\end{subequations}
which enjoys the following theoretical support. 
\begin{theorem}[$\mathsf{WF~for~blind~deconvolution}$ \cite{ma2017implicit}]
	\label{thm:main-BD}
	Consider the problem \eqref{eq:fmin-BD}. 
	Suppose the sample size $m\geq c_0\mu^{2}\max \{K,N\}\mathsf{poly}\log m$ for some large constant
$c_0>0$. Then there is some constant $c_1>0$ such that with probability exceeding $1- O(\min\{K,N\}^{-5})$,
the iterates \eqref{eq:gradient-update-Bd-explicit} with proper initialization (see, e.g., spectral initialization in Section \ref{sec:vanilla-spectral}) and $\eta_t \equiv c_1$ satisfy 
%
\label{eq:BD_thm}
\begin{align}
\mathsf{dist}_{\mathsf{bd}}\left(\bm{z}_{t},\bm{z}_{\star}\right) & \lesssim \left(1-\frac{\eta}{16}\right)^{t}\frac{1}{\log^{2}m}\left\Vert\bm{z}_{\star}\right\Vert_{2}, \quad \forall t\geq 0. \label{eq:BD-thm-ell-2}
\end{align}
\end{theorem}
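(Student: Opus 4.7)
My plan is to mimic the implicit-regularization template that works for phase retrieval (Theorem on GD for phase retrieval, improved bound) and matrix completion (Theorem on GD for matrix completion). The idea is: (i) identify a polytope-like region $\mathcal{R}$ around $\bm{z}_\star$ in which the Wirtinger Hessian enjoys a restricted strong-convexity/smoothness condition, so that a single gradient step contracts $\mathsf{dist}_{\mathsf{bd}}(\cdot,\bm{z}_\star)$ by a factor $(1-\eta/16)$; (ii) carry an induction showing that the trajectory $\{\bm{z}_t\}$ never leaves $\mathcal{R}$. The distance \eqref{eq:defn-dist-BD} is invariant under the scale ambiguity $\bm{h}\mapsto \bm{h}/\overline{\alpha}$, $\bm{x}\mapsto \alpha\bm{x}$, so throughout the argument I will fix the optimal $\alpha_t$ at each step and reduce to tracking $\|\bm{h}_t/\overline{\alpha_t}-\bm{h}_\star\|_2^2+\|\alpha_t\bm{x}_t-\bm{x}_\star\|_2^2$; with this reduction the gradient map has the homogeneity needed to apply an analog of Lemma~\ref{lem:GD-convergence-dist}.

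For the region $\mathcal{R}$, I would maintain three simultaneous induction hypotheses at iteration $t$: an $\ell_2$ closeness bound $\mathsf{dist}_{\mathsf{bd}}(\bm{z}_t,\bm{z}_\star)\le C_1(1-\eta/16)^t\|\bm{z}_\star\|_2/\log^2 m$, an incoherence bound for the partial-DFT measurements,
\[
\max_{1\le j\le m}\bigl|\bm{b}_j^{\conj}(\bm{h}_t/\overline{\alpha_t}-\bm{h}_\star)\bigr|\;\lesssim\;\frac{\mu}{\sqrt{m}}\,\|\bm{h}_\star\|_2,
\]
and an incoherence bound for the complex Gaussian side,
\[
\max_{1\le j\le m}\bigl|\bm{a}_j^{\conj}(\alpha_t\bm{x}_t-\bm{x}_\star)\bigr|\;\lesssim\;\sqrt{\log m}\,\|\bm{x}_\star\|_2.
\]
These two incoherence bounds are exactly what prevents the smoothness parameter of $f$ from blowing up: once they hold, each summand $\bm{b}_j^{\conj}\bm{h}_t\bm{x}_t^{\conj}\bm{a}_j-y_j$ of the gradient is well-controlled, and concentration (via truncated matrix Bernstein / Hanson–Wright) upgrades the population Hessian analysis of Section~\ref{sec:GD-standard-MF} to a restricted strong convexity and smoothness statement on $\mathcal{R}$ with parameters $\alpha\asymp\|\bm{z}_\star\|_2^2$ and $\beta\asymp\|\bm{z}_\star\|_2^2\cdot\mathsf{poly}\log m$, yielding the contraction factor $1-\eta/16$ claimed in \eqref{eq:BD-thm-ell-2}.

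To propagate the $\ell_2$ bound is routine given the restricted geometry, by the same Taylor-expansion-plus-regularity-condition argument used in the proof of Lemma~\ref{lem:GD-convergence-dist}. The genuinely hard part will be propagating the two incoherence bounds, since by themselves they concern $\max_j$ over $m$ statistically dependent quantities (each $\bm{z}_t$ depends on every $\bm{a}_j$ and every $\bm{b}_j$). I would handle this by a leave-one-out construction: for each index $j$, define auxiliary WF iterates $\bm{z}_t^{(j)}$ produced by the loss that omits the $j$-th measurement, so that $\bm{z}_t^{(j)}$ is statistically independent of $(\bm{a}_j,\bm{b}_j)$. Then $\bm{b}_j^{\conj}(\bm{h}_t^{(j)}/\overline{\alpha_t^{(j)}}-\bm{h}_\star)$ and $\bm{a}_j^{\conj}(\alpha_t^{(j)}\bm{x}_t^{(j)}-\bm{x}_\star)$ are easy to control by standard sub-Gaussian concentration (for $\bm{a}_j$) and by the deterministic structure of the partial DFT rows $\bm{b}_j$ together with the incoherence parameter $\mu$ in Definition~\ref{def:BD-mu}. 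What remains is to bound $\|\bm{z}_t-\bm{z}_t^{(j)}\|_2$ uniformly in $j$; this is a second induction, showing that the two trajectories stay $O(1/\mathsf{poly}(m))$ close in $\ell_2$, which combined with the trivial bound $\|\bm{b}_j\|_2,\|\bm{a}_j\|_2\lesssim\sqrt{\log m}$ (respectively $\sqrt{K/m}$) gives back the desired incoherence. The spectral initialization (Section~\ref{sec:vanilla-spectral}) provides the base case for all three induction hypotheses simultaneously. Closing these three induction hypotheses together, with the multiplicative slack $1/\log^2 m$ absorbing the $\mathsf{poly}\log m$ losses in each step, yields \eqref{eq:BD-thm-ell-2}.
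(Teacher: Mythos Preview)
Your proposal is correct and follows essentially the same approach as the paper: the paper does not spell out a proof here but explicitly points to the implicit-regularization\,/\,leave-one-out framework of \cite{ma2017implicit}, which is precisely the template you describe (restricted strong convexity and smoothness on an incoherence polytope, coupled induction on $\ell_2$ error and two incoherence bounds, leave-one-out auxiliary sequences to decouple $\bm{z}_t$ from individual $(\bm{a}_j,\bm{b}_j)$). The only minor slip is the bound $\|\bm{a}_j\|_2\lesssim\sqrt{\log m}$; in fact $\|\bm{a}_j\|_2\asymp\sqrt{N}$, so the passage from $\|\bm{z}_t-\bm{z}_t^{(j)}\|_2$ to $|\bm{a}_j^{\conj}(\alpha_t\bm{x}_t-\bm{x}_\star)|$ requires the leave-one-out gap to be controlled at scale $1/\mathrm{poly}(m)\cdot\sqrt{N}$ rather than $1/\mathrm{poly}(m)\cdot\sqrt{\log m}$, but this is absorbed by the sample-size assumption and does not affect the argument.
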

For conciseness, we only state that the estimation error converges in $O\big( \log\frac{1}{\varepsilon} \big)$ iterations. The incoherence conditions also provably hold across all iterations; see \cite{ma2017implicit} for details. Similar results have been derived for the blind demixing case as well \cite{shi2018demising}.

Finally, we remark that the desired incoherence conditions  cannot be established via generic optimization theory.  Rather, these are proved by exploiting delicate statistical properties underlying the models of interest. The key technique is called a ``{\em{leave-one-out}}'' argument, which is rooted in probability and random matrix theory and finds applications to many problems \cite{stein1972bound,chen2010normal,el2015impact,zhong2017near,sur2017likelihood,chen2017spectral,abbe2017entrywise,chen2018gradient,ding2018leave,chen2019noisy,chen2019inference,chen2019nvx}.  The interested readers can find a general recipe using this argument in \cite{ma2017implicit}. 
%
%
%

\subsection{Notes} 

Provably valid two-stage nonconvex algorithms for matrix factorization were pioneered by Keshavan et al.~\cite{keshavan2010matrix,keshavan2010noisy}, where the authors studied the spectral method followed by (regularized) gradient descent on Grassmann manifolds. Partly due to the popularity of convex programming, the local refinement stage of \cite{keshavan2010matrix,keshavan2010noisy} received less attention than convex relaxation and spectral methods  around that time.  A recent work that  popularized the gradient  methods for matrix factorization is Cand\`es et al.~\cite{candes2015phase}, which provided the first  convergence guarantees for gradient descent (or Wirtinger flow) for phase retrieval. Local convergence of (regularized) GD was later established for matrix completion (without resorting to Grassmann manifolds) \cite{sun2016guaranteed},  matrix sensing \cite{tu2015low,zheng2015convergent},  and blind deconvolution under subspace prior \cite{li2016deconvolution}.  These works were all based on regularity conditions within a local ball.  The resulting iteration complexities for phase retrieval, matrix completion, and blind deconvolution were all sub-optimal, which  scaled at least linearly with the problem size. Near-optimal computational guarantees were first derived by \cite{ma2017implicit} via a leave-one-out analysis. Notably, all of these works are local results and rely on proper initialization. Later on, GD was shown to converge within a logarithmic number of iterations for phase retrieval, even with random initialization \cite{chen2018gradient}. 



%


\section{Variants of gradient descent}
\label{sec:GD_variants}
 

This section introduces several important variants of gradient descent that serve different purposes, including improving computational performance, enforcing additional structures of the estimates, and removing the effects of outliers, amongst others. 
Due to space limitation, our description of these algorithms cannot be as detailed as that of vanilla GD. Fortunately,  many of the insights and analysis techniques introduced in Section  \ref{sec:gd} are still applicable, which already shed light on how to understand and analyze these variants.   In addition,  we caution that all of the theory presented herein is developed for the idealistic models described in Section \ref{sec:examples}, which might sometimes not capture realistic measurement models.  Practitioners should perform comprehensive comparisons of these algorithms on real data, before deciding on which one to employ in practice.


\subsection{Projected gradient descent}
\label{sec:projected-GD}

Projected gradient descent  modifies vanilla GD \eqref{eq:GD-general} by adding a projection step in order to enforce additional structures of the iterates, that is
\begin{equation}
\bm{x}_{t+1}=\mathcal{P}_{\mathcal{C}}\left(\bm{x}_{t}-\eta_{t}\nabla f\big(\bm{x}_{t}\big)\right),\label{eq:PGD}
\end{equation}
where the constraint set $\mathcal{C}$ can be either convex or nonconvex. For many important sets $\mathcal{C}$ encountered in practice, the projection step can be implemented efficiently, sometimes even with a closed-form solution. There are two common purposes for including a projection step: 1) to enforce the iterates to stay in a region with benign geometry, whose importance has been explained in Section~\ref{sec:restricted_geometry}; 2) to encourage additional low-dimensional structures of the iterates that may be available from prior knowledge.  

\subsubsection{Projection for computational benefits}

Here, the projection is to ensure the running iterates stay incoherent with the sampling basis, a property that is crucial to guarantee the algorithm descends properly in every iteration (see Section~\ref{sec:restricted_geometry}). One notable example serving this purpose is projected GD for matrix completion \cite{chen2015fast,zheng2016convergence,yi2016fast}, where in the positive semidefinite case (i.e.~$\bm{M}_{\star}=\bm{X}_{\star}\bm{X}_{\star}^{\top}$), one runs projected GD w.r.t.~the loss function $f(\cdot)$ in \eqref{eq:MC-empirical-risk}:
	\begin{align}\label{eq:PGDgrad}
	\bX_{t+1}& = \mathcal{P}_{\mathcal{C}} \big(\bX_{t} - \eta_{t} \nabla f(\bX_{t}) \big),  
	\end{align}
	where $ \eta_{t} $ is the step size and $ \mathcal{P}_{\mathcal{C}}$ denote the Euclidean projection onto the set of incoherent matrices:
\begin{align}\label{eq:PGD_set}
\mathcal{C}  := \Big\{\bX \in \mathbb{R}^{n \times r} ~
\big|~ \|\bX \|_{2,\infty} \le
\sqrt{\frac{ c \mu r}{n}}\|\bX_{0}\| \Big\},
\end{align}
with $\bX_0$ being the initialization and $c$ is a predetermined constant (e.g. $c=2$). This projection guarantees that the iterates stay in a nice incoherent region w.r.t.~the sampling basis (similar to the one prescribed in Lemma~\ref{lemma:hessian-MC}), thus achieving fast convergence. Moreover, this projection can be implemented via a row-wise ``clipping'' operation, given as
\begin{align*}
[\mathcal{P}_{\mathcal{C}}(\bX)]_{i,\cdot} =  \min \left\{1 ,
\sqrt{\frac{c\mu r}{n}} \frac{ \|\bX_{0}\| } { \|\bX_{i,\cdot}\|_2 } \right\} \cdot \bX_{i,\cdot}  ,
\end{align*}
for $i = 1,2,\ldots,n$. The convergence guarantee for this update rule is given below, which offers slightly different prescriptions in terms of sample complexity and convergence rate from Theorem~\ref{thm:main-MC} using vanilla GD.
\begin{theorem}[$\mathsf{Projected~GD~for~matrix~completion}$ \cite{chen2015fast,yi2016fast}]
\label{thm:main-MC-PGD}
Suppose that
the sample size satisfies $n^{2}p\geq c_0\mu^{2}r^{2}n\log n$ for
	some large constant $c_0>0$, and that the condition number $\kappa$ of $\bm{M}_{\star}= \bm{X}_{\star}\bm{X}_{\star}^{\top}$ is a fixed constant. 
With probability at least $1-O\left(n^{-1}\right)$, the projected GD iterates
	\eqref{eq:PGDgrad} satisfy 
\begin{align}
\mathsf{dist}^2( \bm{X}_{t}, \bm{X}_{\star} ) \leq \left(1-\frac{c_1}{\mu r }\right)^{t} \mathsf{dist}^2( \bm{X}_{0}, \bm{X}_{\star} ), 
\end{align} 
for all $t\geq 0$, provided that  $\mathsf{dist}^2( \bm{X}_{0}, \bm{X}_{\star} )\leq c_3 \sigma_{r}(\bm{M}_{\star})$ and $\eta_{t}\equiv \eta := {c_2} / (\mu r \sigma_{1}(\bm{M}_{\star}))$ for some constant $c_1,c_2,c_3>0$. \end{theorem}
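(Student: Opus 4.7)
The plan is to combine non-expansiveness of Euclidean projection onto the convex set $\mathcal{C}$ with a restricted strong convexity / smoothness analysis that exploits the incoherence constraint baked into $\mathcal{C}$. First observe that $\mathcal{C}=\{\bX:\|\bX\|_{2,\infty}\le \sqrt{c\mu r/n}\,\|\bX_0\|\}$ is convex (an intersection of row-wise $\ell_2$ balls) and is invariant under right multiplication by any $\bH\in\mathcal{O}^{r\times r}$. Together with the rotational homogeneity $\nabla f(\bX\bH)=\nabla f(\bX)\bH$, this gives the equivariance $\mathcal{P}_{\mathcal{C}}(\bY\bH)=\mathcal{P}_{\mathcal{C}}(\bY)\bH$. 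Moreover, the $\mu$-incoherence of $\bU_{\star}$ combined with a standard spectral-initialization bound $\|\bX_0\|\asymp\sqrt{\sigma_1(\bM_{\star})}$ ensures that $\bX_{\star}\bH$ lies in $\mathcal{C}$ for every orthonormal $\bH$ once the constant $c$ in the definition of $\mathcal{C}$ is large enough.

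With this in place, I would run the following one-step contraction. Let $\bH_t:=\bH_{\bX_t}$ and $\tilde{\bX}_t:=\bX_t\bH_t$; by equivariance,
\[
\mathsf{dist}^2(\bX_{t+1},\bX_{\star})\le \big\|\mathcal{P}_{\mathcal{C}}\big(\tilde{\bX}_t-\eta\nabla f(\tilde{\bX}_t)\big)-\bX_{\star}\big\|_{\mathrm{F}}^2.
\]
Since $\bX_{\star}\in\mathcal{C}$ and $\mathcal{C}$ is convex, Euclidean projection is non-expansive, so the right-hand side is bounded above by
\[
\|\tilde{\bX}_t-\bX_{\star}\|_{\mathrm{F}}^2 - 2\eta\langle \nabla f(\tilde{\bX}_t),\tilde{\bX}_t-\bX_{\star}\rangle + \eta^2\|\nabla f(\tilde{\bX}_t)\|_{\mathrm{F}}^2.
\]
The next step is to establish a regularity condition $\mathsf{RC}(\eta,\lambda,\zeta)$ for the gradient on $\mathcal{C}\cap\mathcal{B}_{\zeta}(\bX_{\star})$ with $\lambda\asymp\sigma_r(\bM_{\star})$ and $1/\eta\asymp\mu r\,\sigma_1(\bM_{\star})$. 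The ingredients are a restricted strong convexity bound analogous to Lemma~\ref{lemma:hessian-MC} (which already encodes an $\ell_{2,\infty}$-type incoherence hypothesis) together with a smoothness bound $\|\nabla f(\bX)\|_{\mathrm{F}}^2\lesssim \mu r\,\sigma_1(\bM_{\star})\,\|\bX\bH_{\bX}-\bX_{\star}\|_{\mathrm{F}}^2$, obtained by controlling the operator $p^{-1}\mathcal{P}_{\Omega}-\mathcal{I}$ on low-rank incoherent matrices. Plugging these into the expansion above with $\eta\asymp 1/(\mu r\,\sigma_1(\bM_{\star}))$ yields the per-step contraction $\mathsf{dist}^2(\bX_{t+1},\bX_{\star})\le (1-c_1/(\mu r\kappa))\,\mathsf{dist}^2(\bX_t,\bX_{\star})$, which reduces to the stated rate under $\kappa=O(1)$.

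Induction closes the argument: every iterate lies in $\mathcal{C}$ by construction, and the above contraction, started from $\mathsf{dist}^2(\bX_0,\bX_{\star})\le c_3\sigma_r(\bM_{\star})$, guarantees $\tilde{\bX}_t\in\mathcal{B}_{\zeta}(\bX_{\star})$ for all $t$, so the regularity condition continues to apply. The hard part will be the restricted-smoothness bound $\|\nabla f(\bX)\|_{\mathrm{F}}^2\lesssim \mu r\,\sigma_1(\bM_{\star})\,\|\bX\bH_{\bX}-\bX_{\star}\|_{\mathrm{F}}^2$ uniformly over $\mathcal{C}\cap\mathcal{B}_{\zeta}(\bX_{\star})$: it is precisely here that the sample complexity $n^2p\gtrsim \mu^2 r^2 n\log n$ enters, through matrix-Bernstein or chaining estimates that quantify how well $p^{-1}\mathcal{P}_{\Omega}$ preserves the spectral norm on the cone of rank-$2r$ incoherent matrices. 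Without the explicit projection onto $\mathcal{C}$ one cannot obtain the $\mu r$ (rather than $n$) scaling of the smoothness parameter, and it is exactly this improvement that produces the rate $1-c_1/(\mu r)$ advertised in the theorem.
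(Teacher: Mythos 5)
The paper does not supply a proof of this theorem; it states the result and cites \cite{chen2015fast,yi2016fast}, so there is no internal argument to compare against. Your blueprint is in the same spirit as those references and as the paper's own framework (Lemmas~\ref{lem:convergence-RC}, \ref{lem:GD-convergence-dist}, \ref{lemma:hessian-MC}): exploit convexity and rotation-invariance of $\mathcal{C}$, the equivariance $\mathcal{P}_{\mathcal{C}}(\bY\bH)=\mathcal{P}_{\mathcal{C}}(\bY)\bH$, non-expansiveness of Euclidean projection once $\bX_{\star}\bH\in\mathcal{C}$ (for which the incoherence of $\bU_{\star}$ and $\|\bX_0\|\asymp\sqrt{\sigma_1(\bM_{\star})}$ suffice), a regularity-condition contraction inside $\mathcal{C}\cap\mathcal{B}_{\zeta}(\bX_{\star})$, and induction to keep iterates there. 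That is the correct strategy.

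The one quantitative step that would fail as written is the smoothness claim
$\|\nabla f(\bX)\|_{\mathrm{F}}^2\lesssim \mu r\,\sigma_1(\bM_{\star})\,\|\bX\bH_{\bX}-\bX_{\star}\|_{\mathrm{F}}^2$,
which is off by one power of $\sigma_1(\bM_{\star})$. Tracing scales through the one-step expansion
$\|\tilde{\bX}_t-\bX_{\star}\|_{\mathrm{F}}^2-2\eta\langle\nabla f,\tilde{\bX}_t-\bX_{\star}\rangle+\eta^2\|\nabla f\|_{\mathrm{F}}^2$
with restricted strong convexity $\alpha\asymp\sigma_r(\bM_{\star})$, step size $\eta\asymp 1/(\mu r\sigma_1(\bM_{\star}))$, and your $\gamma=\mu r\sigma_1(\bM_{\star})$ in $\|\nabla f\|_{\mathrm{F}}^2\le\gamma\,\mathsf{dist}^2$, the error term $\eta^2\gamma\asymp 1/(\mu r\sigma_1)$ cannot be dominated by the progress term $\eta\alpha\asymp 1/(\mu r\kappa)$ unless $\sigma_1(\bM_{\star})\gtrsim 1$, a scale-dependent condition that one cannot impose. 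What is actually needed is $\gamma\asymp\mu r\,\sigma_1^2(\bM_{\star})$, equivalently a Hessian-type bound $\|\nabla^2 f(\bX)\|\lesssim\mu r\,\sigma_1(\bM_{\star})$ uniformly over $\mathcal{C}\cap\mathcal{B}_{\zeta}(\bX_{\star})$. The extra factor of $\mu r$ relative to Lemma~\ref{lemma:hessian-MC}'s $2.5\,\sigma_1(\bM_{\star})$ is exactly the price paid for working in the coarser incoherence region carved out by $\mathcal{C}$ (which only controls $\|\bX\|_{2,\infty}$ up to a constant multiple of $\|\bX_{\star}\|_{2,\infty}$) rather than the tighter $\epsilon$-neighborhood of that lemma. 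Once this exponent is corrected, the arithmetic yields the advertised $(1-c_1/(\mu r\kappa))$ contraction and the stated step size, and the rest of your argument closes via induction as you describe.

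It is also worth noting that the paper's own treatment of unprojected GD for matrix completion (Theorem~\ref{thm:main-MC}) uses a genuinely different mechanism: implicit regularization established by a leave-one-out argument tracks the $\ell_{2,\infty}$ error directly and places the iterates in the finer region of Lemma~\ref{lemma:hessian-MC}, yielding a dimension-free contraction $1-c_1$ rather than the $\mu r$-dependent one here. Your projection-based route is shorter and more mechanical, at the cost of a slower per-iteration rate.
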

 
This theorem says that projected GD takes $O(\mu r \log\frac{1}{\varepsilon})$ iterations to yield $\varepsilon$-accuracy (in a relative sense). 

\begin{remark}
The results can be extended to the more general asymmetric case by applying similar modifications mentioned in Remark \ref{rmk:GD_asymetric}; see \cite{zheng2016convergence,yi2016fast}.
\end{remark}	

\subsubsection{Projection for incorporating structural priors} \label{sec:pgd_sparse_pr}
In many problems of practical interest, we might be given some prior knowledge about the signal of interest, encoded by a constraint set $\bm{x}_{\star}\in\mathcal{C}$. Therefore, it is natural to apply projection to enforce the desired structural constraints. One such example is sparse phase retrieval \cite{soltanolkotabi2017structured,cai2016optimal}, where it is known {\em a priori} that $\bm{x}_{\star}$ in \eqref{eq:PR-samples} is $k$-sparse, where $k\ll n$. 
If we have prior knowledge about $\|\bx_{\star}\|_1$, then we can pick the constraint set $\mathcal{C}$ as follows to promote sparsity 
\begin{equation}\label{eq:SPR_constraint}
\mathcal{C} = \big\{\bx\in\mathbb{R}^n : \quad \|\bx\|_1 \leq \|\bx_{\star}\|_1 \big\},
\end{equation}
as a sparse signal often (although not always) has low $\ell_1$ norm. 
With this convex constraint set in place, applying projected GD w.r.t.~the loss function \eqref{eq:min-PR} can be efficiently implemented \cite{duchi2008efficient}. The theoretical guarantee of projected GD for sparse phase retrieval is given below.
\begin{theorem}[$\mathsf{Projected~GD~for~sparse~phase~retrieval}$ \cite{soltanolkotabi2017structured}]
\label{thm:PGD-SPR-loose}
Consider the sparse phase retrieval problem where $\bx_{\star}$ is $k$-sparse. Suppose that $m\geq c_1 k \log n$ for some large constant $c_1>0$. The projected GD iterates w.r.t. \eqref{eq:min-PR} and the constraint set \eqref{eq:SPR_constraint} obey
\begin{equation}
\|\bm{x}_{t}-\bm{x}_{\star}\|_{2}\leq\left(1-\frac{1}{2c_{2}n}\right)^{t}\|\bm{x}_{0}-\bm{x}_{\star}\|_{2},\quad t\geq 0, \label{eq:PR-loose}
\end{equation}
with probability at least $1-O(n^{-1})$, provided that $\eta_t \equiv 1/(c_2n\|\bm{x}_{\star}\|_2^2)$ and $\|\bm{x}_0-\bm{x}_{\star}\|_2 \leq \|\bm{x}_{\star}\|_2  / 8$. 
\end{theorem}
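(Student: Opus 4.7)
The plan is to reduce the analysis to the unconstrained gradient step via projection non-expansiveness, and then adapt the Wirtinger-flow argument behind Theorem~\ref{thm:GD-PR-loose}, but with Hessian bounds restricted to the descent cone induced by the $\ell_{1}$ constraint. Since $\mathcal{C}$ is convex and $\bm{x}_{\star}\in\mathcal{C}$, the projection $\mathcal{P}_{\mathcal{C}}$ is nonexpansive and fixes $\bm{x}_{\star}$, so
\[
\|\bm{x}_{t+1}-\bm{x}_{\star}\|_{2}=\|\mathcal{P}_{\mathcal{C}}(\bm{x}_{t}-\eta_{t}\nabla f(\bm{x}_{t}))-\mathcal{P}_{\mathcal{C}}(\bm{x}_{\star})\|_{2}\leq\|\bm{x}_{t}-\eta_{t}\nabla f(\bm{x}_{t})-\bm{x}_{\star}\|_{2}.
\]
It therefore suffices to establish $\ell_{2}$ contraction for the plain gradient step, exploiting that the error $\bm{h}_{t}:=\bm{x}_{t}-\bm{x}_{\star}$ is forced into a restricted cone.

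The cone structure comes from the standard $\ell_{1}$-feasibility argument in compressed sensing. Letting $S$ denote the support of $\bm{x}_{\star}$, decomposing $\|\bm{x}_{t}\|_{1}=\|\bm{x}_{\star}+(\bm{h}_{t})_{S}\|_{1}+\|(\bm{h}_{t})_{S^{c}}\|_{1}$ and using $\|\bm{x}_{t}\|_{1}\leq\|\bm{x}_{\star}\|_{1}$ yields $\|(\bm{h}_{t})_{S^{c}}\|_{1}\leq\|(\bm{h}_{t})_{S}\|_{1}\leq\sqrt{k}\,\|\bm{h}_{t}\|_{2}$, whence $\bm{h}_{t}\in\mathcal{K}:=\{\bm{h}:\|\bm{h}\|_{1}\leq 2\sqrt{k}\|\bm{h}\|_{2}\}$. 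Vectors in $\mathcal{K}$ concentrate under Gaussian design at essentially the same rate as $O(k)$-sparse vectors, which is what unlocks the $m\gtrsim k\log n$ sample-complexity savings.

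Using $\nabla f(\bm{x}_{\star})=\bm{0}$ and the fundamental theorem of calculus, write
\[
\bm{x}_{t}-\eta_{t}\nabla f(\bm{x}_{t})-\bm{x}_{\star}=(\bm{I}_{n}-\eta_{t}\overline{\bm{H}}_{t})\bm{h}_{t},\qquad\overline{\bm{H}}_{t}:=\int_{0}^{1}\nabla^{2}f(\bm{x}_{\star}+\tau\bm{h}_{t})\,d\tau,
\]
so that
\[
\|\bm{x}_{t+1}-\bm{x}_{\star}\|_{2}^{2}\leq\|\bm{h}_{t}\|_{2}^{2}-2\eta_{t}\,\bm{h}_{t}^{\top}\overline{\bm{H}}_{t}\bm{h}_{t}+\eta_{t}^{2}\|\overline{\bm{H}}_{t}\bm{h}_{t}\|_{2}^{2}.
\]
The central technical step is a cone-restricted analogue of Lemma~\ref{lem:strong-cvx-PR}: for all $\bm{x}$ with $\|\bm{x}-\bm{x}_{\star}\|_{2}\leq\|\bm{x}_{\star}\|_{2}/8$ and all $\bm{h}\in\mathcal{K}$,
\[
\bm{h}^{\top}\nabla^{2}f(\bm{x})\bm{h}\gtrsim\|\bm{x}_{\star}\|_{2}^{2}\,\|\bm{h}\|_{2}^{2}\qquad\text{and}\qquad\|\nabla^{2}f(\bm{x})\bm{h}\|_{2}\lesssim n\|\bm{x}_{\star}\|_{2}^{2}\,\|\bm{h}\|_{2}.
\]
Once these two bounds are available, choosing $\eta_{t}=1/(c_{2}n\|\bm{x}_{\star}\|_{2}^{2})$ gives $\|\bm{x}_{t+1}-\bm{x}_{\star}\|_{2}^{2}\leq(1-1/(c_{2}n))\|\bm{h}_{t}\|_{2}^{2}$, and induction closes: the $\ell_{2}$ contraction keeps $\bm{h}_{t}$ inside the local ball, while $\bm{h}_{t}\in\mathcal{K}$ is ensured by feasibility of $\bm{x}_{t}\in\mathcal{C}$ at every step.

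The principal obstacle is the uniform cone-restricted Hessian estimate. The strong-convexity piece follows from a Mendelson-type small-ball lower bound on $\frac{1}{m}\sum_{i}(\bm{a}_{i}^{\top}\bm{x}_{\star})^{2}(\bm{a}_{i}^{\top}\bm{h})^{2}$ over $\mathcal{K}\cap S^{n-1}$, whose Gaussian complexity scales like $\sqrt{k\log(n/k)}$ rather than $\sqrt{n}$. The smoothness piece is harder because one must control $\|\nabla^{2}f(\bm{x})\bm{h}\|_{2}=\sup_{\|\bm{v}\|_{2}=1}\bm{v}^{\top}\nabla^{2}f(\bm{x})\bm{h}$ where $\bm{v}$ ranges over the \emph{whole} sphere while $\bm{h}\in\mathcal{K}$, uniformly over $\bm{x}$ in a local ball, against sub-exponential weights $3(\bm{a}_{i}^{\top}\bm{x})^{2}-(\bm{a}_{i}^{\top}\bm{x}_{\star})^{2}$; a naive $\varepsilon$-net on the full sphere would reintroduce $n$ into the sample complexity, so a chaining or truncation argument that isolates the $\bm{h}$-direction is needed to retain the $k\log n$ bound.
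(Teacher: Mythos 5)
Your overall architecture — non-expansiveness of $\mathcal{P}_{\mathcal{C}}$, the descent cone $\mathcal{K}$ extracted from $\ell_1$-feasibility, the fundamental-theorem-of-calculus reduction, and cone-restricted curvature bounds — is the right strategy and in line with \cite{soltanolkotabi2017structured}, to which the paper defers without giving a proof. The gap is in the arithmetic that turns your two stated Hessian bounds into the claimed rate. Writing $\alpha:=c_a\|\bm{x}_\star\|_2^2$ for the cone-restricted strong-convexity constant and $\beta:=c_b n\|\bm{x}_\star\|_2^2$ for the stated Hessian-vector bound, your squared-error expansion reads
\[
\|\bm{h}_{t+1}\|_2^2\leq\bigl(1-2\eta_t\alpha+\eta_t^2\beta^2\bigr)\|\bm{h}_t\|_2^2,
\]
and with $\eta_t=1/(c_2 n\|\bm{x}_\star\|_2^2)$ for a \emph{fixed} constant $c_2$ the bracket equals $1-2c_a/(c_2 n)+c_b^2/c_2^2$, which for large $n$ is strictly greater than $1$ — no contraction at all. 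To force the bracket down to $1-1/(c_2 n)$ you would need $c_2\gtrsim n$, i.e.\ a step size of order $1/n^2$ and a rate $1-\Theta(1/n^2)$, a factor of $n$ slower than the theorem.

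The root cause is that a cone-restricted Hessian-vector bound $\|\nabla^2 f(\bm{x})\bm{h}\|_2\leq\beta\|\bm{h}\|_2$ is strictly weaker than the two-sided spectral bound $\bm{0}\preceq\nabla^2 f(\bm{x})\preceq\beta\bm{I}_n$ that drives the unconstrained Theorem~\ref{thm:GD-PR-loose}: the latter gives $\|\overline{\bm{H}}_t\bm{h}_t\|_2^2=\bm{h}_t^\top\overline{\bm{H}}_t^2\bm{h}_t\leq\beta\,\bm{h}_t^\top\overline{\bm{H}}_t\bm{h}_t$, which lets the $\eta^2$-term be absorbed into the $-2\eta$-term and yields $1-\alpha/\beta$ rather than $1-\alpha^2/\beta^2$. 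With $m\asymp k\log n\ll n$ the sample Hessian is rank-deficient and its weights $3(\bm{a}_i^\top\bm{x})^2-(\bm{a}_i^\top\bm{x}_\star)^2$ can be negative, so no global PSD lower bound holds and the $\overline{\bm{H}}_t^2\preceq\beta\overline{\bm{H}}_t$ shortcut is unavailable. For the expansion you chose to produce a $1-\Theta(1/n)$ rate you actually need a cone-restricted smoothness bound of order $\sqrt{n}\|\bm{x}_\star\|_2^2$ (so that $\eta\approx\alpha/\beta^2\asymp 1/(n\|\bm{x}_\star\|_2^2)$ and $1-\alpha^2/\beta^2\asymp 1-1/n$); since $\mathcal{K}$-directions are nearly $k$-sparse and need not align with the top eigenvectors of the empirical Hessian, a $\sqrt{n}$-type bound is plausible, but your stated bound is an order of magnitude too loose, and you do not establish either version. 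A smaller point: the hypothesis only places $\bm{x}_0$ in an $\ell_2$-ball, not in $\mathcal{C}$, so $\bm{h}_0$ need not lie in $\mathcal{K}$ and the very first iteration requires either the additional assumption $\bm{x}_0\in\mathcal{C}$ or a separate one-step argument.
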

 
Another possible projection constraint set for sparse phase retrieval is the (nonconvex) set of $k$-sparse vectors \cite{wang66sparse}, 
\begin{equation}
	\mathcal{C}= \big\{\bx\in\mathbb{R}^n :   \|\bx\|_0=k \big\}.
\end{equation}
This leads to a hard-thresholding operation, namely, $ \mathcal{P}_{\mathcal{C}}(\bx) $ becomes the best $k$-term approximation of $\bm{x}$ (obtained by keeping the $k$ largest entries (in magnitude) of $\bm{x}$ and setting the rest to 0). The readers are referred to \cite{wang66sparse} for details. See also \cite{cai2016optimal} for a thresholded GD algorithm --- which enforces adaptive thresholding rather than projection to promote sparsity --- for solving the sparse phase retrieval problem.

We caution, however, that Theorem \ref{thm:PGD-SPR-loose} does not imply that the sample complexity for projected GD (or thresholded GD) is $O(k \log n)$. So far there is no tractable  procedure that can provably guarantee a sufficiently good initial point $\bm{x}_0$ when $m\lesssim k\log n$ (see a discussion of the spectral initialization method in Section \ref{sec:spectral_init_spr}). Rather, all computationally feasible algorithms (both convex and nonconvex) analyzed so far require sample complexity at least on the order of $k^2 \log n$ under i.i.d.~Gaussian designs, unless $k$ is sufficiently large or other structural information is available \cite{li2013sparse,oymak2012simultaneously,chen2015exact,cai2016optimal,jagatap2017fast}.  All in all, the computational bottleneck for sparse phase retrieval lies in the initialization stage.


%

\subsection{Truncated gradient descent}
\label{sec:truncated-GD}


Truncated gradient descent  proceeds by trimming away a subset of
the measurements when forming the descent direction, typically performed adaptively. We can express it as
\begin{equation}
\bm{x}_{t+1}=\bm{x}_{t}-\eta_{t}\mathcal{T}\left(\nabla f\big(\bm{x}_{t}\big)\right),\label{eq:TGD-general}
\end{equation}
where $\mathcal{T}$ is an operator that effectively drops samples that bear undesirable influences over the search directions.

There are two common purposes for enforcing a truncation step: (1) to remove samples whose associated design vectors are too coherent with the current iterate \cite{chen2015solving,kolte2016phase,wang2017solving}, in order to  accelerate convergence and improve sample complexity; (2) to remove samples that may be adversarial outliers, in the hope of improving  robustness of the algorithm \cite{zhang2016provable,yi2016fast,li2017nonconvex}.

%
\subsubsection{Truncation for  computational and statistical benefits}

We use phase retrieval to illustrate this benefit.  All results discussed so far require a sample size that exceeds $m\gtrsim n \log n$. When it comes to the  sample-limited regime where $m\asymp n$, there is no guarantee for strong convexity (or regularity condition) to hold.  This presents significant challenges for nonconvex methods, in a regime of critical importance for practitioners.  

To better understand the challenge, recall  the GD rule \eqref{eq:PR-GD}. When $m$ is exceedingly large, the negative gradient concentrates around the population-level gradient, which forms a reliable search direction. However, when $m\asymp n$, the gradient --- which depends on 4th moments of $\{\bm{a}_i\}$ and is heavy-tailed --- may deviate significantly from the mean,  thus resulting in unstable search directions.

To stabilize the search directions, one strategy is to trim away those gradient components $\{\nabla f_i(\bx_t): = \big((\bm{a}_{i}^{\top}\bm{x}_t)^{2}-y_{i}\big) \bm{a}_{i}\bm{a}_{i}^{\top}\bm{x}_{t}  \}$ whose size deviate too much from the typical size. Specifically, the  truncation rule proposed in \cite{chen2015solving} is:\footnote{Note that the original algorithm proposed in \cite{chen2015solving} is designed w.r.t.~the Poisson loss, although all theory goes through for the current loss.} 
\begin{eqnarray}
	\boldsymbol{x}_{t+1} = \boldsymbol{x}_{t}- \eta_{t} \underset{:= \nabla f_{\mathrm{tr}}(\bx_t) }{\underbrace{ \frac{1}{m} \sum_{i=1}^m \nabla f_i(\bx_t) 
	{\ind}_{\mathcal{E}_{1}^{i}\left(\boldsymbol{x}_t\right)\cap\mathcal{E}_{2}^{i}\left(\boldsymbol{x}_t\right)} }} , \quad t\geq 0	\label{eq:TWF-update}
\end{eqnarray}
 for some trimming criteria  defined as
\begin{align*}
	&\mathcal{E}_{1}^{i} (\boldsymbol{x})  :=  \bigg\{ \alpha_{\text{lb}}\leq\frac{\left|\boldsymbol{a}_i^{\top}\boldsymbol{x} \right|}{\Vert \boldsymbol{x} \Vert_2 }\leq\alpha_{\text{ub}}\bigg\} ,
	\\
	&\mathcal{E}_{2}^{i} (\boldsymbol{x})  :=  \bigg\{ |y_{i}-|\boldsymbol{a}_i^{\top}\boldsymbol{x} |^{2} |\leq\frac{\alpha_{h}}{m} \Big(\sum_{j=1}^{m}\big|y_{i}-(\bm{a}_{j}^{\top}\bm{x})^{2}\big| \Big) \frac{\left|\boldsymbol{a}_i^{\top}\boldsymbol{x}\right|} {\Vert \boldsymbol{x}\Vert_2 } \bigg\} ,
\end{align*}
%
where $\alpha_{\text{lb}}$, $\alpha_{\text{ub}}$, $\alpha_h$ are predetermined thresholds.  This trimming rule --- called {\em Truncated Wirtinger flow} --- effectively removes the ``heavy tails'', thus leading to much better concentration and hence enhanced performance. 
\begin{theorem}[$\mathsf{Truncated~GD~for~phase~retrieval}$ \cite{chen2015solving}]
\label{thm:TWF-PR}
Consider the problem \eqref{eq:min-PR}.  With probability $1-O(n^{-10})$, the iterates \eqref{eq:TWF-update}  obey
\begin{equation}
	\|\bm{x}_{t}-\bm{x}_{\star}\|_{2}\leq \rho ^{t}\|\bm{x}_{0}-\bm{x}_{\star}\|_{2},\quad t\geq 0, \label{eq:TWF-PR}
\end{equation}
	for some constant $0<\rho<1$, provided that $m\geq c_1 n$, $\|\bm{x}_0 - \bm{x}_{\star}\|_2\leq c_2 \|\bm{x}_{\star}\|_2$ and $\eta_t \equiv c_3/\|\bm{x}_{\star}\|_2^2$   for some constants $c_1,c_2,c_3>0$. 
\end{theorem}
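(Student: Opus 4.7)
The plan is to invoke Lemma~\ref{lem:convergence-RC} by establishing the regularity condition $\mathsf{RC}(\mu,\lambda,\zeta)$ for the truncated search direction $\bg(\bx) := \nabla f_{\mathrm{tr}}(\bx)$ on the ball $\mathcal{B}_{\zeta}(\bx_\star)$ with $\zeta = c_2\|\bx_\star\|_2$. That is, I would show that, with probability $1-O(n^{-10})$,
\begin{equation*}
2\,\langle \nabla f_{\mathrm{tr}}(\bx),\,\bx-\bx_\star\rangle \;\geq\; \mu\,\|\nabla f_{\mathrm{tr}}(\bx)\|_2^2 \,+\, \lambda\,\|\bx-\bx_\star\|_2^2
\end{equation*}
holds simultaneously for every $\bx\in\mathcal{B}_{\zeta}(\bx_\star)$, with $\mu\lambda$ bounded below by a positive constant. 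Given this, Lemma~\ref{lem:convergence-RC} with step size $\eta_t\equiv\mu$ immediately yields the geometric contraction at rate $\rho = \sqrt{1-\mu\lambda}<1$ stated in the theorem.

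The lower bound on $\langle \nabla f_{\mathrm{tr}}(\bx),\bx-\bx_\star\rangle$ starts from a population-level computation. Using Gaussian moment identities for $\ba_i\sim\mathcal{N}(\bm{0},\bm{I}_n)$, one obtains $\mathbb{E}[\nabla f(\bx)]$ in closed form, from which $\mathbb{E}\langle \nabla f(\bx),\bx-\bx_\star\rangle$ is a nonnegative quadratic form in $\bx-\bx_\star$ bounded below by a constant multiple of $\|\bx-\bx_\star\|_2^2$, provided that $\bx$ is close to $+\bx_\star$ rather than $-\bx_\star$ (the reason for the assumption $\|\bx_0-\bx_\star\|_2\leq c_2\|\bx_\star\|_2$). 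I would then argue that truncation induces only a small, controllable bias: the event $\mathcal{E}_1^i\cap\mathcal{E}_2^i$ occurs with probability close to one under suitable $\alpha_{\mathrm{lb}},\alpha_{\mathrm{ub}},\alpha_h$, and the discarded contribution is bounded using tail estimates for Gaussian quadratic forms. Uniform concentration over $\bx\in\mathcal{B}_{\zeta}(\bx_\star)$ is then transferred via an $\varepsilon$-net together with subexponential Bernstein bounds, delivering the desired lower bound at sample size $m\gtrsim n$.

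The upper bound $\|\nabla f_{\mathrm{tr}}(\bx)\|_2^2\lesssim \|\bx-\bx_\star\|_2^2$ is the crucial gain from truncation. On $\mathcal{E}_1^i$ one has $|\ba_i^\top\bx|\lesssim \|\bx\|_2$, while on $\mathcal{E}_2^i$ the residual $|(\ba_i^\top\bx)^2-y_i|$ is controlled by a sample-averaged quantity multiplied by $|\ba_i^\top\bx|/\|\bx\|_2$. Substituting these into each summand of $\nabla f_{\mathrm{tr}}(\bx)$, every surviving term becomes a well-behaved subexponential random vector, and matrix Bernstein concentration over $m\gtrsim n$ samples bounds the resulting sum by $O(\|\bx-\bx_\star\|_2)$. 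Without truncation, a single highly coherent sample with $|\ba_i^\top\bx|$ of order $\sqrt{n}\|\bx\|_2$ would create a summand of order $n^2\|\bx\|_2^4$, completely destroying such an upper bound --- this is precisely what the $\mathcal{E}_1^i$ filter eliminates, and is the mechanism by which the truncation improves sample complexity from $n\log n$ to $n$.

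The main obstacle is executing the uniform concentration at the optimal sample complexity $m\asymp n$. The truncation indicators $\ind_{\mathcal{E}_1^i(\bx)\cap\mathcal{E}_2^i(\bx)}$ depend on $\bx$ through discontinuous step functions, so standard Lipschitz-based net arguments do not apply directly; the standard remedy is to sandwich each indicator between smooth surrogates with slightly perturbed thresholds, absorbing small variations in $\bx$ into the extra slack. An additional subtlety is that the threshold defining $\mathcal{E}_2^i$ involves the sample average $\tfrac{1}{m}\sum_j|y_j-(\ba_j^\top\bx)^2|$, which couples all $m$ observations; handling this requires first establishing uniform concentration of this average around its expectation over $\bx\in\mathcal{B}_{\zeta}(\bx_\star)$, again via covering. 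Once these technical hurdles are resolved, the regularity condition holds and the theorem follows as a direct consequence of Lemma~\ref{lem:convergence-RC}.
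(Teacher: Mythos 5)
Your proposal is correct and takes essentially the same route as the paper: the paper's remark immediately following the theorem states that the proof consists precisely in establishing the regularity condition for $\bg(\bx)=\nabla f_{\mathrm{tr}}(\bx)$ with $\mu,\lambda$ of constant order (after normalizing by $\|\bx_\star\|_2$) and then invoking Lemma~\ref{lem:convergence-RC}, deferring the detailed verification to \cite{chen2015solving}. Your further sketch of the verification --- population-level positivity of $\mathbb{E}\langle\nabla f(\bx),\bx-\bx_\star\rangle$, the $\mathcal{E}_1^i$ clipping as the mechanism that yields $\|\nabla f_{\mathrm{tr}}(\bx)\|_2\lesssim\|\bx_\star\|_2^2\,\|\bx-\bx_\star\|_2$ at $m\asymp n$, smooth surrogate indicators for the $\varepsilon$-net argument, and separate uniform control of the $\mathcal{E}_2^i$ threshold --- matches how the cited reference actually carries this out, so there is no meaningful divergence to report.
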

\begin{remark}
	In this case, the truncated gradient is clearly not smooth, and hence we need to resort to the regularity condition (see Definition \ref{def:reg-condition}) with $\bg(\bx)= \nabla f_{\mathrm{tr}}(\bx)$. Specifically, the proof of Theorem~\ref{thm:TWF-PR} consists of showing  
\[
2\langle\nabla f_{\mathrm{tr}}(\bx),\bm{x}-\bm{x}_{\star}\rangle\geq\mu\|\bm{x}-\bm{x}_{\star}\|_{2}^{2}+\lambda\|\nabla f_{\mathrm{tr}}(\bx)\|_{2}^{2}
\]
for all $\bx$ within a local ball around $\bx_{\star}$, where $\lambda, \mu\asymp 1$. See \cite{chen2015solving} for details. 
\end{remark}

In comparison to vanilla GD, the truncated version provably achieves two benefits:
\begin{itemize}
	\item {\em Optimal sample complexity:} given that one needs at least $n$ samples to recover $n$ unknowns,  the sample complexity $m \asymp n$ is orderwise optimal;
	\item {\em Optimal computational complexity:} truncated WF yields $\varepsilon$ accuracy in $O\big( \log \frac{1}{\varepsilon} \big)$ iterations. Since each iteration takes time proportional to that taken to read the data, the computational complexity is nearly optimal. 
\end{itemize}
At the same time, this approach is particularly stable in the presence of noise, which enjoys a statistical guarantee that is minimax optimal. The readers are referred to \cite{chen2015solving} for precise statements.

\subsubsection{Truncation for removing sparse outliers}
In many problems, the collected measurements may suffer from corruptions of sparse outliers, and the gradient descent iterates need to be carefully monitored to remove the undesired effects of outliers (which may take arbitrary values). Take robust PCA~\eqref{eq:RPCA-empirical-risk} as an example, in which a fraction $\alpha$ of revealed entries are corrupted by outliers. At the $t$th iterate, one can first try to identify the support of the sparse matrix $\bm{S}_{\star}$ by {\em hard thresholding} the residual, namely,
\begin{equation}
	\label{eq:hard_threshodling}
\bm{S}_{t+1}=\mathcal{H}_{c\alpha np} \big( \mathcal{P}_{\Omega}(\bm{\Gamma}_{\star}-\bX_t\bX_t^{\top}) \big). 
\end{equation}
Here, $c>1$ is some predetermined constant (e.g.~$c=3$), and the operator $\mathcal{H}_l(\cdot)$ is defined as
\[
	[\mathcal{H}_{l}(\bA)]_{j,k} := \begin{cases}  A_{j,k},  &\text{if }|A_{j,k}|\geq |A_{j,\cdot}^{(l)}| \text{ and }
							|A_{j,k}|\geq |A_{\cdot,k}^{(l)}|,  \\
							0, & \text{otherwise,}
					  \end{cases}
\]
where 
$A_{j,\cdot}^{(l)}$ (resp.~$A_{\cdot,k}^{(l)}$) denotes the $l$th largest entry (in magnitude) in the $j$th row (resp.~column) of $\bm{A}$.
The  idea is simple: an entry is likely to be an outlier if it is simultaneously among the largest entries in the corresponding row and column. The thresholded residual $\bm{S}_{t+1}$ then becomes our estimate of the sparse outlier matrix $\bm{S}_{\star}$ in the $(t+1)$-th iteration.  With this in place, we update the estimate for the low-rank factor by applying projected GD
\begin{equation}
\bm{X}_{t+1}=\mathcal{P}_{\mathcal{C}}\big(\bm{X}_{t}-\eta_t \nabla_{\bX} f\left(\bm{X}_{t}, \bm{S}_{t+1}\right)\big), 
\end{equation}
where $\mathcal{C}$ is the same as \eqref{eq:PGD_set} to enforce the incoherence condition. This method has the  following theoretical guarantee: 
\begin{theorem}[$\mathsf{Nonconvex~robust~PCA}$ \cite{yi2016fast}]
  \label{thm:RPCA-nonconvex}
Assume that the condition number $\kappa$ of $\bm{M}_{\star}= \bm{X}_{\star}\bm{X}_{\star}^{\top}$ is a fixed constant. 
Suppose that the sample size and the sparsity of the outlier satisfy $n^{2}p\geq c_0\mu^{2}r^{2}n\log n$ and $\alpha\leq c_1/( \mu r)$ for
	some constants $c_0,c_1>0$.   
With probability at least $1-O\left(n^{-1}\right)$, the iterates satisfy 
\begin{align}
		\mathsf{dist}^2( \bm{X}_{t}, \bm{X}_{\star} ) \leq \left(1-\frac{c_2}{\mu r  }\right)^{t} \mathsf{dist}^2( \bm{X}_{0}, \bm{X}_{\star} ), 
\end{align} 
for all $t\geq 0$, provided that  $\mathsf{dist}^2( \bm{X}_{0}, \bm{X}_{\star} )\leq c_3 \sigma_{r}(\bm{M}_{\star})$. Here, $0<c_2,c_3<1$ are some constants, and $\eta_{t}\equiv {c_4} / (\mu r \sigma_{1}(\bm{M}_{\star}))$ for some constant $c_4>0$. 
\end{theorem}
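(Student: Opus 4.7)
The plan is to proceed by induction on the iteration index $t$, jointly maintaining (i) the error contraction bound $\mathsf{dist}^2(\bm{X}_t,\bm{X}_\star) \leq (1-c_2/(\mu r))^t \mathsf{dist}^2(\bm{X}_0,\bm{X}_\star)$ and (ii) the incoherence property $\bm{X}_t \in \mathcal{C}$ enforced by the projection step. Within a single iteration, I would analyze the two substeps separately: the hard-thresholding update producing $\bm{S}_{t+1}$, followed by the projected gradient update producing $\bm{X}_{t+1}$.

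For the hard-thresholding step, the essential observation is that an incoherent $\bm{X}_t$ keeps the low-rank estimation error small in the entrywise $\ell_\infty$ norm, namely
\begin{equation*}
\|\bm{X}_t\bm{X}_t^\top - \bm{M}_\star\|_\infty \lesssim \big(\|\bm{X}_t\|_{2,\infty} + \|\bm{X}_\star\|_{2,\infty}\big)\cdot \|\bm{X}_t\bm{H}_{\bm{X}_t} - \bm{X}_\star\|_{2,\infty}.
\end{equation*}
Because each row and column of $\bm{S}_\star$ carries at most $\alpha n$ outliers, each of which survives in $\Omega$ with probability $p$, the threshold $c\alpha np$ with $c>1$ retains every observed outlier that dominates the clean residual while never keeping more than $c\alpha np$ entries per row or column. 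A counting argument then shows that $\bm{S}_{t+1}-\bm{S}_\star$ is supported on $O(\alpha np)$ entries per row/column, with entrywise magnitudes controlled by $\|\bm{X}_t\bm{X}_t^\top-\bm{M}_\star\|_\infty$. Invoking the incoherence of both $\bm{X}_t$ and $\bm{X}_\star$ yields the crucial bound $\|\bm{S}_{t+1}-\bm{S}_\star\|_{\mathrm{F}}^2 \lesssim \alpha\mu r \sigma_1(\bm{M}_\star)\cdot \mathsf{dist}^2(\bm{X}_t,\bm{X}_\star)$.

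For the projected gradient step, I would decompose the gradient as
\begin{equation*}
\nabla_{\bm{X}} f(\bm{X}_t,\bm{S}_{t+1}) = \tfrac{1}{p}\mathcal{P}_\Omega(\bm{X}_t\bm{X}_t^\top - \bm{M}_\star)\bm{X}_t + \tfrac{1}{p}\mathcal{P}_\Omega(\bm{S}_{t+1}-\bm{S}_\star)\bm{X}_t.
\end{equation*}
The first term is the familiar matrix-completion gradient: restricted strong convexity and smoothness in the spirit of Lemma~\ref{lemma:hessian-MC}, combined with the Procrustes-rotation contraction machinery of Lemma~\ref{lem:GD-convergence-dist} and Theorem~\ref{thm:main-MC-PGD}, delivers a $(1-\Omega(1/(\mu r)))$ shrinkage factor on $\mathsf{dist}^2$ under step size $\eta \asymp 1/(\mu r\sigma_1(\bm{M}_\star))$. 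The second term is a perturbation whose Frobenius norm is bounded via the $O(\alpha np)$ per-row/column sparsity of $\bm{S}_{t+1}-\bm{S}_\star$ together with the $\ell_{2,\infty}$ control on $\bm{X}_t$; its contribution to the squared error is at most $O(\alpha\mu r)\cdot \mathsf{dist}^2(\bm{X}_t,\bm{X}_\star)$. The assumption $\alpha \leq c_1/(\mu r)$ makes this perturbation sub-dominant relative to the contraction budget. Finally, since $\bm{X}_\star \in \mathcal{C}$ for a suitable spectral initialization $\bm{X}_0$, the projection $\mathcal{P}_\mathcal{C}$ is non-expansive with respect to $\mathsf{dist}(\cdot,\bm{X}_\star)$ and preserves the contracted bound while restoring the inductive incoherence hypothesis.

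The main obstacle, as I see it, is the tight coupling between the sparse and low-rank estimation errors. The hard-thresholding analysis requires entrywise $\ell_\infty$ closeness of $\bm{X}_t\bm{X}_t^\top$ to $\bm{M}_\star$, a stronger notion than the Frobenius closeness propagated inductively; the projection onto $\mathcal{C}$ is precisely what converts Frobenius contraction into the required $\ell_{2,\infty}$ control, at a cost of a factor $\mu r$ in the convergence rate. The sparsity threshold $\alpha \lesssim 1/(\mu r)$ is tight for this argument: any larger outlier density would inflate the perturbation term in the gradient decomposition beyond what the $1/(\mu r)$-strong convexity margin of matrix completion can absorb, whereas the sample-complexity condition $n^2 p \gtrsim \mu^2 r^2 n \log n$ enters through the uniform concentration inequalities needed to validate the restricted strong convexity along the entire trajectory.
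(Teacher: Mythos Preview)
The paper is a survey and does not supply a proof of this theorem; it simply quotes the result from \cite{yi2016fast}. Your outline matches the architecture of that reference: an induction maintaining Frobenius contraction together with the incoherence enforced by $\mathcal{P}_{\mathcal{C}}$, an $\ell_\infty$/support analysis of the hard-thresholding step, and a decomposition of the gradient into the clean matrix-completion term plus a sparse perturbation whose effect is absorbed once $\alpha\lesssim 1/(\mu r)$.

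One place where your sketch is looser than the argument actually needed: the bound $\|\bm{S}_{t+1}-\bm{S}_\star\|_{\mathrm{F}}^2\lesssim\alpha\mu r\,\sigma_1(\bm{M}_\star)\,\mathsf{dist}^2(\bm{X}_t,\bm{X}_\star)$ does not fall out of ``incoherence of both $\bm{X}_t$ and $\bm{X}_\star$'' alone. Your displayed $\ell_\infty$ inequality requires control of $\|\bm{X}_t\bm{H}_{\bm{X}_t}-\bm{X}_\star\|_{2,\infty}$, and the projection onto $\mathcal{C}$ only gives $\|\bm{X}_t\|_{2,\infty}\lesssim\sqrt{\mu r/n}\,\|\bm{X}_0\|$; the resulting $\ell_\infty$ bound on $\bm{X}_t\bm{X}_t^\top-\bm{M}_\star$ is an \emph{absolute} constant, not proportional to $\mathsf{dist}$, while the crude alternative $\|\bm{X}_t\bm{H}_{\bm{X}_t}-\bm{X}_\star\|_{2,\infty}\le\mathsf{dist}$ leaves a stray dimensional factor. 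The analysis in \cite{yi2016fast} does not bound the Frobenius norm of the sparse perturbation in isolation; rather, it bounds its contribution to the descent inequality by exploiting simultaneously the $O(\alpha np)$ row/column sparsity of $\bm{S}_{t+1}-\bm{S}_\star$, the identity $(\bm{S}_\star-\bm{S}_{t+1})_{ij}=(\bm{X}_t\bm{X}_t^\top-\bm{M}_\star)_{ij}$ on $\mathrm{supp}(\bm{S}_{t+1})$, and the incoherence of the low-rank factor with which it is multiplied, yielding $\|\mathcal{P}_{\mathrm{supp}}(\bm{X}_t\bm{X}_t^\top-\bm{M}_\star)\|_{\mathrm{F}}^2\lesssim\alpha\mu r\,p\,\sigma_1\,\mathsf{dist}^2$. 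That is the mechanism by which $\alpha\lesssim 1/(\mu r)$ emerges as the correct threshold; your sketch names the right hypothesis but not quite the right inequality.
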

 
 \begin{remark}
 In the full data case, the convergence rate can be improved to $1-c_2$ for some constant $0<c_2<1$. 
 \end{remark}

 This theorem essentially says that: as long as the fraction of entries corrupted by  outliers does not exceed $O(1/\mu r)$, then the nonconvex algorithm described above provably recovers the true low-rank matrix in about $O(\mu r)$ iterations (up to some logarithmic factor). When $r=O(1)$, it means that the nonconvex algorithm succeeds even when a constant fraction of entries are corrupted.

Another truncation strategy to remove outliers is based on the sample median, as the median is known to be robust against arbitrary outliers \cite{zhang2016provable,li2017nonconvex}. We illustrate this median-truncation approach through an example of robust phase retrieval \cite{zhang2016provable}, where we assume a subset of samples in \eqref{eq:PR-samples} is corrupted arbitrarily, with their index set denoted by $\mathcal{S}$ with $|\mathcal{S}|=\alpha  m$. Mathematically, the measurement  model in the presence of outliers is given by
\begin{equation} \label{eq:robust_pr_model}
	y_i = \begin{cases}
(\ba_i^\top\bx_{\star})^2, \quad  & i \notin \mathcal{S}, \\
\mathsf{arbitrary} , & i \in \mathcal{S}. 
	\end{cases}
\end{equation}
The goal is to still recover $\bm{x}_{\star}$ in the presence of many outliers (e.g.~a constant fraction of measurements are outliers).
 
It is obvious that the original GD iterates \eqref{eq:PR-GD} are not robust, since the residual $$r_{t,i} := (\bm{a}_{i}^{\top}\bm{x}_t)^{2}-y_{i}$$ can be perturbed arbitrarily if $i\in\mathcal{S}$. Hence, we instead include only a subset of the samples when forming the search direction, yielding a truncated GD update rule
\begin{equation}
	\bm{x}_{t+1}=\bm{x}_{t}- \frac{\eta_{t}}{m}\sum_{i\in\mathcal{T}_t} \big((\bm{a}_{i}^{\top}\bm{x}_t)^{2}-y_{i}\big)\bm{a}_{i}\bm{a}_{i}^{\top}\bm{x}_{t} .
	\label{eq:PR-GD-median}
\end{equation}
Here, $\mathcal{T}_t$ only includes samples whose residual size $|r_{t,i}|$ does not deviate much from the {\em median} of $\{|r_{t,j}|\}_{1\leq j\leq m}$:
\begin{align}
	\mathcal{T}_{t} &:= \{i:   |r_{t,i}|   \lesssim \mathsf{median}(\{ |r_{t,j}| \}_{1\leq j\leq m}) \}, 
\end{align}
where $\mathsf{median}(\cdot)$ denotes the sample median.  As the iterates get close to the ground truth, we expect that the residuals of the clean samples will decrease and cluster, while the residuals remain large for outliers. In this situation,  the median provides a robust means to tell them apart. One has the following theory, which reveals the success of the median-truncated GD even when a constant fraction of measurements are arbitrarily corrupted.

\begin{theorem}[$\mathsf{Median\text{-}truncated~GD~for~robust~phase~retrieval}$ \cite{zhang2016provable}]
\label{thm:GD-RobustPR}
Consider the problem \eqref{eq:robust_pr_model} with a fraction $\alpha$ of arbitrary outliers.
There exist some constants  $c_0, c_1,c_2>0$ and $0<\rho<1$ such that if 
	$m\geq c_{0}n\log n$, $\alpha\leq c_1$, and $\|\bm{x}_0 - \bm{x}_{\star}\|_2\leq c_2 \|\bm{x}_{\star}\|_2$, then with probability at least $1-O(n^{-1})$, the median-truncated GD iterates satisfy 
%
\begin{align}
	& \|\bm{x}_{t}-\bm{x}_{\star}\|_{2} \leq \rho^{t}\|\bm{x}_{0}-\bm{x}_{\star}\|_{2},\qquad t\geq 0.  \label{eq:medianPR-tight}  
\end{align}
%
\end{theorem}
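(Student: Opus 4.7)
The plan is to reduce the convergence claim to the regularity condition framework of Lemma~\ref{lem:convergence-RC}. Specifically, with $\bm{g}(\bm{x}) := \frac{1}{m}\sum_{i\in\mathcal{T}(\bm{x})} \big((\bm{a}_i^{\top}\bm{x})^2 - y_i\big)\bm{a}_i\bm{a}_i^{\top}\bm{x}$, I would show that with probability at least $1-O(n^{-1})$, $\bm{g}(\cdot)$ satisfies $\mathsf{RC}(\mu,\lambda,\zeta)$ for some universal constants $\mu,\lambda\asymp 1/\|\bm{x}_\star\|_2^2$ and $\zeta \asymp \|\bm{x}_\star\|_2$, uniformly over $\bm{x}$ in the local ball. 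Given such a regularity condition together with the assumption $\|\bm{x}_0 - \bm{x}_\star\|_2 \le c_2 \|\bm{x}_\star\|_2$, Lemma~\ref{lem:convergence-RC} delivers the advertised geometric contraction with $\rho = 1 - \mu\lambda < 1$.

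First I would analyze the sample median. For any fixed $\bm{x}$ in the local ball, the residuals $|r_i(\bm{x})| = |(\bm{a}_i^{\top}\bm{x})^2 - (\bm{a}_i^{\top}\bm{x}_\star)^2| = |\bm{a}_i^{\top}(\bm{x}-\bm{x}_\star)|\cdot|\bm{a}_i^{\top}(\bm{x}+\bm{x}_\star)|$ on clean indices have a continuous distribution whose population median is $\Theta(\|\bm{x}-\bm{x}_\star\|_2\|\bm{x}+\bm{x}_\star\|_2)$ under the Gaussian design. Standard concentration of sample quantiles, combined with the fact that fewer than half the samples are corrupted ($\alpha<1/2$), shows $\mathsf{median}(\{|r_j(\bm{x})|\})$ concentrates around this population value. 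Next I would characterize $\mathcal{T}(\bm{x})$: the threshold $\tau(\bm{x}) \asymp \mathsf{median}(\{|r_j(\bm{x})|\})$ is large enough to retain most clean samples (only an $O(1/\log n)$ fraction of clean points are trimmed due to their $\sqrt{\log n}$ tail behavior) yet small enough that any outlier surviving truncation has $|r_i(\bm{x})|\lesssim \tau(\bm{x})$, which is therefore controlled in terms of $\|\bm{x}-\bm{x}_\star\|_2$.

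Second I would decompose $\bm{g}(\bm{x}) = \bm{g}_{\mathrm{clean}}(\bm{x}) + \bm{g}_{\mathrm{out}}(\bm{x})$, where the two summands restrict the truncated sum to clean and outlier indices respectively. For $\bm{g}_{\mathrm{clean}}(\bm{x})$, the truncation essentially coincides with the Chen--Cand\`es truncation rule of \eqref{eq:TWF-update}, so by adapting the regularity analysis underlying Theorem~\ref{thm:TWF-PR} one obtains $2\langle \bm{g}_{\mathrm{clean}}(\bm{x}),\bm{x}-\bm{x}_\star\rangle \ge \mu_0\|\bm{x}-\bm{x}_\star\|_2^2 + \lambda_0\|\bm{g}_{\mathrm{clean}}(\bm{x})\|_2^2$ for constants $\mu_0,\lambda_0>0$, even at sample size $m\asymp n$ (though a $\log n$ factor suffices here). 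For $\bm{g}_{\mathrm{out}}(\bm{x})$, the cardinality of $\mathcal{T}(\bm{x})\cap\mathcal{S}$ is at most $\alpha m$, and each surviving outlier contributes a summand of norm $O(\tau(\bm{x})\|\bm{a}_i\|_2) \lesssim \|\bm{x}-\bm{x}_\star\|_2 \|\bm{x}_\star\|_2^2 \sqrt{n}$, so $\|\bm{g}_{\mathrm{out}}(\bm{x})\|_2 \lesssim \alpha \|\bm{x}-\bm{x}_\star\|_2\|\bm{x}_\star\|_2^2$ via spectral bounds on the truncated design matrix. Provided $\alpha\le c_1$ is small enough, this perturbation is absorbed into the clean regularity inequality, yielding $\mathsf{RC}(\mu,\lambda,\zeta)$ for $\bm{g}(\bm{x})$.

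The main obstacle is upgrading these pointwise statements to hold \emph{uniformly} over $\bm{x}$ in the local ball. The truncation set $\mathcal{T}_t$ depends adaptively on $\bm{x}_t$, so the randomness in $\{\bm{a}_i\}$ is coupled with $\bm{x}_t$; moreover, the sample median is a nonsmooth, non-Lipschitz functional of $\bm{x}$, so naive $\varepsilon$-net arguments do not directly apply. The remedy I would pursue is a careful stability analysis: even though $\mathsf{median}(\cdot)$ is nonsmooth, one can sandwich the adaptive median by medians on a fine $\varepsilon$-net using monotonicity of the residuals in $\bm{x}$, then invoke a union bound across the net (of size $e^{O(n)}$) leveraging the Gaussianity of $\{\bm{a}_i\}$. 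This uniform median control, together with the sub-exponential concentration of $\bm{g}_{\mathrm{clean}}(\bm{x})$ around its expectation, is what elevates the analysis from the population level to the finite-sample, adaptive-truncation regime; this is where the $m\gtrsim n\log n$ sample bound enters.
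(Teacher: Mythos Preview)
The paper does not supply its own proof of this theorem; it is stated with attribution to \cite{zhang2016provable} and no argument is given in the text. That said, the remark immediately following Theorem~\ref{thm:TWF-PR} makes explicit that truncated-gradient results of this type are to be proved via the regularity condition $\mathsf{RC}(\mu,\lambda,\zeta)$ of Definition~\ref{def:reg-condition} applied to the truncated search direction, which is exactly the framework you adopt. Your plan --- control of the sample median, decomposition $\bm{g}=\bm{g}_{\mathrm{clean}}+\bm{g}_{\mathrm{out}}$, reuse of the TWF regularity analysis for the clean part, and absorption of the outlier part using the truncation threshold together with $\alpha\le c_1$ --- is the route taken in the cited reference, and your identification of the uniformization over $\bm{x}$ (with the median depending nonsmoothly on $\bm{x}$) as the main technical hurdle is accurate.

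One point to sharpen: your crude bound on each outlier summand, $\|r_i(\bm{x})\,\bm{a}_i\bm{a}_i^{\top}\bm{x}\|_2 \lesssim \tau(\bm{x})\|\bm{a}_i\|_2$, drops the factor $|\bm{a}_i^{\top}\bm{x}|$ and would, if summed naively over $\alpha m$ indices, give $\|\bm{g}_{\mathrm{out}}(\bm{x})\|_2 \lesssim \alpha\sqrt{n}\,\|\bm{x}-\bm{x}_\star\|_2\|\bm{x}_\star\|_2^2$, not the $\alpha\|\bm{x}-\bm{x}_\star\|_2\|\bm{x}_\star\|_2^2$ you claim. The correct way to get the dimension-free bound is not to sum norms but to control $\big\|\frac{1}{m}\sum_{i\in\mathcal{T}\cap\mathcal{S}}\bm{a}_i\bm{a}_i^{\top}\big\|$ uniformly over all index subsets of size at most $\alpha m$ (this is where a covering argument over subsets, combined with Gaussian concentration, yields an $O(\alpha\log(1/\alpha))$ operator-norm bound when $m\gtrsim n$). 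With that correction your outline matches the standard proof.
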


\subsection{Generalized gradient descent} 

In all the examples discussed so far, the loss function $f(\cdot)$ has been a smooth function. When $f(\cdot)$ is nonsmooth and non-differentiable, it is possible to continue to apply GD using the {\em generalized gradient} (e.g.~subgradient) \cite{clarke1975generalized}. As an example, consider again the phase retrieval problem but with an alternative loss function, where we minimize the quadratic loss of the amplitude-based measurements, given as
\begin{equation}\label{eq:amplitude_phase_retrieval}
	f_{\mathrm{amp}}(\bx) =\frac{1}{2m}\sum_{i=1}^{m}\big(|\bm{a}_{i}^{\top}\bm{x}| - \sqrt{y_i} \big)^{2} .
\end{equation}
Clearly, $f_{\mathrm{amp}}(\bx)$ is nonsmooth, and its generalized gradient is given by, with a slight abuse of notation,
\begin{equation}\label{eq:GD_AMP}
	\nabla f_{\mathrm{amp}}(\bx) :=\frac{1}{m}\sum_{i=1}^m \left(\ba_i^{\top}\bx- \sqrt{y_i} \cdot\mathsf{sgn}(\ba_i^{\top}\bx)\right)\ba_i.
\end{equation}
We can simply execute GD w.r.t.~the generalized gradient:
\[
	\bm{x}_{t+1} = \bm{x}_t - \eta_t \nabla f_{\mathrm{amp}}(\bx_t), \qquad t=0,1,\cdots 	
\]
This amplitude-based loss function $ f_{\mathrm{amp}}(\cdot)$ often has better curvature around the truth, compared to the intensity-based loss function $f(\cdot)$ defined in \eqref{eq:min-PR_lowrank}; see \cite{zhang2017reshaped,wang2017solving,davis2017nonsmooth} for detailed discussions. The theory is given as follows.
\begin{theorem}[$\mathsf{GD~for~amplitude\text{-}based~phase~retrieval}$ \cite{zhang2017reshaped}]
\label{thm:GD-RWF-improved}
Consider the problem \eqref{eq:min-PR}.
There exist some constants  $c_0, c_1, c_2>0$ and $0<\rho<1$ such that if 
	$m\geq c_{0}n$ and $\eta_t\equiv c_2$, then with high probability, 
\begin{align}
	& \|\bm{x}_{t}-\bm{x}_{\star}\|_{2} \leq \rho^{t}\|\bm{x}_{0}-\bm{x}_{\star}\|_{2},\qquad t=0,1,\cdots  \label{eq:medianPR-tight}  
\end{align}
as long as $\|\bm{x}_{0}-\bm{x}_{\star}\|_{2}\leq  \|\bm{x}_{\star}\|_{2}/10$.
\end{theorem}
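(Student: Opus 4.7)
\medskip

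\noindent\textbf{Proof proposal.} My plan is to invoke the regularity-condition framework of Lemma~\ref{lem:convergence-RC} with $\bm{g}(\bm{x}):=\nabla f_{\mathrm{amp}}(\bm{x})$ (the generalized gradient in \eqref{eq:GD_AMP}). That lemma is tailor-made for nonsmooth updates and already delivers the geometric rate claimed in the theorem once $\mathsf{RC}(\mu,\lambda,\zeta)$ holds with $\mu\lambda$ a universal constant. Concretely, for $\zeta = \|\bm{x}_\star\|_2/10$ (and after assuming without loss of generality, by flipping the sign of $\bm{x}_\star$, that the relevant branch is selected), I aim to establish
\[
2\langle \nabla f_{\mathrm{amp}}(\bm{x}),\,\bm{x}-\bm{x}_\star\rangle \;\geq\; \mu\,\|\nabla f_{\mathrm{amp}}(\bm{x})\|_2^2 \;+\; \lambda\,\|\bm{x}-\bm{x}_\star\|_2^2
\]
uniformly over $\bm{x}\in\mathcal{B}_\zeta(\bm{x}_\star)$, which by Lemma~\ref{lem:convergence-RC} (with $\eta_t\equiv\mu$) yields $\|\bm{x}_{t+1}-\bm{x}_\star\|_2^2\leq(1-\mu\lambda)\|\bm{x}_t-\bm{x}_\star\|_2^2$, matching the statement with $\rho^2=1-\mu\lambda$.

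The first step is a population-level computation. Writing $v_i:=\bm{a}_i^\top\bm{x}$ and $u_i:=\bm{a}_i^\top\bm{x}_\star$, each summand of $\langle\nabla f_{\mathrm{amp}}(\bm{x}),\bm{x}-\bm{x}_\star\rangle$ equals $(v_i-|u_i|\,\mathrm{sgn}(v_i))(v_i-u_i)$, which is $(v_i-u_i)^2$ when $\mathrm{sgn}(v_i)=\mathrm{sgn}(u_i)$ and $v_i^2-u_i^2$ otherwise --- in particular, never small. Taking expectations over $\bm{a}_i\sim\mathcal{N}(\bm{0},\bm{I}_n)$ and using the standard two-dimensional Gaussian formulas for $\mathbb{E}[\mathrm{sgn}(v)|u|\bm{a}]$, one shows that the population gradient satisfies $\mathbb{E}[\nabla f_{\mathrm{amp}}(\bm{x})]=(\bm{x}-\bm{x}_\star)+O(\|\bm{x}-\bm{x}_\star\|_2^2/\|\bm{x}_\star\|_2)$ near $\bm{x}_\star$, and hence the population version of the regularity condition holds with $\lambda,\mu\asymp 1$ throughout $\mathcal{B}_\zeta(\bm{x}_\star)$.

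The second and hardest step is passing from the population to the empirical regularity condition uniformly over $\mathcal{B}_\zeta(\bm{x}_\star)$. The term $\frac{1}{m}\sum_i \bm{a}_i\bm{a}_i^\top\bm{x}$ is easy: by standard Gaussian covariance concentration, $\|\frac{1}{m}\sum_i \bm{a}_i\bm{a}_i^\top-\bm{I}_n\|\leq 1/10$ with high probability once $m\gtrsim n$, which controls both the linear part of $\langle\nabla f_{\mathrm{amp}},\bm{x}-\bm{x}_\star\rangle$ and the first piece of $\|\nabla f_{\mathrm{amp}}\|_2^2$. The genuine obstacle is controlling the discontinuous ``sign'' piece $\frac{1}{m}\sum_i |\bm{a}_i^\top\bm{x}_\star|\,\mathrm{sgn}(\bm{a}_i^\top\bm{x})\bm{a}_i$ uniformly in $\bm{x}$: the integrand is bounded but not Lipschitz. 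I would handle this by a covering/VC-type argument on the class of half-space indicators $\{\mathrm{sgn}(\bm{a}^\top\bm{x})\}_{\bm{x}\in\mathcal{B}_\zeta(\bm{x}_\star)}$, combined with a sub-exponential tail bound on $|\bm{a}_i^\top\bm{x}_\star|\|\bm{a}_i\|_2$. A clean way, following the reshaped-Wirtinger-flow analysis, is to decompose $\mathrm{sgn}(\bm{a}_i^\top\bm{x})-\mathrm{sgn}(\bm{a}_i^\top\bm{x}_\star)$ into the indicator of the ``bad'' set where the two signs disagree --- a set of small measure proportional to $\|\bm{x}-\bm{x}_\star\|_2/\|\bm{x}_\star\|_2$ --- and then control the corresponding empirical process via Bernstein's inequality plus a union bound over an $\varepsilon$-net of $\mathcal{B}_\zeta(\bm{x}_\star)$, with $\varepsilon$ chosen polynomially small in $n$. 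The same machinery provides the upper bound $\|\nabla f_{\mathrm{amp}}(\bm{x})\|_2\lesssim\|\bm{x}-\bm{x}_\star\|_2$ needed for the $\mu\|\nabla f_{\mathrm{amp}}\|_2^2$ term.

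Putting the two steps together, the empirical $\mathsf{RC}(\mu,\lambda,\zeta)$ holds with some constants $\mu,\lambda>0$ as soon as $m\geq c_0 n$, on an event of high probability. Choosing $\eta_t\equiv c_2:=\mu$ and invoking Lemma~\ref{lem:convergence-RC} with the initialization $\bm{x}_0\in\mathcal{B}_\zeta(\bm{x}_\star)$ (which is guaranteed by the hypothesis $\|\bm{x}_0-\bm{x}_\star\|_2\leq\|\bm{x}_\star\|_2/10$) then gives $\|\bm{x}_t-\bm{x}_\star\|_2\leq\rho^t\|\bm{x}_0-\bm{x}_\star\|_2$ with $\rho=\sqrt{1-\mu\lambda}<1$, completing the proof. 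The main technical hurdle throughout is the uniform control of the discontinuous sign empirical process; everything else reduces to Gaussian computations and Lemma~\ref{lem:convergence-RC}.
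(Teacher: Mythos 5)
You are reconstructing a proof that the paper itself never gives: Theorem~\ref{thm:GD-RWF-improved} is stated with a citation to \cite{zhang2017reshaped}, and the surrounding text offers only a high-level heuristic (that $f_{\mathrm{amp}}$ involves second rather than fourth moments of $\bm{a}_i^\top\bm{x}$ and hence enjoys sharper concentration, explaining why no truncation is needed). So there is no in-paper proof to compare against, but your sketch does match the route taken in the original reshaped-Wirtinger-flow analysis: verify a local regularity condition $\mathsf{RC}(\mu,\lambda,\zeta)$ for the generalized gradient, then invoke the generic contraction result of Lemma~\ref{lem:convergence-RC}. The decomposition into a sign-agreement piece and a sign-disagreement piece, and the use of a net plus Bernstein-type concentration to control the empirical process of half-space indicators, is exactly the workhorse machinery there.

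One substantive inaccuracy to fix: you remark that each summand $(v_i-|u_i|\,\mathrm{sgn}(v_i))(v_i-u_i)$ is ``in particular, never small.'' This is false. When $\mathrm{sgn}(v_i)\neq\mathrm{sgn}(u_i)$ the summand equals $v_i^2-u_i^2$, which can be negative (whenever $|v_i|<|u_i|$), so the per-sample contribution to $\langle\nabla f_{\mathrm{amp}}(\bm{x}),\bm{x}-\bm{x}_\star\rangle$ is not uniformly nonnegative, let alone uniformly lower-bounded. The correct observation is weaker but sufficient: when the signs disagree, $|v_i|+|u_i|=|v_i-u_i|$ and hence $|v_i^2-u_i^2|\le(v_i-u_i)^2$, so the ``bad'' terms are dominated by the same quantity that you are trying to accumulate, and the disagreement set has Gaussian measure $O(\|\bm{x}-\bm{x}_\star\|_2/\|\bm{x}_\star\|_2)$ within the radius-$\|\bm{x}_\star\|_2/10$ ball. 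Your later paragraph does acknowledge the bad-set control, so the overall plan stands, but the ``never small'' claim should be replaced by this domination argument. With that correction, the sketch is a faithful blueprint for the cited result.
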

In comparison to Theorem~\ref{thm:TWF-PR}, the generalized GD w.r.t.~$f_{\mathrm{amp}}(\cdot)$ achieves both order-optimal sample and computational complexities.  
Notably,  a very similar theory was obtained in \cite{wang2017solving} for a truncated version of the generalized GD (called {\em Truncated Amplitude Flow} therein), where the algorithm also employs the gradient update w.r.t.~$f_{\mathrm{amp}}(\cdot)$ but discards high-leverage data in a way similar to truncated GD discussed in Section \ref{sec:truncated-GD}.  However, in contrast to the intensity-based loss $f(\cdot)$ defined in \eqref{eq:min-PR_lowrank},  the truncation step is not crucial and can be safely removed when dealing with the amplitude-based $f_{\mathrm{amp}}(\cdot)$. 
A main reason is that for any fixed $\bm{x}$, $f_{\mathrm{amp}}(\cdot)$ involves only the first and second moments of the (sub)-Gaussian random variables $\{\ba_{i}^{\top}\bm{x}\}$. As such, it exhibits much sharper measure concentration --- and hence much better controlled gradient components --- compared  to the heavy-tailed $f(\cdot)$, which involves fourth moments of $\{\ba_{i}^{\top}\bm{x}\}$. 
This observation in turn  implies the importance of designing loss functions for nonconvex statistical estimation.



\subsection{Projected power method for constrained PCA}
\label{sec:PPM}

Many applications require solving a constrained quadratic maximization (or constrained PCA) problem:
\begin{subequations}\label{eq:quadratic-min}
\begin{align}
	\text{maximize} \quad  & f(\bx) = \bm{x}^{\top} \bm{L} \bm{x} ,  \\
	\text{subject to}  \quad    & \bm{x} \in \mathcal{C} ,
\end{align}
\end{subequations}
where $\mathcal{C}$ encodes the set of feasible points. This problem becomes nonconvex if either $\bm{L}$ is not negative semidefinite or if $\mathcal{C}$ is nonconvex. To demonstrate the value of studying this problem,  we introduce two important examples. 
\begin{itemize}
\item {\em Phase synchronization \cite{boumal2016nonconvex,liu2017estimation}.} 
Suppose we wish to recover $n$ unknown phases $\phi_{1},\cdots,\phi_{n}\in[0,2\pi]$
given their pairwise relative phases. Alternatively, by setting $(x_{\star})_{i}=e^{\phi_{i}}$,
		this problem reduces to estimating $\bm{x}_{\star}=[(x_{\star})_{i}]_{1\leq i\leq n}$
from $\bm{x}_{\star}\bm{x}_{\star}^{\conj}$ --- a matrix that
encodes all pairwise phase differences $(x_{\star})_{i}(x_{\star})_{j}^{\conj}=e^{\phi_{i}-\phi_{j}}$.
To account for the noisy nature of practical measurements, suppose
that what we observe is
$\bm{L}=\bm{x}_{\star}\bm{x}_{\star}^{\conj}+\sigma\bm{W}$,
where $\bm{\bm{W}}$ is a Hermitian matrix. Here, $\{W_{i,j}\}_{i\leq j}$ are i.i.d.~standard complex Gaussians. The quantity $\sigma$ indicates the noise level, which determines the hardness of the problem. A natural way to attempt
recovery is to solve the following problem
\begin{align*}
\text{maximize}_{\bm{x}}\quad & \bm{x}^{\conj}\bm{L}\bm{x}\quad\text{subject to}~|x_{i}|=1,~ 1\leq i\leq n.
\end{align*}
\item {\em Joint alignment \cite{chen2016information,chen2016projected}.} 
Imagine  we want to estimate $n$ \emph{discrete} variables
$\{(x_{\star})_{i}\}_{1\leq i\leq n}$, where each variable  can take $m$ possible values, namely, $(x_{\star})_{i}\in\{1,\cdots,m\}$.
Suppose that estimation needs to be performed based on pairwise difference samples $y_{i,j}=x_{i}-x_{j}+z_{i,j}~\mathsf{mod}~m$,
where the $z_{i,j}$'s are i.i.d.~noise and their distributions dictate the recovery limits.
To facilitate computation, one strategy
is to lift each discrete variable $x_{i}$ into a $m$-dimensional
vector $\bm{x}_{i}\in\{\bm{e}_{1},\cdots,\bm{e}_{m}\}$. We then introduce
a matrix $\bm{L}$ that properly encodes all log-likelihood information.
After simple manipulation (see \cite{chen2016projected} for details),
maximum likelihood estimation can be cast as follows
\begin{align*}
\text{maximize}_{\bm{x}}\quad & \bm{x}^{\top}\bm{L}\bm{x}\\
\text{subject to}\quad & \bm{x}=\left[\bx_i\right]_{1\leq i\leq n};\text{ }\bm{x}_{i}\in\{\bm{e}_{1},\cdots,\bm{e}_{m}\},~\forall i. \nonumber
\end{align*}
\end{itemize}
More examples of constrained PCA include an alternative formulation of phase retrieval \cite{waldspurger2015phase}, sparse PCA \cite{yuan2013truncated}, and multi-channel blind deconvolution with sparsity priors \cite{li2017blind}.

To solve \eqref{eq:quadratic-min}, two algorithms naturally come into mind. The first one is projected GD, which follows the update rule
\[
	\bm{x}_{t+1}=\mathcal{P}_{\mathcal{C}}(\bm{x}_{t}+\eta_{t}\bm{L}\bm{x}_{t}). 
\]
Another possibility is called the {\em projected power method (PPM)} \cite{chen2016projected,yuan2013truncated}, which drops the current iterate and performs projection only over the gradient component: 
\begin{align}
	\bm{x}_{t+1}=\mathcal{P}_{\mathcal{C}}(\eta_{t}\bm{L}\bm{x}_{t}).  \label{eq:PPM}
\end{align}
While this is perhaps best motivated by its connection to the canonical eigenvector problem (which is often solved by the power method), we remark on its close resemblance to  projected GD. 
In fact, for many constrained sets $\mathcal{C}$ (e.g.~the ones in phase synchronization and joint alignment), \eqref{eq:PPM} is equivalent to projected GD when the step size $\eta_t \rightarrow \infty$.

As it turns out, the PPM provably achieves near-optimal sample and computational complexities for the preceding two examples. Due to the space limitation, the theory is described only in passing. 

\begin{itemize}
	\item {\em Phase synchronization.} With high probability, the PPM with proper initialization converges linearly to the global optimum, as long as the noise level $\sigma \lesssim \sqrt{n /\log n}$. This is information theoretically optimal up to some log factor \cite{boumal2016nonconvex,zhong2017near}.
	\item {\em Joint alignment.}  With high probability, the PPM with proper initialization converges linearly to the ground truth, as long as certain Kullback-Leibler divergence w.r.t.~the noise distribution exceeds the information-theoretic threshold. See details in \cite{chen2016projected}.  
\end{itemize}

\subsection{Gradient descent on manifolds}


In many problems of interest, it is desirable to impose additional constraints on the object of interest, which leads to a constrained optimization problem over manifolds. In the context of low-rank matrix factorization, to eliminate global {\em scaling ambiguity}, one might constrain the low-rank factors to live on a Grassmann manifold  or a Riemannian quotient manifold \cite{edelman1998geometry,absil2009optimization}. 

To fix ideas, take matrix completion as an example. When factorizing $\bM_{\star}=\bL_{\star}\bR_{\star}^{\top}$, we might assume $\bL_{\star}\in\mathcal{G}(n_1,r)$, where $\mathcal{G}(n_1,r)$ denotes the Grassmann manifold which parametrizes all $r$-dimensional linear subspaces of the $n_1$-dimensional space\footnote{More specifically, any point in $\mathcal{G}(n,r)$ is an equivalent class of a $n\times r$ orthonormal matrix. See \cite{edelman1998geometry} for details. }.
In words, we are searching for a $r$-dimensional subspace $\bL$ but ignores the global rotation.  
It is also assumed that
 $\bL_{\star}^{\top}\bL_{\star}=\bI_r$ to remove the global scaling ambiguity (otherwise $(c\bm{L},c^{-1}\bm{R})$ is always equivalent to $(\bm{L},\bm{R})$ for any $c\neq 0$). One might then try to minimize the loss function defined over the Grassmann manifold as follows
\begin{equation}
	\text{minimize}_{\bL\in \mathcal{G}(n_1, r)} \qquad F(\bL) ,
\end{equation}
where 
\begin{equation}
	F(\bL) := \min_{\bR \in \mathbb{R}^{n_2\times r}}\left\|\mathcal{P}_{\Omega}(\bM_{\star} - \bL \bR^{\top}) \right\|_{\mathrm{F}}^2. 
\end{equation}
As it turns out, it is  possible to apply GD to $F(\cdot)$ over the Grassmann manifold by moving along the geodesics; here, a geodesic is the shortest path between
two points on a manifold. See \cite{edelman1998geometry} for an excellent overview. In what follows, we provide a very brief exposure to highlight its difference from a nominal gradient descent in the Euclidean space.

We start by writing out the conventional gradient of $F(\cdot)$ w.r.t.~the $t$th iterate $\bL_t$ in the {\em Euclidean space} \cite{balzano2014subspace}:
\begin{equation}
	\nabla F(\bL_t) = - 2 \mathcal{P}_{\Omega}(\bM_{\star} - \bL_t \hat{\bR}_t^{\top}) \hat{\bR}_t ,
\end{equation}
where $\hat{\bR}_t = \argmin_{\bR} \left\|\mathcal{P}_{\Omega}(\bM_{\star} - \bL_t \bR^{\top}) \right\|_{\mathrm{F}}^2$ is the least-squares solution. The gradient on the Grassmann manifold, denoted by $\nabla_{\mathcal{G}} F(\cdot)$, is then given by
\begin{align}
	\nabla_{\mathcal{G}} F(\bL_t) &= \left(\bI_{n_1} - \bL_t\bL_t^{\top}\right) \nabla F(\bL_t) . \nonumber 
\end{align}
%
Let $ - \nabla_{\mathcal{G}} F(\bL_t) = \tilde{\bU}_t\tilde{\bSigma}_t\tilde{\bV}_t^{\top}$ be its compact SVD, then the geodesic on the Grassmann manifold along the direction $- \nabla_{\mathcal{G}} F(\bL_t)$ is given by 
\begin{equation}
	\label{eq:geodesic}
	\bL_t(\eta) =\left[ \bL_t\tilde{\bV}_t \cos(\tilde{\bSigma}_t \eta) + \tilde{\bU}_t\sin(\tilde{\bSigma}_t \eta)  \right]\tilde{\bV}_t^{\top}.
\end{equation}
We can then update the iterates as  
\begin{equation}
	\bL_{t+1} = \bL_t(\eta_t) 
\end{equation}
for some properly chosen step size $\eta_t$. For the rank-1 case where $r=1$, the update rule~\eqref{eq:geodesic} can be simplified to
\begin{equation}
\bL_{t+1} = \cos(\sigma \eta_t)\bL_t -   \frac{\sin(\sigma \eta_t) }{\| \nabla_{\mathcal{G}} F(\bL_t)\|_2} \nabla_{\mathcal{G}} F(\bL_t),
\end{equation}
with $\sigma := \| \nabla_{\mathcal{G}} F(\bL_t) \|_2$. As can be verified, $\bL_{t+1}$ automatically stays on the unit sphere obeying $\bL_{t+1}^{\top}\bL_{t+1}= 1$.

One of the earliest provable nonconvex methods for matrix completion --- the $\mathsf{OptSpace}$ algorithm by Keshavan et al.~\cite{keshavan2010matrix,keshavan2010noisy} --- performs  gradient descent on the Grassmann manifold, tailored to the loss function:
$$ F(\bL, \bR ) := \min_{\bS \in \mathbb{R}^{r \times r}}\left\|\mathcal{P}_{\Omega}(\bM_{\star} - \bL\bS \bR^{\top}) \right\|_{\mathrm{F}}^2,$$
where $\bL\in \mathcal{G}(n_1, r)$ and $\bR \in \mathcal{G}(n_2, r)$, with some additional regularization terms to promote incoherence (see Section \ref{sec:Reg-GD}). It is shown by \cite{keshavan2010matrix,keshavan2010noisy} that GD on the Grassman manifold converges to the truth with high probability if $n^2p\gtrsim \mu^2\kappa^6 r^2 n\log n$, provided that a proper initialization is given.

Other gradient descent approaches on manifolds include \cite{dai2012geometric,wei2016guarantees,balzano2014subspace,wei2016guarantees_mc,vandereycken2013low,uschmajew2018critical,cai2018solving}. See
\cite{cai2018exploiting} for an extensive overview of recent developments along this line.

\subsection{Stochastic gradient descent}

Many problems have to deal with an empirical loss function that is an average of the sample losses, namely, 
\begin{equation}
f(\bx)  = \frac{1}{m}\sum_{i=1}^m f_i(\bx).
\end{equation}
When the sample size is large, it is computationally expensive to apply the gradient update rule --- which goes through all data samples --- in every iteration. Instead, one might apply stochastic gradient descent (SGD) \cite{robbins1985stochastic,nemirovski2009robust,bottou2018optimization}, where in each iteration, only a single sample or a small subset of samples are used to form the search direction. Specifically, the SGD follows the update rule
\begin{equation}\label{eq:SGD}
\bx_{t+1}  = \bx_t - \frac{\eta_t}{m}\sum_{i \in\Omega_t} \nabla f_i(\bx),
\end{equation}
where $\Omega_t\in\{1,\ldots,m\}$ is a subset of cardinality $k$ selected uniformly at random. Here, $k\geq 1$ is known as the mini-batch size. As one can expect, the mini-batch size $k$ plays an important role in the trade-off between the computational cost per iteration and the convergence rate. A properly chosen mini-batch size will optimize the total computational cost given the practical constraints. Please see \cite{wei2015solving,li2015phase,chi2016kaczmarz,zhang2017reshaped,tan2017phase,monardo2019solving} for the application of SGD in phase retrieval (which has an interesting connection with the Kaczmarz method), and \cite{jin2016provable} for its application in matrix factorization.

\section{Beyond gradient methods}
\label{sec:other_algorithms}

Gradient descent is certainly not the only method that can be employed to solve the problem~\eqref{eq:core_problem}. Indeed, many other algorithms have been proposed, which come with different levels of theoretical guarantees. Due to the space limitation, this section only reviews two popular alternatives to gradient methods discussed so far. For simplicity, we  consider the following unconstrained problem (with slight abuse of notation)
\begin{equation}
	\label{eq:empirical-risk-min-altmin}
	\underset{{\bm{L}\in \mathbb{R}^{n_1\times r},\,\bm{R}\in \mathbb{R}^{n_2\times r}}}{\text{minimize}} \quad f(\bm{L}, \bm{R}) .
\end{equation}

\subsection{Alternating minimization}

To optimize the core problem \eqref{eq:empirical-risk-min-altmin}, alternating minimization (AltMin) alternates between solving the following two subproblems: for $t=1,2\ldots,$
\begin{subequations}
	\label{eq:altmin_general}
\begin{align}
\bR_{t}  & = \argmin_{\bR\in\mathbb{R}^{n_2\times r}} \; f(\bL_{t-1}, \bR), \label{eq:altmin_general_L}\\
\bL_{t} &  = \argmin_{\bL\in\mathbb{R}^{n_1\times r}} \; f(\bL, \bR_{t}), \label{eq:altmin_general_R}
\end{align}
\end{subequations}
where $\bR_t$ and $\bL_t$ are updated {\em sequentially}. Here, $\bL_0$ is an appropriate initialization. For many problems discussed here, both \eqref{eq:altmin_general_L} and \eqref{eq:altmin_general_R} are convex problems and can be solved efficiently.

\subsubsection{Matrix sensing} Consider the loss function \eqref{eq:min-matrix-sensing-rank-r-asymmetric}. In each iteration, AltMin proceeds as follows \cite{jain2013low}: for $t=1,2\ldots,$ 
\begin{align*}
\bR_{t}  & = \argmin_{\bR\in\mathbb{R}^{n_2\times r}} \; \left\| \mathcal{A}(\bL_{t-1}\bR^{\top} - \bM_{\star})  \right\|_{\mathrm{F}}^2, \\
\bL_{t} &  = \argmin_{\bL\in\mathbb{R}^{n_1\times r}} \; \left\| \mathcal{A}(\bL\bR_t^{\top} - \bM_{\star})  \right\|_{\mathrm{F}}^2.
\end{align*}
Each substep consists of a linear least-squares problem, which can often be solved efficiently via the conjugate gradient algorithm \cite{trefethen1997numerical}. To illustrate why this forms a promising scheme, we look at the following simple example. 

\begin{example}
Consider the case where $\mathcal{A}$ is identity (i.e.~$\mathcal{A}(\bm{M})=\bm{M}$).  We claim that given almost any initialization, AltMin converges to the truth after two updates. To see this, we first note that the output of the first iteration can be written as
$$ \bR_1 = \bR_{\star} \bL_{\star}^{\top}  \bL_{0}(\bL_{0}^{\top}\bL_0)^{-1}. $$
As long as both $\bL_{\star}^{\top}  \bL_{0}$ and $\bL_{0}^{\top}\bL_0$ are full-rank, the column space of $\bR_{\star}$ matches perfectly with that of $\bR_{1}$. Armed with this fact, the subsequent least squares problem (i.e.~the update for $\bm{L}_1$) is exact, in the sense that  $\bm{L}_1\bm{R}_1^{\top} = \bm{M}_{\star}= \bm{L}_{\star}\bm{R}_{\star}^{\top}$. 

\end{example}

With the above identity example in mind, we are  hopeful that AltMin converges fast if $\mathcal{A}$ is nearly isometric. Towards this, one has the following theory. 
\begin{theorem}[$\mathsf{AltMin~for~matrix~sensing}$ \cite{jain2013low}]
\label{thm:AltMin-matrix-sensing}
Consider the problem \eqref{eq:min-matrix-sensing-rank-r-asymmetric} and suppose the operator \eqref{eq:defn-A-sensing} satisfies $2r$-RIP with RIP constant $\delta_{2r}\leq 1 /(100\kappa^2 r)$. If we initialize $\bL_0$ by the  $r$ leading left singular vectors of $\mathcal{A}^*(\by)$, then AltMin achieves
	\[ \left\| \bM_{\star} - \bL_t \bR_t^{\top} \right\|_{\mathrm{F}} \leq \varepsilon	\]
for all $t\geq 2\log(\|\bM_{\star}\|_{\mathrm{F}}/\varepsilon)$.
\end{theorem}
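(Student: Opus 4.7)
My plan is to split the analysis into a spectral initialization bound and a per-iteration subspace contraction, then combine them. The natural quantity to track is the principal-angle distance
\[
d_t := \big\|(\bm{I}_{n_1} - \bm{U}_t \bm{U}_t^{\top})\bm{U}_{\star}\big\|,
\]
where $\bm{U}_t \in \mathbb{R}^{n_1 \times r}$ is an orthonormal basis for the column span of $\bm{L}_t$ and $\bm{U}_{\star}$ is that of $\bm{M}_{\star}$. This metric is invariant under the right-ambiguity $(\bm{L}, \bm{R}) \mapsto (\bm{L}\bm{H}, \bm{R}\bm{H}^{-\top})$, which is the natural equivalence class of the factorization, and it directly controls the reconstruction error once combined with a bound on the least-squares $\bm{R}$-update.

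For initialization, I would exploit the identity $\mathcal{A}^*(\bm{y}) = \mathcal{A}^*\mathcal{A}(\bm{M}_{\star})$. Lemma~\ref{lemmq:RIP-cross} applied with rank-one test directions $\bm{u}\bm{v}^{\top}$ yields
\[
\big\|\mathcal{A}^*\mathcal{A}(\bm{M}_{\star}) - \bm{M}_{\star}\big\| \leq \delta_{2r}\|\bm{M}_{\star}\|_{\mathrm{F}} \leq \delta_{2r}\sqrt{r}\,\sigma_1(\bm{M}_{\star}).
\]
Wedin's $\sin\Theta$ theorem with eigen-gap $\sigma_r(\bm{M}_{\star})$ then gives $d_0 \lesssim \delta_{2r}\kappa\sqrt{r}$, which is much smaller than $1/2$ under the hypothesis $\delta_{2r} \leq 1/(100\kappa^2 r)$.

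For the per-iteration contraction, fix $\bm{L}_{t-1}$ with small $d_{t-1}$ and introduce the ``oracle'' counterpart
\[
\tilde{\bm{R}}_t := \bm{M}_{\star}^{\top}\bm{L}_{t-1}(\bm{L}_{t-1}^{\top}\bm{L}_{t-1})^{-1},
\]
which is exactly what $\bm{R}_t$ would be if $\mathcal{A}^*\mathcal{A}$ were the identity. The normal equations for the $\bm{R}$-update imply that $\bm{R}_t - \tilde{\bm{R}}_t$ solves a linear system whose coefficient operator concentrates around $\bm{L}_{t-1}^{\top}\bm{L}_{t-1}$ (again by $2r$-RIP applied to rank-$r$ matrices of the form $\bm{L}_{t-1}\bm{Z}^{\top}$) and whose right-hand side is $(\mathcal{A}^*\mathcal{A} - \mathcal{I})$ acting on the rank-$2r$ residual $\bm{L}_{t-1}\tilde{\bm{R}}_t^{\top} - \bm{M}_{\star}$, whose Frobenius norm is $O(d_{t-1}\|\bm{M}_{\star}\|_{\mathrm{F}})$. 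This yields a bound of the form $\|\bm{R}_t - \tilde{\bm{R}}_t\|_{\mathrm{F}} \lesssim \delta_{2r}\sqrt{r}\,\kappa\, d_{t-1}\,\sigma_1(\bm{M}_{\star})$; a symmetric analysis of the ensuing $\bm{L}$-update followed by a Wedin-type conversion back to the subspace metric produces $d_t \leq d_{t-1}/2$. Unrolling gives $\|\bm{M}_{\star} - \bm{L}_t\bm{R}_t^{\top}\|_{\mathrm{F}} \leq 2^{-t}\|\bm{M}_{\star}\|_{\mathrm{F}}$ and hence the claimed iteration complexity $t \geq 2\log(\|\bm{M}_{\star}\|_{\mathrm{F}}/\varepsilon)$.

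The main obstacle will be making the per-iteration step yield exactly the $\kappa^2 r$ scaling in $\delta_{2r}$. One factor of $\kappa$ enters because signal components along the weakest direction of $\bm{M}_{\star}$ have magnitude only $\sigma_r(\bm{M}_{\star})$ yet must overwhelm RIP-induced errors that scale with $\sigma_1(\bm{M}_{\star})$; a second $\kappa$ enters from each Wedin step converting operator-norm perturbations into subspace-angle perturbations across the gap $\sigma_r$; and a factor of $r$ enters through the conversion between spectral and Frobenius norms when summing contributions across the $r$ low-rank directions. Keeping all these factors tight --- e.g., by always restricting RIP applications to rank-$2r$ matrices supported on the combined range of $\bm{L}_{t-1}$ and $\bm{L}_{\star}$ rather than invoking cruder bounds --- is the technical crux of the argument.
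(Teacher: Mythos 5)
The paper does not actually prove Theorem~\ref{thm:AltMin-matrix-sensing}: it is stated as a cited result from \cite{jain2013low}, with only the identity-operator example offered as intuition. So there is no ``paper's own proof'' to compare against; I am instead evaluating your sketch against the known argument in the original reference.

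Your high-level structure matches the original: track a principal-angle distance between the column span of the iterate and $\bU_\star$, bound $d_0$ by RIP plus Wedin, and prove a per-iteration contraction by comparing the RIP-perturbed least-squares solution to the oracle $\tilde{\bR}_t = \bM_\star^\top\bL_{t-1}(\bL_{t-1}^\top\bL_{t-1})^{-1}$. Two things deserve more care before this becomes a proof. First, the analysis of the normal equations is clean only after orthonormalizing $\bL_{t-1}$ (the algorithm in \cite{jain2013low} inserts a QR step precisely for this reason); otherwise $(\bL_{t-1}^\top\bL_{t-1})^{-1}$ contributes conditioning factors that you have not tracked. Orthonormalization is WLOG here since it merely re-parameterizes the $(\bL,\bR)$ equivalence class, but it should be stated explicitly, and you should then verify that the matrix $\bU_\star^\top\bU_{t-1}$ stays well-conditioned so that $\sigma_r(\tilde{\bR}_t)\gtrsim\sigma_r(\bM_\star)$. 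Second, your quantitative bookkeeping does not yet yield the stated threshold $\delta_{2r}\leq 1/(100\kappa^2 r)$. As you have written it, the $\bR$-update error is $\lesssim\delta_{2r}\sqrt{r}\,\sigma_1 d_{t-1}$ and dividing by $\sigma_r(\tilde{\bR}_t)\gtrsim\sigma_r$ gives a contraction factor of order $\delta_{2r}\kappa\sqrt{r}$; this would suggest a requirement of order $1/(\kappa\sqrt{r})$, not $1/(\kappa^2 r)$. Your closing paragraph correctly flags this as the technical crux, but the extra $\kappa\sqrt{r}$ must come from somewhere concrete (in the original argument: compounding of Wedin conversions across both half-steps, the conditioning of $\bU_\star^\top\bU_{t-1}$, and Frobenius-to-spectral conversions when bounding the RIP cross-terms). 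Finally, the last step---converting the subspace contraction $d_t\leq d_{t-1}/2$ into $\|\bM_\star - \bL_t\bR_t^\top\|_{\mathrm{F}}\leq 2^{-t}\|\bM_\star\|_{\mathrm{F}}$---also needs a short argument bounding the least-squares residual by $d_t\|\bM_\star\|_{\mathrm{F}}$ plus a RIP error, which is straightforward but missing from the sketch.
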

In comparison to the performance of GD in Theorem~\ref{thm:convergence-rankr-sensing}, AltMin enjoys a better iteration complexity w.r.t.~the condition number $\kappa$;  that is, it obtains $\varepsilon$-accuracy within $O(\log(1/\varepsilon))$ iterations, compared to $O(\kappa\log(1/\varepsilon))$ iterations for GD. In addition, the requirement on the RIP constant depends quadratically on $\kappa$, leading to a sub-optimal sample complexity. To address this issue, Jain et al.~\cite{jain2013low} further developed a stage-wise AltMin algorithm, which only requires $\delta_{2r}=O (1/r^2)$.
Intuitively, if there is a singular value that is much larger than the remaining ones, then one can treat $\bm{M}_{\star}$ as a (noisy) rank-1 matrix and compute this rank-1 component via AltMin. Following this strategy, one successively applies AltMin to recover the dominant rank-1 component in the residual matrix,  unless it is already well-conditioned.  See \cite{jain2013low} for details.


\subsubsection{Phase retrieval}
Consider the phase retrieval problem. It is helpful to think of the amplitude measurements as bilinear measurements of the signs $\boldsymbol{b} =\{b_i\in\{\pm 1\}\}_{1\leq i\leq m}$ and the signal $\bx_{\star}$, namely, 
\begin{equation}
	\sqrt{y_i} = |\bm{a}_{i}^{\top}\bm{x}_{\star}| = \underset{:=b_i }{\underbrace{\mathsf{sgn}(\bm{a}_{i}^{\top}\bm{x}_{\star})}} \, \bm{a}_{i}^{\top}\bm{x}_{\star}.
\end{equation} 
This leads to a simple yet useful alternative formulation for the amplitude loss minimization problem 
\begin{align*}
	\underset{\bm{x}\in \mathbb{R}^n}{\text{minimize}}~	f_{\mathrm{amp}}(\bx) = \underset{\bm{x}\in \mathbb{R}^n, b_i\in\{\pm 1\}}{\text{minimize}} ~f(\boldsymbol{b}, \bx),
\end{align*}
where we abuse the notation by letting
\begin{equation}
	f(\boldsymbol{b}, \bx)  := \frac{1}{2m}\sum_{i=1}^{m}\big(b_i \bm{a}_{i}^{\top}\bm{x} -\sqrt{y_i}\big)^{2} . 
\end{equation}
Therefore, by applying AltMin to the loss function $ f(\boldsymbol{b}, \bx)$, we obtain the following update rule \cite{netrapalli2015phase,waldspurger2016phase}: for each $t=1,2,\ldots$
\begin{subequations}
\begin{align}
	\bb_{t} & = \argmin_{b_i:|b_i|=1, \forall i} f(\boldsymbol{b}, \bx_{t-1} ) = \mathsf{sgn}(\bA\bx_{t-1}),\\
	\bx_{t+1} &= \argmin_{\bx \in \mathbb{R}^n} f(\boldsymbol{b}_t, \bx_{t-1} ) = \bA^{\dag}\mathsf{diag}(\bb_{t}) \sqrt{\by} ,
	\label{eq:alt-min-PR-x}
\end{align}
\end{subequations}
where $\bx_0$ is an appropriate initial estimate, $\bm{A}^{\dagger}$ is the pseudo-inverse of $\bA:=[\bm{a}_1,\cdots, \bm{a}_m]^{\top}$, and $\sqrt{\by}:=[\sqrt{y_i}]_{1\leq i\leq m}$. The step \eqref{eq:alt-min-PR-x} can again be efficiently solved using the conjugate gradient method \cite{trefethen1997numerical}.  
This is exactly the Error Reduction (ER) algorithm proposed by Gerchberg and Saxton \cite{gerchberg1972practical,fienup1982phase} in the 1970s. Given a reasonably good initialization,  this algorithm converges linearly under the Gaussian design.
\begin{theorem}[$\mathsf{AltMin~(ER)~for~phase~retrieval}$ \cite{waldspurger2016phase}]
\label{thm:AltMin-PR}
Consider the problem \eqref{eq:min-PR}.
There exist some constants  $c_0, \cdots, c_4>0$ and $0<\rho<1$ such that if 
	$m\geq c_{0}n$, then with probability at least $1-c_2\exp(-c_3 m)$, the estimates of AltMin (ER) satisfy
\begin{align}
	& \|\bm{x}_{t}-\bm{x}_{\star}\|_{2} \leq \rho^{t}\|\bm{x}_{0}-\bm{x}_{\star}\|_{2},\qquad t=0,1,\cdots  
	\label{eq:medianPR-tight}  
\end{align}
as long as $\|\bm{x}_{0}-\bm{x}_{\star}\|_{2}\leq  c_4 \|\bm{x}_{\star}\|_{2}$.
\end{theorem}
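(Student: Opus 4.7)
The plan is to reduce the analysis to a clean linear error recursion and then control its right-hand side via standard Gaussian concentration plus a uniform sign-mismatch estimate. Throughout, write $\bm{A}=[\bm{a}_1,\ldots,\bm{a}_m]^{\top}$ and let $\bm{s}_{\star}:=\mathrm{sgn}(\bm{A}\bm{x}_{\star})\in\{\pm1\}^m$, so that $\sqrt{\bm{y}}=\mathrm{diag}(\bm{s}_{\star})\bm{A}\bm{x}_{\star}$. By definition of the update, $\bm{x}_{t+1}=\bm{A}^{\dagger}\mathrm{diag}(\bm{b}_t)\sqrt{\bm{y}}=\bm{A}^{\dagger}\mathrm{diag}(\bm{b}_t\odot\bm{s}_{\star})\bm{A}\bm{x}_{\star}$, so with $\bm{E}_t$ the $\{0,1\}$-diagonal indicating sign disagreement between $\bm{b}_t$ and $\bm{s}_{\star}$, I obtain the error identity
\begin{equation*}
\bm{x}_{t+1}-\bm{x}_{\star}=-2\,\bm{A}^{\dagger}\bm{E}_t\bm{A}\bm{x}_{\star}.
\end{equation*}

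The first step is to dispose of the spectral factor: under the Gaussian design with $m\geq c_0 n$, standard non-asymptotic bounds on extreme singular values of Gaussian matrices give $\|\bm{A}^{\dagger}\|\leq c_1/\sqrt{m}$ with probability at least $1-2e^{-c_2 m}$. Thus on this event
\begin{equation*}
\|\bm{x}_{t+1}-\bm{x}_{\star}\|_2\leq\frac{2c_1}{\sqrt{m}}\,\bigl\|\bm{E}_t\bm{A}\bm{x}_{\star}\bigr\|_2
=\frac{2c_1}{\sqrt{m}}\Big(\sum_{i:\,\mathrm{sgn}(\bm{a}_i^{\top}\bm{x}_t)\neq\mathrm{sgn}(\bm{a}_i^{\top}\bm{x}_{\star})}(\bm{a}_i^{\top}\bm{x}_{\star})^2\Big)^{1/2},
\end{equation*}
so everything boils down to controlling the ``sign-mismatch'' sum on the right.

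The heart of the argument, and the step I expect to be the main obstacle, is a \emph{uniform} bound of the form
\begin{equation*}
\sup_{\bm{x}\in\mathcal{B}}\;\sum_{i=1}^{m}(\bm{a}_i^{\top}\bm{x}_{\star})^2\,\mathds{1}\{\mathrm{sgn}(\bm{a}_i^{\top}\bm{x})\neq\mathrm{sgn}(\bm{a}_i^{\top}\bm{x}_{\star})\}\;\leq\;\rho_0^2\,m\,\|\bm{x}-\bm{x}_{\star}\|_2^2,
\end{equation*}
where $\mathcal{B}:=\{\bm{x}:\|\bm{x}-\bm{x}_{\star}\|_2\leq c_4\|\bm{x}_{\star}\|_2\}$ and $\rho_0$ is a small constant (to be chosen so that $2c_1\rho_0<1$). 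Pointwise this is a one-dimensional Gaussian computation: for a fixed $\bm{x}$ whose angle with $\bm{x}_{\star}$ is $\theta$, writing $\bm{a}^{\top}\bm{x}_{\star}$ and $\bm{a}^{\top}\bm{x}$ as correlated Gaussians and switching to polar coordinates gives $\mathbb{E}[(\bm{a}^{\top}\bm{x}_{\star})^2\mathds{1}_{\text{disagree}}]\lesssim \theta^2\|\bm{x}_{\star}\|_2^2\lesssim \|\bm{x}-\bm{x}_{\star}\|_2^2$ (with an even sharper $\theta^3$ bound available for small $\theta$). A Bernstein or Hanson--Wright bound then controls the sum around its expectation. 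The nontrivial part is upgrading this pointwise statement to the uniform statement over $\mathcal{B}$, because the indicator map $\bm{x}\mapsto\mathds{1}\{\mathrm{sgn}(\bm{a}_i^{\top}\bm{x})\neq\mathrm{sgn}(\bm{a}_i^{\top}\bm{x}_{\star})\}$ is discontinuous; I would handle this by (i) taking an $\varepsilon$-net of $\mathcal{B}$ at scale $\varepsilon\asymp c_4\|\bm{x}_{\star}\|_2/\mathrm{poly}(m)$, (ii) using the VC-type fact that hyperplane indicators have bounded complexity to show only $O(nm)$ distinct sign patterns appear, and (iii) absorbing the net error via a monotonicity/Lipschitz argument on the enlarged disagreement set.

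Once the uniform bound is in hand, combining it with the spectral bound gives $\|\bm{x}_{t+1}-\bm{x}_{\star}\|_2\leq 2c_1\rho_0\,\|\bm{x}_t-\bm{x}_{\star}\|_2=:\rho\|\bm{x}_t-\bm{x}_{\star}\|_2$ with $\rho<1$, and simultaneously certifies that the next iterate remains in $\mathcal{B}$, so induction on $t$ closes the loop and yields the claimed geometric decay on a single high-probability event. The choice of constants $c_0,c_4$ is then dictated by the size of $\rho_0$ needed for contraction, and the failure probability $c_2 e^{-c_3 m}$ is the sum of the exponentially small failure probabilities from the spectral bound and the uniform concentration/net argument.
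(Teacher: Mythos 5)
The paper does not prove this theorem; it is stated as a citation to \cite{waldspurger2016phase}, so there is no in-house argument to compare yours against, only the cited source. With that caveat, your skeleton accurately reconstructs the strategy behind the cited result. The error identity $\bm{x}_{t+1}-\bm{x}_{\star}=-2\bm{A}^{\dagger}\bm{E}_t\bm{A}\bm{x}_{\star}$ is exactly right (it follows from $\bm{A}^{\dagger}\bm{A}=\bm{I}_n$ and $\operatorname{diag}(\bm{b}_t\odot\bm{s}_{\star})-\bm{I}_m=-2\bm{E}_t$), and the one-dimensional Gaussian computation giving $\mathbb{E}\big[(\bm{a}^{\top}\bm{x}_{\star})^2\mathds{1}_{\mathrm{disagree}}\big]\asymp\theta^3\|\bm{x}_{\star}\|_2^2$ rather than the naive $\theta\|\bm{x}_{\star}\|_2^2$ is the crucial quantitative input that ultimately gives a contraction factor $\rho<1$ once $c_4$ is taken small enough.

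Where the proposal has a genuine gap is precisely the step you flag, and the gap is not merely cosmetic. First, the count is off: hyperplane sign patterns on $m$ points number $O((em/n)^n)$, not $O(nm)$; this is still beatable by $e^{-cm}$ when $m\ge c_0 n$ with $c_0$ large, but the exponent must be tracked. Second, and more substantively, ``union over sign patterns, then concentrate $\sum_{i\in S}(\bm{a}_i^{\top}\bm{x}_{\star})^2$'' does not close as written: the disagreement set $S=S(\bm{x},\bm{A})$ is itself a function of the $\bm{a}_i$, and conditionally on $i\in S$ the magnitude $|\bm{a}_i^{\top}\bm{x}_{\star}|$ is far from a typical Gaussian draw (it is constrained to the wedge, which is exactly what produces the extra factor of $\theta$). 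Freezing $S$ and concentrating the sum as if the summands were unconditioned would give a bound of order $|S|\cdot\|\bm{x}_{\star}\|_2^2\asymp m\theta\|\bm{x}_{\star}\|_2^2$, losing the $\theta^2$ you need. A working argument must keep the conditional structure inside the uniformity step, e.g.\ by decomposing the disagreement set into dyadic bands according to $|\bm{a}_i^{\top}\bm{x}_{\star}|$ and chaining over both the direction of $\bm{x}$ and the band index, or by the analogous device in Waldspurger's proof. Your monotonicity observation (that disagreements of $\bm{x}$ are contained in disagreements of a net point $\bm{x}'$ plus a thin slab $\{|\bm{a}_i^{\top}\bm{x}'|\lesssim\varepsilon\|\bm{a}_i\|_2\}$) is the correct tool for controlling the net error, but bounding the slab contribution uniformly runs into the same conditional-magnitude issue. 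In short: the reduction and the pointwise lemma are right and match the cited approach; the uniform concentration needs a band/chaining decomposition that a vanilla net-plus-VC argument does not supply.
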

\begin{remark}
	AltMin for phase retrieval was first analyzed by \cite{netrapalli2015phase} but for a sample-splitting variant; that is, each iteration employs fresh samples, which facilitates analysis but is not the version used in practice. The theoretical guarantee for the original sample-reuse version was derived by \cite{waldspurger2016phase}. 
\end{remark}
In view of Theorem \ref{thm:AltMin-PR}, alternating minimization, if carefully initialized, achieves optimal sample and computational complexities (up to some logarithmic factor) all at once.    This in turn explains its appealing performance in practice.

\subsubsection{Matrix completion} 
Consider the matrix completion problem in \eqref{eq:MC-empirical-risk-asym}. Starting with a proper initialization $(\bL_0,\bR_0)$, AltMin proceeds as follows: for $t=1,2,\ldots$
\begin{subequations} 
	\label{eq:original_altmin_mc}
\begin{align}
\bR_{t}  & = \argmin_{\bR\in\mathbb{R}^{n_2\times r}} \; \left\| \mathcal{P}_{\Omega}(\bL_{t-1}\bR^{\top} - \bM_{\star})  \right\|_{\mathrm{F}}^2, \\
\bL_{t} &  = \argmin_{\bL\in\mathbb{R}^{n_1\times r} } \; \left\| \mathcal{P}_{\Omega}(\bL\bR_t^{\top} - \bM_{\star})  \right\|_{\mathrm{F}}^2,
\end{align}
\end{subequations}
where $\mathcal{P}_{\Omega}$ is defined in \eqref{defn:Pomega}. 
Despite its popularity in practice \cite{hastie2015matrix}, a clean analysis of the above update rule is still missing to date. 
Several modifications have been proposed and analyzed in the literature, primarily to bypass mathematical difficulty:
\begin{itemize}
\item {\bf Sample splitting.} Instead of reusing the same set of samples across all iterations, this approach draws a fresh set of samples at every iteration and performs AltMin on the new samples
\cite{keshavan2012efficient,jain2013low,hardt2014understanding,hardt2014fast,zhao2015nonconvex}:
\begin{align*}
\bR_{t}  & = \argmin_{\bR\in\mathbb{R}^{n_2\times r} } \; \left\| \mathcal{P}_{\Omega_t}(\bL_{t-1}\bR^{\top} - \bM_{\star})  \right\|_{\mathrm{F}}^2, \\
\bL_{t} &  = \argmin_{\bL\in\mathbb{R}^{n_1\times r}} \; \left\| \mathcal{P}_{\Omega_t}(\bL\bR_t^{\top} - \bM_{\star})  \right\|_{\mathrm{F}}^2,
\end{align*}
where $\Omega_t$ denotes the sampling set used in the $t$th iteration, which is assumed to be statistically independent across iterations. It is proven in \cite{jain2013low} that under an appropriate initialization, the output satisfies $ \left\| \bM_{\star} - \bL_t \bR_t^{\top} \right\|_{\mathrm{F}} \leq \varepsilon$ after $t\gtrsim \log(\|\bM_{\star}\|_{\mathrm{F}}/\varepsilon)$ iterations,  provided that the sample complexity exceeds $n^2 p \gtrsim \mu^4 \kappa^6 n r^7 \log n\, \log(r\|\bM_{\star}\|_{\mathrm{F}}/\varepsilon)$. Such a sample-splitting operation ensures statistical independence across iterations, which helps  to control the incoherence of the iterates.  However, this necessarily results in undesirable  dependency between the sample complexity and the target accuracy; for example, an infinite number of samples is needed if the goal is to achieve exact recovery. 

\item {\bf Regularization.} Another strategy is to apply AltMin to the regularized loss function in \eqref{eq:regularized_mc_loss} \cite{sun2016guaranteed}:
\begin{subequations}
\begin{align}
	\bm{R}_{t} & =\argmin_{\bm{R}\in\mathbb{R}^{n_{2}\times r}} f_{\mathrm{reg}}(\bm{L}_{t-1},\bm{R}), \label{eq:alt-min-reg-R} \\
	\bm{L}_{t} & =\argmin_{\bm{L}\in\mathbb{R}^{n_{2}\times r}} f_{\mathrm{reg}}(\bm{L},\bm{R}_{t})	\label{eq:alt-min-reg-L}.
\end{align}
\end{subequations}
In \cite{sun2016guaranteed}, it is shown that the AltMin without resampling converges to $\bm{M}_{\star}$,  with the proviso that the sample complexity exceeds $n^2 p \gtrsim \mu^2 \kappa^6 nr^7 \log n$. Note that the subproblems \eqref{eq:alt-min-reg-R} and \eqref{eq:alt-min-reg-L} do not have closed-form solutions.  For properly chosen regularization functions, they might be solved using convex optimization algorithms. 
\end{itemize}

It remains an open problem to establish theoretical guarantees for the original form of AltMin \eqref{eq:original_altmin_mc}. Meanwhile, the existing sample complexity guarantees are quite sub-optimal in terms of the dependency on $r$ and $\kappa$,  and should not be taken as an indicator of the actual performance of AltMin.

\subsection{Singular value projection}
\label{sec:SVP}

Another popular approach to solve \eqref{eq:empirical-risk-min-altmin} is singular value projection (SVP) \cite{jain2010guaranteed,oymak2018sharp,ding2018leave}. In contrast to the algorithms discussed so far,  SVP performs gradient descent in the {\em full matrix space} and then applies a partial singular value decomposition (SVD) to retain the low-rank structure. Specifically, it adopts the update rule
	\begin{align}
		\label{eq:svp}
		\bM_{t+1}  &= \mathcal{P}_r \Big( \bM_t - \eta_t \nabla f( \bM_t ) \Big) , \qquad t=0,1,\cdots
	\end{align}
where 
\begin{equation*}
f(\bm{M}):=\begin{cases}
	\frac{1}{2} \| \mathcal{A} (\bm{M}) - \mathcal{A} (\bm{M}_{\star}) \|_{\mathrm{F}}^2 , & \text{matrix sensing},\\
\frac{1}{2p}\|\mathcal{P}_{\Omega}(\bm{M})-\mathcal{P}_{\Omega}(\bm{M}_{\star})\|_{\mathrm{F}}^{2}, & \text{matrix completion.}
\end{cases}
\end{equation*}
Here, 
$\eta_t$ is the step size, and $\mathcal{P}_r(\bZ)$ returns the best rank-$r$ approximation of $\bZ$. Given that the iterates $\bM_t$ are always low-rank, one can store $\bM_t$ in a memory-efficient manner by storing its compact SVD. 

The SVP algorithm is a popular approach for matrix sensing and matrix completion, where the partial SVD can be calculated using Krylov subspace methods (e.g.~Lanczos algorithm) \cite{trefethen1997numerical} or the randomized linear algebra algorithms \cite{halko2011finding}. The following theorem establishes performance guarantees for SVP.
\begin{theorem}[$\mathsf{SVP~for~matrix~sensing}$ \cite{jain2010guaranteed}]
\label{thm:SVP-matrix-sensing}
	Consider the problem \eqref{eq:empirical-risk-min-altmin} and suppose the operator \eqref{eq:defn-A-sensing} satisfies $2r$-RIP for  RIP constant $\delta_{2r}\leq 1/3$. If we initialize $\bM_0=\bm{0}$ and adopt a step size $\eta_t\equiv 1 / (1+\delta_{2r})$, then SVP achieves
	\[ 
		\left\| \bM_t - \bM_{\star}  \right\|_{\mathrm{F}} \leq \varepsilon	
	\]
as long as $t\geq c_1\log(\|\bM_{\star}\|_{\mathrm{F}}/\varepsilon)$ for some constant $c_1>0$.
\end{theorem}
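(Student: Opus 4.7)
My plan is to follow a standard iterative-hard-thresholding-style contraction argument. Let $\bm{H}_t := \bm{M}_t - \bm{M}_\star$ and $\bm{Y}_t := \bm{M}_t - \eta_t \mathcal{A}^*\mathcal{A}(\bm{H}_t)$, so that $\bm{M}_{t+1} = \mathcal{P}_r(\bm{Y}_t)$. The goal is to establish a per-iteration contraction $\|\bm{H}_{t+1}\|_{\mathrm{F}} \leq \rho \|\bm{H}_t\|_{\mathrm{F}}$ with some $\rho < 1$, and then iterate from $\|\bm{H}_0\|_{\mathrm{F}} = \|\bm{M}_\star\|_{\mathrm{F}}$ (which follows from the initialization $\bm{M}_0 = \bm{0}$).

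First, I would exploit the best-rank-$r$-approximation property of $\mathcal{P}_r$. Because $\bm{M}_\star$ has rank $r$ and $\bm{M}_{t+1}$ is the nearest rank-$r$ matrix to $\bm{Y}_t$ in Frobenius norm (Eckart--Young--Mirsky), one has $\|\bm{M}_{t+1} - \bm{Y}_t\|_{\mathrm{F}} \leq \|\bm{M}_\star - \bm{Y}_t\|_{\mathrm{F}}$. A triangle inequality then yields the crude but useful bound
\[
\|\bm{H}_{t+1}\|_{\mathrm{F}} \leq 2\|\bm{Y}_t - \bm{M}_\star\|_{\mathrm{F}} = 2\|(\mathcal{I} - \eta_t \mathcal{A}^*\mathcal{A})\bm{H}_t\|_{\mathrm{F}},
\]
reducing matters to bounding the residual operator applied to $\bm{H}_t$.

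Next, I would control this residual via the RIP. The key observation is that $\bm{H}_t$ has rank at most $2r$, so I can express its Frobenius norm through duality by testing against a rank-$\leq 2r$ matrix $\bm{Z}$ of unit Frobenius norm:
\[
\langle \bm{Z}, (\mathcal{I} - \eta_t\mathcal{A}^*\mathcal{A})\bm{H}_t\rangle = (1-\eta_t)\langle \bm{Z}, \bm{H}_t\rangle - \eta_t\bigl[\langle \mathcal{A}(\bm{Z}), \mathcal{A}(\bm{H}_t)\rangle - \langle \bm{Z}, \bm{H}_t\rangle\bigr].
\]
By Lemma~\ref{lemmq:RIP-cross}, the bracketed cross term is bounded by $\delta\,\|\bm{Z}\|_{\mathrm{F}}\|\bm{H}_t\|_{\mathrm{F}}$ for an RIP constant $\delta$ controlled by $\delta_{2r}$. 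Choosing $\eta_t = 1/(1+\delta_{2r})$ makes the two error contributions $|1-\eta_t|$ and $\eta_t \delta$ of the same order, producing a bound of the form $c\,\delta_{2r}/(1+\delta_{2r})\,\|\bm{H}_t\|_{\mathrm{F}}$. Combined with the factor of $2$ from the projection step, this gives $\rho = C\,\delta_{2r}/(1+\delta_{2r})$, which is strictly less than $1$ under $\delta_{2r}\leq 1/3$ for a suitable absolute constant.

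Once the per-iteration contraction is in hand, the initialization $\bm{M}_0 = \bm{0}$ yields $\|\bm{H}_t\|_{\mathrm{F}} \leq \rho^t \|\bm{M}_\star\|_{\mathrm{F}}$ by induction, and solving $\rho^t \|\bm{M}_\star\|_{\mathrm{F}} \leq \varepsilon$ produces the advertised $O(\log(\|\bm{M}_\star\|_{\mathrm{F}}/\varepsilon))$ iteration complexity. The main technical obstacle is the second step: because $\mathcal{A}^*\mathcal{A}(\bm{H}_t)$ is generally \emph{not} low-rank, one cannot directly apply the RIP norm bound to $\|(\mathcal{I} - \eta_t\mathcal{A}^*\mathcal{A})\bm{H}_t\|_{\mathrm{F}}$; the duality trick with a rank-$\leq 2r$ test matrix above is precisely what circumvents this. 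A related subtlety is rank accounting in the cross term: strictly, Lemma~\ref{lemmq:RIP-cross} applied to two rank-$\leq 2r$ matrices involves $\delta_{4r}$, so closing the argument cleanly with exactly $\delta_{2r}\leq 1/3$ requires either a tangent-space refinement that projects onto the row and column spaces jointly spanned by $\bm{H}_t$ and $\bm{H}_{t+1}$, or a mild inflation of the RIP order that can be absorbed into the theorem's constant.
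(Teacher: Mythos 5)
Your plan has a genuine gap in the second step. You write that "$\bm{H}_t$ has rank at most $2r$, so I can express its Frobenius norm through duality by testing against a rank-$\leq 2r$ matrix $\bm{Z}$" — but the matrix you actually need to bound, namely $(\mathcal{I}-\eta_t\mathcal{A}^*\mathcal{A})\bm{H}_t$, is \emph{not} rank-$\leq 2r$ (indeed $\mathcal{A}^*\mathcal{A}(\bm{H}_t)$ is generically full rank, as you yourself note). Testing against low-rank $\bm{Z}$ only controls the best rank-$2r$ approximation of $(\mathcal{I}-\eta_t\mathcal{A}^*\mathcal{A})\bm{H}_t$, which is strictly smaller than its Frobenius norm, so the chain of inequalities breaks. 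The low-rank-duality trick does not ``circumvent'' the obstacle — it changes which quantity is being bounded. The standard way to salvage an IHT-style argument is to not go through the crude triangle inequality $\|\bm{H}_{t+1}\|_{\mathrm{F}}\le 2\|\bm{Y}_t-\bm{M}_\star\|_{\mathrm{F}}$ at all, but instead use the sharper consequence of optimality of $\bm{M}_{t+1}=\mathcal{P}_r(\bm{Y}_t)$, namely $\|\bm{H}_{t+1}\|_{\mathrm{F}}^2 \le 2\langle \bm{H}_{t+1}, (\mathcal{I}-\eta_t\mathcal{A}^*\mathcal{A})\bm{H}_t\rangle$, where the ``test matrix'' $\bm{H}_{t+1}$ now genuinely has rank $\le 2r$. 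But then the cross-term bound via Lemma~\ref{lemmq:RIP-cross} involves two rank-$\le 2r$ matrices and hence requires $\delta_{4r}$, yielding a hypothesis of roughly $\delta_{2r}+2\delta_{4r}<1$ rather than $\delta_{2r}\le 1/3$. This is not a constant that can be ``absorbed''; the RIP order and threshold sit in the \emph{hypothesis} of the theorem, not in the iteration-count constant $c_1$, so the result you would actually prove is a genuinely weaker statement.

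The original proof (Jain, Meka, Dhillon) avoids this entirely by tracking the \emph{objective value} $\psi(\bm{M})=\|\mathcal{A}(\bm{M})-\bm{b}\|_2^2$ rather than the iterate error, and exploiting the fact that $\bm{M}_{t+1}$ is the exact rank-$r$ minimizer of the RIP-based quadratic majorizer $\psi(\bm{M}_t)+\langle\nabla\psi(\bm{M}_t),\bm{M}-\bm{M}_t\rangle+(1+\delta_{2r})\|\bm{M}-\bm{M}_t\|_{\mathrm{F}}^2$. Substituting the feasible competitor $\bm{M}_\star$ into this majorizer, and then using the lower RIP bound on the rank-$\le 2r$ matrix $\bm{M}_\star-\bm{M}_t$, yields $\psi(\bm{M}_{t+1})\le \tfrac{2\delta_{2r}}{1-\delta_{2r}}\psi(\bm{M}_t)$, a geometric decay that only ever invokes RIP of order $2r$. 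The conversion from decay in $\psi$ to decay in $\|\bm{M}_t-\bm{M}_\star\|_{\mathrm{F}}$ again uses only $\delta_{2r}$. This majorize-minimize viewpoint is the key idea that makes $\delta_{2r}\le 1/3$ suffice; I would encourage you to rework your argument along these lines.
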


\begin{theorem}[$\mathsf{SVP~for~matrix~completion}$ \cite{ding2018leave}]
\label{thm:SVP-matrix-completion}
Consider the problem \eqref{eq:MC-empirical-risk} and set $\bM_0=\boldsymbol{0}$.  
	Suppose that $n^{2}p\geq c_0\kappa^{6}\mu^4 r^6 n\log n$ for some large
constant $c_0>0$, and that the step size is set as $\eta_t \equiv 1$. Then with probability exceeding $1-O\left(n^{-10}\right)$, the SVP iterates achieve
	\[ 
		\left\| \bM_t - \bM_{\star}  \right\|_{\infty} \leq \varepsilon	
	\]
	as long as $t\geq c_1\log(\|\bM_{\star}\|/\varepsilon)$ for some constant $c_1>0$.
\end{theorem}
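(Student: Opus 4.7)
The plan is to carry out an induction on $t$ that simultaneously tracks three error metrics of $\bm{M}_t$ with respect to $\bm{M}_{\star}$: the Frobenius error $\|\bm{M}_t-\bm{M}_{\star}\|_{\mathrm{F}}$, the spectral error $\|\bm{M}_t-\bm{M}_{\star}\|$, and the entrywise error $\|\bm{M}_t-\bm{M}_{\star}\|_{\infty}$. Each of the three is required to contract geometrically, say by a factor of $\rho\in(0,1)$ that is independent of the problem dimensions, although the constants multiplying $\rho^t$ will depend polynomially on $\mu,r,\kappa$. The $\ell_{\infty}$ hypothesis is what certifies that every iterate $\bm{M}_t$ remains $O(\mu)$-incoherent in the sense of Definition~\ref{def:mc-incoherence}, which is indispensable because the matrix-completion sampling operator $p^{-1}\mathcal{P}_{\Omega}$ only behaves nearly-isometrically when applied to low-rank matrices that are incoherent with the standard basis.

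For the inductive step, I rewrite the SVP update as
\[
\bm{M}_{t+1} \;=\; \mathcal{P}_r\!\left(\bm{M}_{\star} + \bm{E}_t\right), \qquad \bm{E}_t := \bigl(\mathcal{I} - p^{-1}\mathcal{P}_{\Omega}\bigr)(\bm{M}_t-\bm{M}_{\star}),
\]
so the task splits into two parts. First, since $\bm{M}_t-\bm{M}_{\star}$ has rank at most $2r$ and, by the inductive hypothesis, is incoherent, a Bernstein-type concentration bound for $p^{-1}\mathcal{P}_{\Omega}$ restricted to incoherent low-rank matrices shows that $\|\bm{E}_t\|_{\mathrm{F}}$ and $\|\bm{E}_t\|$ are both a small fraction of $\|\bm{M}_t-\bm{M}_{\star}\|_{\mathrm{F}}$ and $\|\bm{M}_t-\bm{M}_{\star}\|$ respectively, provided $n^2p \gtrsim \kappa^6\mu^4 r^6 n\log n$. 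Second, since $\bm{M}_{\star}$ itself has rank at most $r$, the Eckart-Young inequality gives $\|\mathcal{P}_r(\bm{M}_{\star}+\bm{E}_t)-\bm{M}_{\star}\|_{\mathrm{F}}\le 2\|\bm{E}_t\|_{\mathrm{F}}$ and an analogous spectral-norm bound. Combining these two observations closes the Frobenius and spectral parts of the induction.

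The main obstacle is propagating the $\ell_{\infty}$ bound through the nonlinear projection $\mathcal{P}_r$, which has no a priori entrywise stability. To handle this, I invoke a leave-one-out construction: for every $l\in\{1,\ldots,n\}$, define an auxiliary sequence $\{\bm{M}_t^{(l)}\}$ obtained by running the identical SVP recursion except that the sampling pattern on the $l$th row and $l$th column is replaced by its deterministic population version (effectively integrating out the randomness on that row/column). By construction, $\bm{M}_t^{(l)}$ is statistically independent of the entries of $\Omega$ restricted to the $l$th row and column. One then establishes by a parallel induction two claims: $(i)$ $\|\bm{M}_t-\bm{M}_t^{(l)}\|_{\mathrm{F}}$ is small because the two recursions differ only through a sparse perturbation at each step, propagated through the contractive Frobenius analysis above; and $(ii)$ $\|(\bm{M}_t^{(l)}-\bm{M}_{\star})\bm{e}_l\|_2$ is small because conditional on $\bm{M}_t^{(l)}$ the relevant sampling variables are independent of it, permitting a standard vector Bernstein bound. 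The triangle inequality then bounds the $l$th row of $\bm{M}_t-\bm{M}_{\star}$ in $\ell_2$, and symmetry between rows and columns yields the $\ell_{\infty}$ bound.

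Once all three inductive hypotheses are re-established at step $t+1$, the recursion closes with contraction factor $\rho$ independent of $n$ and $\kappa$, so after $t\asymp\log(\|\bm{M}_{\star}\|/\varepsilon)$ iterations the $\ell_{\infty}$ error falls below $\varepsilon$; the final probability bound follows from a union bound over the $O(n)$ leave-one-out sequences and the $O(\log(1/\varepsilon))$ iterations. The sample complexity $n^2p\gtrsim \kappa^6\mu^4 r^6 n\log n$ arises as the worst polynomial aggregation across all concentration inequalities invoked, and the stringent $\kappa^6$ dependence is dictated by the need to preserve the $r$th singular value gap of $\bm{M}_{\star}$ through each $\mathcal{P}_r$ step via a Weyl/Davis-Kahan-type argument, which becomes delicate when the condition number is large.
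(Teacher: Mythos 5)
The paper does not reproduce a proof of this theorem; it imports it from the cited reference \cite{ding2018leave}, whose title and role in the surrounding discussion (it appears in the list of leave-one-out papers following Theorem~\ref{thm:main-BD}) identify it as a leave-one-out analysis of SVP. Your plan matches that approach in all of its structural features: the reformulation $\bm{M}_{t+1}=\mathcal{P}_r(\bm{M}_\star+\bm{E}_t)$ with $\bm{E}_t=(\mathcal{I}-p^{-1}\mathcal{P}_\Omega)(\bm{M}_t-\bm{M}_\star)$; the use of Eckart--Young to get $\|\mathcal{P}_r(\bm{M}_\star+\bm{E}_t)-\bm{M}_\star\|\le 2\|\bm{E}_t\|$ (and its Frobenius analogue); the observation that $p^{-1}\mathcal{P}_\Omega$ concentrates only on incoherent low-rank inputs, which forces you to track an entrywise (or row-wise $\ell_2$) error alongside the spectral and Frobenius ones; the need for leave-one-out sequences $\bm{M}_t^{(l)}$ to break the statistical dependence between the iterate and the sampling on row/column $l$; and a sample complexity that aggregates polynomial factors in $\mu,r,\kappa$. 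This is, as best as I can reconstruct, the same skeleton as the cited reference.

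The one place you pass over the real difficulty is your step~(ii), the bound on $\|(\bm{M}_t^{(l)}-\bm{M}_\star)\bm{e}_l\|_2$. You write that this follows from a ``standard vector Bernstein bound'' because $\bm{M}_t^{(l)}$ is independent of the sampling in row/column $l$. But $\bm{M}_t^{(l)}$ \emph{does not involve} the sampling variables in row/column $l$ at all --- that is the whole point of the construction --- so there is no sum of (conditionally) independent terms indexed by those variables to which a Bernstein inequality would apply. The actual mechanism is different: $\bm{M}_t^{(l)}=\mathcal{P}_r(\bm{M}_\star+\bm{E}_{t-1}^{(l)})$ where, because row/column $l$ is fully observed in the leave-one-out recursion, $\bm{E}_{t-1}^{(l)}$ has a vanishing $l$th row and column. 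The pre-projection matrix therefore agrees with $\bm{M}_\star$ on that row/column, and the entire issue is how the nonlinear projection $\mathcal{P}_r$ can pollute that row/column. What is really needed is a \emph{row-wise perturbation bound for the top-$r$ singular subspaces} of $\bm{M}_\star+\bm{E}_{t-1}^{(l)}$ versus $\bm{M}_\star$, of Abbe--Fan--Wang--Zhong / Davis--Kahan-with-$\ell_{2,\infty}$ type, combined with the independence to control the cross terms. That perturbation analysis of $\mathcal{P}_r$ (not a Bernstein estimate) is the technical heart of the result, and is also where the $\kappa$-dependence enters more sharply than your closing sentence suggests: the singular gap of the perturbed matrix, not just of $\bm{M}_\star$, must be preserved along the iteration. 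So: the plan is the right plan, but it is thin precisely on the step that makes this theorem nontrivial.
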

Theorem~\ref{thm:SVP-matrix-sensing} and Theorem~\ref{thm:SVP-matrix-completion} indicate that SVP converges linearly as soon as the sample size is sufficiently large. 

Further, the SVP operation is particularly helpful in enabling optimal uncertainty quantification and  inference for noisy matrix completion (e.g.~constructing a valid and short confidence interval for an entry of the unknown matrix).  The interested readers are referred to \cite{chen2019inference} for details.

\subsection{Further pointers to other algorithms}
A few other nonconvex matrix factorization algorithms have been left out due to space, including but not limited to normalized iterative hard thresholding (NIHT) \cite{tanner2013normalized},  atomic decomposition for minimum rank approximation (Admira)
\cite{lee2010admira}, composite optimization (e.g.~prox-linear algorithm) \cite{duchi2017solving,duchi2017stochastic,charisopoulos2019lowrank,charisopoulos2019composite}, approximate message passing \cite{donoho2013phase,ma2018optimization,ma2018approximate}, block coordinate descent \cite{sun2016guaranteed},  coordinate descent \cite{zeng2017coordinate}, and conjugate gradient \cite{pinilla2018phase}. The readers are referred to these papers for detailed descriptions.


\section{Initialization via spectral methods}
\label{sec:spectral-initialization}

The theoretical performance guarantees presented in the last three sections rely heavily on proper initialization. One popular  scheme that often generates a reasonably good initial estimate  is called the spectral method. Informally, this strategy starts by arranging the data samples into a matrix $\bm{Y}$ of the form 
\begin{equation}
	\label{eq:perturbed}
	\bm{Y} = \bm{Y}_{\star} + \bm{\Delta},
\end{equation}
where $\bm{Y}_{\star}$ represents certain large-sample limit  whose eigenspace\,/\,singular subspaces reveal the truth, and $\bm{\Delta}$ captures the fluctuation due to the finite-sample effect. One  then attempts to estimate the truth by computing the eigenspace\,/\,singular subspace of $\bm{Y}$, provided that the finite-sample fluctuation is well-controlled. 
This simple strategy has  proven to be quite powerful and versatile in providing a ``warm start'' for many nonconvex matrix factorization algorithms. 

\subsection{Preliminaries: matrix perturbation theory}
\label{sec:perturbation}

Understanding the performance of the spectral method requires some elementary toolkits regarding eigenspace\,/\,singular subspace perturbations, which we review in this subsection.  


To begin with, let $\bm{Y}_{\star}\in \mathbb{R}^{n\times n}$ be a symmetric matrix, whose eigenvalues are real-valued.  In many cases, we only have access to a perturbed version  $\bm{Y}$ of $\bm{Y}_{\star}$ (cf.~\eqref{eq:perturbed}), where the perturbation $\bm{\Delta}$ is a ``small'' symmetric matrix. How do the eigenvectors of $\bm{Y}$ change as a result of such a perturbation? 



As it turns out, the eigenspace of $\bm{Y}$ is a stable estimate of the eigenspace of $\bm{Y}_{\star}$, with the proviso that the perturbation is sufficiently small in size. This was first established in the celebrated Davis-Kahan $\sin\bm{\Theta}$ Theorem \cite{davis1970rotation}. Specifically,  
let the eigenvalues of $\bm{Y}_{\star}$ be partitioned into two groups
\[
\underbrace{\lambda_1(\bm{Y}_{\star}) \ge  \ldots \lambda_r(\bm{Y}_{\star})}_{\text{Group 1}} > \underbrace{\lambda_{r+1}(\bm{Y}_{\star}) \ge \ldots \lambda_n(\bm{Y}_{\star})}_{\text{Group 2}},
\]
where $1 \le r < n$. We assume that the eigen-gap between the two groups, $\lambda_r(\bm{Y}_{\star}) - \lambda_{r+1}(\bm{Y}_{\star})$, is strictly positive. For example, if $\bm{Y}_{\star}\succeq \bm{0}$ and has rank $r$, then all the eigenvalues in the second group are identically zero. 

Suppose we wish to estimate the eigenspace associated with the first group.  Denote by $\bm{U}_{\star} \in \mathbb{R}^{n\times r}$ (resp.~$\bm{U} \in \mathbb{R}^{n\times r}$) an orthonormal matrix  whose columns are the first $r$ eigenvectors of $\bm{Y}_{\star}$ (resp.~$\bm{Y}$).  In order to measure  the distance between the two subspaces spanned by $\bm{U}_{\star}$ and $\bm{U}$, we introduce the following metric that accounts for global orthonormal transformation
%
\begin{equation}
	\label{eq:dist_sin}
	\mathsf{dist}_{\mathrm{p}} (\bm{U}, \bm{U}_{\star}) := \norm{\bm{U} \bm{U}^\top - \bm{U}_{\star} \bm{U}_{\star}^\top}.
\end{equation}
%
%
%

\begin{theorem}[$\mathsf{Davis}$-$\mathsf{Kahan}$ $\sin\bm{\Theta}$ $\mathsf{Theorem}$ \cite{davis1970rotation}]
	\label{thm:DK} 
	If $\| \bm{\Delta} \| < \lambda_r(\bm{Y}_{\star}) - \lambda_{r+1}(\bm{Y}_{\star})$,
then
\begin{equation}
	\label{eq:DK}
	\mathsf{dist}_{\mathrm{p}} (\bm{U}, \bm{U}_{\star}) \le \frac{\| \bm{\Delta} \|}{\lambda_r(\bm{Y}_{\star}) - \lambda_{r+1}(\bm{Y}_{\star}) - \| \bm{\Delta} \|}.
\end{equation}
\end{theorem}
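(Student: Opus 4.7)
My plan is to prove the theorem via the classical Sylvester-equation argument, which is the standard route to the Davis--Kahan $\sin\bm{\Theta}$ bound, coupled with an auxiliary subspace-geometry identity that converts the statement into a bound on a cross-product $(\bm{U}_\star^\perp)^\top \bm{U}$.

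Let $\bm{U}_\star^\perp \in \mathbb{R}^{n\times(n-r)}$ be an orthonormal completion of $\bm{U}_\star$ and record the relevant eigendecompositions as $\bm{Y}_\star \bm{U}_\star = \bm{U}_\star \bm{\Lambda}_{\star,1}$, $(\bm{U}_\star^\perp)^\top \bm{Y}_\star = \bm{\Lambda}_{\star,2}(\bm{U}_\star^\perp)^\top$, and $\bm{Y}\bm{U} = \bm{U}\bm{\Lambda}_1$, where $\bm{\Lambda}_{\star,1},\bm{\Lambda}_1$ are diagonal matrices with the top-$r$ eigenvalues and $\bm{\Lambda}_{\star,2}$ is diagonal with the bottom $n-r$ eigenvalues of $\bm{Y}_\star$. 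First I would left-multiply $\bm{Y}\bm{U} = \bm{U}\bm{\Lambda}_1$ by $(\bm{U}_\star^\perp)^\top$, substitute $\bm{Y} = \bm{Y}_\star + \bm{\Delta}$, and invoke the identity $(\bm{U}_\star^\perp)^\top \bm{Y}_\star = \bm{\Lambda}_{\star,2}(\bm{U}_\star^\perp)^\top$ to obtain
\begin{equation*}
\bm{\Lambda}_{\star,2}\,(\bm{U}_\star^\perp)^\top\bm{U} \;-\; (\bm{U}_\star^\perp)^\top \bm{U}\,\bm{\Lambda}_1 \;=\; -\,(\bm{U}_\star^\perp)^\top \bm{\Delta}\, \bm{U},
\end{equation*}
a Sylvester equation in the unknown $\bm{X} := (\bm{U}_\star^\perp)^\top \bm{U}$.

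Next I would control $\|\bm{X}\|$ using the spectral separation of the two coefficient matrices. By Weyl's inequality $\lambda_r(\bm{Y}) \geq \lambda_r(\bm{Y}_\star) - \|\bm{\Delta}\|$, so every eigenvalue on the diagonal of $\bm{\Lambda}_1$ is at least $\lambda_r(\bm{Y}_\star) - \|\bm{\Delta}\|$, whereas every eigenvalue on the diagonal of $\bm{\Lambda}_{\star,2}$ is at most $\lambda_{r+1}(\bm{Y}_\star)$. Under the hypothesis $\|\bm{\Delta}\| < \lambda_r(\bm{Y}_\star)-\lambda_{r+1}(\bm{Y}_\star)$, the two spectra are therefore disjoint with gap at least $\delta := \lambda_r(\bm{Y}_\star)-\lambda_{r+1}(\bm{Y}_\star) - \|\bm{\Delta}\| > 0$. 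Invoking the standard operator-norm bound for Sylvester equations with Hermitian coefficients whose spectra are separated by $\delta$ (see, e.g., Bhatia's \emph{Matrix Analysis}, Ch.~VII), together with $\|(\bm{U}_\star^\perp)^\top \bm{\Delta}\, \bm{U}\| \leq \|\bm{\Delta}\|$, yields
\begin{equation*}
\|\bm{X}\| \;\leq\; \frac{\|\bm{\Delta}\|}{\lambda_r(\bm{Y}_\star)-\lambda_{r+1}(\bm{Y}_\star) - \|\bm{\Delta}\|}.
\end{equation*}

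Finally I would translate $\|\bm{X}\|$ into the target quantity via the identity
\begin{equation*}
\|\bm{U}\bm{U}^\top - \bm{U}_\star\bm{U}_\star^\top\| \;=\; \|(\bm{U}_\star^\perp)^\top \bm{U}\|,
\end{equation*}
valid whenever $\bm{U},\bm{U}_\star\in\mathbb{R}^{n\times r}$ have orthonormal columns. The justification proceeds via principal angles: the singular values of $(\bm{U}_\star^\perp)^\top \bm{U}$ are the sines of the principal angles between $\mathrm{range}(\bm{U})$ and $\mathrm{range}(\bm{U}_\star)$, and the same sines comprise the nonzero singular values of $\bm{U}\bm{U}^\top-\bm{U}_\star\bm{U}_\star^\top$. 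Chaining the three displays then delivers the advertised bound.

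The main obstacle I anticipate is the last identity: verifying cleanly that the operator-norm distance between two equal-rank orthogonal projections equals the largest principal sine (rather than, say, twice it) requires a short CS-decomposition-style argument applied to the block matrix $[\bm{U}_\star,\bm{U}_\star^\perp]^\top[\bm{U},\bm{U}^\perp]$. Once this bridge is in place, the Sylvester step --- after Weyl's inequality secures the perturbed gap --- is entirely routine.
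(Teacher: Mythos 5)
The paper does not actually prove this theorem; it states the Davis--Kahan $\sin\bm{\Theta}$ bound as a cited classical result (Davis and Kahan, 1970) and derives nothing beyond the remark that \eqref{eq:DK} follows from the sharper bound \eqref{eq:DK_original} via Weyl's inequality. So there is no proof in the paper to compare against. That said, your Sylvester-equation argument is the standard textbook proof and it is correct: multiplying $\bm{Y}\bm{U}=\bm{U}\bm{\Lambda}_1$ on the left by $(\bm{U}_\star^\perp)^\top$ and substituting $\bm{Y}=\bm{Y}_\star+\bm{\Delta}$ yields a Sylvester equation in $\bm{X}=(\bm{U}_\star^\perp)^\top\bm{U}$ whose two Hermitian coefficient matrices have spectra contained in the disjoint intervals $(-\infty,\lambda_{r+1}(\bm{Y}_\star)]$ and $[\lambda_r(\bm{Y}),+\infty)$. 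Weyl gives $\lambda_r(\bm{Y})\ge\lambda_r(\bm{Y}_\star)-\|\bm{\Delta}\|$, producing an interval gap of at least $\delta=\lambda_r(\bm{Y}_\star)-\lambda_{r+1}(\bm{Y}_\star)-\|\bm{\Delta}\|>0$, after which the Bhatia--Davis--McIntosh operator-norm bound for Sylvester equations gives $\|\bm{X}\|\le\|\bm{\Delta}\|/\delta$. Your worry about the final identity $\|\bm{U}\bm{U}^\top-\bm{U}_\star\bm{U}_\star^\top\|=\|(\bm{U}_\star^\perp)^\top\bm{U}\|$ is warranted but resolvable exactly as you describe: the CS decomposition shows the difference of two rank-$r$ orthogonal projections acts on each principal-angle plane as $\sin\theta_i$ times a reflection, so its singular values are the principal sines, each with multiplicity two, and the operator norms agree. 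One small observation: your derivation bypasses the intermediate form \eqref{eq:DK_original} entirely and lands directly on \eqref{eq:DK}, whereas the paper's remark presents \eqref{eq:DK} as a consequence of \eqref{eq:DK_original}; a symmetric variant of the same Sylvester computation (using $\bm{U}^\perp$ and $\bm{Y}_\star\bm{U}_\star=\bm{U}_\star\bm{\Lambda}_{\star,1}$) would instead give the denominator $\lambda_r(\bm{Y}_\star)-\lambda_{r+1}(\bm{Y})$ of \eqref{eq:DK_original}, which is slightly sharper since it spends Weyl on only one eigenvalue.
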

\begin{remark}
The bound we present in \eref{DK} is in fact a simplified, but slightly more user-friendly version of the original Davis-Kahan inequality. A more general result states that, if $\lambda_r(\bm{Y}_{\star}) - \lambda_{r+1}(\bm{Y}) > 0$, then
\begin{equation}
	\label{eq:DK_original}
	\mathsf{dist}_{\mathrm{p}} (\bm{U}, \bm{U}_{\star}) \le \frac{\| \bm{\Delta} \|}{\lambda_r(\bm{Y}_{\star}) - \lambda_{r+1}(\bm{Y})}.
\end{equation}
	These results are referred to as the $\sin \bm{\Theta}$ theorem because the distance metric $\mathsf{dist}_{\mathrm{p}} (\bm{U}, \bm{U}_{\star})$ is identical to $\max_{1\leq i\leq r} | \sin \theta_i|$, where $\{\theta_i\}_{1\leq i\leq r}$ are the so-called \emph{principal angles} \cite{Bjorck1973principal} between the two subspaces spanned by $\bm{U}$ and $\bm{U}_{\star}$, respectively.  
\end{remark}

%
%
%

Furthermore, to deal with asymmetric matrices, similar perturbation bounds can be obtained. Suppose that $\bm{Y}_{\star},\bm{Y},\bm{\Delta} \in \R^{n_1 \times n_2}$ in \eqref{eq:perturbed}. Let $\bm{U}_{\star}$ (resp.~$\bm{U}$) and $\bm{V}_{\star}$ (resp.~$\bm{V}$) consist respectively of the first $r$ left and right singular vectors of $\bm{Y}_{\star}$ (resp.~$\bm{Y}$). Then we have the celebrated Wedin $\sin\bm{\Theta}$ Theorem \cite{wedin1972perturbation} concerning perturbed singular subspaces. 

\begin{theorem}[$\mathsf{Wedin}$ $\sin\bm{\Theta}$ $\mathsf{Theorem}$ \cite{wedin1972perturbation}]
\label{thm:Wedin}
If $\| \bm{\Delta} \| < \sigma_r(\bm{Y}_{\star}) - \sigma_{r+1}(\bm{Y}_{\star})$,
then
\begin{align*}
	&\max\big\{ \mathsf{dist}_{\mathrm{p}} (\bm{U}, \bm{U}_{\star}), \, \mathsf{dist}_{\mathrm{p}} (\bm{V}, \bm{V}_{\star}) \big\} \mylinebreak
	\myalign \myquad \myquad \leq \frac{ \| \bm{\Delta} \| }{ \sigma_r(\bm{Y}_{\star}) - \sigma_{r+1}(\bm{Y}_{\star}) - \|\bm{\Delta}\| }.
\end{align*}
\end{theorem}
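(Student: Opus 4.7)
The plan is to reduce the asymmetric Wedin bound to the Davis--Kahan theorem (Theorem \ref{thm:DK}) via the Jordan--Wielandt (Hermitian dilation) trick. Introduce
\[
	\tilde{\bm{Y}}_{\star} := \begin{pmatrix}\bm{0} & \bm{Y}_{\star} \\ \bm{Y}_{\star}^{\top} & \bm{0}\end{pmatrix}, \quad \tilde{\bm{Y}} := \begin{pmatrix}\bm{0} & \bm{Y} \\ \bm{Y}^{\top} & \bm{0}\end{pmatrix}, \quad \tilde{\bm{\Delta}} := \begin{pmatrix}\bm{0} & \bm{\Delta} \\ \bm{\Delta}^{\top} & \bm{0}\end{pmatrix}.
\]
A direct check shows each singular triple $(\sigma_i,\bm{u}_i,\bm{v}_i)$ of $\bm{Y}_\star$ yields two eigen-pairs of $\tilde{\bm{Y}}_\star$: eigenvalue $+\sigma_i$ with eigenvector $\tfrac{1}{\sqrt{2}}\binom{\bm{u}_i}{\bm{v}_i}$, and eigenvalue $-\sigma_i$ with eigenvector $\tfrac{1}{\sqrt{2}}\binom{\bm{u}_i}{-\bm{v}_i}$; similarly for $\tilde{\bm{Y}}$. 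Moreover, $\|\tilde{\bm{\Delta}}\|=\|\bm{\Delta}\|$.

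Next I would identify the relevant invariant subspace as the one associated with the top $2r$ eigenvalues of $\tilde{\bm{Y}}_\star$ \emph{in absolute value}, namely $\pm\sigma_1,\ldots,\pm\sigma_r$. The crucial observation is that the cross-blocks from the $+\sigma_i$ and $-\sigma_i$ eigenprojectors cancel, so the spectral projector $\tilde{\bm{P}}_\star$ onto this subspace has the block-diagonal form
\[
	\tilde{\bm{P}}_\star \;=\; \begin{pmatrix}\bm{U}_\star\bm{U}_\star^\top & \bm{0} \\ \bm{0} & \bm{V}_\star\bm{V}_\star^\top\end{pmatrix},
\]
and analogously for $\tilde{\bm{P}}$ built from $\tilde{\bm{Y}}$. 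Consequently,
\[
	\|\tilde{\bm{P}} - \tilde{\bm{P}}_\star\| \;=\; \max\big\{\,\|\bm{U}\bm{U}^\top - \bm{U}_\star\bm{U}_\star^\top\|,\; \|\bm{V}\bm{V}^\top - \bm{V}_\star\bm{V}_\star^\top\|\,\big\},
\]
which is precisely $\max\{\mathsf{dist}_{\mathrm{p}}(\bm{U},\bm{U}_\star),\mathsf{dist}_{\mathrm{p}}(\bm{V},\bm{V}_\star)\}$.

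The final step is to invoke a spectral-projector version of Davis--Kahan tailored to this non-contiguous group of eigenvalues. The eigenvalues of $\tilde{\bm{Y}}_\star$ inside $\tilde{\bm{P}}_\star$ lie in $(-\infty,-\sigma_r]\cup[\sigma_r,\infty)$, while the remaining ones lie in $[-\sigma_{r+1},\sigma_{r+1}]$; by Weyl's inequality together with $\|\bm{\Delta}\| < \sigma_r-\sigma_{r+1}$, the corresponding spectra of $\tilde{\bm{Y}}$ remain similarly separated at distance at least $\sigma_r-\sigma_{r+1}-\|\bm{\Delta}\|$. A straightforward adaptation of Theorem \ref{thm:DK} (applied to the two disjoint intervals above; equivalently, applied after a spectral shift that places all ``Group~1'' eigenvalues in $[\sigma_r,\infty)$) gives
\[
	\|\tilde{\bm{P}} - \tilde{\bm{P}}_\star\| \;\le\; \frac{\|\tilde{\bm{\Delta}}\|}{\sigma_r(\bm{Y}_\star)-\sigma_{r+1}(\bm{Y}_\star)-\|\bm{\Delta}\|} \;=\; \frac{\|\bm{\Delta}\|}{\sigma_r(\bm{Y}_\star)-\sigma_{r+1}(\bm{Y}_\star)-\|\bm{\Delta}\|}.
\]
Combining the last two displays finishes the proof.

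The main technical hurdle is the last step: Theorem \ref{thm:DK} as stated only handles a single eigenvalue threshold, whereas here the ``Group~1'' eigenvalues lie in two symmetric halves of the real line. The cleanest fix is to replace the non-contiguous spectral selection by a contiguous one after passing to $|\tilde{\bm{Y}}_\star|$ (or to a contour enclosing both intervals), which is a standard extension of Davis--Kahan. A tempting alternative is to apply Theorem \ref{thm:DK} directly to $\bm{Y}\bm{Y}^\top$ versus $\bm{Y}_\star\bm{Y}_\star^\top$, but this route produces a bound in terms of $\sigma_r^2-\sigma_{r+1}^2$ and $(\|\bm{Y}\|+\|\bm{Y}_\star\|)\|\bm{\Delta}\|$, which does not cleanly telescope into the stated Wedin form; the dilation path above avoids this loss by preserving the linear gap $\sigma_r-\sigma_{r+1}$ directly.
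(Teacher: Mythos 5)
The paper cites Wedin's theorem without proving it, so there is no internal proof to compare against; you are supplying an argument the paper omits. Your dilation scaffolding is correct as far as it goes: the eigenstructure of $\tilde{\bm{Y}}_\star$, the cancellation that renders the spectral projector onto $\{\pm\sigma_1,\dots,\pm\sigma_r\}$ block-diagonal, and the resulting identity $\|\tilde{\bm{P}}-\tilde{\bm{P}}_\star\| = \max\{\mathsf{dist}_{\mathrm{p}}(\bm{U},\bm{U}_\star),\,\mathsf{dist}_{\mathrm{p}}(\bm{V},\bm{V}_\star)\}$ are all right (and your implicit use of Weyl to show $\sigma_r(\bm{Y})>0$, so that the outer eigenpairs of $\tilde{\bm{Y}}$ really come from the top-$r$ singular triples of $\bm{Y}$, is also fine).

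The gap is exactly where you flag it, and none of the repairs you sketch closes it. Theorem~\ref{thm:DK} as printed is a statement about the projector onto the \emph{top-$r$} eigenvectors, a contiguous group; your $\tilde{\bm{P}}_\star$ projects onto the non-contiguous set $\{\lambda:|\lambda|\ge\sigma_r\}$. (i)~Applying Theorem~\ref{thm:DK} to $\tilde{\bm{Y}}_\star$ and to $-\tilde{\bm{Y}}_\star$ separately and adding the two projector differences only gives the desired block-diagonal difference up to the triangle inequality, which costs a factor of $2$; equivalently, a diagonal block of the rank-$r$ projector difference $\tilde{\bm{P}}^+_r-\tilde{\bm{P}}^+_{\star,r}$ is bounded by the full operator norm only after removing the overall $\tfrac12$, again giving a factor of $2$. (ii)~An affine spectral shift $x\mapsto x+c$ cannot merge the two symmetric outer intervals into a single contiguous block, so it buys nothing. (iii)~Passing to $|\tilde{\bm{Y}}|$ and $|\tilde{\bm{Y}}_\star|$ produces the right projectors and the right linear gap $\sigma_r-\sigma_{r+1}$, but Theorem~\ref{thm:DK} needs the \emph{matrices} to be close, and $\bm{A}\mapsto|\bm{A}|$ is not operator-norm Lipschitz with an absolute constant (the sharp Lipschitz constant grows like $\log$ of the dimension), so $\||\tilde{\bm{Y}}|-|\tilde{\bm{Y}}_\star|\|$ is not controlled by $\|\bm{\Delta}\|$. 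What you actually need is the general spectral-projector form of Davis--Kahan (e.g.\ Bhatia, \emph{Matrix Analysis}, Thm.~VII.3.1, or Davis--Kahan's original 1970 paper), which bounds $\|\bm{P}_{\bm{A}}(S_1)\,\bm{P}_{\bm{B}}(S_2)\| \le \|\bm{A}-\bm{B}\|/\mathrm{dist}(S_1,S_2)$ for \emph{arbitrary} Borel sets $S_1,S_2$; taking $S_1=(-\infty,-\sigma_r]\cup[\sigma_r,\infty)$ and $S_2=[-\sigma_{r+1}-\|\bm{\Delta}\|,\sigma_{r+1}+\|\bm{\Delta}\|]$ and then using that $\tilde{\bm{P}}$ and $\tilde{\bm{P}}_\star$ have equal rank gives exactly your last display. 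But that is a genuinely stronger statement than Theorem~\ref{thm:DK}, not a ``straightforward adaptation'' of it; you should either prove it (say via a Sylvester-equation or contour-integral argument) or cite it explicitly.
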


In addition, one might naturally wonder how the eigenvalues\,/\,singular values are affected by the perturbation.  To this end, Weyl's inequality provides a simple answer: 
\begin{align}
\big|\lambda_{i}(\bm{Y})-\lambda_{i}(\bm{Y}_{\star})\big| & \leq\|\bm{\Delta}\|,\qquad1\leq i\leq n,\\
\big|\sigma_{i}(\bm{Y})-\sigma_{i}(\bm{Y}_{\star})\big| & \leq\|\bm{\Delta}\|,\qquad1\leq i\leq \min\{n_1,n_2\}.
\end{align}

In summary, both eigenspace (resp.~singular subspace) perturbation
and eigenvalue (resp.~singular value) perturbation rely heavily on
the spectral norm of the perturbation $\bm{\Delta}$. 


\subsection{Spectral methods}
\label{sec:vanilla-spectral}

With the matrix perturbation theory in place, we are positioned to present spectral methods for various low-rank matrix factorization problems. As we shall see, these methods are all variations on a common recipe.

\subsubsection{Matrix sensing}

We start with the prototypical problem of matrix sensing \eqref{eq:matrix-sensing-samples} as described in Section~\ref{sec:matrix_sensing}. Let us construct a \emph{surrogate matrix} as follows
\begin{equation}
	\label{eq:surrogate}
	\bm{Y} = \frac{1}{m}\sum_{i=1}^m y_i \bm{A}_i = \mathcal{A}^* \mathcal{A} (\bm{M}_{\star}),
\end{equation}
where $\mathcal{A}$ is the linear operator defined in \eref{defn-A-sensing} and $\mathcal{A}^\ast$ denotes its adjoint. The generic version of the spectral method then proceeds by computing (i) two matrices $\bm{U}\in \mathbb{R}^{n_1\times r}$ and $\bm{V}\in \mathbb{R}^{n_2\times r}$ whose columns consist of the top-$r$ left and right singular vectors of $\bm{Y}$, respectively, and (ii) a diagonal matrix $\bm{\Sigma}\in \mathbb{R}^{r\times r}$ that contains the corresponding top-$r$ singular values. 
In the hope that $\bm{U}$, $\bm{V}$ and $\bm{\Sigma}$ are reasonably reliable estimates of $\bm{U}_{\star}$, $\bm{V}_{\star}$ and $\bm{\Sigma}_{\star}$, respectively, we take
\begin{align} \label{eq:initial_guess}
	\bm{L}_0 = \bm{U}\bm{\Sigma}^{1/2} \qquad \text{and} \qquad \bm{R}_0 = \bm{V}\bm{\Sigma}^{1/2}
\end{align}
as estimates of the low-rank factors $\bm{L}_{\star}$ and $\bm{R}_{\star}$ in \eqref{eq:ground_truth_factors}.  If $\bm{M}_{\star}\in \mathbb{R}^{n\times n}$ is known to be positive semidefinite, then we can also let $\bm{U}\in \mathbb{R}^{n\times r}$ be a matrix consisting of the top-$r$ leading eigenvectors, with $\bm{\Sigma}$ being a diagonal matrix containing all top-$r$ eigenvalues.

Why would this be a good strategy?  In view of Section \ref{sec:perturbation}, the three matrices $\bm{U}$, $\bm{V}$ and $\bm{\Sigma}$ become reliable estimates if $\|\bm{Y}-\bm{M}_{\star}\|$ can be well-controlled.    
A simple way to control $\|\bm{Y}-\bm{M}_{\star}\|$ arises when  $\mathcal{A}$ satisfies the RIP in Definition~\ref{defn:RIPs}.  
\begin{lemma}
	\label{lem:perturbation-sensing}
	Suppose that $\bm{M}_{\star}$ is a rank-$r$ matrix, and assume that $\mathcal{A}$ satisfies $2r$-RIP with RIP constant $\delta_{2r} < 1$. Then
\begin{equation}
	\label{eq:msense_pert}
	\|\bm{Y}-\bm{M}_{\star}\|  \leq \delta_{2r} \| \bm{M}_{\star} \|_{\mathrm{F}} \leq \delta_{2r} \sqrt{r} \| \bm{M}_{\star} \| .
\end{equation}
\end{lemma}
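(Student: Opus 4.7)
The plan is to combine the variational characterization of the spectral norm with the ``near-isometry on inner products'' bound already stated in Lemma~\ref{lemmq:RIP-cross}. Specifically, writing $\bm{E} := \bm{Y} - \bm{M}_{\star} = \mathcal{A}^*\mathcal{A}(\bm{M}_{\star}) - \bm{M}_{\star}$, I would exploit the identity
\begin{equation*}
\|\bm{E}\| = \sup_{\|\bm{u}\|_2 = \|\bm{v}\|_2 = 1} \bm{u}^{\top} \bm{E} \bm{v} = \sup_{\|\bm{u}\|_2 = \|\bm{v}\|_2 = 1} \langle \bm{E},\, \bm{u}\bm{v}^{\top}\rangle.
\end{equation*}
By the definition of the adjoint and a rearrangement, $\langle \bm{E}, \bm{u}\bm{v}^{\top}\rangle = \langle \mathcal{A}(\bm{M}_{\star}), \mathcal{A}(\bm{u}\bm{v}^{\top})\rangle - \langle \bm{M}_{\star}, \bm{u}\bm{v}^{\top}\rangle$, which is exactly the quantity that the RIP-based cross-inner-product bound controls.

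The key observation is that $\bm{u}\bm{v}^{\top}$ has rank at most $1 \leq r$ and $\bm{M}_{\star}$ has rank at most $r$, so both matrices lie in the class covered by Lemma~\ref{lemmq:RIP-cross}. Invoking that lemma under the $2r$-RIP assumption yields, for every admissible $\bm{u}, \bm{v}$,
\begin{equation*}
\big|\langle \mathcal{A}(\bm{M}_{\star}), \mathcal{A}(\bm{u}\bm{v}^{\top})\rangle - \langle \bm{M}_{\star}, \bm{u}\bm{v}^{\top}\rangle\big| \leq \delta_{2r}\, \|\bm{M}_{\star}\|_{\mathrm{F}}\, \|\bm{u}\bm{v}^{\top}\|_{\mathrm{F}} = \delta_{2r}\, \|\bm{M}_{\star}\|_{\mathrm{F}},
\end{equation*}
since $\|\bm{u}\bm{v}^{\top}\|_{\mathrm{F}} = \|\bm{u}\|_2 \|\bm{v}\|_2 = 1$. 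Taking the supremum over unit $\bm{u}$ and $\bm{v}$ gives the first inequality $\|\bm{Y} - \bm{M}_{\star}\| \leq \delta_{2r} \|\bm{M}_{\star}\|_{\mathrm{F}}$.

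The second inequality is an immediate consequence of the rank constraint: since $\bm{M}_{\star}$ has at most $r$ nonzero singular values,
\begin{equation*}
\|\bm{M}_{\star}\|_{\mathrm{F}}^{2} = \sum_{i=1}^{r} \sigma_i^2(\bm{M}_{\star}) \leq r\, \sigma_1^2(\bm{M}_{\star}) = r\, \|\bm{M}_{\star}\|^{2},
\end{equation*}
so $\|\bm{M}_{\star}\|_{\mathrm{F}} \leq \sqrt{r}\, \|\bm{M}_{\star}\|$, which completes the chain in \eqref{eq:msense_pert}. I do not expect any real obstacle here; the whole argument is a direct application of Lemma~\ref{lemmq:RIP-cross}, and the only subtlety worth flagging is making sure that the sum of the ranks of the two test matrices ($1 + r$) is accommodated by the $2r$-RIP hypothesis, which it is for all $r \geq 1$.
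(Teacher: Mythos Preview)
Your proof is correct and essentially identical to the paper's own argument: both use the variational characterization $\|\bm{Y}-\bm{M}_{\star}\|=\sup_{\|\bm{u}\|_2=\|\bm{v}\|_2=1}\langle \bm{u}\bm{v}^{\top},\,\mathcal{A}^*\mathcal{A}(\bm{M}_{\star})-\bm{M}_{\star}\rangle$ and then invoke Lemma~\ref{lemmq:RIP-cross} on the rank-$1$ test matrix $\bm{u}\bm{v}^{\top}$ and the rank-$r$ matrix $\bm{M}_{\star}$. Your remark that $1+r\le 2r$ ensures the $2r$-RIP hypothesis suffices is a helpful sanity check the paper leaves implicit.
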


\begin{proof}[Proof of Lemma~\ref{lem:perturbation-sensing}]
Let $\vx, \vy$ be two arbitrary vectors, then
\[
\begin{aligned}
	 \vx^\top (\mMp- \bm{Y})\vy 
	&=   \vx^\top \left(\mMp - \mathcal{A}^* \mathcal{A}(\mMp) \right) \vy \\
	&=   \inprod{\vx \vy^\top, \mMp - \mathcal{A}^*\mathcal{A}(\mMp)} \\
	&=   \inprod{\vx \vy^\top, \mMp} - \inprod{\mathcal{A}(\vx \vy^{\top}), \mathcal{A}(\mMp)} \\
	&\le \delta_{2r} ( \|\vx\|_2 \cdot \| \vy \|_2 ) \norm{\mMp}_{\mathrm{F}},
\end{aligned}
\]
where the last inequality is due to Lemma~\ref{lemmq:RIP-cross}. Using a variational characterization of $\|\cdot\|$, we have
\begin{align}
	\norm{\mMp - \bm{Y}} &= \max_{\| \vx \|_2 = \|\vy\|_2 = 1} \vx^{\top} (\boldsymbol{M}_{\star} - \bm{Y}) \vy \le \delta_{2r} \norm{\mMp}_{\mathrm{F}}  \leq  \delta_{2r}  \sqrt{r} \norm{\mMp} ,\nonumber
\end{align}
from which \eqref{eq:msense_pert} follows.
 \end{proof}

In what follows, we first illustrate how to control the estimation error in the {\em rank-1} case where $\bm{M}_{\star}=\lambda_{\star}\bm{u}_{\star}\bm{u}_{\star}^{\top}\succeq \bm{0}$. In this case, the leading eigenvector $\bm{u}$ of $\bm{Y}$ obeys 
\begin{align}
	\mathsf{dist}_{\mathrm{p}}(\bm{u},\bm{u}_{\star}) & \overset{\mathrm{(i)}}{\leq}\frac{\|\bm{Y}-\bm{M}_{\star}\|}{\sigma_{1}(\bm{M}_{\star})-\|\bm{Y}-\bm{M}_{\star}\|} \overset{\mathrm{(ii)}}{\leq} \frac{2\|\bm{Y}-\bm{M}_{\star}\|}{\sigma_{1}(\bm{M}_{\star})} \leq 2\delta_2,  \label{eq:rank1-spectral-example-u}
\end{align}
where (i) comes from Theorem \ref{thm:DK}, and (ii) holds if $\|\bm{Y}-\bm{M}_{\star}\| \leq \sigma_{1}(\bm{M}_{\star})/2$ (which is guaranteed if $\delta_{2}\le 1/2$ according to Lemma \ref{lem:perturbation-sensing}). 
Similarly, we can invoke Weyl's inequality and Lemma \ref{lem:perturbation-sensing} to control the gap between the leading eigenvalue $\lambda$ of $\bm{Y}$ and   $\lambda_{\star}$:
\begin{equation}
	\label{eq:rank1-spectral-example-lambda}
	| \lambda - \lambda_{\star} | \leq \| \bm{Y} - \bm{M}_{\star} \| \leq \delta_2 \|\bm{M}_{\star}\|.
\end{equation}
Combining the preceding two bounds, we see that: if $\bm{u}^{\top} \bm{u}_{\star} \geq 0$, then 
\begin{align*}
\big\|\sqrt{\lambda}\bm{u}-\sqrt{\lambda_{\star}}\bm{u}_{\star}\big\|_{2} & \leq\big\|\sqrt{\lambda}(\bm{u}-\bm{u}_{\star})\big\|_{2}+\big\|(\sqrt{\lambda}-\sqrt{\lambda_{\star}})\bm{u}_{\star}\big\|_{2}\\
 & =\sqrt{\lambda}\mathsf{dist}(\bm{u},\bm{u}_{\star})+\frac{|\lambda-\lambda_{\star}|}{\sqrt{\lambda}+\sqrt{\lambda_{\star}}}\\
 & \lesssim\sqrt{\lambda_{\star}}\delta_{2},
\end{align*}
where the last inequality makes use of \eqref{eq:rank1-spectral-example-u}, \eqref{eq:rank1-spectral-example-lambda}, and \eqref{eq:dist-equivalent}. This characterizes the difference between our estimate $\sqrt{\lambda}\bm{u}$ and the true low-rank factor $\sqrt{\lambda_{\star}}\bm{u}_{\star}$. 

 
Moving beyond this simple rank-1 case, a more general (and often tighter) bound can be obtained by using a refined argument from \cite[Lemma 5.14]{tu2015low}. We present the theory below. The proof can be found in Appendix~\ref{appendix:proof-lem:perturbation-sensing}. 

\begin{theorem}[$\mathsf{Spectral~method~for~matrix~sensing}$ \cite{tu2015low}]
	\label{thm:spectral-sensing}
	Fix $\zeta > 0$. Suppose $\mathcal{A}$ satisfies $2r$-RIP with RIP constant $\delta_{2r} < c_0 \sqrt{\zeta} / (\sqrt{r} \kappa)$ for some sufficiently small constant $c_0>0$. Then the spectral estimate \eqref{eq:initial_guess} obeys
	\[
		\mathsf{dist}^2\left({\footnotesize\begin{bmatrix}
\bm{L}_{0} \\
\bm{R}_{0}
	\end{bmatrix}} , {\footnotesize\begin{bmatrix}
\bm{L}_{\star} \\
\bm{R}_{\star}
		\end{bmatrix} }\right)  \leq  \zeta \sigma_{r}(\bm{M}_{\star}) .
	\]		
\end{theorem}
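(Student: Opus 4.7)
The plan is to reduce the factor-level claim to a matrix-level perturbation bound on the surrogate $\bm{Y}$, and then convert back to the factor metric via a Procrustes-type inequality. Let $\bm{E} := \bm{Y} - \bm{M}_\star$ and let $\bm{M}_0 := \bm{U}\bm{\Sigma}\bm{V}^\top$ denote the best rank-$r$ approximation of $\bm{Y}$, so that $\bm{L}_0\bm{R}_0^\top = \bm{M}_0$ is the balanced factorization produced by the spectral method. Since $\mathcal{A}$ satisfies $2r$-RIP, Lemma~\ref{lem:perturbation-sensing} gives $\|\bm{E}\| \le \delta_{2r}\sqrt{r}\,\|\bm{M}_\star\| = \delta_{2r}\sqrt{r}\,\kappa\,\sigma_r(\bm{M}_\star)$, which under the hypothesis $\delta_{2r} \le c_0\sqrt{\zeta}/(\sqrt{r}\kappa)$ is at most $c_0\sqrt{\zeta}\,\sigma_r(\bm{M}_\star)$; in particular $\bm{E}$ is a small perturbation relative to the spectral gap $\sigma_r(\bm{M}_\star) - \sigma_{r+1}(\bm{M}_\star) = \sigma_r(\bm{M}_\star)$.

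Next, I would control $\|\bm{M}_0 - \bm{M}_\star\|_{\mathrm{F}}$. Because $\bm{M}_0$ is the best rank-$r$ approximation of $\bm{Y}$ (Eckart--Young--Mirsky, valid in every unitarily invariant norm) and $\bm{M}_\star$ has rank at most $r$, one has $\|\bm{Y}-\bm{M}_0\| \le \|\bm{Y}-\bm{M}_\star\| = \|\bm{E}\|$, whence the triangle inequality yields $\|\bm{M}_0 - \bm{M}_\star\| \le 2\|\bm{E}\|$. Noting that $\bm{M}_0 - \bm{M}_\star$ has rank at most $2r$, this translates to the Frobenius bound $\|\bm{M}_0 - \bm{M}_\star\|_{\mathrm{F}} \le \sqrt{2r}\,\|\bm{M}_0 - \bm{M}_\star\| \le 2\sqrt{2r}\,\|\bm{E}\|$.

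Finally, I would invoke a Procrustes-type inequality (Lemma 5.14 in \cite{tu2015low}): for any pair of balanced factorizations of rank-$r$ matrices $\bm{M}_0 = \bm{L}_0\bm{R}_0^\top$ and $\bm{M}_\star = \bm{L}_\star\bm{R}_\star^\top$ with $\bm{L}_\star = \bm{U}_\star\bm{\Sigma}_\star^{1/2}$ and $\bm{R}_\star = \bm{V}_\star\bm{\Sigma}_\star^{1/2}$,
\[
\mathsf{dist}^2\!\left({\footnotesize\begin{bmatrix}\bm{L}_0 \\ \bm{R}_0\end{bmatrix}},\,{\footnotesize\begin{bmatrix}\bm{L}_\star \\ \bm{R}_\star\end{bmatrix}}\right) \le \frac{1}{2(\sqrt{2}-1)\,\sigma_r(\bm{M}_\star)}\,\|\bm{M}_0 - \bm{M}_\star\|_{\mathrm{F}}^2.
\]
Chaining with the previous estimate and the RIP bound on $\|\bm{E}\|$ then yields $\mathsf{dist}^2 \lesssim r\,\|\bm{E}\|^2/\sigma_r(\bm{M}_\star)$, which under the hypothesis is at most $\zeta\,\sigma_r(\bm{M}_\star)$ upon choosing $c_0$ suitably small.

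The main obstacle is the sharp dependence on $r$ and $\kappa$. A naive chaining of $\|\bm{E}\| \le \delta_{2r}\sqrt{r}\,\kappa\,\sigma_r$ with the crude spectral-to-Frobenius inflation $\|\bm{M}_0 - \bm{M}_\star\|_{\mathrm{F}} \le \sqrt{2r}\,\|\bm{M}_0-\bm{M}_\star\|$ gives an $r\delta_{2r}^2\kappa^2$ dependence, which would require the weaker $\delta_{2r} \lesssim \sqrt{\zeta}/(r\kappa)$. To recover the stated $\sqrt{r}$ scaling, one must avoid this double loss: the sharper route, as carried out in \cite{tu2015low}, exploits the rank-$2r$ structure of $\bm{M}_0 - \bm{M}_\star$ directly through Lemma~\ref{lemmq:RIP-cross}, namely $(1-\delta_{2r})\|\bm{M}_0 - \bm{M}_\star\|_{\mathrm{F}}^2 \le \|\mathcal{A}(\bm{M}_0 - \bm{M}_\star)\|_2^2$, and then bounds $\|\mathcal{A}(\bm{M}_0 - \bm{M}_\star)\|_2$ in terms of the residual between $\mathcal{A}(\bm{M}_0)$ and the data vector. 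This more delicate control of the Frobenius error, rather than passing through an inflated spectral norm, is the key technical step and the place where the analysis is tightest.
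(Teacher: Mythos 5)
Your approach matches the paper's proof almost exactly: the paper (Appendix E) also first bounds $\|\bm{X}_0\bm{X}_0^\top - \bm{M}_\star\|$ by $2\|\bm{Y}-\bm{M}_\star\|$ via Eckart--Young, inflates to Frobenius norm using the rank-$\le 2r$ structure, invokes the Procrustes-type inequality \eqref{dist_psd_lowrank}, and then observes that this crude chain only gives the weaker requirement $\delta_{2r} \lesssim \sqrt{\zeta}/(r\kappa)$, deferring the stated $\sqrt{r}\kappa$ scaling to a sharper Frobenius-level perturbation bound (the paper attributes this to \cite{oymak2018sharp}, you to \cite{tu2015low}; both are reasonable). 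Your final paragraph correctly diagnoses why the naive route loses a factor of $\sqrt{r}$ and correctly identifies that the fix is to control $\|\bm{M}_0-\bm{M}_\star\|_{\mathrm{F}}$ directly through the RIP lower bound on rank-$2r$ matrices rather than passing through the spectral norm.

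There is, however, an internal inconsistency you should fix. In the middle paragraph you claim that chaining gives $\mathsf{dist}^2 \lesssim r\,\|\bm{E}\|^2/\sigma_r(\bm{M}_\star)$, "which under the hypothesis is at most $\zeta\,\sigma_r(\bm{M}_\star)$." This is arithmetically wrong: with $\|\bm{E}\| \le \delta_{2r}\sqrt{r}\,\kappa\,\sigma_r(\bm{M}_\star)$ and $\delta_{2r} \le c_0\sqrt{\zeta}/(\sqrt{r}\kappa)$, one gets $r\|\bm{E}\|^2/\sigma_r \le \delta_{2r}^2 r^2 \kappa^2 \sigma_r \le c_0^2\,\zeta\, r\,\sigma_r$, which is off by a factor of $r$ — precisely the "double loss" you then identify as the obstacle in the last paragraph. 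So the middle paragraph overclaims that the naive chain closes the proof, while the final paragraph correctly says it does not. Delete the erroneous conclusion in the middle paragraph (or state there that it yields only $\delta_{2r}\lesssim\sqrt{\zeta}/(r\kappa)$), and present the sharper Frobenius control as the actual argument rather than as an afterthought.
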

\begin{remark}
	It is worth pointing out that in view of Fact~\ref{fact:gaussian_rip}, the vanilla spectral method needs $O(nr^2)$ samples to land in the local basin of attraction (in which linear convergence of GD is guaranteed according to  Theorem~\ref{thm:convergence-rankr-sensing}).
\end{remark}
 
As discussed in Section \ref{sec:gd}, the RIP  does not hold for the sensing matrices used in many problems. 
Nevertheless, one may still be able to show that the leading singular subspace of the surrogate matrix $\bm{Y}$  contains useful information about the truth $\mMp$. In the sequel, we will go over several examples to demonstrate this point.

\subsubsection{Phase retrieval}
\label{sec:PR_init}

Recall that the phase retrieval problem in \sref{PR} can be viewed as a matrix sensing problem, where we seek to recover a rank-1 matrix $\mMp = \vxp \vxp^\top$ with sensing matrices $\mA_i = \va_i \va_i^\top$. 
To obtain an initial guess $\bm{x}_0$ that is close to the truth $\bm{x}_{\star}$, we follow the recipe described in \eref{surrogate} by estimating the leading eigenvector $\bm{u}$ and leading eigenvalue $\lambda$ of  a surrogate matrix
\begin{equation}
	\label{eq:D_PR_org}
	\bm{Y} = \frac{1}{m} \sum_{i=1}^m y_i \va_i \va_i^\top.
\end{equation} 
The initial guess is then formed as\footnote{In the sample-limited regime with $m \asymp n$, one should replace $\sqrt{\lambda/3}$ in \eref{init-PR} by $\sqrt{{\sum_{i=1}^m y_i}/{m}}$. The latter provides a more accurate estimate. See the discussions in Section~\ref{sec:improved-spectral-truncated}.}  
\begin{align}
	\label{eq:init-PR}
	\bm{x}_0 = \sqrt{\lambda/3} \, \bm{u}.  
\end{align}

Unfortunately, the RIP does not hold for the sensing operator in phase retrieval, which precludes us from invoking Theorem \ref{thm:spectral-sensing}. There is, however, a simple and intuitive explanation regarding why  $\bm{x}_0$  is a reasonably good estimate of $\vx_{\star}$. Under the Gaussian design, the surrogate matrix $\bm{Y}$ in \eqref{eq:D_PR_org} can be viewed as the sample average of $m$ i.i.d.~random rank-one matrices $\set{y_i \va_i \va_i^{\top}}_{1\le i \le m}$. When the number of samples $m$ is large, this sample average should be ``close'' to its expectation, which is,
\begin{equation}\label{eq:mtx_LLN}
	\mathbb{E}[\bm{Y}] = \mathbb{E} \left[y_i \, \va_i \va_i^\top\right] = 2 \vxp \vxp^\top + \|\vxp\|_2^2\, \mI_n.
\end{equation}
The best rank-1 approximation of $\mathbb{E}[\bm{Y}]$ is precisely $3 \bm{x}_{\star}\bm{x}_{\star}^{\top} $. 
Now that  $\bm{Y}$ is an approximated version of $\mathbb{E}[\bm{Y}]$, we expect $\bm{x}_0$ in \eqref{eq:init-PR} to carry useful information about $\vxp$. 

The above intuitive arguments can be made precise. Applying standard matrix concentration inequalities \cite{vershynin2010nonasym} to the surrogate matrix in \eref{D_PR_org} and invoking the Davis-Kahan $\sin\bm{\Theta}$ theorem, one arrives at the following estimates:
\begin{theorem}[$\mathsf{Spectral~method~for~phase~retrieval}$ \cite{candes2015phase}]
	\label{thm:spectral_PR}
	Consider phase retrieval in \sref{PR}, where $\vxp \in \R^n$ is any given vector.  Fix any $\zeta > 0$, and suppose $m \ge c_0 n \log n$ for some sufficiently large constant $c_0>0$.  Then the spectral estimate \eqref{eq:init-PR} obeys
\[
	\min\set{ \| \vxp - \vx_0 \|_2,  \| \vxp + \vx_0 \|_2 } \le \zeta \| \vxp \|_2
\]
with probability at least $1 - O(n^{-2})$.
\end{theorem}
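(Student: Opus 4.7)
The plan is to follow the generic recipe outlined in Section~\ref{sec:perturbation}: first control the deviation $\|\bm{Y} - \mathbb{E}[\bm{Y}]\|$ using a random matrix concentration inequality, and then translate this spectral-norm bound into eigenvector/eigenvalue perturbation bounds via the Davis--Kahan $\sin\bm{\Theta}$ theorem and Weyl's inequality. By a simple moment computation (as noted in \eqref{eq:mtx_LLN}), $\mathbb{E}[\bm{Y}] = 2\vxp\vxp^\top + \|\vxp\|_2^2 \bm{I}_n$, which has top eigenvalue $3\|\vxp\|_2^2$ with eigenvector $\vxp/\|\vxp\|_2$ and remaining eigenvalues all equal to $\|\vxp\|_2^2$, giving a clean eigen-gap of $2\|\vxp\|_2^2$. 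So the entire argument reduces to establishing that
\begin{equation*}
\|\bm{Y} - \mathbb{E}[\bm{Y}]\| \leq \epsilon \|\vxp\|_2^2
\end{equation*}
with high probability, for some small $\epsilon = \epsilon(\zeta)$, as soon as $m \gtrsim n \log n$.

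Assume this concentration bound for the moment. Applying Theorem~\ref{thm:DK} with $\bm{\Delta} = \bm{Y} - \mathbb{E}[\bm{Y}]$ and eigen-gap $2\|\vxp\|_2^2$, we obtain $\mathsf{dist}_{\mathrm{p}}(\vu, \vxp/\|\vxp\|_2) \le \epsilon/(2-\epsilon) \le \epsilon$ for small $\epsilon$. Since these are unit vectors, this yields $\min\{\|\vu - \vxp/\|\vxp\|_2\|_2, \|\vu + \vxp/\|\vxp\|_2\|_2\} \lesssim \epsilon$. Similarly, Weyl's inequality gives $|\lambda - 3\|\vxp\|_2^2| \le \epsilon\|\vxp\|_2^2$, and hence $|\sqrt{\lambda/3} - \|\vxp\|_2| \lesssim \epsilon \|\vxp\|_2$. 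Choosing the sign so that $\vu^\top \vxp \ge 0$ and writing
\begin{align*}
\big\|\vx_0 - \vxp\big\|_2
& = \Big\|\sqrt{\lambda/3}\,\vu - \|\vxp\|_2 \cdot \vxp/\|\vxp\|_2\Big\|_2 \\
& \le \sqrt{\lambda/3}\,\big\|\vu - \vxp/\|\vxp\|_2\big\|_2 + \big|\sqrt{\lambda/3} - \|\vxp\|_2\big| \\
& \lesssim \epsilon \|\vxp\|_2,
\end{align*}
and accounting for the other sign by symmetry, we arrive at $\min\{\|\vx_0 - \vxp\|_2, \|\vx_0 + \vxp\|_2\} \le \zeta \|\vxp\|_2$ by picking $\epsilon$ small enough.

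The technical heart of the proof, and the main obstacle, is establishing the concentration bound $\|\bm{Y} - \mathbb{E}[\bm{Y}]\| \le \epsilon\|\vxp\|_2^2$ with probability $1 - O(n^{-2})$ under $m \gtrsim n\log n$. The difficulty is that each summand $y_i \va_i \va_i^\top = (\va_i^\top \vxp)^2 \va_i \va_i^\top$ is rank-one but heavy-tailed: its spectral norm is a fourth-order polynomial in Gaussians and has only sub-exponential tails, with a typical size of order $n \|\vxp\|_2^2$ and a worst-case size scaling like $n^2 \|\vxp\|_2^2$. Plugging such crude bounds directly into matrix Bernstein is wasteful. The standard fix is either (i) a truncation argument, decomposing $y_i \va_i\va_i^\top$ into a bounded part (on the event $\{\|\va_i\|_2^2 \lesssim n, |\va_i^\top \vxp| \lesssim \sqrt{\log n}\,\|\vxp\|_2\}$, which holds with probability $1-O(n^{-10})$) and a negligible tail part, and then applying the truncated matrix Bernstein inequality; or (ii) a covering/net argument over the unit sphere combined with Hanson--Wright-type concentration of Gaussian quadratic forms, exploiting the fact that $\vu^\top (\bm{Y} - \mathbb{E}[\bm{Y}]) \vv$ is a fourth-order polynomial in jointly Gaussian variables. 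Either route yields the required $n\log n$ sample complexity, with the logarithmic factor coming precisely from the sub-exponential tail truncation (or from the union bound over the $\epsilon$-net). Once this concentration bound is secured, the rest of the proof is a few lines of perturbation theory, as sketched above.
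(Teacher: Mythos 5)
Your proposal is correct and follows essentially the same route the paper indicates: compute $\mathbb{E}[\bm{Y}]$ and its eigen-gap, establish spectral-norm concentration of $\bm{Y}$ around its mean, then invoke Davis--Kahan plus Weyl and assemble the bounds by the triangle inequality (exactly as the paper does explicitly for the matrix-sensing case around \eqref{eq:rank1-spectral-example-u}). The one step you leave unproved --- the concentration bound $\|\bm{Y}-\mathbb{E}[\bm{Y}]\|\le\epsilon\|\vxp\|_2^2$ --- is also the only step the paper defers to the literature, and you correctly diagnose why a naive matrix Bernstein bound fails (heavy-tailed rank-one summands) and name the standard remedy (truncation or a net argument), which is indeed how \cite{candes2015phase} handles it.
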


\subsubsection{Quadratic sensing}
An argument similar to phase retrieval can be applied to quadratic sensing in~\eqref{eq:min-PR_lowrank}, recognizing that the expectation of the surrogate matrix $\bY$ in \eqref{eq:D_PR_org} now becomes
\begin{equation}
\mathbb{E}[\bY] = 2 \bX_{\star}\bX_{\star}^{\top} + \|\bX_{\star}\|_{\mathrm{F}}^2\, \bm{I}_n.
\end{equation}
The spectral method then proceeds by computing $\bm{U}$ (which consists of the top-$r$ eigenvectors of $\bm{Y}$), and a diagonal matrix $\bm{\Sigma}$ whose $i$th diagonal value is given as $(\lambda_i(\bm{Y})-\sigma)/2$, where $\sigma = \frac{1}{m} \sum_{i=1}^{m} y_{i}$ serves as an estimate of $\|\bX_{\star}\|_{\mathrm{F}}^2$. In words, the diagonal entries of $\bm{\Sigma}$ can be approximately viewed as the top-$r$ eigenvalues of $\frac{1}{2} \big( \bm{Y}-\|\bX_{\star}\|_{\mathrm{F}}^2\, \bm{I}_n \big)$.   The initial guess is then set as
\begin{align} 
	\label{eq:initial_guess_qs}
	\bm{X}_0 = \bm{U}\bm{\Sigma}^{1/2}
\end{align}
for estimating the low-rank factor $\bm{X}_{\star}$. The theory is as follows.

\begin{theorem}[$\mathsf{Spectral~method~for~quadratic~sensing}$ \cite{candes2015phase}]
	\label{li2018nonconvex}
	Consider quadratic sensing in \sref{PR}, where $\boldsymbol{X}_{\star} \in \R^{n\times r}$.  Fix any $\zeta > 0$, and suppose $m \ge c_0 nr^4 \log n$ for some sufficiently large constant $c_0>0$.  Then the spectral estimate \eqref{eq:initial_guess_qs} obeys
\[
	\mathsf{dist}^2( \bm{X}_0, \bm{X}_{\star})  \le  \zeta \sigma_{r}(\bm{M}_{\star}) 
\]
with  probability at least $1-O(n^{-2})$.
\end{theorem}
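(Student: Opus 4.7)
The plan is to mirror the three-step recipe used for matrix sensing (Theorem~\ref{thm:spectral-sensing}) and phase retrieval (Theorem~\ref{thm:spectral_PR}): (i) reduce $\bm{X}_0$ to an eigen-decomposition of a de-biased surrogate matrix; (ii) control the spectral-norm perturbation of this surrogate by matrix concentration; (iii) translate the subspace\,/\,eigenvalue perturbation into a bound on $\mathsf{dist}(\bm{X}_0,\bm{X}_\star)$ via the Procrustes-type lemma behind Theorem~\ref{thm:spectral-sensing}.

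For step (i), introduce the de-biased surrogate
\begin{equation*}
\widehat{\bm{M}} := \tfrac{1}{2}\bigl(\bm{Y} - \sigma\bm{I}_n\bigr),\qquad \sigma := \tfrac{1}{m}\sum_{i=1}^m y_i,
\end{equation*}
and observe that $\bm{X}_0 = \bm{U}\bm{\Sigma}^{1/2}$ coincides with the rank-$r$ SVD-based estimator constructed from $\widehat{\bm{M}}$: the top-$r$ eigenvectors of $\bm{Y}$ and of $\widehat{\bm{M}}$ agree (they differ only by a multiple of $\bm{I}_n$ and a factor $2$), and the diagonal entries of $\bm{\Sigma}$ are exactly the corresponding top-$r$ eigenvalues of $\widehat{\bm{M}}$. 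Under the Gaussian design, $\mathbb{E}[y_i\bm{a}_i\bm{a}_i^{\top}] = 2\bm{M}_\star + \|\bm{X}_\star\|_{\mathrm{F}}^2\bm{I}_n$ and $\mathbb{E}[\sigma] = \|\bm{X}_\star\|_{\mathrm{F}}^2$, so $\mathbb{E}[\widehat{\bm{M}}] = \bm{M}_\star$.

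For step (ii), decompose the perturbation as
\begin{equation*}
\widehat{\bm{M}} - \bm{M}_\star = \tfrac{1}{2}\bigl(\bm{Y} - \mathbb{E}[\bm{Y}]\bigr) - \tfrac{1}{2}\bigl(\sigma - \|\bm{X}_\star\|_{\mathrm{F}}^2\bigr)\bm{I}_n .
\end{equation*}
The scalar term is controlled by a standard Bernstein inequality for the sub-exponential variables $y_i - \|\bm{X}_\star\|_{\mathrm{F}}^2$, yielding $|\sigma-\|\bm{X}_\star\|_{\mathrm{F}}^2|\lesssim \|\bm{M}_\star\|_{\mathrm{F}}\sqrt{\log n/m}$ with probability $1-O(n^{-2})$. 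The matrix term requires a truncated matrix Bernstein argument: the summands $y_i\bm{a}_i\bm{a}_i^{\top}$ are heavy-tailed (they involve fourth moments of Gaussians), so I would first bound $\max_i\|y_i\bm{a}_i\bm{a}_i^{\top}\|\lesssim \|\bm{M}_\star\|_{\mathrm{F}}\, n\log n$ on a high-probability event and then apply the matrix Bernstein inequality to the truncated summands. This gives
\begin{equation*}
\bigl\|\bm{Y}-\mathbb{E}[\bm{Y}]\bigr\| \;\lesssim\; \|\bm{M}_\star\|_{\mathrm{F}}\sqrt{n\log n / m}
\end{equation*}
with probability $1-O(n^{-2})$, and hence $\|\widehat{\bm{M}} - \bm{M}_\star\|\lesssim \|\bm{M}_\star\|_{\mathrm{F}}\sqrt{n\log n/m} \lesssim \sqrt{r}\|\bm{M}_\star\|\sqrt{n\log n/m}$.

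For step (iii), under $m\ge c_0 n r^4 \kappa^2 \log n$ (choosing $c_0$ large relative to $\zeta^{-1}$), the preceding display implies
\begin{equation*}
\bigl\|\widehat{\bm{M}}-\bm{M}_\star\bigr\| \;\le\; c_1\sqrt{\zeta/r}\;\sigma_r(\bm{M}_\star)/\kappa
\end{equation*}
for an arbitrarily small constant $c_1$. I can now invoke exactly the same Procrustes-flow argument that proves Theorem~\ref{thm:spectral-sensing} (the symmetric specialization of Tu et al.'s Lemma~5.14): combining Weyl's inequality for the top-$r$ eigenvalues with the Davis-Kahan $\sin\bm{\Theta}$ theorem for the corresponding eigenspaces converts the perturbation bound on $\widehat{\bm{M}}$ into
\begin{equation*}
\mathsf{dist}^2(\bm{X}_0,\bm{X}_\star) \;\lesssim\; \frac{\|\widehat{\bm{M}}-\bm{M}_\star\|_{\mathrm{F}}^2}{\sigma_r(\bm{M}_\star)} \;\le\; \zeta\,\sigma_r(\bm{M}_\star),
\end{equation*}
which is the desired estimate. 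The sample-complexity dependence $m\gtrsim nr^4\log n$ emerges here: we pay one factor of $r$ from $\|\bm{M}_\star\|_{\mathrm{F}}\le\sqrt{r}\|\bm{M}_\star\|$ in the matrix concentration bound, a further factor $r$ because the Procrustes lemma involves $\|\cdot\|_{\mathrm{F}}$ rather than spectral norm (costing another $\sqrt{r}$, hence $r$ after squaring), and additional factors to absorb $\kappa$ and convert the perturbation scale from $\sigma_1$ to $\sigma_r$.

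The main technical obstacle is the matrix-concentration step: the summands are only sub-exponential in a weak sense (their operator norms have polynomial tails), so a direct application of matrix Bernstein fails and one must carry out a careful truncation-plus-union-bound argument, ensuring that the bad event where some $\|\bm{a}_i\|_2^2$ is unusually large contributes negligibly. Once this bound is in place, steps (i) and (iii) are essentially bookkeeping modeled on the matrix-sensing proof in Appendix~\ref{appendix:proof-lem:perturbation-sensing}.
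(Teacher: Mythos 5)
Your overall plan follows the same recipe the paper uses: de-bias the surrogate to $\widehat{\bm{M}} = \tfrac{1}{2}(\bm{Y}-\sigma\bm{I}_n)$, control $\|\widehat{\bm{M}}-\bm{M}_\star\|$ by a truncation-plus-matrix-Bernstein argument, and feed the spectral-norm perturbation bound into the Procrustes-type lemma that underlies Theorem~\ref{thm:spectral-sensing} (equation \eqref{dist_psd_lowrank}). The three-step skeleton is the right one and you correctly identify the heavy-tailed concentration as the technical crux. Two points need repair, however.

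First, the displayed Procrustes inequality
$\mathsf{dist}^2(\bm{X}_0,\bm{X}_\star)\lesssim \|\widehat{\bm{M}}-\bm{M}_\star\|_{\mathrm{F}}^2/\sigma_r(\bm{M}_\star)$
is not usable as written: the matrix $\widehat{\bm{M}}-\bm{M}_\star$ is generically full rank, so its Frobenius norm is not controlled by the spectral-norm concentration bound you derive (it can be $\sqrt{n}$ times larger), and the chain collapses. The correct route passes through the \emph{rank-$\le 2r$} difference $\bm{X}_0\bm{X}_0^{\top}-\bm{M}_\star$: from the optimality of the truncated eigendecomposition, $\|\bm{X}_0\bm{X}_0^{\top}-\bm{M}_\star\|\leq \|\bm{X}_0\bm{X}_0^{\top}-\widehat{\bm{M}}\|+\|\widehat{\bm{M}}-\bm{M}_\star\|\leq 2\|\widehat{\bm{M}}-\bm{M}_\star\|$, hence $\|\bm{X}_0\bm{X}_0^{\top}-\bm{M}_\star\|_{\mathrm{F}}\leq\sqrt{2r}\,\|\bm{X}_0\bm{X}_0^{\top}-\bm{M}_\star\|\leq 2\sqrt{2r}\,\|\widehat{\bm{M}}-\bm{M}_\star\|$, and only then does \eqref{dist_psd_lowrank} give $\mathsf{dist}^2(\bm{X}_0,\bm{X}_\star)\lesssim r\|\widehat{\bm{M}}-\bm{M}_\star\|^2/\sigma_r(\bm{M}_\star)$. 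Your verbal accounting of "a further factor $r$ because the Procrustes lemma involves $\|\cdot\|_{\mathrm{F}}$" suggests you had this in mind, but the displayed chain must be fixed accordingly.

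Second, the intermediate concentration estimate $\|\bm{Y}-\mathbb{E}[\bm{Y}]\|\lesssim\|\bm{M}_\star\|_{\mathrm{F}}\sqrt{n\log n/m}$ is too optimistic in the rank-$r$ case. Since $\mathbb{E}[y_i]=\mathsf{Tr}(\bm{M}_\star)$ while $\mathrm{Var}(y_i)\asymp\|\bm{M}_\star\|_{\mathrm{F}}^2$, the second moment is $\mathbb{E}[y_i^2]=2\|\bm{M}_\star\|_{\mathrm{F}}^2+\bigl(\mathsf{Tr}\,\bm{M}_\star\bigr)^2$, and it is the $\bigl(\mathsf{Tr}\,\bm{M}_\star\bigr)^2$ term that dominates the variance parameter once $r\gtrsim\log n$. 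The sharp bound reads $\|\widehat{\bm{M}}-\bm{M}_\star\|\lesssim\mathsf{Tr}(\bm{M}_\star)\sqrt{n\log n/m}$, which differs from your expression by a factor up to $\sqrt{r}$. This extra $\sqrt{r}$ is absorbed by the generous $r^4$ budget in the hypothesis $m\gtrsim nr^4\log n$, so it does not invalidate the conclusion, but the bookkeeping in your last paragraph (and the attempted accounting of the four powers of $r$) should be revised to reflect it.
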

%

\subsubsection{Blind deconvolution}

The blind deconvolution problem introduced in \sref{BD} has a similar mathematical structure to that of phase retrieval. Recall the sensing model in \eref{samples-BD}. Instead of reconstructing a symmetric rank-1 matrix, we now aim to recover an \emph{asymmetric} rank-1 matrix $\vhp \vxp^\mathsf{H}$ with  sensing matrices  $\mA_i = \vb_i \va_i^\mathsf{H}$ $(1 \le i \le m)$. Following \eref{surrogate}, we form a surrogate matrix
\[
	\bm{Y} = \frac{1}{m} \sum_{i=1}^m y_i \vb_i \va_i^\mathsf{H}.
\]
Let $\bm{u}$, $\bm{v}$, and $\sigma$ denote the leading left singular vector, right singular vector, and singular value, respectively. The initial guess is then formed as 
\begin{align}
	\label{eq:init-BD}
	\bm{h}_0 = \sqrt{\sigma} \, \bm{u} \qquad \text{and} \qquad \bm{x}_0 = \sqrt{\sigma} \, \bm{v}.
\end{align}
This  estimate provably reveals sufficient information about the truth, provided that the sample size $m$ is sufficiently large. 
\begin{theorem}[$\mathsf{Spectral~method~for~blind~deconvolution}$ \cite{li2016deconvolution,ma2017implicit}]
	Consider blind deconvolution in \sref{BD}. Suppose $\vhp$ satisfies the incoherence condition in Definition~\ref{def:BD-mu} with parameter $\mu$, and assume $\|\vhp\|_2=\|\bm{x}_{\star}\|_2$. For any $\zeta > 0$, if  $m \ge c_0 \zeta^{-2} \mu^2 K \log^2 m $ for some sufficiently large constant $c_0 > 0$, then the spectral estimate \eqref{eq:init-BD} obeys
\[
	\min_{\alpha \in \C, \abs{\alpha} = 1} \big\{ \| \alpha \vh_0 - \vhp \|_2 + \| \alpha \vx_0 - \vxp\|_2 \big\}  \le \zeta \|\vhp\|_2
\]
with probability at least $1 - O(m^{-10})$.
\end{theorem}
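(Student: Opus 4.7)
The plan is to mirror the three-step recipe used for the other spectral methods in this section: (i) compute $\mathbb{E}[\bm{Y}]$ and observe that it is a rank-one matrix whose singular triple encodes the truth; (ii) control the spectral-norm perturbation $\|\bm{Y}-\mathbb{E}[\bm{Y}]\|$ with high probability; and (iii) combine Wedin's $\sin\bm{\Theta}$ theorem with Weyl's inequality to convert this spectral-norm estimate into the advertised bound on $\bm{h}_0$ and $\bm{x}_0$.

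For step~(i), since $\bm{a}_j$ has identity covariance, a short computation gives $\mathbb{E}[(\bm{x}_{\star}^{\conj}\bm{a}_j)\bm{a}_j^{\conj}]=\bm{x}_{\star}^{\conj}$, so that $\mathbb{E}[\bm{Y}]$ is proportional to $\big(\sum_j\bm{b}_j\bm{b}_j^{\conj}\big)\bm{h}_{\star}\bm{x}_{\star}^{\conj}$. Because the columns of the partial DFT matrix $\bm{B}$ are orthogonal, $\sum_j\bm{b}_j\bm{b}_j^{\conj}$ is a scalar multiple of $\bm{I}_K$; absorbing this scalar into the normalization yields $\mathbb{E}[\bm{Y}]=\bm{h}_{\star}\bm{x}_{\star}^{\conj}$. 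This rank-one matrix has a unique nonzero singular value $\|\bm{h}_{\star}\|_2\|\bm{x}_{\star}\|_2$, with left and right singular vectors $\bm{h}_{\star}/\|\bm{h}_{\star}\|_2$ and $\bm{x}_{\star}/\|\bm{x}_{\star}\|_2$.

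For step~(ii)---the core of the proof---I would show that with probability at least $1-O(m^{-10})$,
\begin{equation*}
	\|\bm{Y}-\mathbb{E}[\bm{Y}]\|\;\lesssim\;\zeta\,\|\bm{h}_{\star}\|_2\|\bm{x}_{\star}\|_2
\end{equation*}
whenever $m\gtrsim\zeta^{-2}\mu^2K\log^2 m$. Writing $\bm{Y}-\mathbb{E}[\bm{Y}]=\sum_{j=1}^{m}\bm{Z}_j$ as a sum of independent zero-mean summands derived from the $j$th sample, the natural tool is the matrix Bernstein inequality. The per-summand norm bound uses the incoherence hypothesis $|\bm{b}_j^{\conj}\bm{h}_{\star}|\le\mu\|\bm{h}_{\star}\|_2/\sqrt{m}$ from Definition~\ref{def:BD-mu}, while the variance proxy is sharpened through the identity $\sum_j|\bm{b}_j^{\conj}\bm{h}_{\star}|^2=\|\bm{B}\bm{h}_{\star}\|_2^2\propto\|\bm{h}_{\star}\|_2^2$ rather than the crude per-summand bound. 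Once this perturbation estimate is in place, step~(iii) is routine: Theorem~\ref{thm:Wedin} yields $\max\{\mathsf{dist}_{\mathrm{p}}(\bm{u},\bm{h}_{\star}/\|\bm{h}_{\star}\|_2),\mathsf{dist}_{\mathrm{p}}(\bm{v},\bm{x}_{\star}/\|\bm{x}_{\star}\|_2)\}\lesssim\zeta$, Weyl's inequality gives $|\sigma-\|\bm{h}_{\star}\|_2\|\bm{x}_{\star}\|_2|\lesssim\zeta\|\bm{h}_{\star}\|_2\|\bm{x}_{\star}\|_2$, and a triangle-inequality calculation identical to the one carried out between~\eqref{eq:rank1-spectral-example-u} and~\eqref{eq:rank1-spectral-example-lambda} (applied separately to $\bm{h}_0$ and $\bm{x}_0$) then delivers the bound, using the hypothesis $\|\bm{h}_{\star}\|_2=\|\bm{x}_{\star}\|_2$. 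The unit-modulus $\alpha$ in the theorem statement absorbs the single joint phase ambiguity $(\bm{u},\bm{v})\mapsto(\gamma\bm{u},\gamma\bm{v})$ of the complex rank-one SVD, which after the norm-matching assumption coincides exactly with the intrinsic scaling ambiguity of the bilinear model.

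The principal obstacle is step~(ii). Each summand $\bm{Z}_j=y_j\bm{b}_j\bm{a}_j^{\conj}-\mathbb{E}[y_j\bm{b}_j\bm{a}_j^{\conj}]$ is bilinear in the Gaussian vector $\bm{a}_j$, so $\|\bm{Z}_j\|$ has sub-exponential rather than sub-Gaussian tails and a direct application of matrix Bernstein with bounded summands does not apply. The standard remedy is a truncation argument on the rare events $\|\bm{a}_j\|_2\gtrsim\sqrt{N\log m}$ and $|\bm{x}_{\star}^{\conj}\bm{a}_j|\gtrsim\|\bm{x}_{\star}\|_2\sqrt{\log m}$: apply Bernstein to the truncated summands and control the deterministic truncation tail separately. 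This truncation is what inflates the nominal $\mu^2K\log m$ sample requirement into the $\mu^2K\log^2 m$ appearing in the theorem, while the partial-DFT structure of $\bm{B}$ (and in particular the identity relating $\sum_j|\bm{b}_j^{\conj}\bm{h}_{\star}|^2$ to $\|\bm{h}_{\star}\|_2^2$) is what keeps the $\mu$-dependence at the sharp order $\mu^2$ rather than a higher power.
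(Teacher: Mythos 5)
Your three-step plan --- compute $\mathbb{E}[\bm{Y}]$, control $\|\bm{Y}-\mathbb{E}[\bm{Y}]\|$ via a truncated matrix Bernstein argument, then convert through Wedin's $\sin\bm{\Theta}$ theorem and Weyl's inequality --- is exactly the recipe the cited references \cite{li2016deconvolution,ma2017implicit} use, and your population-level calculation $\mathbb{E}[\bm{Y}]\propto\bm{h}_{\star}\bm{x}_{\star}^{\conj}$ (from $\bm{B}^{\conj}\bm{B}=\bm{I}_K$), the use of the incoherence bound together with the identity $\sum_j|\bm{b}_j^{\conj}\bm{h}_{\star}|^2=\|\bm{h}_{\star}\|_2^2$ to sharpen the variance proxy, the truncation on $\|\bm{a}_j\|_2$ and $|\bm{x}_{\star}^{\conj}\bm{a}_j|$ to tame the sub-exponential tails, and the handling of the single joint phase ambiguity are all correct and match the references. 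One small remark worth keeping in mind: when you actually run the matrix Bernstein calculation, the variance proxy in the $N\times N$ direction picks up a factor of $N$ from $\mathbb{E}\big[|\bm{x}_{\star}^{\conj}\bm{a}_j|^2\,\|\bm{a}_j\|_2^2\big]\asymp N\|\bm{x}_{\star}\|_2^2$, so the sample complexity that naturally falls out of this argument is $m\gtrsim\zeta^{-2}\mu^2\max\{K,N\}\log^2 m$ (consistent with the condition in Theorem~\ref{thm:main-BD}); the $N$-dependence appears to have been elided in the statement here, but your strategy would recover it and is otherwise sound.
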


\subsubsection{Matrix completion}

Turning to matrix completion as introduced in \sref{MC}, which is another instance of  matrix sensing with  sensing matrices taking the form of 
\begin{align}
	\label{eq:Aij-MC}
	\mA_{i,j} = \frac{1}{\sqrt{p}} \bm{e}_i \bm{e}_j^{\top} \in \R^{n_1 \times n_2}.
\end{align}
Then the measurements obey $\langle \bm{A}_{i,j}, \bm{M}_{\star} \rangle = \frac{1}{\sqrt{p}} (M_{\star})_{i,j}$. Following the aforementioned procedure, we can form a surrogate matrix as 
\begin{align}
	\label{eq:D_MC}
	\bm{Y} = \sum_{(i, j) \in \Omega} \inprod{ \bm{A}_{i,j}, \mMp} \, \bm{A}_{i,j} = \frac{1}{p} \mathcal{P}_{\Omega}(\bm{M}_{\star}).
\end{align}
Notably, the scaling factor in \eqref{eq:Aij-MC} is chosen to ensure that $\mathbb{E}[\bm{Y}]=\bm{M}_{\star}$. We then construct the initial guess for the low-rank factors $\bL_0$ and $\bR_0$ in the same manner as \eqref{eq:initial_guess}, using $\bY$ in \eqref{eq:D_MC}.

%

%
%
%
%

As $\set{ \mA_{i,j} \charfn_{\{(i,j)\in \Omega\}}}$ is a collection of independent random matrices, we can use the matrix Bernstein inequality \cite{vershynin2010nonasym} to get a high-probability upper bound on the deviation $\norm{ \bm{Y} - \EE[\bm{Y}]}$.  This in turn allows us to apply the matrix perturbation bounds to control the accuracy of the spectral method.

\begin{theorem}[$\mathsf{Spectral~method~for~matrix~completion}$ \cite{sun2016guaranteed,chen2015fast,ma2017implicit}]\label{thm:spectral_MC}
	Consider matrix completion in \sref{MC}. Fix $\zeta>0$, and suppose the condition number $\kappa$ of $\bm{M}^{\star}$ is a fixed constant. There exist a constant $c_0>0$ such that if $np > c_0 \mu^2 r^2 \log n$, then with probability at least $1 - O(n^{-10})$, the spectral estimate \eqref{eq:initial_guess} obeys
	\[
		\mathsf{dist}^2\left({\footnotesize\begin{bmatrix}
\bm{L}_{0} \\
\bm{R}_{0}
	\end{bmatrix}} , {\footnotesize\begin{bmatrix}
\bm{L}_{\star} \\
\bm{R}_{\star}
	\end{bmatrix} }\right)  \leq  \zeta \sigma_{r}(\bm{M}_{\star}) .
	\]		
\end{theorem}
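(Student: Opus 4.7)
The plan is to reduce the claim to a spectral perturbation bound of the form $\|\bm{Y} - \bm{M}_{\star}\| \le \varepsilon\,\sigma_r(\bm{M}_{\star})$ for a sufficiently small $\varepsilon$, and then invoke the same perturbation machinery that proved Theorem~\ref{thm:spectral-sensing}. Indeed, once we have such a spectral-norm estimate, the argument in the proof of Theorem~\ref{thm:spectral-sensing} (which uses only the single bound on $\|\bm{Y}-\bm{M}_{\star}\|$ supplied by Lemma~\ref{lem:perturbation-sensing} and nothing else specific to the sensing operator) carries over verbatim and yields $\mathsf{dist}^2([\bm{L}_0^\top,\bm{R}_0^\top]^\top, [\bm{L}_{\star}^\top,\bm{R}_{\star}^\top]^\top) \le \zeta\,\sigma_r(\bm{M}_{\star})$ whenever $\varepsilon \lesssim \sqrt{\zeta}/\kappa$.

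The main work is therefore concentrated in establishing the spectral bound. First I would write $\bm{Y} - \bm{M}_{\star} = \sum_{(i,j)} \bm{Z}_{i,j}$, where
\begin{equation*}
\bm{Z}_{i,j} = \big(p^{-1}\charfn_{\{(i,j)\in\Omega\}} - 1\big)\, (M_{\star})_{i,j}\,\bm{e}_i\bm{e}_j^{\top}
\end{equation*}
are mutually independent, zero-mean random matrices under the Bernoulli sampling model. The incoherence condition (Definition~\ref{def:mc-incoherence}) is then used to control the per-entry magnitude via the standard bound $\|\bm{M}_{\star}\|_{\infty} \le \mu r \|\bm{M}_{\star}\|/\sqrt{n_1 n_2}$, which follows from writing $(M_{\star})_{i,j} = \bm{e}_i^\top \bm{U}_{\star}\bm{\Sigma}_{\star}\bm{V}_{\star}^\top \bm{e}_j$ and applying Cauchy-Schwarz with \eqref{eq:incoherence-U-MC}-\eqref{eq:incoherence-V-MC}. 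From here one reads off the two ingredients needed by the matrix Bernstein inequality: the almost-sure bound $\|\bm{Z}_{i,j}\| \le \|\bm{M}_{\star}\|_{\infty}/p$ and the matrix variance proxy $\max\{\|\sum\mathbb{E}[\bm{Z}_{i,j}\bm{Z}_{i,j}^\top]\|,\,\|\sum\mathbb{E}[\bm{Z}_{i,j}^\top\bm{Z}_{i,j}]\|\} \lesssim n\|\bm{M}_{\star}\|_{\infty}^2/p$, where the row/column sums of squared entries are again controlled by the incoherence of $\bm{U}_{\star},\bm{V}_{\star}$.

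Feeding these into matrix Bernstein then gives, with probability at least $1-O(n^{-10})$,
\begin{equation*}
\|\bm{Y}-\bm{M}_{\star}\| \lesssim \sqrt{\frac{\mu^2 r^2 \|\bm{M}_{\star}\|^2 \log n}{np}} + \frac{\mu r \|\bm{M}_{\star}\| \log n}{np}.
\end{equation*}
Under the hypothesis $np \gtrsim \mu^2 r^2 \log n$ (with the implicit constant suitably large in terms of $\zeta$, and using that $\kappa$ is assumed constant so $\sigma_r(\bm{M}_{\star}) \asymp \|\bm{M}_{\star}\|$), both terms are dominated by $c\sqrt{\zeta}\,\sigma_r(\bm{M}_{\star})$, which is exactly the spectral-perturbation bound needed to invoke the first step.

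The hard part will not be any single technical estimate---matrix Bernstein supplies the right scaling essentially off the shelf---but rather the bookkeeping of constants and $\kappa$-dependence. In particular, tracking how the $\sqrt{r}\kappa$ factor in the RIP-based Theorem~\ref{thm:spectral-sensing} propagates through the analogous perturbation argument here requires some care, since we must land in the basin $\mathsf{dist}^2 \le \zeta\,\sigma_r(\bm{M}_{\star})$ demanded by the downstream local-refinement results such as Theorem~\ref{thm:main-MC}; this dictates that the constant $c_0$ in the sample-complexity condition depend on both $\zeta$ and (in the general case) on $\kappa$, a dependence that is hidden here only because $\kappa$ is taken to be $O(1)$.
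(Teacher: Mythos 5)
Your high-level strategy---decompose $\bm{Y}-\bm{M}_{\star}$ into independent zero-mean rank-one pieces, apply matrix Bernstein, and then reuse the Procrustes-distance argument from the proof of Theorem~\ref{thm:spectral-sensing}---is precisely what the paper sketches in the paragraph preceding the theorem, so you have the right skeleton. However, two quantitative steps do not carry through as written, and together they cost a factor of $r$ in the sample complexity.

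First, the variance proxy you write down is $n\|\bm{M}_{\star}\|_{\infty}^2/p$, which after inserting $\|\bm{M}_{\star}\|_{\infty}\le \mu r\|\bm{M}_{\star}\|/\sqrt{n_1n_2}$ evaluates to $(\mu r)^2\|\bm{M}_{\star}\|^2/(np)$. This is loose. The sum $\sum_{i,j}\mathbb{E}[\bm{Z}_{i,j}\bm{Z}_{i,j}^\top]$ is the diagonal matrix with entries $\frac{1-p}{p}\|(\bm{M}_{\star})_{i,\cdot}\|_2^2$, and the incoherence condition controls the row norm directly: $\|(\bm{M}_{\star})_{i,\cdot}\|_2\le \|\bm{e}_i^\top\bm{U}_{\star}\|_2\|\bm{\Sigma}_{\star}\|\le \sqrt{\mu r/n_1}\,\|\bm{M}_{\star}\|$. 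This gives variance $\mu r\|\bm{M}_{\star}\|^2/(np)$---smaller than your bound by a factor of $\mu r$. You mention ``row/column sums'' in passing, but the formula you feed into Bernstein is the entrywise one, and that is what shows up in your final display. Second, your claim that $\|\bm{Y}-\bm{M}_{\star}\|\le\varepsilon\,\sigma_r(\bm{M}_{\star})$ with $\varepsilon\lesssim\sqrt{\zeta}/\kappa$ suffices is incorrect. In the proof of Theorem~\ref{thm:spectral-sensing}, passing from $\|\bm{X}_0\bm{X}_0^\top-\bm{M}_{\star}\|$ to $\|\bm{X}_0\bm{X}_0^\top-\bm{M}_{\star}\|_{\mathrm{F}}$ (so that the lemma from~\cite{tu2015low} can be applied) costs a factor $\sqrt{2r}$ because the difference has rank at most $2r$. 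That gives $\mathsf{dist}^2(\bm{X}_0,\bm{X}_{\star})\lesssim r\|\bm{Y}-\bm{M}_{\star}\|^2/\sigma_r(\bm{M}_{\star})$, so the threshold you actually need is $\varepsilon\lesssim\sqrt{\zeta/r}$. (The remark in Appendix~E that one can replace the spectral norm by the Frobenius norm on the left of \eqref{eq:spectrl_norm_error} and thereby erase that $\sqrt{r}$ is an RIP-specific argument from~\cite{oymak2018sharp}; it does not transfer to the completion operator, so it cannot be invoked here.) Chasing your two displayed bounds through the correct threshold $\varepsilon\lesssim\sqrt{\zeta/r}$ yields $np\gtrsim \mu^2 r^3\kappa^2\log n/\zeta$, one factor of $r$ worse than claimed. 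With the sharper variance $\mu r\|\bm{M}_{\star}\|^2/(np)$ the same computation gives $np\gtrsim\mu r^2\kappa^2\log n/\zeta$, which is $\le c_0\mu^2 r^2\log n$ since $\mu\ge 1$ and $\kappa,\zeta$ are fixed---so the fix is simply to bound the row $\ell_2$ norms directly rather than via $\|\bm{M}_{\star}\|_\infty$.
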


\subsection{Variants of spectral methods}
\label{sec:improved-spectral}

We illustrate modifications to the spectral method, which are often found necessary to further enhance sample efficiency, increase robustness to outliers, and incorporate signal priors.

\subsubsection{Truncated spectral method for sample efficiency} \label{sec:improved-spectral-truncated}

The generic recipe for spectral methods described above works well when one has  sufficient samples compared to the underlying signal dimension. It might not be effective though if the sample complexity is on the order of  the information-theoretic limit. 

In what follows, we use phase retrieval to demonstrate the underlying issues and how to address them\footnote{Strategies of similar spirit have been proposed for other problems; see, e.g.,~\cite{keshavan2010matrix} for matrix completion. }. Recall that Theorem~\ref{thm:spectral_PR} requires a sample complexity $m \gtrsim n \log n$, which is a logarithmic factor larger than the  signal dimension $n$.  
%
What happens if we only have access to $m \asymp n$ samples, which is the information-theoretic limit (order-wise) for phase retrieval? In this more challenging regime, it turns out that we have to modify the standard recipe by applying appropriate preprocessing before forming the surrogate matrix $\bm{Y}$ in \eref{surrogate}.

We start by explaining why the surrogate matrix \eref{D_PR_org} for phase retrieval must be suboptimal in terms of sample complexity. For all $1 \le j \le m$,
\begin{equation*}
	\norm{\bm{Y}} \ge \frac{\va_j^\top \bm{Y} \va_j}{\va_j^\top \va_j}=  \frac{1}{m} \sum_{i=1}^m y_i \frac{(\va_i^\top \va_j)^2}{\va_j^\top \va_j} \geq \frac{1}{m}  y_j  \| \va_j \|_2^2.
\end{equation*}
In particular, taking $j = i^\ast  = \underset{{1 \le i \le m}}{\arg \max} \ y_i$ gives us
\begin{equation}\label{eq:bound_id2}
	\norm{\bm{Y}} \ge \frac{(\max_i y_i) \| \va_{i^\ast} \|_2^2}{m}.
\end{equation}
Under the Gaussian design, $\set{y_i / \| \vxp \|_2^2}$ is a collection of i.i.d.~$\chi^2$ random variables with 1 degree of freedom. It follows from well-known estimates in extreme value theory \cite{Ferguson:1996} that 
\[
\max_{1\le i \le m} \, y_i \approx 2\|\vxp\|_2^2 \log m,\qquad \text{as}\; m \to \infty.
\] 
Meanwhile, $\| \va_{i^\ast} \|_2^2  \approx n$ for $n$ sufficiently large. It follows from \eqref{eq:bound_id2} that
\begin{equation}\label{eq:PR_sc}
	\norm{\bm{Y}} \ge \big(1+o(1)\big) \|\vxp\|_2^2 (n \log m) / m.
\end{equation}
%



Recall that $\mathbb{E}[\bm{Y}] = 2 \vxp \vxp^\top +  \| \vxp\|_2^2 \boldsymbol{I}_n$ has a bounded spectral norm, then \eref{PR_sc} implies that, to keep the deviation between $\bm{Y}$ and $\mathbb{E}[\bm{Y}]$ well-controlled, we must at least have
\[
(n \log m) / m \lesssim 1.
\]
This condition, however, cannot be satisfied when we have linear sample complexity $m \asymp  n$. This explains why we need a sample complexity $m \gtrsim n \log n$ in Theorem~\ref{thm:spectral_PR}. 

The above analysis also suggests an easy fix: since the main culprit lies in the fact that $\max_i y_i$ is unbounded (as $m \to \infty$), we can apply a preprocessing function $\mathcal{T}(\cdot)$ to $y_i$ to keep the quantity bounded. Indeed, this is the key idea behind the {\em truncated spectral method} proposed by Chen and Cand\`es \cite{chen2015solving}, in which the surrogate matrix is modified as
\begin{equation}
	\label{eq:D_PR_T}
	\bm{Y}_{\mathcal{T}} := \frac{1}{m} \sum_{i=1}^m \mathcal{T}(y_i) \va_i \va_i^\top,
\end{equation} 
where 
\begin{equation}\label{eq:trimming}
\mathcal{T}(y) := y \charfn_{\set{\abs{y} \le  \frac{\gamma}{m} \sum_{i=1}^m y_i}}
\end{equation}
for some predetermined truncation threshold $\gamma$. The initial point $\bm{x}_0$ is then formed by scaling the leading eigenvector of $\bm{Y}_{\mathcal{T}}$ to have roughly the same norm of $\bx_{\star}$, which can be estimated by $\sigma =\frac{1}{m} \sum_{i=1}^m y_i $. This is essentially performing a trimming operation, removing any entry of $y_i$ that bears too much influence on the leading eigenvector.  The trimming step turns out to be very effective, allowing one to achieve order-wise optimal sample complexity. 
%
%

%
\begin{theorem}[$\mathsf{Truncated~spectral~method~for~phase~retrieval}$ \cite{chen2015solving}]
	\label{thm:truncated-PR}
	Consider phase retrieval in \sref{PR}. Fix any $\zeta > 0$, and suppose $m \ge c_0 n $ for some sufficiently large constant $c_0>0$.  Then the truncated spectral estimate obeys
\[
	\min\set{ \| \vx_0  - \vxp \|_2,  \| \vx_0 + \vxp  \|_2 } \le \zeta \| \vxp \|_2
\]
with probability at least $1 - O(n^{-2})$.
\end{theorem}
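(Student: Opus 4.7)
Without loss of generality, I would first rescale to assume $\|\vxp\|_2=1$. The plan is to establish that $\bm{Y}_{\mathcal{T}}$ concentrates tightly around its expectation even in the regime $m\asymp n$, and then invoke Davis--Kahan together with a control on the scaling factor $\sigma$.

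\textbf{Step 1 (decoupling the data-dependent threshold).} The truncation level $\tau_m:=\tfrac{\gamma}{m}\sum_i y_i$ is random. Standard $\chi^2$ concentration gives $|\sigma-1|\le \delta$ with probability $1-O(e^{-cm})$ for any fixed $\delta>0$ as soon as $m\gtrsim n$. On this event I replace $\tau_m$ by deterministic thresholds $\tau_\pm=\gamma(1\pm\delta)$ to sandwich $\bm{Y}_{\mathcal{T}}$ between two matrices $\bm{Y}_{\mathcal{T},\tau_-}$ and $\bm{Y}_{\mathcal{T},\tau_+}$ whose summands use a fixed (non-random) truncation.

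\textbf{Step 2 (expectation of the deterministic-threshold surrogate).} For the fixed-threshold version, by rotational invariance of the Gaussian $\va_i$, a direct computation of $\EE[y_i\,\charfn_{\{y_i\le\tau\}}\va_i\va_i^\top]$ using that $\va_i$ conditioned on $\va_i^\top\vxp$ decomposes into the $\vxp$-component and an independent Gaussian in the orthogonal complement yields
\begin{equation*}
\EE[\bm{Y}_{\mathcal{T},\tau}] \;=\; \alpha(\tau)\,\vxp\vxp^\top + \beta(\tau)\,\bm{I}_n,
\end{equation*}
where $\alpha(\tau),\beta(\tau)$ are explicit truncated moments of a $\chi^2_1$ distribution. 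For $\gamma$ chosen as in \cite{chen2015solving}, $\alpha(\tau)$ is a positive constant of order one, so the eigen-gap of $\EE[\bm{Y}_{\mathcal{T},\tau}]$ is $\alpha(\tau)\gtrsim 1$.

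\textbf{Step 3 (matrix concentration at linear sample size -- the main obstacle).} This is the key step where truncation buys us the optimal sample complexity. The summands $Z_i:=\mathcal{T}(y_i)\va_i\va_i^\top$ are now \emph{bounded in an effective sense}: the scalar $\mathcal{T}(y_i)$ is at most $\tau=O(1)$, while $\va_i\va_i^\top$ is sub-exponential with $\|\va_i\|_2^2\lesssim n$ w.h.p. A truncated matrix Bernstein inequality (or equivalently, a covering+Hanson--Wright argument over the unit sphere) then yields
\begin{equation*}
\bigl\|\bm{Y}_{\mathcal{T},\tau}-\EE[\bm{Y}_{\mathcal{T},\tau}]\bigr\|\;\lesssim\;\sqrt{n/m}+n/m
\end{equation*}
with probability $1-O(n^{-2})$. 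For $m\ge c_0 n$ with $c_0$ large, this is $\le \zeta'$ for an arbitrarily small $\zeta'$. Sandwiching with Step~1 transfers this bound to $\bm{Y}_{\mathcal{T}}$ itself, up to an additional $O(\delta)$ term that is controlled by choosing $\delta$ small. This step is where the truncation is essential: without it, the heavy tail of $y_i\|\va_i\|_2^2$ forces an extra $\log n$ factor in $m$, as explained in \eqref{eq:PR_sc}.

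\textbf{Step 4 (from spectral perturbation to $\ell_2$ error).} Applying the Davis--Kahan $\sin\bm{\Theta}$ theorem (Theorem~\ref{thm:DK}) to $\bm{Y}_{\mathcal{T}}$ against $\EE[\bm{Y}_{\mathcal{T},\tau}]$, whose eigen-gap is $\alpha(\tau)\gtrsim 1$, the leading eigenvector $\bm{u}$ satisfies $\mathsf{dist}_{\mathrm{p}}(\bm{u},\vxp)\le \zeta'/\alpha(\tau)$, i.e. after sign choice, $\|\bm{u}-\vxp\|_2$ or $\|\bm{u}+\vxp\|_2$ is at most $O(\zeta')$. Weyl's inequality together with Step~1 gives $|\sqrt{\sigma}-1|\le O(\delta)$. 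Combining via the triangle inequality
\begin{equation*}
\bigl\|\sqrt{\sigma}\,\bm{u}\mp\vxp\bigr\|_2\;\le\;\sqrt{\sigma}\,\|\bm{u}\mp\vxp\|_2+|\sqrt{\sigma}-1|\,\|\vxp\|_2
\end{equation*}
and choosing $\zeta',\delta$ sufficiently small as functions of $\zeta$ yields the advertised bound. Undoing the normalization $\|\vxp\|_2=1$ completes the argument. The main obstacle throughout is Step~3: proving that truncation allows $m\asymp n$ rather than $m\asymp n\log n$, which is exactly the statistical benefit of the preprocessing $\mathcal{T}(\cdot)$ highlighted in \eqref{eq:PR_sc}.
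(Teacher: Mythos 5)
Your proposal follows the same outline as the proof in \cite{chen2015solving}: decouple the data-dependent threshold via concentration of $\frac{1}{m}\sum_i y_i$, compute $\EE[\bm{Y}_{\mathcal{T},\tau}]$ by Gaussian rotational invariance to extract the rank-one signal with an $O(1)$ eigen-gap, establish spectral concentration of the surrogate at $m\asymp n$, and finish with Davis--Kahan and Weyl. Steps~1, 2 and 4 are correct as written.

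The one imprecision worth flagging is in Step~3, where you hedge between ``a truncated matrix Bernstein inequality'' and ``a covering $+$ Hanson--Wright argument'' as if they were interchangeable. They are not, and only the second one closes the argument. After truncation the summands $\mathcal{T}(y_i)\va_i\va_i^\top$ are still rank-one matrices with operator norm $\asymp n$, so the matrix variance proxy $\big\|\sum_i\EE[X_i^2]\big\|$ and the uniform bound $R$ are both $\asymp n/m$; the dimensional prefactor $2n$ in matrix Bernstein then forces a deviation of order $\sqrt{(n\log n)/m}+(n\log n)/m$ to reach probability $1-O(n^{-2})$, i.e.\ you recover exactly the $\log n$ loss that the paper attributes to \eqref{eq:PR_sc}. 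What truncation actually buys you is that the weights $w_i:=\mathcal{T}(y_i)$ are bounded by an absolute constant and are functions only of the scalar $\xi_i:=\va_i^\top\vxp/\|\vxp\|_2$. Carrying the decomposition $\va_i=\xi_i\vxp/\|\vxp\|_2+\va_{i,\perp}$ --- which you use only to compute the expectation in Step~2 --- into the concentration step, and then conditioning on $\{\xi_i\}_i$, reduces the dominant term to a bounded-weight Gaussian sample covariance $\frac{1}{m}\sum_i w_i\va_{i,\perp}\va_{i,\perp}^\top$. This concentrates at rate $\sqrt{n/m}+n/m$ with probability $1-e^{-cn}$ (no log factor) via an $\varepsilon$-net over the sphere together with scalar sub-exponential Bernstein, which is precisely what the paper's proof does. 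You should drop the ``matrix Bernstein'' alternative and make the conditioning explicit; otherwise the outline is sound.
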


Subsequently, several different designs of the preprocessing function have been proposed in the literature. One example was given in \cite{wang2017solving}, where
\begin{equation}\label{eq:subset}
	\mathcal{T}(y)  = \charfn_{\set{y \ge \gamma  }},
\end{equation}
where $\gamma$ is the $(cm)$-largest value (e.g.~$c=1/6$) in $\{ y_j \}_{1\le j\le m}$. In words, this method only employs a subset of design vectors that are better aligned with the truth $\bm{x}_{\star}$.  By properly tuning the parameter $c$, this truncation scheme performs competitively as the scheme in \eref{trimming}.

\subsubsection{Truncated spectral method for removing sparse outliers}

When the samples are susceptible to adversarial entries, e.g. in the robust phase retrieval problem \eqref{eq:robust_pr_model}, the spectral method might not work properly even with the presence of a single outlier whose magnitude can be arbitrarily large to perturb the leading eigenvector of $\bm{Y}$. To mitigate this issue, a median-truncation scheme was proposed in \cite{zhang2016provable,li2017nonconvex}, where
\begin{equation}\label{eq:T_median}
	\mathcal{T}(y)  = y \charfn_{\set{y_i \leq \gamma \, \mathsf{median}\{ y_j \}_{j=1}^m}}
\end{equation}
for some predetermined constant $\gamma >0$.
 By including only a subset of samples whose values are not excessively large compared with the sample median of the samples, the preprocessing function in \eref{T_median} makes the spectral method more robust against sparse and large outliers.
 
\begin{theorem}[$\mathsf{Median}$-$\mathsf{truncated~spectral~method~for~robust}$ $\mathsf{phase~retrieval}$ \cite{zhang2016provable}]
	\label{thm:truncated-PR-init}
	Consider the robust phase retrieval problem in \eqref{eq:robust_pr_model}, and fix any $\zeta > 0$. There exist some constants  $c_0, c_1>0$ such that if 
$m\geq c_{0}n$ and $\alpha\leq c_1$, then the median-truncated spectral estimate obeys
\[
	\min\set{ \| \vx_0  - \vxp \|_2,  \| \vx_0 + \vxp  \|_2 } \le \zeta \| \vxp \|_2
\]
with probability at least $1 - O(n^{-2})$.
\end{theorem}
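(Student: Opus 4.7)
The plan is to reduce this to a matrix perturbation argument in the spirit of Theorem~\ref{thm:truncated-PR}, with the crucial new ingredient being the breakdown-point robustness of the sample median. As a first step I would show that the sample median $M := \mathsf{median}\{y_j\}_{1\leq j\leq m}$ is sharply concentrated around a deterministic quantity of order $\|\bm{x}_\star\|_2^2$. Because the median is insensitive to any $\alpha<1/2$ fraction of arbitrary corruptions, and because for a clean index $y_i/\|\bm{x}_\star\|_2^2\sim\chi_1^2$ has a positive density at its population median $c_m\approx 0.455$, a standard DKW/order-statistic argument gives, with probability $1-O(n^{-2})$, that $M=(c_m+o(1))\|\bm{x}_\star\|_2^2$ as long as $\alpha\le c_1$ for some sufficiently small constant $c_1<1/2$ and $m\gtrsim n$. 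Consequently the data-adaptive truncation threshold $\gamma M$ lies in a narrow deterministic window $[\tau_-,\tau_+]$ with $\tau_\pm\asymp\|\bm{x}_\star\|_2^2$.

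Next I would decompose the surrogate as $\bm{Y}_{\mathcal{T}}=\bm{Y}_{\mathcal{T}}^{\mathrm{cln}}+\bm{Y}_{\mathcal{T}}^{\mathrm{out}}$, where $\bm{Y}_{\mathcal{T}}^{\mathrm{cln}}$ collects the contributions from $i\notin\mathcal{S}$. Conditioning on the event $M\in[\tau_-/\gamma,\tau_+/\gamma]$, the data-dependent indicator $\charfn_{\{y_i\le\gamma M\}}$ is sandwiched between the deterministic indicators $\charfn_{\{y_i\le\tau_-\}}$ and $\charfn_{\{y_i\le\tau_+\}}$, which decouples the threshold from $y_i$ itself. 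A direct Gaussian-integral computation then yields $\mathbb{E}[\mathcal{T}(y_i)\bm{a}_i\bm{a}_i^\top]=\mu_1\,\bm{x}_\star\bm{x}_\star^\top/\|\bm{x}_\star\|_2^2+\mu_2\,\bm{I}_n$ for strictly positive constants $\mu_1,\mu_2$, so the population matrix has a constant spectral gap $\mu_1\asymp\|\bm{x}_\star\|_2^2$ between the $\bm{x}_\star$ direction and its orthogonal complement. Because each summand is now bounded ($|\mathcal{T}(y_i)|\le\tau_+\lesssim\|\bm{x}_\star\|_2^2$), a matrix Bernstein inequality gives $\|\bm{Y}_{\mathcal{T}}^{\mathrm{cln}}-\mathbb{E}\bm{Y}_{\mathcal{T}}^{\mathrm{cln}}\|\le\tfrac{\mu_1}{4}$ with the desired probability whenever $m\ge c_0 n$, matching the truncated-Wirtinger analysis of~\cite{chen2015solving}.

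For the outlier part I would exploit the fact that the very truncation $\charfn_{\{y_i\le\gamma M\}}$ automatically caps every retained outlier at magnitude $\gamma M\lesssim\|\bm{x}_\star\|_2^2$, irrespective of how adversarial its raw value was. Hence $\|\bm{Y}_{\mathcal{T}}^{\mathrm{out}}\|\le \tfrac{\gamma M}{m}\sum_{i\in\mathcal{S}}\|\bm{a}_i\|_2^2 \lesssim \alpha\,\|\bm{x}_\star\|_2^2$, where the last step uses $\|\bm{a}_i\|_2^2\lesssim n$ uniformly over $i$ and $|\mathcal{S}|=\alpha m$. Picking $c_1$ small enough (relative to $\mu_1$ and $\zeta$) makes this contribution a small fraction of the spectral gap. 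Combining the clean and outlier perturbation bounds and invoking the Davis--Kahan Theorem~\ref{thm:DK} on $\bm{Y}_{\mathcal{T}}$ with the rank-one population proxy identified above delivers $\mathsf{dist}_{\mathrm{p}}(\bm{u},\bm{x}_\star/\|\bm{x}_\star\|_2)\le\zeta$; rescaling $\bm{u}$ by $\|\bm{x}_\star\|_2$ (which can be estimated from the trimmed sample mean, as in the non-robust case) and modding out the global sign ambiguity yield the advertised bound.

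The main obstacle is the circular dependence introduced by the data-adaptive threshold $M$: the truncation indicator for sample $i$ depends on all samples, including $y_i$ and the unknown set $\mathcal{S}$ of adversarial indices. The sandwiching argument above neutralizes this coupling, but it succeeds only because the median is robust to a constant fraction of corruptions. This is why $\alpha$ must be bounded by a universal constant strictly smaller than $1/2$; pushing $\alpha$ closer to the breakdown point would require sharper quantile concentration and a more delicate control of the density of $y_i$ near the threshold.
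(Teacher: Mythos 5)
Your overall strategy (median concentration, indicator sandwiching, matrix Bernstein for the clean part, direct bound for the outlier part, then Davis--Kahan) is the right blueprint and mirrors the analysis in \cite{zhang2016provable}. However, there is a genuine gap in your bound on the outlier contribution. You write
\[
\|\bm{Y}_{\mathcal{T}}^{\mathrm{out}}\|\le \tfrac{\gamma M}{m}\sum_{i\in\mathcal{S}}\|\bm{a}_i\|_2^2 \lesssim \alpha\,\|\bm{x}_\star\|_2^2,
\]
justifying the last inequality by $\|\bm{a}_i\|_2^2\lesssim n$ and $|\mathcal{S}|=\alpha m$. But then the middle quantity is $\frac{\gamma M}{m}\cdot\alpha m\cdot n\asymp\alpha n\,\|\bm{x}_\star\|_2^2$, not $\alpha\|\bm{x}_\star\|_2^2$. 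The trace bound $\|\sum_{i\in\mathcal{S}}\bm{a}_i\bm{a}_i^\top\|\le\sum_{i\in\mathcal{S}}\|\bm{a}_i\|_2^2$ is off by a factor of $n$, which completely drowns the spectral gap $\mu_1\asymp\|\bm{x}_\star\|_2^2$ and makes the subsequent Davis--Kahan step vacuous.

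The fix is to replace the trace bound with a non-asymptotic operator-norm bound on the subsampled covariance: for a set $\mathcal{S}$ of cardinality $\alpha m$ independent of $\{\bm{a}_i\}$, standard Gaussian concentration gives $\|\sum_{i\in\mathcal{S}}\bm{a}_i\bm{a}_i^\top\|\lesssim \alpha m + n$ with high probability, so that $\|\bm{Y}_{\mathcal{T}}^{\mathrm{out}}\|\lesssim \gamma M\,(\alpha + n/m)\lesssim(\alpha + 1/c_0)\|\bm{x}_\star\|_2^2$, which is indeed a small fraction of $\mu_1$ once $\alpha\le c_1$ and $m\ge c_0 n$ with $c_0$ large. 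If the corruption pattern $\mathcal{S}$ is allowed to depend adversarially on $\{\bm{a}_i\}$, you further need a uniform bound over all $\binom{m}{\alpha m}$ subsets of size $\alpha m$, which inflates the estimate by a multiplicative $\log(e/\alpha)$ factor via a union-bound-over-subsets argument; this is still within budget for constant $\alpha$ but is a nontrivial additional step that the trace inequality does not substitute for. The rest of your proposal --- the median concentration via the chi-squared density, the sandwiching of the data-adaptive threshold between $\tau_\pm$, the population computation yielding $\mu_1\bm{x}_\star\bm{x}_\star^\top/\|\bm{x}_\star\|_2^2+\mu_2\bm{I}_n$, and the final Davis--Kahan invocation --- is sound.
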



The idea of applying truncation to form a spectral estimate is also used in the robust PCA problem (see \sref{RPCA}). Since the observations are also potentially corrupted by large but sparse outliers, it is useful to first clean up the observations $(\bm{\Gamma}_{\star})_{i,j} = (\bm{M}_{\star})_{i,j} + (\bm{S}_{\star})_{i,j}$ before constructing the surrogate matrix as in \eref{D_MC}. Indeed, this is the strategy proposed in \cite{yi2016fast}. 
We start by forming an estimate of the sparse outliers via the hard-thresholding operation $\mathcal{H}_l(\cdot)$ defined in \eqref{eq:hard_threshodling}, as
\begin{equation}\label{eq:hard_threshodling_init}
\bm{S}_{0}=\mathcal{H}_{c\alpha np}\big(\mathcal{P}_{\Omega}(\bm{\Gamma}_{\star} )\big).
\end{equation}
where $c>0$ is some predetermined constant (e.g.~$c=3$) and $\mathcal{P}_{\Omega}$ is defined in \eqref{defn:Pomega}. Armed with this estimate, we form the surrogate matrix as
\begin{equation}
	\label{eq:Y-RPCA}
	\bY =\frac{1}{p} \mathcal{P}_{\Omega}(\bm{\Gamma}_{\star}   - \bm{S}_{0}).
\end{equation}

One can then apply the spectral method to $\bm{Y}$ in \eqref{eq:Y-RPCA} (similar to the matrix completion case). This approach enjoys the following performance guarantee.
\begin{theorem}[$\mathsf{Spectral~method~for~robust~PCA}$ \cite{yi2016fast}]\label{thm:spectral_RPCA}
Suppose that the condition number $\kappa$ of $\bm{M}_{\star}= \bm{L}_{\star}\bm{R}_{\star}^{\top}$ is a fixed constant. Fix $\zeta>0$. If the sample size and the sparsity fraction satisfy $n^{2}p\geq c_0\mu r^{2}n\log n$ and $\alpha\leq c_1/( \mu r^{3/2})$ for
	some large constant $c_0,c_1>0$, then 
with probability at least $1-O\left(n^{-1}\right)$,  
	\[
		\mathsf{dist}^2 \left( {\footnotesize\begin{bmatrix}
\bm{L}_{0} \\
\bm{R}_{0}
	\end{bmatrix}} , {\footnotesize\begin{bmatrix}
\bm{L}_{\star} \\
\bm{R}_{\star}
	\end{bmatrix} } \right)  \leq  \zeta \sigma_{r}(\bm{M}_{\star}) .
	\]	
\end{theorem}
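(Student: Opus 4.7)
The plan is to first derive a spectral-norm bound of the form $\|\bm{Y} - \bm{M}_\star\| \lesssim \sqrt{\zeta/r}\,\sigma_r(\bm{M}_\star)$ and then invoke the matrix perturbation argument already used in the proof of Theorem~\ref{thm:spectral-sensing} (borrowed from \cite{tu2015low}), which converts any such spectral-norm bound into the claimed two-factor distance bound. With this reduction in mind, I would decompose
\begin{equation*}
\bm{Y} - \bm{M}_\star \;=\; \underbrace{\big(p^{-1}\mathcal{P}_\Omega - \mathcal{I}\big)(\bm{M}_\star)}_{=:\,\bm{E}_1} \;+\; \underbrace{p^{-1}\mathcal{P}_\Omega(\bm{S}_\star - \bm{S}_0)}_{=:\,\bm{E}_2}
\end{equation*}
and control the two pieces separately.

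The term $\bm{E}_1$ is exactly the matrix-completion noise analyzed in the proof of Theorem~\ref{thm:spectral_MC}. Applying the matrix Bernstein inequality to the independent zero-mean matrices $\{(p^{-1}\mathds{1}_{(i,j)\in\Omega}-1)(M_\star)_{ij}\bm{e}_i\bm{e}_j^{\top}\}$, and plugging in the entrywise incoherence bound $\|\bm{M}_\star\|_\infty \leq \mu r\,\sigma_1(\bm{M}_\star)/\sqrt{n_1 n_2}$ implied by Definition~\ref{def:mc-incoherence}, yields with high probability $\|\bm{E}_1\| \lesssim \sigma_1(\bm{M}_\star)\sqrt{\mu^2 r^2\log n/(np)}$. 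Under the hypothesis $n^2 p \geq c_0\,\mu r^2 n\log n$ with $c_0$ large and $\kappa$ a fixed constant, this is of the desired order $\sqrt{\zeta/r}\,\sigma_r(\bm{M}_\star)$.

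The more delicate term is $\bm{E}_2$, for which I would simultaneously establish two properties of $\bm{N}:=\mathcal{P}_\Omega(\bm{S}_\star-\bm{S}_0)$: (i) $\bm{N}$ has at most $O(\alpha np)$ nonzeros in each row and each column, and (ii) $\|\bm{N}\|_\infty \lesssim \|\bm{M}_\star\|_\infty$. For (i), a Chernoff plus union-bound argument on $\{\mathds{1}_{(i,j)\in\Omega}\}$ shows that w.h.p.\ at most $O(\alpha np)$ entries of $\mathrm{supp}(\bm{S}_\star)$ land in $\Omega$ in any given row or column; combined with the definition of $\mathcal{H}_{c\alpha np}$ (which by construction outputs $\bm{S}_0$ with at most $c\alpha np$ nonzeros per row and per column), (i) follows. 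For (ii), I would reason case-by-case on $(i,j)\in\Omega$: if $(i,j)\notin\mathrm{supp}(\bm{S}_0)\cup\mathrm{supp}(\bm{S}_\star)$, the residual is $0$; if $(i,j)\in\mathrm{supp}(\bm{S}_0)$ then $(\bm{S}_\star-\bm{S}_0)_{ij}$ telescopes to $-(\bm{M}_\star)_{ij}$; the only nontrivial case is when $(i,j)$ is an outlier location discarded by $\mathcal{H}_{c\alpha np}$. In this case, a pigeonhole argument closes the gap: since the budget $c\alpha np$ strictly exceeds the actual per-row outlier count, at least one of the $c\alpha np$ largest entries in row $i$ of $\mathcal{P}_\Omega(\bm{\Gamma}_\star)$ must come from $\bm{M}_\star$ alone and hence has magnitude at most $\|\bm{M}_\star\|_\infty$; the discarded entry cannot exceed it, forcing $|(\bm{\Gamma}_\star)_{ij}|\lesssim\|\bm{M}_\star\|_\infty$ and thus $|(\bm{S}_\star)_{ij}|\lesssim\|\bm{M}_\star\|_\infty$. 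Combining (i) and (ii) via the elementary inequality $\|\bm{A}\|\leq\sqrt{k_{\mathrm{row}}\,k_{\mathrm{col}}}\,\|\bm{A}\|_\infty$ for row/column-sparse matrices, I obtain $\|\bm{E}_2\|\lesssim p^{-1}\cdot\alpha np\cdot\|\bm{M}_\star\|_\infty \lesssim \alpha\,\mu r\,\sigma_1(\bm{M}_\star)$, which under $\alpha\leq c_1/(\mu r^{3/2})$ and constant $\kappa$ is once again of order $\sqrt{\zeta/r}\,\sigma_r(\bm{M}_\star)$.

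The main obstacle is the entrywise control in (ii): a naive bound would give $|(\bm{S}_\star)_{ij}|\leq|(\bm{\Gamma}_\star)_{ij}|$, which may be arbitrarily large because outliers are unbounded. The pigeonhole trick described above is what turns this uncontrolled quantity into something comparable to $\|\bm{M}_\star\|_\infty$, and it is precisely this step that forces the threshold level to be $c\alpha np$ with $c>1$ (rather than merely $\alpha np$) and that pins down the sparsity requirement $\alpha\lesssim 1/(\mu r^{3/2})$ needed to close the final estimate.
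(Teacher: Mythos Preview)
The paper does not actually give a proof of this theorem; it is stated with a citation to \cite{yi2016fast} and no argument appears anywhere in the text or appendix. Your proposal is therefore not comparable to anything in the paper itself, but it \emph{is} essentially the approach of the cited reference, and it is consistent with the framework the paper sets up (reduce to a spectral-norm perturbation bound and then invoke the argument from Theorem~\ref{thm:spectral-sensing}\,/\,\cite{tu2015low}). The pigeonhole step you describe for controlling the discarded outliers is indeed the key idea.

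One quantitative slip: your bound on $\bm{E}_1$ is looser than needed. Using only the entrywise bound $\|\bm{M}_\star\|_\infty\leq \mu r\,\sigma_1/\sqrt{n_1n_2}$ in the variance term of matrix Bernstein gives $\|\bm{E}_1\|\lesssim\sigma_1\sqrt{\mu^2 r^2\log n/(np)}$, which would force $np\gtrsim \mu^2 r^3\log n$ rather than the stated $np\gtrsim \mu r^2\log n$. The sharper route is to bound the variance via the row-norm incoherence $\max_i\|(\bm{M}_\star)_{i,\cdot}\|_2^2\leq (\mu r/n)\,\sigma_1^2$, which yields $\|\bm{E}_1\|\lesssim\sigma_1\sqrt{\mu r\log n/(np)}$ and recovers the sample complexity claimed in the theorem. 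Everything else in your sketch is sound.
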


\subsubsection{Spectral method for sparse phase retrieval}\label{sec:spectral_init_spr}
Last but not least, we briefly discuss how the spectral method can be modified to incorporate structural priors. As before, we use  the example of sparse phase retrieval to illustrate the strategy, where we assume $\bx_{\star}$ is  $k$-sparse (see Section~\ref{sec:pgd_sparse_pr}). A simple idea is to first identify the support of $\bx_{\star}$, and then try to estimate  the nonzero values by applying the spectral method over the submatrix of $\bY$ on the estimated support. Towards this end, we recall that $\mathbb{E}[\bY] = 2\bx_{\star}\bx_{\star}^{\top}+ \|\bx_{\star}\|_2^2\bI_n$, and therefore the larger ones of its diagonal entries are more likely to be included in the support. In light of this, a simple thresholding strategy adopted in \cite{cai2016optimal} is to compare $Y_{i,i}$ against some preset threshold $\gamma$:
$$ \hat{\mathcal{S}} = \{i: \; Y_{i,i}> \gamma \} .$$
The nonzero part of $\bx_{\star}$ is then found by applying the spectral method outlined in Section~\ref{sec:PR_init} to $\bm{Y}_{\hat{\mathcal{S}}} =\frac{1}{m}\sum_{i=1}^m y_i \bm{a}_{i,\hat{\mathcal{S}}}\bm{a}_{i,\hat{\mathcal{S}}}^{\top}$, where $\bm{a}_{i,\hat{\mathcal{S}}}$ is the subvector of $\bm{a}_{i}$ coming from the support $\hat{\mathcal{S}}$. 
In short, this strategy provably leads to a reasonably good initial estimate, as long as $m\gtrsim k^2 \log n$; the complete theory can be found in \cite{cai2016optimal}. See also \cite{wang66sparse} for a more involved approach, which provides better empirical performance.

\subsection{Precise asymptotic characterization and phase transitions for phase retrieval}

The Davis-Kahan and Wedin $\sin\bm{\Theta}$ Theorems are broadly applicable and convenient to use, but they usually fall short of providing the tightest estimates. For many problems, if one examines the underlying statistical models carefully, it is often possible to obtain much more precise performance guarantees for spectral methods.

In \cite{lu2017phase}, Lu and Li provided an asymptotically exact characterization of  spectral initialization in the context of generalized linear regression, which subsumes phase retrieval as a special case. One way to quantify the quality of this eigenvector is via 
the squared cosine similarity
\begin{equation}
\label{eq:rho}
\rho(\vxp, \vx_0) := \frac{({\vxp^\top \vx_0})^2}{ \| \vxp \|_2^2 \, \| \vx_0 \|_2^2},
\end{equation}
which measures the (squared) correlation between the truth $\bx_{\star}$ and the initialization $\bx_0$. The result is this:

\begin{theorem}[$\mathsf{Precise~asymptotic~characterization}$ \cite{lu2017phase}]\label{thm:cos2}
	Consider  phase retrieval  in \sref{PR}, and let $m/n=\alpha$ for some constant $\alpha>0$.  Under mild technical conditions on the preprocessing function $\mathcal{T}$, the leading eigenvector $\bx_0/\|\bx_0\|_2$ of $\bm{Y}_{\mathcal{T}}$ in \eqref{eq:D_PR_T} obeys
\begin{equation}
\label{eq:cos2_general}
\rho(\vxp, \vx_0) \cip \begin{cases}
0, &\text{if } \alpha  < \alpha_{\mathrm{c}} \\
\rho^\ast(\alpha), &\text{if } \alpha  > \alpha_{\mathrm{c}}
\end{cases}
\end{equation}
as $n\rightarrow \infty$. 
Here, $\alpha_{\mathrm{c}} > 0$ is a fixed constant and $\rho^\ast(\cdot)$ is a fixed function that is positive when $\alpha > \alpha_c$. Furthermore, $\lambda_1(\bm{Y}_{\mathcal{T}}) - \lambda_2(\bm{Y}_{\mathcal{T}})$ converges to a positive constant iff $\alpha > \alpha_{\mathrm{c}}$.


%
\end{theorem}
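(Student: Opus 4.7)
\medskip

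\noindent\textbf{Proof plan for Theorem~\ref{thm:cos2}.} The plan is to cast $\bm{Y}_{\mathcal{T}}$ as a rank-one deformation of a weighted sample covariance matrix and then invoke a BBP-type phase transition analysis. Let $\bm{u}:=\bm{x}_{\star}/\|\bm{x}_{\star}\|_2$ and decompose each design vector as $\bm{a}_i = \xi_i \bm{u} + \bm{g}_i$ with $\xi_i := \bm{a}_i^{\top}\bm{u}\sim \mathcal{N}(0,1)$ and $\bm{g}_i := (\bm{I}-\bm{u}\bm{u}^{\top})\bm{a}_i$ an isotropic Gaussian in the orthogonal complement; crucially, $\{\xi_i\}$ and $\{\bm{g}_i\}$ are independent. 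Because $\mathcal{T}(y_i)$ depends only on $\xi_i$, one obtains the decomposition
\begin{equation*}
\bm{Y}_{\mathcal{T}} \;=\; a_n\,\bm{u}\bm{u}^{\top} \;+\; \bm{u}\bm{v}_n^{\top}+\bm{v}_n\bm{u}^{\top} \;+\; \bm{Q}_n,
\end{equation*}
where $a_n=\tfrac{1}{m}\sum_i \mathcal{T}(y_i)\xi_i^2$, $\bm{v}_n=\tfrac{1}{m}\sum_i \mathcal{T}(y_i)\xi_i \bm{g}_i$, and $\bm{Q}_n=\tfrac{1}{m}\sum_i \mathcal{T}(y_i)\bm{g}_i\bm{g}_i^{\top}$ is a weighted Wishart-type matrix supported on $\bm{u}^{\perp}$ whose weights are independent of the $\bm{g}_i$'s.

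Next, I would characterize the bulk: conditional on the weights, $\bm{Q}_n$ is a generalized sample covariance matrix whose limiting spectral distribution in the regime $m/n\to\alpha$ is given by a Marchenko--Pastur type law determined by the distribution of $\mathcal{T}(\|\bm{x}_{\star}\|_2^2 Z^2)$ with $Z\sim\mathcal{N}(0,1)$. Let $\lambda_+$ denote the upper edge of this deterministic bulk and $m(\lambda)$ the limiting Stieltjes transform; standard RMT arguments (together with the mild technical conditions on $\mathcal{T}$, e.g.\ boundedness or sub-exponential tails) imply that the top eigenvalue of $\bm{Q}_n$ converges a.s.\ to $\lambda_+$ and no outlier eigenvalue escapes the bulk.

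With the bulk in hand, I would apply a Schur-complement / resolvent argument to pin down $\lambda_1(\bm{Y}_{\mathcal{T}})$. Writing the leading eigenvector as $\bm{x}_0=\beta\bm{u}+\bm{w}$ with $\bm{w}\in\bm{u}^{\perp}$ and eliminating $\bm{w}$ from the eigen-equation via the resolvent of $\bm{Q}_n$ at level $\lambda>\lambda_+$ produces a scalar self-consistent equation of the form
\begin{equation*}
\lambda - a_n \;=\; \bm{v}_n^{\top}(\lambda \bm{I}-\bm{Q}_n)^{-1}\bm{v}_n,
\end{equation*}
whose right-hand side concentrates, as $n\to\infty$, to a deterministic function $F_{\alpha}(\lambda)$ expressible in terms of $m(\lambda)$ and the first two moments of $\mathcal{T}(y_1)\xi_1^2$. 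The top eigenvalue separates from the bulk precisely when $F_{\alpha}$ meets the line $\lambda-\bar{a}(\alpha)$ above $\lambda_+$; this threshold condition defines $\alpha_{\mathrm{c}}$, and for $\alpha>\alpha_{\mathrm{c}}$ it produces both a positive spectral gap $\lambda_1-\lambda_2\to c(\alpha)>0$ and, by differentiating the self-consistent equation (or equivalently applying the quadratic form $\bm{u}^{\top}(\lambda\bm{I}-\bm{Y}_{\mathcal{T}})^{-1}\bm{u}$), an explicit formula for the overlap $\rho^*(\alpha)=\beta^2/\|\bm{x}_0\|_2^2$. Below $\alpha_{\mathrm{c}}$ the top eigenvalue sticks to $\lambda_+$, and a perturbative computation shows the overlap is $o(1)$, giving $\rho\to 0$.

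The main obstacle is the non-standard weighted structure of $\bm{Q}_n$: classical Marchenko--Pastur and BBP results assume i.i.d.\ (or at least well-behaved) entries, whereas here the weights $\mathcal{T}(\|\bm{x}_{\star}\|_2^2\xi_i^2)$ can be heavy-tailed or discontinuous depending on $\mathcal{T}$, so establishing the limiting Stieltjes transform, edge universality, and lack of outliers for $\bm{Q}_n$ requires a careful deformation argument (e.g.\ the Silverstein equation for separable sample covariance matrices, or a leave-one-out\,/\,resolvent perturbation analysis). The remaining subtlety is justifying the concentration of the quadratic form $\bm{v}_n^{\top}(\lambda\bm{I}-\bm{Q}_n)^{-1}\bm{v}_n$ uniformly in $\lambda$ on compacts bounded away from the bulk, which hinges on controlling the dependence between $\bm{v}_n$ and $\bm{Q}_n$ (both built from the same $\bm{g}_i$'s, though with weights independent of them)—a standard anisotropic local law for generalized Wishart matrices does the job under the stated technical conditions on $\mathcal{T}$.
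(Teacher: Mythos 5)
Your plan matches the strategy of the cited reference \cite{lu2017phase}, which the paper invokes without reproducing the proof. The key moves are all there: split $\bm{a}_i = \xi_i\bm{u} + \bm{g}_i$ with $\xi_i \perp \bm{g}_i$, exploit that (for a per-sample, deterministic preprocessing map) the weight $\mathcal{T}(y_i)$ is a function of $\xi_i$ alone and hence independent of $\bm{g}_i$, obtain a rank-two spike $a_n\bm{u}\bm{u}^\top + \bm{u}\bm{v}_n^\top + \bm{v}_n\bm{u}^\top$ sitting on top of a weighted Wishart bulk $\bm{Q}_n$ supported on $\bm{u}^\perp$, project the eigen-equation onto $\bm{u}$ and $\bm{u}^\perp$ to arrive at the scalar self-consistent equation $\lambda - a_n = \bm{v}_n^\top(\lambda\bm{I}-\bm{Q}_n)^{-1}\bm{v}_n$, and characterize $\alpha_{\mathrm{c}}$ as the threshold past which this equation admits a root strictly above the bulk edge of the generalized Marchenko--Pastur law of $\bm{Q}_n$. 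The overlap formula $\rho^\ast(\alpha) = \bigl(1 + \bm{v}_n^\top(\lambda\bm{I}-\bm{Q}_n)^{-2}\bm{v}_n\bigr)^{-1}$ you get by taking norms of $\bm{w} = \beta(\lambda\bm{I}-\bm{Q}_n)^{-1}\bm{v}_n$ is exactly the derivative-of-the-Stieltjes-transform expression used in \cite{lu2017phase}, and your identification of the two analytic obstacles --- the separable-covariance structure of $\bm{Q}_n$ (handled via a Silverstein-type equation / anisotropic local law) and the correlation between $\bm{v}_n$ and $\bm{Q}_n$ through the shared $\bm{g}_i$'s --- is on target. One point to be slightly more careful about: in the subcritical regime $\alpha < \alpha_{\mathrm{c}}$ the Schur-complement identity is no longer a clean tool because $\lambda_1$ adheres to the bulk edge; the vanishing of the overlap there is best argued via delocalization of the near-edge eigenvectors of $\bm{Q}_n$ together with the blow-up of $\bm{v}_n^\top(\lambda\bm{I}-\bm{Q}_n)^{-2}\bm{v}_n$ as $\lambda \downarrow \lambda_+$, rather than the ``perturbative computation'' you gesture at. Also note that your reduction assumes $\mathcal{T}$ is a fixed deterministic function of each $y_i$; the data-adaptive truncation threshold in \eqref{eq:trimming} requires an extra (routine) replacement-by-its-limit step, which is consistent with the theorem's caveat about ``mild technical conditions on $\mathcal{T}$.''
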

\begin{remark}
The above characterization was first obtained in \cite{lu2017phase}, under the assumption that $\mathcal{T}(\cdot)$ is nonnegative. Later, this technical restriction was removed in \cite{mondelli2017fundamental}.
Analytical formulas of $\alpha_{\mathrm{c}}$ and $\rho^\ast(\alpha)$ are available for any given $\mathcal{T}(\cdot)$, which can be found in \cite{lu2017phase,mondelli2017fundamental}. 
\end{remark}

The asymptotic prediction given in Theorem~\ref{thm:cos2} reveals a phase transition phenomenon: there is a critical sampling ratio $
\alpha_c$ that marks the transition between two very contrasting regimes. 
\begin{itemize}
	\item An \emph{uncorrelated phase} takes place when the sampling ratio $\alpha <  \alpha_{\mathrm{c}}$. Within this phase,  $\rho(\vxp, \vx_0) \rightarrow 0$, meaning that the spectral estimate is uncorrelated with the target. 

	\item A \emph{correlated phase} takes place when $\alpha > \alpha_{\mathrm{c}}$. Within this phase, the spectral estimate is strictly better than a random guess. Moreover, there is a nonzero gap between the 1st and 2nd largest eigenvalues of $\bm{Y}_{\mathcal{T}}$, which in turn implies that $\vx_0$ can be efficiently computed by the power method.
\end{itemize}
%


The phase transition boundary $\alpha_{\mathrm{c}}$ is determined by the preprocessing function $\mathcal{T}(\cdot)$. A natural question arises as to which  preprocessing function optimizes the phase transition point. At first glance, this seems to be a challenging task, as it is an infinite-dimensional functional optimization problem. Encouragingly, this can be analytically determined using the asymptotic characterizations stated in Theorem~\ref{thm:cos2}. 

\begin{theorem}[$\mathsf{Optimal~preprocessing}$ \cite{mondelli2017fundamental}]\label{thm:opt_T}
Consider phase retrieval in the real-valued setting. The phase transition point satisfies $\alpha_{\mathrm{c}} > \alpha^\ast = 1/2$ for any $\mathcal{T}(\cdot)$. Further,  $\alpha^\ast$ can be approached by the following preprocessing function
\begin{equation}\label{eq:T_weak}
\mathcal{T}^{\ast}_{\alpha}(y) := \frac{\sqrt{\alpha^\ast} \cdot \mathcal{T}^\ast(y)}{\sqrt{\alpha} - (\sqrt{\alpha} - \sqrt{\alpha^\ast}) \mathcal{T}^\ast(y)} ,
\end{equation}
%
where 
\begin{equation}\label{eq:T_ast}
\mathcal{T}^\ast(y) = 1 - 1/y.
\end{equation}
\end{theorem}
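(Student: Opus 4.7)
The plan is to build on the asymptotic characterization in Theorem~\ref{thm:cos2}, which provides explicit analytic formulas for the phase transition location $\alpha_c(\mathcal{T})$ in terms of the preprocessing function $\mathcal{T}(\cdot)$. Under the Gaussian design, one can derive $\alpha_c(\mathcal{T})$ as a ratio of two expectations of the form $\mathbb{E}_{g\sim\mathcal{N}(0,\|\vxp\|_2^2)}[\Phi_k(\mathcal{T}(g^2))]$ for certain explicit functionals $\Phi_k$. Concretely, the surrogate matrix $\bm{Y}_{\mathcal{T}}$ has a rank-one perturbed structure whose spiked eigenvalue can be analyzed via standard tools (e.g.\ Stieltjes transform arguments as in \cite{lu2017phase,mondelli2017fundamental}), leading to a closed-form expression of the form $1/\alpha_c(\mathcal{T}) = \sup_{\lambda} F[\mathcal{T};\lambda]$ for a concrete functional $F$. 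The whole proof amounts to minimizing this expression over $\mathcal{T}$.

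The first step is to lower bound $1/\alpha_c(\mathcal{T})$ uniformly over all admissible $\mathcal{T}$. Using the Cauchy–Schwarz (or more generally Jensen) inequality applied to $F[\mathcal{T};\lambda]$, I would show that for every $\lambda$ above the bulk edge,
\[
F[\mathcal{T};\lambda] \le 2,
\]
with equality forced only by a specific pointwise choice of $\mathcal{T}$ (the Lagrangian optimum). Chasing this equality condition yields precisely $\mathcal{T}^\ast(y) = 1 - 1/y$ as the pointwise maximizer in a limiting sense. This gives the sharp bound $\alpha_c(\mathcal{T}) \ge 1/2$ for every $\mathcal{T}$, and the strict inequality $\alpha_c(\mathcal{T}) > 1/2$ then follows because $\mathcal{T}^\ast$ itself is not admissible: it is unbounded near $y=0$, so it violates the regularity conditions under which Theorem~\ref{thm:cos2} applies, and any admissible truncation of $\mathcal{T}^\ast$ strictly loses in the Cauchy–Schwarz step.

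The second step is to exhibit the approximating family $\{\mathcal{T}^\ast_\alpha\}$ in \eqref{eq:T_weak} and verify that $\alpha_c(\mathcal{T}^\ast_\alpha) \to 1/2$ as $\alpha \downarrow 1/2$. The design in \eqref{eq:T_weak} is a M\"obius-type reparametrization of $\mathcal{T}^\ast$ chosen precisely so that $\mathcal{T}^\ast_\alpha$ is bounded (and hence admissible) for every $\alpha > 1/2$, while $\mathcal{T}^\ast_\alpha \to \mathcal{T}^\ast$ pointwise as $\alpha \to 1/2$. Plugging this family into the formula for $F[\mathcal{T};\lambda]$ and passing to the limit, I would show that the Cauchy–Schwarz bound is asymptotically saturated, i.e.\ $\alpha_c(\mathcal{T}^\ast_\alpha) \le \alpha$, so the critical threshold can be pushed arbitrarily close to $1/2$.

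The hardest step will be the careful variational argument in the first paragraph: one needs to justify both the exact form of the functional $F[\mathcal{T};\lambda]$ (which requires identifying the spike condition of $\bm{Y}_{\mathcal{T}}$ in the proportional regime $m/n \to \alpha$) and the strictness of the inequality under the admissibility constraints on $\mathcal{T}$. The strictness requires a somewhat delicate truncation/compactness argument showing that the equality case $\mathcal{T}=\mathcal{T}^\ast$ is genuinely outside the admissible class, so that every admissible $\mathcal{T}$ lies strictly above the bound. Everything else—verifying the limit along the $\mathcal{T}^\ast_\alpha$ family and computing the relevant Gaussian expectations—is routine once this variational skeleton is in place.
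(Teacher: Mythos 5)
The paper itself does not prove this result; it cites \cite{mondelli2017fundamental} and states the theorem as an imported fact, so there is no in-paper proof to compare against. Your sketch reconstructs the spirit of the achievability argument in that reference: derive the spike condition for $\bm{Y}_{\mathcal{T}}$ in the proportional regime $m/n\to\alpha$, express $\alpha_c(\mathcal{T})$ through a variational formula, optimize over $\mathcal{T}$ via a Cauchy--Schwarz type bound to extract $\mathcal{T}^{\ast}(y)=1-1/y$, and then approximate $\mathcal{T}^{\ast}$ from within the admissible class by the M\"obius family $\mathcal{T}^{\ast}_\alpha$ in \eqref{eq:T_weak}. That matches the structure of the source. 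Your computation showing $\mathcal{T}^{\ast}_\alpha$ is bounded and continuous for $\alpha>1/2$ (denominator never vanishes since $\mathcal{T}^{\ast}(y)<1$ while the root of the denominator sits above $1$) and converges pointwise to $\mathcal{T}^{\ast}$ as $\alpha\downarrow 1/2$ is also correct.

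Two places in the proposal are where a complete proof genuinely has to do work, and as written you assert them rather than justify them. First, the claim that every admissible $\mathcal{T}$ satisfies $F[\mathcal{T};\lambda]\le 2$ with equality \emph{only} at $\mathcal{T}^{\ast}$ must be propagated through the $\sup_\lambda$ carefully: the spike threshold is an extremal value over $\lambda$ above the bulk edge, and it is not automatic that a strict pointwise inequality $F[\mathcal{T};\lambda]<2$ for each $\lambda$ survives after taking the supremum (the sup could still equal $2$ in the limit as $\lambda$ approaches the edge). You need a uniformity argument, e.g.\ showing that the edge behavior of $F[\mathcal{T};\cdot]$ is bounded away from $2$ whenever $\mathcal{T}$ is admissible (bounded, say). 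Without this, ``$\mathcal{T}^{\ast}$ is inadmissible'' alone does not yield the strict inequality $\alpha_c(\mathcal{T})>1/2$. Second, for the approximating family you need more than pointwise convergence $\mathcal{T}^{\ast}_\alpha\to\mathcal{T}^{\ast}$: the map $\mathcal{T}\mapsto\alpha_c(\mathcal{T})$ involves expectations with singular weights near the edge, so you must justify the interchange of limits (e.g.\ dominated or monotone convergence in the relevant integrals and continuity of the fixed-point/edge equation) to conclude $\alpha_c(\mathcal{T}^{\ast}_\alpha)\downarrow 1/2$. These are exactly the technical parts of the argument in \cite{mondelli2017fundamental}, and your sketch correctly flags them as the hard part but does not discharge them. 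A separate observation: the sentence in the surrounding text declaring $\alpha^{\ast}=1/2$ the \emph{information-theoretic} weak recovery threshold is a stronger statement proven in \cite{mondelli2017fundamental} via a conditional second-moment argument; your variational route does not, and need not, prove that; the theorem as stated is only about spectral methods, so your scope is appropriate.
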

%

%
%
%

The value $\alpha^\ast $ is called the \emph{weak recovery threshold}. When $\alpha < \alpha^\ast$, no algorithm can generate an estimate that is asymptotically positively correlated with $\vxp$. The function $\mathcal{T}^{\ast}_{\alpha}(\cdot)$ is optimal in the sense that it approaches this weak recovery threshold. 

Another way to formulate  optimality is  via the squared cosine similarity in \eref{rho}. For any fixed sampling ratio $\alpha > 0$, we seek a preprocessing function that maximizes the squared cosine similarity, namely,
\[
\mathcal{T}_\alpha^{\mathsf{opt}}(y) = \underset{\mathcal{T}(\cdot)}{\argmax} \, \rho(\vxp, \vx_0).
\]

The following theorem \cite{luo2019optimal} shows that the fixed function $\mathcal{T}^\ast(\cdot)$ is in fact \emph{uniformly} optimal for all sampling ratio $\alpha$. Therefore, instead of using \eref{T_weak} which takes different forms depending on $\alpha$, one should directly use $\mathcal{T}^\ast(\cdot)$.

\begin{theorem}[$\mathsf{Uniformly~optimal~preprocessing}$]\label{thm:T_uniform}
Under the same settings of Theorem~\ref{thm:opt_T}, we have
$\mathcal{T}_\alpha^{\mathsf{opt}}(\cdot) = \mathcal{T}^\ast(\cdot)$,
where $\mathcal{T}^\ast(\cdot)$ is defined in \eref{T_ast}.
\end{theorem}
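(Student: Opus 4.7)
\textbf{Proof plan for Theorem~\ref{thm:T_uniform}.} My plan is to directly compare $\rho^\ast(\alpha;\mathcal{T})$ for an arbitrary preprocessing function $\mathcal{T}$ against $\rho^\ast(\alpha;\mathcal{T}^\ast)$, using the implicit characterization of the asymptotic cosine similarity that underlies Theorem~\ref{thm:cos2}. Concretely, in the real-valued Gaussian setting the analyses of \cite{lu2017phase,mondelli2017fundamental} show that, for $\alpha>\alpha_{\mathrm{c}}(\mathcal{T})$, the quantity $\rho^\ast(\alpha;\mathcal{T})$ admits a parametric description: there exists a pair $(\lambda,\rho)$ solving a system of the form
\begin{equation*}
\alpha\,\EE\!\left[\frac{\mathcal{T}(Y)}{\lambda-\mathcal{T}(Y)}\right]=1,\qquad \rho=\Phi\!\left(\alpha,\lambda;\,\EE\!\left[\tfrac{Y\,\mathcal{T}(Y)}{\lambda-\mathcal{T}(Y)}\right],\,\EE\!\left[\tfrac{\mathcal{T}(Y)^{2}}{(\lambda-\mathcal{T}(Y))^{2}}\right]\right),
\end{equation*}
where $Y=G^{2}$ with $G\sim\mathcal{N}(0,1)$ is the distribution of $y_i/\|\vxp\|_2^2$. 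The first step of the proof is to render these integrals in a form suitable for a variational argument: writing $\phi(y)=\mathcal{T}(y)/(\lambda-\mathcal{T}(y))$ (a monotone reparametrization for each fixed $\lambda$), the fixed-point equation becomes $\alpha\,\EE[\phi(Y)]=1$, and $\rho^\ast$ is expressed as a ratio of expectations in $\phi(Y)$ and $Y\phi(Y)$.

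Second, I would perform the optimization over $\mathcal{T}$ (equivalently, over $\phi$) by applying a Cauchy--Schwarz argument to the numerator of $\rho^\ast$. Writing $f(y):=1-1/y=\mathcal{T}^\ast(y)$ and using $\EE[f(Y)]=0$ and $\EE[Y f(Y)]=1$ (elementary $\chi^2$ moments) as benchmark identities, Cauchy--Schwarz gives
\begin{equation*}
\bigl(\EE[Y\phi(Y)]-\EE[Y]\,\EE[\phi(Y)]\bigr)^{2}\;\le\;\EE\!\bigl[(\phi(Y)-c)^{2}\,g(Y)\bigr]\,\EE\!\bigl[(Y-\EE[Y])^{2}/g(Y)\bigr],
\end{equation*}
for suitable weight $g$ and constant $c$ chosen to match the denominator of $\rho^\ast$. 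Equality requires $\phi(y)-c\propto (y-\EE[Y])/g(y)$; after substituting the correct $g$ coming from the denominator and unwinding the reparametrization $\phi=\mathcal{T}/(\lambda-\mathcal{T})$, this equality condition forces $\mathcal{T}$ to be a fractional linear (Möbius) function of $\mathcal{T}^\ast(y)=1-1/y$. This is the crucial structural observation.

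Third, I would invoke the \emph{invariance of the leading eigenvector under Möbius transformations of the preprocessing}: if $\mathcal{T}_1(y)=\frac{a\,\mathcal{T}_2(y)+b}{c\,\mathcal{T}_2(y)+d}$ with $ad-bc\neq 0$, then the matrix $\bm{Y}_{\mathcal{T}_1}$ differs from $\bm{Y}_{\mathcal{T}_2}$ (up to an $O(1/m)$ correction controlled by concentration of $\frac{1}{m}\sum_i(c\,\mathcal{T}_2(y_i)+d)^{-1}\va_i\va_i^\top$) by affine operations on its spectrum, and in particular yields the \emph{same} asymptotic leading eigenvector. Since the $\alpha$-dependent optimum $\mathcal{T}^\ast_\alpha$ of \eqref{eq:T_weak} is, by inspection, precisely such a Möbius transform of $\mathcal{T}^\ast$, this invariance together with Theorem~\ref{thm:opt_T} (which guarantees that $\mathcal{T}^\ast_\alpha$ attains the weak recovery threshold and, as proved in \cite{mondelli2017fundamental}, the supremum of $\rho^\ast(\alpha;\cdot)$ among admissible preprocessings) forces $\rho^\ast(\alpha;\mathcal{T}^\ast)=\rho^\ast(\alpha;\mathcal{T}^\ast_\alpha)=\sup_{\mathcal{T}}\rho^\ast(\alpha;\mathcal{T})$ for every $\alpha>\alpha^\ast$, as claimed.

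The main obstacle, in my view, is the implicit nature of the fixed-point parameter $\lambda=\lambda(\alpha,\mathcal{T})$. Cauchy--Schwarz optimizes the numerator and denominator of $\rho^\ast$ for a \emph{fixed} $\lambda$, but $\lambda$ itself depends on $\mathcal{T}$ through the constraint $\alpha\,\EE[\mathcal{T}(Y)/(\lambda-\mathcal{T}(Y))]=1$. Reconciling this requires either a Lagrangian argument that shows the $\mathcal{T}$-extremizer at each feasible $(\alpha,\lambda)$ has the same functional form $1-1/y$ up to the Möbius freedom, or else a clean pushforward argument that removes $\lambda$ entirely by first quotienting out the Möbius group from the space of preprocessings. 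I expect the latter to be more transparent, since the Möbius invariance of the eigenvector computation makes the whole $\lambda$-dependence a gauge artifact that can be fixed once and for all by the canonical choice $\mathcal{T}^\ast(y)=1-1/y$.
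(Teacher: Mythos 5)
The paper itself does not prove Theorem~\ref{thm:T_uniform}; the result is attributed to \cite{luo2019optimal} without an in-text argument, so there is no in-paper proof to compare against and I assess your proposal on its own terms. Your first two steps --- the parametric description of $\rho^\ast$ via a fixed-point pair $(\lambda,\rho)$, and a Cauchy--Schwarz argument in the reparametrized variable $\phi=\mathcal{T}/(\lambda-\mathcal{T})$ --- are a plausible outline, and you correctly flag the central difficulty that $\lambda$ is itself a functional of $\mathcal{T}$.

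The step that breaks is the claimed Möbius invariance of the leading eigenvector of $\bm{Y}_{\mathcal{T}}$. If $\mathcal{T}_1=(a\mathcal{T}_2+b)/(c\mathcal{T}_2+d)$ with $c\neq 0$ (which is exactly the situation for $\mathcal{T}^\ast_\alpha$ in \eqref{eq:T_weak}, where $\mathcal{T}^\ast$ appears in the denominator), the partial-fraction identity
\[
\mathcal{T}_1(y)=\frac{a}{c}+\frac{bc-ad}{c\,\big(c\,\mathcal{T}_2(y)+d\big)}
\]
shows that $\bm{Y}_{\mathcal{T}_1}\approx \tfrac{a}{c}\,\bm{I}_n+\tfrac{bc-ad}{c}\,\bm{Y}_{\tilde{\mathcal{T}}}$ with $\tilde{\mathcal{T}}(y)=\big(c\,\mathcal{T}_2(y)+d\big)^{-1}$, up to concentration of $\frac{1}{m}\sum_i \va_i\va_i^\top$ around $\bm{I}_n$. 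Hence the leading eigenvector of $\bm{Y}_{\mathcal{T}_1}$ asymptotically matches that of $\bm{Y}_{\tilde{\mathcal{T}}}$, \emph{not} that of $\bm{Y}_{\mathcal{T}_2}$; since $\tilde{\mathcal{T}}\neq\mathcal{T}_2$, the two matrices are genuinely different random matrices with different spectral geometry, and the claimed ``affine operations on its spectrum'' relating $\bm{Y}_{\mathcal{T}_1}$ to $\bm{Y}_{\mathcal{T}_2}$ do not exist when $c\neq 0$. The invariance your plan needs is a property not of the matrices but of the asymptotic fixed-point formulas: under the \emph{coupled} action $(\mathcal{T},\bar\lambda)\mapsto\big((a\mathcal{T}+b)/(c\mathcal{T}+d),\,(a\bar\lambda+b)/(c\bar\lambda+d)\big)$ one checks that $(\bar\lambda'-\mathcal{T}')^{-1}=A+B\,(\bar\lambda-\mathcal{T})^{-1}$ for scalars $A,B$ depending on $(a,b,c,d,\bar\lambda)$, so the moments entering the $\rho^\ast$ formula transform affinely; one must then verify that the cosine-similarity ratio is invariant under such affine changes \emph{and} that the fixed-point constraint $\alpha\,\EE[\mathcal{T}(Y)/(\bar\lambda-\mathcal{T}(Y))]=1$ is preserved, which restricts the admissible transformations to a one-parameter subfamily depending on $\alpha$. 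Without these computations the conclusion that $\mathcal{T}^\ast_\alpha$ and $\mathcal{T}^\ast(\cdot)$ achieve the same $\rho^\ast$ for every $\alpha>\alpha^\ast$ does not follow from the reasoning you give.
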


Finally, to demonstrate the improvements brought by the optimal preprocessing function, we show in Fig.~\ref{fig:phase_optimal} the results of applying the spectral methods to estimate a $64 \times 64$ \emph{cameraman} image from phaseless measurements under Poisson noise.
It is evident that the optimal design  significantly improves  the performance of the method. 

\begin{figure}
	\centering
	\includegraphics[width=0.3\textwidth]{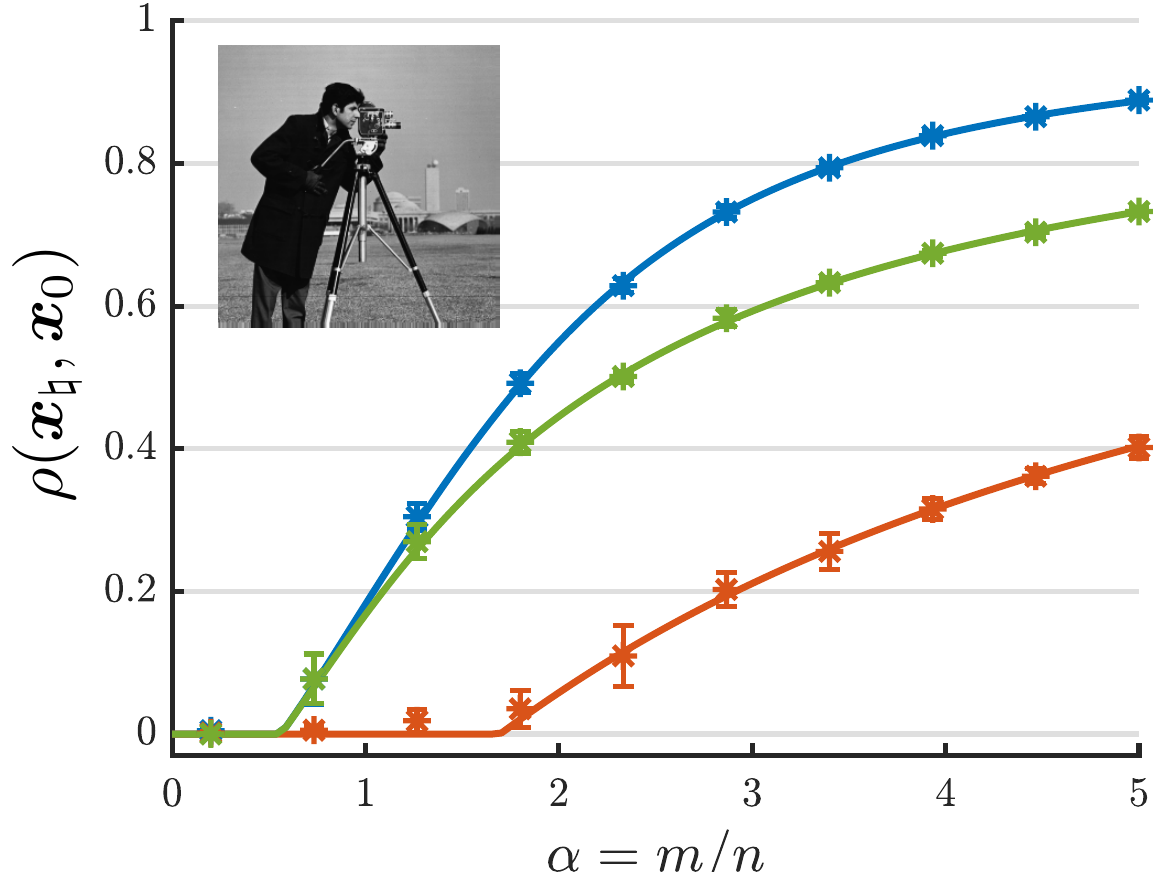}
	\caption{Comparisons of 3  designs of the preprocessing function for estimating a $64 \times 64$ \emph{cameraman} image from phaseless measurements under Poisson noise. (Red curve) the subset scheme in \eref{subset}, where the parameter $\gamma$ has been optimized for each fixed $\alpha$;  (Green curve) the function $\mathcal{T}_\alpha^\ast(\cdot)$ defined in \eref{T_weak}; (Blue curve) the uniformly optimal design $\mathcal{T}^\ast(\cdot)$ in \eref{T_ast}. }
	\label{fig:phase_optimal}
\end{figure}

\subsection{Notes} 

The idea of  spectral methods can be traced back to the early work of Li \cite{Li:92}, under the name of {\em Principal Hessian Directions} for general multi-index models. Similar spectral techniques were also proposed in \cite{achlioptas2007fast, keshavan2010matrix, jain2013low}, for initializing algorithms for low-rank matrix completion. Regarding phase retrieval,  Netrapalli et al.~\cite{netrapalli2015phase} used this method to address the problem of phase retrieval,  the theoretical guarantee of which was tightened in \cite{candes2015phase}.
Similar guarantees were also provided for the randomly coded diffraction pattern model in \cite{candes2015phase}. 
The first order-wise optimal spectral method was proposed by Chen and Cand\`es \cite{chen2015solving}, based on the truncation idea. This method has multiple variants \cite{li2017nonconvex, wang2017solving}, and has been shown to be robust against noise. The precise asymptotic characterization of the spectral method was first obtained in \cite{lu2017phase}. Based on this characterization, \cite{mondelli2017fundamental} determined the optimal weak reconstruction threshold for spectral methods. 

Finally, the spectral method has been applied to many other  problems beyond the ones discussed here, including but not limited to community detection \cite{rohe2011spectral,abbe2017community}, phase synchronization \cite{singer2011angular}, joint alignment \cite{chen2016projected},  ranking from pairwise comparisons \cite{negahban2016rank,chen2015spectral,chen2017spectral}, tensor estimation \cite{montanari2016spectral,hao2018sparse,zhang2018tensor,cai2019tensor}. We have to omit these due to the space limit. 
 

\section{Global landscape \mylinebreak
and initialization-free algorithms}
 \label{sec:global}
A separate line of work aims to study the global geometry of a loss function $f(\cdot)$ over the entire parameter space, often under appropriate statistical models of the data. As alluded by the warm-up example in Section~\ref{sec:landscape_mf}, such studies characterize the critical points and geometric curvatures of the loss surface, and highlight the (non-)existence of spurious local minima. The results of the geometric landscape analysis can then be used to understand the effectiveness of a particular optimization algorithm of choice.


\subsection{Global landscape analysis}


In general, global minimization requires one to avoid two types of undesired critical points: (1) local minima that are not global minima; (2) saddle points.
%
%
%
If all critical points of a function $f(\cdot)$ are either global minima or strict saddle points, we say that $f(\cdot)$ has  {\em benign}  landscape. Here, 
we single out strict saddles from all possible saddle points, since they are easier to escape due to the existence of descent directions.\footnote{Degenerate saddle points \cite{dauphin2014identifying} refer to critical points whose Hessian contain some eigenvalues equal to $0$. Such and higher-order saddle points are harder to escape; we refer interested readers to \cite{anandkumar2016efficient} for more discussions. } 


Loosely speaking, nonconvexity arises in these problems partly due to ``{\em symmetry}'', where the  global solutions are identifiable only up to certain global transformations. 
This necessarily leads to multiple indistinguishable local minima that are globally optimal. Further, saddle points  arise naturally when interpolating the loss surface between two separated local minima. Nonetheless, in spite of nonconvexity,  a large family of problems  exhibit benign landscape.
This subsection gives a few such examples.

\subsubsection{Two-layer linear neural network}

A straightforward instance that has already been discussed is the warm-up example in Section~\ref{sec:noncvx_eg}. It  can be slightly generalized as follows.

\begin{example}[$\mathsf{Two\text{-}layer~linear~neural~network}$ \cite{baldi1989neural}]
	\label{example:2nn}
	Given {\em arbitrary} data $\{\bx_i, \by_i \}_{i=1}^m$, $\bx_i, \by_i \in\mathbb{R}^n$, we wish to fit a two-layer linear network (see Fig.~\ref{fig:linear_2nn}) using the quadratic loss:
	$$ f(\bA, \bB) =\sum_{i=1}^{m} \left\| \by_i - \bA\bB \bx_i \right\|_2^2 =  \left\| \bY - \bA\bB \bX \right\|_{\mathrm{F}}^2, $$
where $\bA,\bB^{\top}\in\mathbb{R}^{n\times r}$ with $r\leq n$, and $\bX := [\bx_1,\cdots, \bx_m]$ and $\bY:=[\by_1,\cdots,\by_m]$.

	In this setup,  \cite{baldi1989neural} established that: 
	under mild conditions,\footnote{Specifically, \cite{baldi1989neural} assumed  $\bY\bX^{\top} (\bX\bX^{\top})^{-1} \bX\bY^{\top}$
	is full rank with $n$ distinct positive eigenvalues.} $f(\bA,\bB)$ has no spurious local minima.\footnote{In a recent work \cite{zhu2018global}, the entire landscape is further characterized.}
\end{example}

In particular, when $\bX=\bY$, Example \ref{example:2nn} reduces to rank-$r$ matrix factorization [or principal component analysis (PCA)],  an immediate extension of the rank-1 warm-up example. When $\bX\neq \bY$, Example \ref{example:2nn} is precisely the canonical correlation analysis (CCA) problem.  This explains why both PCA and CCA, though highly nonconvex, admit efficient solutions.

\begin{figure}[t]
\begin{center}
\includegraphics[width=0.25\textwidth]{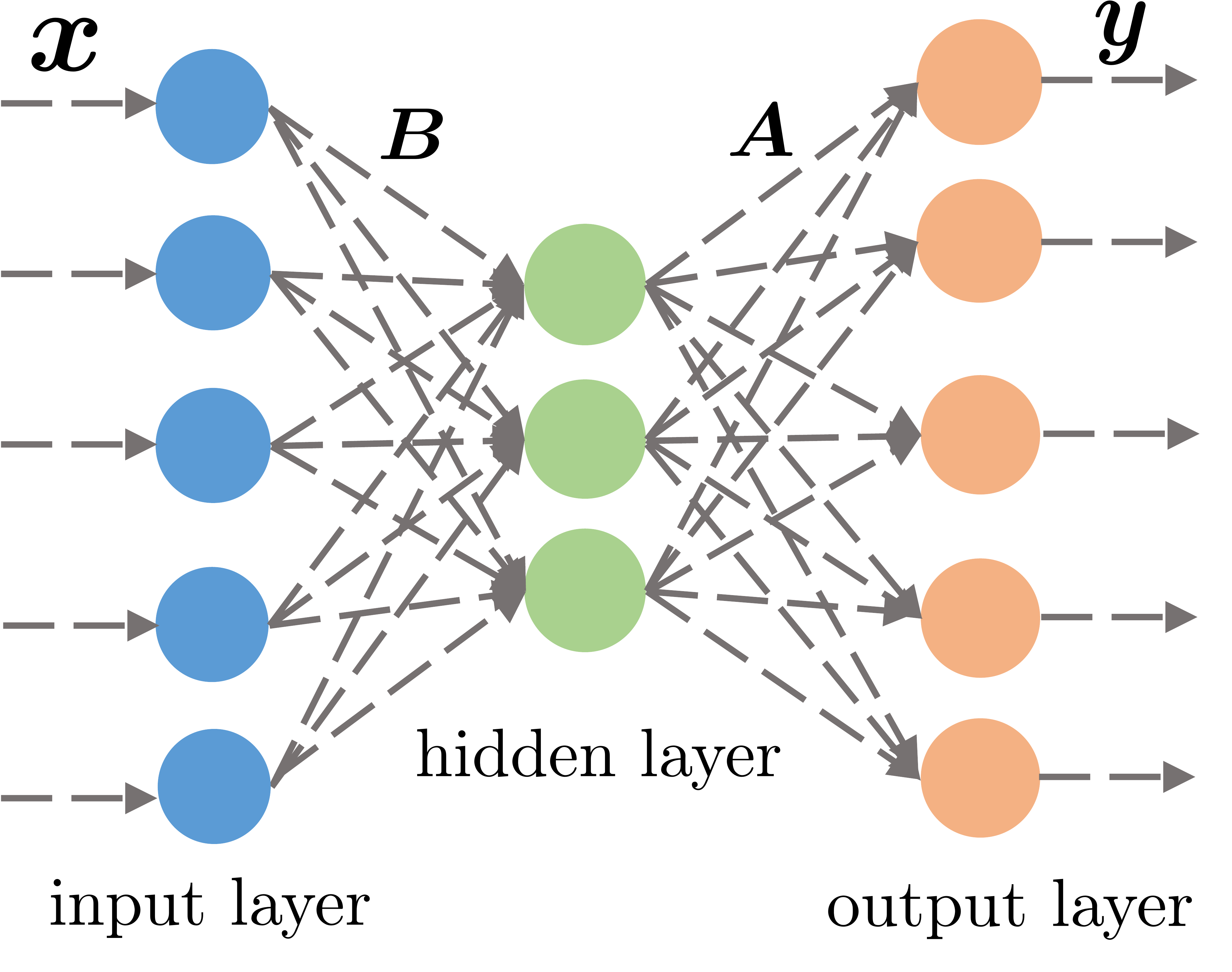}
\end{center}
\caption{Illustration of a 2-layer linear neural network.}\label{fig:linear_2nn}
\end{figure}

\subsubsection{Matrix sensing and rank-constrained optimization}

Moving beyond PCA and CCA, a more nontrivial problem is matrix sensing in the presence of RIP. The analysis for this problem, while much simpler than other problems like phase retrieval and matrix completion --- is representative of a typical strategy for analyzing problems of this kind.


\begin{theorem}
	[$\mathsf{Global~landscape~for~matrix~sensing}$ \cite{bhojanapalli2016global,li2018non}]
	\label{thm:sensing-global}
Consider the matrix sensing problem \eqref{eq:min-matrix-sensing-rank-r}.
If $\mathcal{A}$ satisfies  $2r$-RIP with $\delta_{2r}<{1}/{10}$,  then: 
\begin{itemize}
	\item {(All local minima are global):} for any local minimum $\bX$ of $f(\cdot)$, it satisfies $\bX\bX^{\top}=\bM_{\star}$;
	\item {(Strict saddles):} for any critical point $\bX$ that is not a local minimum, it satisfies $\lambda_{\min}\big( \nabla^2 f(\bX) \big) \leq -4 \sigma_{r}(\bm{M}_{\star})/5$.
\end{itemize}
\end{theorem}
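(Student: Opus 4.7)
The plan is to establish both assertions in a single stroke by exhibiting, at every first-order critical point $\bm{X}$ with $\bm{X}\bm{X}^{\top}\neq\bm{M}_{\star}$, a concrete direction $\bm{Z}$ such that
$$\mathsf{vec}(\bm{Z})^{\top}\nabla^{2}f(\bm{X})\,\mathsf{vec}(\bm{Z})\leq -\tfrac{4}{5}\sigma_{r}(\bm{M}_{\star})\,\|\bm{Z}\|_{\mathrm{F}}^{2}.$$
Such a certificate falsifies the second-order necessary condition, so $\bm{X}$ cannot be a local minimum; consequently every local minimum must satisfy $\bm{X}\bm{X}^{\top}=\bm{M}_{\star}$, and the quantitative curvature bound immediately yields the strict-saddle estimate in the theorem.

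The first step is to record the formulas
$$\nabla f(\bm{X})=\mathcal{A}^{*}\mathcal{A}(\bm{E})\,\bm{X},\qquad \bm{E}:=\bm{X}\bm{X}^{\top}-\bm{M}_{\star},$$
and, in analogy with \eqref{eq:Hess-infty},
$$\mathsf{vec}(\bm{Z})^{\top}\nabla^{2}f(\bm{X})\,\mathsf{vec}(\bm{Z})=\tfrac{1}{2}\bigl\|\mathcal{A}(\bm{X}\bm{Z}^{\top}+\bm{Z}\bm{X}^{\top})\bigr\|_{2}^{2}+\bigl\langle\mathcal{A}^{*}\mathcal{A}(\bm{E}),\,\bm{Z}\bm{Z}^{\top}\bigr\rangle.$$
The first-order condition $\mathcal{A}^{*}\mathcal{A}(\bm{E})\bm{X}=\bm{0}$ is equivalent to the family of orthogonality relations $\langle\mathcal{A}^{*}\mathcal{A}(\bm{E}),\,\bm{X}\bm{V}^{\top}\rangle=0$ valid for every $\bm{V}$; these relations will provide the main algebraic leverage.

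Next, I would construct the escape direction via Procrustes alignment. Let $\bm{R}^{*}\in\mathcal{O}^{r\times r}$ minimize $\|\bm{X}-\bm{X}_{\star}\bm{R}\|_{\mathrm{F}}$, let $\bm{Y}:=\bm{X}_{\star}\bm{R}^{*}$ (so $\bm{Y}\bm{Y}^{\top}=\bm{M}_{\star}$), and set $\bm{Z}:=\bm{X}-\bm{Y}$, whence $\|\bm{Z}\|_{\mathrm{F}}=\mathsf{dist}(\bm{X},\bm{X}_{\star})$. A direct expansion yields the key identity
$$\bm{W}:=\bm{X}\bm{Z}^{\top}+\bm{Z}\bm{X}^{\top}=\bm{E}+\bm{Z}\bm{Z}^{\top},$$
and the critical-point orthogonality relations above collapse $\langle\mathcal{A}^{*}\mathcal{A}(\bm{E}),\bm{Z}\bm{Z}^{\top}\rangle$ to $\langle\mathcal{A}^{*}\mathcal{A}(\bm{E}),\bm{M}_{\star}\rangle$ because every cross term involving a factor of $\bm{X}$ vanishes. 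Invoking the RIP in Definition~\ref{defn:RIPs} on $\bm{W}$ and $\bm{E}$ (each having rank at most $O(r)$), and Lemma~\ref{lemmq:RIP-cross} to bound $|\langle\mathcal{A}^{*}\mathcal{A}(\bm{E}),\bm{M}_{\star}\rangle-\langle\bm{E},\bm{M}_{\star}\rangle|$ by $\delta_{2r}\|\bm{E}\|_{\mathrm{F}}\|\bm{M}_{\star}\|_{\mathrm{F}}$, I would reduce the Hessian quadratic form to its population counterpart plus an error of order $\delta_{2r}\cdot(\|\bm{E}\|_{\mathrm{F}}^{2}+\|\bm{Z}\|_{\mathrm{F}}^{2}\sigma_{1}(\bm{M}_{\star}))$.

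To close the argument, I would convert the resulting upper bound into a multiple of $\sigma_{r}(\bm{M}_{\star})\|\bm{Z}\|_{\mathrm{F}}^{2}$ by means of the Procrustes-type inequality
$$\|\bm{X}\bm{X}^{\top}-\bm{Y}\bm{Y}^{\top}\|_{\mathrm{F}}^{2}\geq 2(\sqrt{2}-1)\,\sigma_{r}(\bm{M}_{\star})\,\|\bm{X}-\bm{Y}\|_{\mathrm{F}}^{2}$$
from \cite{tu2015low}, which is available thanks to the optimal rotation $\bm{R}^{*}$. The condition $\delta_{2r}<1/10$ is then engineered to absorb all error terms and leave a leading negative contribution of at most $-\tfrac{4}{5}\sigma_{r}(\bm{M}_{\star})\|\bm{Z}\|_{\mathrm{F}}^{2}$. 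The main obstacle I anticipate lies in this final step of constant bookkeeping: the RIP-induced errors must be shown to consume no more than one-fifth of the leading $-\|\bm{E}\|_{\mathrm{F}}^{2}$ term; the direction $\bm{W}$ has rank as large as $4r$, so one must confirm that the relevant higher-order RIP constant is still controlled (this is typically harmless under standard Gaussian or sub-Gaussian designs but requires care in a deterministic statement); and the degenerate case $\bm{X}=\bm{0}$ together with other rank-deficient critical points must be handled separately by choosing $\bm{Z}$ to align with a top eigenvector of $\bm{M}_{\star}$, so that they too are certified as strict saddles rather than spurious local minima.
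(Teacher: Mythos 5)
Your overall strategy (certify a direction of negative curvature at every critical point with $\bm{X}\bm{X}^{\top}\neq\bm{M}_{\star}$, using the Procrustes-aligned $\bm{Z}=\bm{X}-\bm{X}_{\star}\bm{R}^{*}$, the first-order orthogonality relations, and RIP) matches the argument in the paper's Appendix~\ref{appendix:sec-proof-theorem-sensing-global} and in \cite{bhojanapalli2016global,ge2017no}, and your algebraic identity $\bm{W}=\bm{E}+\bm{Z}\bm{Z}^{\top}$ is correct. However, there are two concrete gaps. First, you stop the first-order-condition collapse one step short: having reached $\langle\mathcal{A}^{*}\mathcal{A}(\bm{E}),\bm{Z}\bm{Z}^{\top}\rangle=\langle\mathcal{A}^{*}\mathcal{A}(\bm{E}),\bm{M}_{\star}\rangle$, you propose to estimate the latter via Lemma~\ref{lemmq:RIP-cross}, i.e.\ compare it to $\langle\bm{E},\bm{M}_{\star}\rangle$. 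But $\langle\bm{E},\bm{M}_{\star}\rangle$ is sign-indefinite (it is $\langle\bm{X}\bm{X}^{\top},\bm{M}_{\star}\rangle-\|\bm{M}_{\star}\|_{\mathrm{F}}^{2}$, which can be positive), so this route cannot deliver the needed negativity against the positive term $\tfrac12\|\mathcal{A}(\bm{W})\|_2^2$. The correct move, used by the paper, is to substitute $\bm{M}_{\star}=\bm{X}\bm{X}^{\top}-\bm{E}$ and invoke $\langle\mathcal{A}^{*}\mathcal{A}(\bm{E}),\bm{X}\bm{X}^{\top}\rangle=0$ once more, which yields the exact identity $\langle\mathcal{A}^{*}\mathcal{A}(\bm{E}),\bm{M}_{\star}\rangle=-\|\mathcal{A}(\bm{E})\|_2^2$. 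Combined with $\bm{W}=\bm{E}+\bm{Z}\bm{Z}^{\top}$, the Hessian quadratic form then collapses \emph{exactly} to $-\tfrac32\|\mathcal{A}(\bm{E})\|_2^2+\tfrac12\|\mathcal{A}(\bm{Z}\bm{Z}^{\top})\|_2^2$, which is the expression you actually want to bound.

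Second, the Procrustes-type inequality you quote from \cite{tu2015low}, namely $\|\bm{E}\|_{\mathrm{F}}^{2}\ge 2(\sqrt{2}-1)\sigma_{r}(\bm{M}_{\star})\|\bm{Z}\|_{\mathrm{F}}^{2}$, is the wrong tool here: it controls $\|\bm{Z}\|_{\mathrm{F}}^{2}$, a quantity \emph{quadratic} in $\bm{Z}$, whereas after RIP you need to dominate $\|\bm{Z}\bm{Z}^{\top}\|_{\mathrm{F}}^{2}$, a quantity \emph{quartic} in $\bm{Z}$, by $\|\bm{E}\|_{\mathrm{F}}^{2}$. Squaring the Tu~et~al.\ bound yields $\|\bm{Z}\bm{Z}^{\top}\|_{\mathrm{F}}^{2}\lesssim\|\bm{E}\|_{\mathrm{F}}^{4}/\sigma_{r}^{2}(\bm{M}_{\star})$, which is much larger than $\|\bm{E}\|_{\mathrm{F}}^{2}$ at critical points far from the truth (e.g.\ $\bm{X}=\bm{0}$, where $\|\bm{E}\|_{\mathrm{F}}=\|\bm{M}_{\star}\|_{\mathrm{F}}$), so the argument does not close. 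You need a different algebraic ingredient that exploits Procrustes alignment in the matrix-product sense: either the bound $\|\bm{Z}\bm{Z}^{\top}\|_{\mathrm{F}}^{2}\le 2\|\bm{E}\|_{\mathrm{F}}^{2}$ (a lemma of \cite{ge2017no}), or, as the paper's appendix does, the RIP-plus-stationarity consequence $\|(\bm{X}\bm{X}^{\top}-\bm{M}_{\star})\bm{X}\bm{X}^{\top}\|_{\mathrm{F}}\le\delta_{2r}\|\bm{E}\|_{\mathrm{F}}\|\bm{X}\|^{2}$ (Eq.~\eqref{eq:fact1-RIP}) fed into the algebraic inequality of \cite[Lemma~4.4]{bhojanapalli2016global}. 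Neither appears in your sketch. With $\|\bm{Z}\bm{Z}^{\top}\|_{\mathrm{F}}^{2}\le 2\|\bm{E}\|_{\mathrm{F}}^{2}$ in hand, your route gives $\lambda_{\min}(\nabla^{2}f(\bm{X}))\lesssim -c\,\sigma_{r}(\bm{M}_{\star})$ for some universal $c$, which establishes the qualitative strict-saddle claim, though the bookkeeping sketched here would land at a constant noticeably smaller than the $4/5$ stated in the theorem; reaching that specific constant requires the sharper argument in the cited references. Finally, your last worry is unfounded: at $\bm{X}=\bm{0}$ one has $\bm{Z}=-\bm{X}_{\star}\bm{R}^{*}\neq\bm{0}$ and $\bm{E}=-\bm{M}_{\star}$, so the very same direction certifies a strict saddle; no separate case analysis is needed.
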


\begin{proof}[Proof of Theorem~\ref{thm:sensing-global}]
For conciseness, we focus on the rank-1 case with $\bM_{\star}=\bx_{\star}\bx_{\star}^{\top}$ and show that all local minima are global.  The complete proof can be found in \cite{bhojanapalli2016global,li2018non}.

Consider any local minimum $\bm{x}$   of $f(\cdot)$. This is characterized by  the first-order
and second-order optimality conditions
\begin{subequations}
\begin{align}
\nabla f(\bm{x}) & =\frac{1}{m}\sum_{i=1}^{m}\langle\bm{A}_{i},\bm{x}\bm{x}^{\top}-\bm{x}_{\star}\bm{x}_{\star}^{\top}\rangle\bm{A}_{i}\bm{x}=\bm{0};\label{eq:1st-order-optimality-RIP}\\
	\nabla^{2}f(\bm{x}) & =\frac{1}{m}\sum_{i=1}^{m}2\bm{A}_{i}\bm{x}\bm{x}^{\top}\bm{A}_{i}+\langle\bm{A}_{i},\bm{x}\bm{x}^{\top}-\bm{x}_{\star}\bm{x}_{\star}^{\top}\rangle\bm{A}_{i}\succeq\bm{0}.\label{eq:2nd-order-optimality-RIP}
\end{align}
\end{subequations}
Without loss of generality, assume that $\|\bx-\bx_{\star}\|_2 \leq \|\bx+\bx_{\star}\|_2$.

A typical proof idea is to demonstrate that: if $\bm{x}$ is not globally optimal, then one can identify a descent direction, thus contradicting the local optimality of $\bm{x}$. A natural guess of such a descent direction would
be the direction towards the truth, i.e.~$\bm{x}-\bm{x}_{\star}$. As a result, the proof consists in showing that: when the RIP constant is sufficiently small, one has
\begin{equation}
	\label{eq:goal-UB-sensing}
	(\bm{x}-\bm{x}_{\star})^{\top}\nabla^{2}f(\bm{x})\,(\bm{x}-\bm{x}_{\star}) < 0
\end{equation}
	unless $\bx = \bx_{\star}$. Additionally, the value of \eqref{eq:goal-UB-sensing} is helpful in upper bounding $\lambda_{\min} ( \nabla^2 f(\bx) )$ if $\bx$ is a saddle point. 
	See Appendix \ref{appendix:sec-proof-theorem-sensing-global} for details. 
\end{proof}

We take a moment to expand on this result. Recall that we have introduced a version of strong convexity and smoothness for $f(\cdot)$ when accounting for global orthogonal transformation (Section \ref{sec:GD-RIP-rankr}). Another way to express this is through a different  parameterization
\begin{align}
	g(\bm{M}):=\frac{1}{4m}\sum_{i=1}^{m}\big(\langle\bm{A}_{i},\bm{M}\rangle-\langle\bm{A}_{i},\bm{X}_{\star}\bm{X}_{\star}^{\top}\rangle \big)^{2},    	\label{eq:min-matrix-sensing-rank-r-g}	
\end{align}
which clearly satisfies $g(\bX \bX^{\top}) = f(\bX)$. It is not hard to show that: in the presence of the RIP, the Hessian $\nabla^2 g(\cdot)$ is well-conditioned when restricted to low-rank decision variables and directions. This motivates the following more general result, stated in terms of certain {\em restricted well-conditionedness} of $g(\cdot)$.  One of the advantages is its applicability to more general loss functions beyond the squared loss.

\begin{theorem}[$\mathsf{Global~landscape~for~rank\text{-}constrained~problems}$ \cite{li2018non}]\label{thm:global_landscape_general}Let $g(\cdot)$ be a convex function.  Suppose  that
\begin{equation}\label{eq:ncvx_symmetric}
	\underset{\bm{M}\succeq \bm{0}}{\mathrm{minimize}}~~ g(\bM)
\end{equation}
admits a solution $\bM_{\star}$ with $\mathrm{rank}(\bM_{\star})=r< n$. 
Assume that for all $\bM$ and $\bD$ with $\mathrm{rank}(\bM)\leq 2r$ and $\mathrm{rank}(\bD)\leq 4r$,
\begin{equation}
\alpha \|\bm{D}\|_{\mathrm{F}}^2 \leq \mathsf{vec}(\bD)^{\top} \nabla^2g(\bM) \mathsf{vec}(\bD) \leq \beta \|\bm{D}\|_{\mathrm{F}}^2  
\end{equation}
holds with $\beta/\alpha \leq 3/2$.
Then the function $f(\bX)=g(\bX\bX^{\top})$, where $\bX\in\mathbb{R}^{n\times r}$, has no spurious local minima, and all saddle points of $f(\cdot)$ are strict saddles. 	

\end{theorem}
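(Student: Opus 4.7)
The plan is to extend the deterministic sketch given for Theorem~\ref{thm:sensing-global} to the abstract setting, replacing the role of the RIP by the restricted curvature bounds on $\nabla^{2}g$. I would begin by writing out, via the chain rule,
$$\nabla f(\bX)=2\,\nabla g(\bX\bX^{\top})\,\bX,$$
and, for any direction $\bU\in\mathbb{R}^{n\times r}$,
$$\mathsf{vec}(\bU)^{\top}\nabla^{2}f(\bX)\,\mathsf{vec}(\bU)=\mathsf{vec}(\bS)^{\top}\nabla^{2}g(\bX\bX^{\top})\,\mathsf{vec}(\bS)+2\langle\nabla g(\bX\bX^{\top}),\bU\bU^{\top}\rangle,$$
where $\bS:=\bU\bX^{\top}+\bX\bU^{\top}$. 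The first-order optimality condition $\nabla f(\bX)=\bm{0}$ thus reduces to $\nabla g(\bX\bX^{\top})\bX=\bm{0}$; by symmetry of $\nabla g$ this also gives $\bX^{\top}\nabla g(\bX\bX^{\top})=\bm{0}$.

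Following the matrix-sensing proof, I would probe the Hessian in the direction $\bU=\bDelta:=\bX-\bX_{\star}\bH$, where $\bH\in\mathcal{O}^{r\times r}$ is the Procrustes-optimal rotation minimizing $\|\bX-\bX_{\star}\bH\|_{\mathrm{F}}$. Two algebraic identities then do most of the work. First, a direct expansion together with $\bH\bH^{\top}=\bI_{r}$ gives $\bS=\bE+\bDelta\bDelta^{\top}$ with $\bE:=\bX\bX^{\top}-\bM_{\star}$. Second, the first-order condition yields $\langle\nabla g(\bX\bX^{\top}),\bDelta\bDelta^{\top}\rangle=\langle\nabla g(\bX\bX^{\top}),\bM_{\star}\rangle$. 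Combining this with the restricted strong convexity of $g$ along the segment from $\bX\bX^{\top}$ to $\bM_{\star}$ (which lies inside the rank-$2r$ cone) and with the minimality $g(\bM_{\star})\le g(\bX\bX^{\top})$ produces the clean estimate $\langle\nabla g(\bX\bX^{\top}),\bE\rangle\ge\tfrac{\alpha}{2}\|\bE\|_{\mathrm{F}}^{2}$, and hence $2\langle\nabla g(\bX\bX^{\top}),\bDelta\bDelta^{\top}\rangle\le-\alpha\|\bE\|_{\mathrm{F}}^{2}$.

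Plugging back in and using the upper bound $\beta$ on the quadratic form of $\nabla^{2}g$ (which applies since $\bX\bX^{\top}$ has rank at most $r\le 2r$ and $\bE+\bDelta\bDelta^{\top}$ has rank at most $3r\le 4r$), I arrive at
$$\mathsf{vec}(\bDelta)^{\top}\nabla^{2}f(\bX)\,\mathsf{vec}(\bDelta)\le\beta\,\|\bE+\bDelta\bDelta^{\top}\|_{\mathrm{F}}^{2}-\alpha\|\bE\|_{\mathrm{F}}^{2}.$$
The remaining and most delicate task is to upper-bound $\|\bE+\bDelta\bDelta^{\top}\|_{\mathrm{F}}^{2}$ in terms of $\|\bE\|_{\mathrm{F}}^{2}$ and $\sigma_{r}(\bM_{\star})\|\bDelta\|_{\mathrm{F}}^{2}$. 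Using the identity $\bE=\bX\bDelta^{\top}+\bDelta(\bX_{\star}\bH)^{\top}$ together with the fact that $\bH^{\top}\bX_{\star}^{\top}\bX$ is symmetric positive semidefinite at the Procrustes optimum, one can derive inequalities of the form $\|\bDelta\bDelta^{\top}\|_{\mathrm{F}}^{2}\lesssim\|\bE\|_{\mathrm{F}}^{2}$ and $\langle\bE,\bDelta\bDelta^{\top}\rangle\lesssim\|\bE\|_{\mathrm{F}}^{2}$. The hypothesis $\beta/\alpha\le 3/2$ is then exactly what is needed to make the right-hand side of the display strictly negative, and in fact bounded above by $-c\,\alpha\,\sigma_{r}(\bM_{\star})\|\bDelta\|_{\mathrm{F}}^{2}$ whenever $\bE\ne\bm{0}$.

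The main obstacle is this final bookkeeping, where the slack $\beta-\alpha\le\alpha/2$ implied by the $3/2$ ratio must be just enough to absorb the positive contributions of the quartic term $\beta\|\bDelta\bDelta^{\top}\|_{\mathrm{F}}^{2}$ and the cross term $2\beta\langle\bE,\bDelta\bDelta^{\top}\rangle$; this is where the sharpness of the constant $3/2$, as opposed to something larger, enters decisively. A minor secondary issue is the potential non-uniqueness of $\bH$ when $\bX$ is rank-deficient, which can be resolved by selecting any Procrustes-optimal rotation and noting continuity of all quantities. Once the strict negativity is established quantitatively, both conclusions follow at once: if $\bX$ is a local minimum, the second-order necessary condition forces $\bE=\bm{0}$ and hence $\bX\bX^{\top}=\bM_{\star}$; if instead $\bX$ is any other critical point, then $\lambda_{\min}(\nabla^{2}f(\bX))$ is bounded away from $0$, certifying a strict saddle.
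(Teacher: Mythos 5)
The paper does not actually prove this theorem; it is quoted from \cite{li2018non,zhu2017global}, and only the rank-one sketch for the related Theorem~\ref{thm:sensing-global} is given in Appendix~\ref{appendix:sec-proof-theorem-sensing-global}. Your skeleton—chain-rule Hessian, probing along the Procrustes-aligned direction $\bDelta=\bX-\bX_{\star}\bH$, the identity $\bS:=\bX\bDelta^{\top}+\bDelta\bX^{\top}=\bE+\bDelta\bDelta^{\top}$ with $\bE:=\bX\bX^{\top}-\bM_{\star}$, using $\nabla g(\bX\bX^{\top})\bX=\bm{0}$ and restricted strong convexity to bound the first-order term—does match the strategy of the cited reference. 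However, there is a genuine gap in the closing step, and it is not merely ``delicate bookkeeping.''

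Your displayed bound is
\begin{equation*}
\mathsf{vec}(\bDelta)^{\top}\nabla^{2}f(\bX)\,\mathsf{vec}(\bDelta)\le\beta\,\|\bE+\bDelta\bDelta^{\top}\|_{\mathrm{F}}^{2}-\alpha\|\bE\|_{\mathrm{F}}^{2},
\end{equation*}
and you want to make its right side strictly negative from Procrustes-type algebra plus $\beta/\alpha\le 3/2$. That requires $\|\bE+\bDelta\bDelta^{\top}\|_{\mathrm{F}}^{2}<(\alpha/\beta)\|\bE\|_{\mathrm{F}}^{2}\le\|\bE\|_{\mathrm{F}}^{2}$, i.e.\ a ratio \emph{strictly below one}. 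But as $\bX\to\bX_{\star}$ one has $\|\bDelta\bDelta^{\top}\|_{\mathrm{F}}=O(\|\bDelta\|_{\mathrm{F}}^{2})\ll\|\bE\|_{\mathrm{F}}$, so $\|\bE+\bDelta\bDelta^{\top}\|_{\mathrm{F}}^{2}/\|\bE\|_{\mathrm{F}}^{2}\to 1$, and the inequality you need fails. The Procrustes inequalities you invoke ($\|\bDelta\bDelta^{\top}\|_{\mathrm{F}}^{2}\lesssim\|\bE\|_{\mathrm{F}}^{2}$ and $\langle\bE,\bDelta\bDelta^{\top}\rangle\lesssim\|\bE\|_{\mathrm{F}}^{2}$) only give $\|\bS\|_{\mathrm{F}}^{2}\lesssim\|\bE\|_{\mathrm{F}}^{2}$ with an implied constant at least one, not below one; even the sharper strong-convexity bound $\langle\nabla g(\bX\bX^{\top}),\bE\rangle\ge\alpha\|\bE\|_{\mathrm{F}}^{2}$ (using $\langle\nabla g(\bM_{\star}),\bE\rangle\ge 0$, which you should use instead of the weaker $\alpha/2$ version) only relaxes the requirement to $\|\bS\|_{\mathrm{F}}^{2}<(2\alpha/\beta)\|\bE\|_{\mathrm{F}}^{2}\le(4/3)\|\bE\|_{\mathrm{F}}^{2}$, which still cannot be deduced from the Procrustes facts alone. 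Indeed, for the model case $g(\bM)=\tfrac12\|\bM-\bM_{\star}\|_{\mathrm{F}}^{2}$, the quantity $\langle\bE,\bDelta\bDelta^{\top}\rangle=-\|\bE\|_{\mathrm{F}}^{2}$ is strongly negative precisely because the first-order condition forces $\bE\bX=\bm{0}$; Procrustes alignment by itself says nothing about its sign (take $\bX=2\bX_{\star}\bH$, where $\langle\bE,\bDelta\bDelta^{\top}\rangle>0$).

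The missing ingredient is that the first-order condition must be used a second time inside the Hessian term, not only in the gradient term. From $\nabla g(\bX\bX^{\top})\bX=\bm{0}$ one gets $\langle\nabla g(\bX\bX^{\top}),\bS\rangle=0$, and writing $\nabla g(\bX\bX^{\top})=\nabla g(\bM_{\star})+\mathcal{H}_{\tau}[\bE]$ with $\mathcal{H}_{\tau}$ the averaged Hessian along the segment, this pins down the cross-term $\mathsf{vec}(\bE)^{\top}\nabla^{2}g(\bX\bX^{\top})\,\mathsf{vec}(\bDelta\bDelta^{\top})$ up to an error controlled by $\|\nabla^{2}g(\bX\bX^{\top})-\mathcal{H}_{\tau}\|$, which is where the $\beta-\alpha$ slack (hence the $3/2$ ratio) actually does real work. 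In other words, the cancellation that makes $\|\bS\|_{\mathrm{F}}^{2}$ effectively small relative to $\|\bE\|_{\mathrm{F}}^{2}$ is a statement about \emph{critical points of $f$ when $\nabla^{2}g$ is nearly a multiple of the identity}, not a purely algebraic Procrustes fact. Your proposal misattributes the cancellation to the Procrustes alignment, and the two qualitative bounds you state cannot close the estimate; this is the concrete gap.
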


\begin{remark}
	This result continue to hold if $\mathrm{rank}(\bM)\leq 2\tilde{r}$ and $\mathrm{rank}(\bD)\leq 4\tilde{r}$ for some $\tilde{r}>r$ (although the restricted well-conditionedness needs to be valid w.r.t.~this new rank) \cite{li2018non}. 
	In addition, it has also been extended to accommodate nonsymmetric matrix factorization in \cite{li2018non}. 
\end{remark}


\subsubsection{Phase retrieval and matrix completion}

Next, we move on to problems that fall short of restricted well-conditionedness. As it turns out, it is still possible to have benign landscape, although the Lischiptz constants w.r.t.~both gradients and Hessians might be much larger.  A typical example is phase retrieval, for which the smoothness condition is not well-controlled (as discussed in Lemma \ref{lem:strong-cvx-PR}).

\begin{theorem}[$\mathsf{Global~landscape~for~phase~retrieval}$ \cite{sun2016geometric}] Consider the phase retrieval problem \eqref{eq:min-PR}.  If the sample size  $m\gtrsim n\log^3 n $,
then with high probability, there is no spurious local minimum, and all saddle points of $f(\cdot)$ are strict saddles. 
\end{theorem}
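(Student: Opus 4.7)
The plan is to follow the classical recipe of Sun, Qu, and Wright: first analyze the population (expected) landscape of $f$ under the Gaussian design, then transfer the conclusions to the empirical objective via uniform concentration. Exploiting the rotational symmetry of the Gaussian design, we may assume without loss of generality that $\bm{x}_\star = \|\bm{x}_\star\|_2\,\bm{e}_1$, and by the homogeneity $f(\alpha \bm{x})$ in the norm of $\bm{x}$ we can restrict attention to a bounded region (e.g.\ $\|\bm{x}\|_2 \le 3\|\bm{x}_\star\|_2$ together with a separate analysis near the origin). Set
\[
f_\infty(\bm{x}) := \mathbb{E}[f(\bm{x})] = \tfrac14\,\mathbb{E}\big[((\bm{a}^\top\bm{x})^2-(\bm{a}^\top\bm{x}_\star)^2)^2\big],
\]
which, by Isserlis' theorem for Gaussian fourth moments, admits a closed-form polynomial expression in $\|\bm{x}\|_2^2$, $\|\bm{x}_\star\|_2^2$, and $\bm{x}^\top\bm{x}_\star$. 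One then computes $\nabla f_\infty$ and $\nabla^2 f_\infty$ in closed form.

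The next step is to catalog the critical points of $f_\infty$ and verify the desired strict-saddle / no-spurious-minimum property. Solving $\nabla f_\infty(\bm{x})=\bm{0}$ yields only three families: (i) the global minimizers $\pm\bm{x}_\star$, (ii) the origin $\bm{0}$ (a local maximizer), and (iii) the rank-deficient saddle set $\{\bm{x}:\bm{x}^\top\bm{x}_\star=0,\ \|\bm{x}\|_2^2=\|\bm{x}_\star\|_2^2/3\}$. At (i) one shows $\nabla^2 f_\infty \succeq c\|\bm{x}_\star\|_2^2 \bm{I}$ for some absolute $c>0$; at (iii) one exhibits the explicit descent direction $\bm{x}_\star$ along which $\bm{x}_\star^\top \nabla^2 f_\infty(\bm{x})\bm{x}_\star \le -c'\|\bm{x}_\star\|_2^4$. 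Then one partitions the parameter region into four neighborhoods guided by this catalog:
\begin{itemize}
\item $\mathcal{R}_1$ (near $\pm\bm{x}_\star$): $\nabla^2 f_\infty \succeq c_1 \|\bm{x}_\star\|_2^2\bm{I}$;
\item $\mathcal{R}_2$ (near the saddle set): negative curvature in the $\bm{x}_\star$ direction, $\bm{x}_\star^\top\nabla^2 f_\infty\bm{x}_\star \le -c_2\|\bm{x}_\star\|_2^4$;
\item $\mathcal{R}_3$ (near $\bm{0}$): $\nabla^2 f_\infty \preceq -c_3 \|\bm{x}_\star\|_2^2\bm{I}$;
\item $\mathcal{R}_4$ (the complement): $\|\nabla f_\infty(\bm{x})\|_2 \ge c_4\|\bm{x}_\star\|_2^3$ or some similar ``large-gradient'' lower bound along an explicit direction.
\end{itemize}
Each inequality is purely algebraic since $\nabla f_\infty,\nabla^2 f_\infty$ are explicit polynomials.

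The third and most delicate step is to transfer these bounds from $f_\infty$ to $f$ by showing that, uniformly in $\bm{x}$ over the bounded region, both $\nabla f(\bm{x})-\nabla f_\infty(\bm{x})$ and $\nabla^2 f(\bm{x})-\nabla^2 f_\infty(\bm{x})$ are small when $m\gtrsim n\log^3 n$. For each fixed $\bm{x}$, the Hessian $\nabla^2 f(\bm{x})=\tfrac{1}{m}\sum_i[3(\bm{a}_i^\top\bm{x})^2-(\bm{a}_i^\top\bm{x}_\star)^2]\bm{a}_i\bm{a}_i^\top$ is a sum of independent random matrices whose entries involve fourth moments of sub-Gaussian variables, so it is only sub-exponential; a truncation/Bernstein argument (or the matrix Bernstein inequality after standard truncation of the heavy tail) gives a pointwise deviation bound of order $\sqrt{n\log n/m}$. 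One then passes to a uniform bound by an $\varepsilon$-net argument on the compact region: the polynomial Lipschitz behavior of $\nabla^2 f(\cdot)$ lets one choose a net of cardinality $(1/\varepsilon)^{O(n)}$, and the extra $\log^2 n$ factor in the sample complexity absorbs both the truncation level and the union bound over the net. Analogous uniform concentration is derived for the gradient. Combining the population bounds on $\mathcal{R}_1$--$\mathcal{R}_4$ with these uniform perturbation bounds preserves the qualitative conclusions: any critical point of $f$ must lie in a small neighborhood of one of the population critical points; in $\mathcal{R}_1$ the empirical Hessian remains $\succ \bm{0}$ so those are the only local minima and equal $\pm\bm{x}_\star$ (up to sign symmetry); in $\mathcal{R}_2$ and $\mathcal{R}_3$ the empirical Hessian still has a strictly negative eigenvalue of order $\|\bm{x}_\star\|_2^2$, yielding strict saddles or local maxima; and in $\mathcal{R}_4$ no critical point exists at all.

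The main obstacle I anticipate is the uniform concentration in $\mathcal{R}_4$ and near the saddle set, where we need not just spectral norm control of $\nabla^2 f$ but control of specific directional quadratic forms ($\bm{x}_\star^\top \nabla^2 f\,\bm{x}_\star$) and of the gradient simultaneously, against fourth-order Gaussian polynomials that are heavy-tailed. Getting the sample complexity down to $n\log^3 n$ (rather than, say, $n^2$) hinges on a careful truncation of the terms $(\bm{a}_i^\top\bm{x})^4$ before applying matrix Bernstein, together with a uniform Lipschitz estimate to reduce the net size; this is the technical core of \cite{sun2016geometric} and is where all the $\log$ factors are accumulated.
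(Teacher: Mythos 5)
The paper does not prove this theorem: it is stated with a pointer to Sun, Qu, and Wright's work (the cited reference), so there is no ``paper's own proof'' here to compare line-by-line. Your proposal is a correct reconstruction of that reference's architecture, and your population-level calculations check out: with $f_\infty(\bm{x}) = \tfrac14\bigl(3\|\bm{x}\|_2^4 - 2\|\bm{x}\|_2^2\|\bm{x}_\star\|_2^2 - 4(\bm{x}^\top\bm{x}_\star)^2 + 3\|\bm{x}_\star\|_2^4\bigr)$, the stationary equation $3\|\bm{x}\|_2^2\bm{x} - \|\bm{x}_\star\|_2^2\bm{x} - 2(\bm{x}^\top\bm{x}_\star)\bm{x}_\star = \bm{0}$ indeed yields exactly the origin, $\pm\bm{x}_\star$, and the circle $\{\bm{x}:\bm{x}^\top\bm{x}_\star=0,\ \|\bm{x}\|_2^2=\|\bm{x}_\star\|_2^2/3\}$; at a saddle $\nabla^2 f_\infty(\bm{x}) = 6\bm{x}\bm{x}^\top - 2\bm{x}_\star\bm{x}_\star^\top$ gives $\bm{x}_\star^\top\nabla^2 f_\infty(\bm{x})\bm{x}_\star = -2\|\bm{x}_\star\|_2^4$, confirming the directional negative curvature you claim; and near the truth $\nabla^2 f_\infty(\pm\bm{x}_\star) = 2\|\bm{x}_\star\|_2^2\bm{I} + 4\bm{x}_\star\bm{x}_\star^\top \succ \bm{0}$. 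Your region partition (strong convexity near $\pm\bm{x}_\star$, negative curvature near the saddle circle and origin, large gradient elsewhere, plus a radial coercivity bound to confine attention to a compact region) and the concentration strategy (truncate the heavy-tailed quartics, apply matrix Bernstein pointwise, then union-bound over an $\varepsilon$-net, absorbing the log factors into $n\log^3 n$) are precisely the argument of the cited paper, adapted to the real case as the paper's statement requires.

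One small phrasing to tighten: the loss $f$ is not homogeneous in $\|\bm{x}\|_2$ (the data term is fixed), so ``by homogeneity'' is not quite the right justification for restricting to a bounded region; the actual reason is that the cubic radial term $3\|\bm{x}\|_2^2\bm{x}$ in $\nabla f_\infty$ dominates for $\|\bm{x}\|_2$ large, so the gradient is bounded away from zero outside a fixed ball and no critical points can live there. You implicitly invoke this in region $\mathcal{R}_4$, so it is a cosmetic fix rather than a gap.
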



Further, we turn to the kind of loss functions that only satisfy {\em highly restricted} strong convexity and smoothness.  In some cases,  one might be able to properly regularize the loss function to enable benign landscape. Here, regularization can be enforced in a way similar to regularized gradient descent as discussed in Section \ref{sec:Reg-GD}. In the following, we use matrix completion as a representative example.

\begin{theorem}[$\mathsf{Global~landscape~for~matrix~completion}$ \cite{ge2016matrix,ge2017no,chen2017memory}]
	\label{thm:landscape-MC}
	Consider the problem \eqref{eq:MC-empirical-risk} but replace $f(\cdot)$ with a regularized loss
$$ f_{\mathsf{reg}}(\bX) = f(\bX) +  \lambda \sum_{i=1}^n G_0(\|\bX_{i,\cdot}\|_2)$$
with $\lambda>0$ a regularization parameter, and $G_0(z)= \max\{z-\alpha,0\}^4$ for some $\alpha>0$.  For properly selected $\alpha$ and $\lambda$,  if the  sample size $n^2p\gtrsim  n \max\{ \mu \kappa r \log n, \mu^2 \kappa^2 r^2 \}$, then with high probability,  
all local  minima $\bX$ of $f_{\mathsf{reg}}(\cdot)$  satisfies $\bX\bX^{\top}=\bX_{\star}\bX_{\star}^{\top}$, and all saddle points of $f_{\mathsf{reg}}(\cdot)$ are strict saddles. 
\end{theorem}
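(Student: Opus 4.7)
The plan is to follow a three-step strategy that mirrors the matrix sensing landscape analysis (Theorem~\ref{thm:sensing-global}) but pays careful attention to the non-isometric nature of $\mathcal{P}_{\Omega}$: (i) pick the regularization parameters so that $\bX_{\star}$ remains a global minimizer of $f_{\mathsf{reg}}$ and, simultaneously, the penalty forces every first-order critical point to be incoherent with the standard basis; (ii) use this induced row-norm bound to transfer Hessian estimates from a tractable ``population'' counterpart to the sampled loss via matrix concentration; (iii) combine the first- and second-order optimality conditions with the direction $\bD := \bX\bH_{\bX} - \bX_{\star}$ to exhibit strictly negative curvature at every critical point with $\bX\bX^{\top} \neq \bX_{\star}\bX_{\star}^{\top}$.

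First I would pick $\alpha \asymp \sqrt{\mu r/n}\,\|\bX_{\star}\|$ and $\lambda$ of order $np$. By the incoherence assumption (Definition~\ref{def:mc-incoherence}), $\|(\bX_{\star})_{i,\cdot}\|_2 \leq \alpha$ for every $i$, so both $G_0$ and its gradient vanish at $\bX_{\star}$; since $\bX_{\star}$ minimizes $f$ as well, it is a global minimizer of $f_{\mathsf{reg}}$. At any other critical point $\bX$ of $f_{\mathsf{reg}}$, the quartic form of $G_0$ makes the row-wise penalty gradient grow like $(\|\bX_{i,\cdot}\|_2 - \alpha)_{+}^3 \,\bX_{i,\cdot}/\|\bX_{i,\cdot}\|_2$. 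Feeding this into $\nabla f_{\mathsf{reg}}(\bX) = \bm{0}$ (and using a concentration-based upper bound on the size of the rows of $\nabla f(\bX)$) yields an a priori bound $\|\bX\|_{2,\infty} \lesssim \sqrt{\mu r/n}\,\|\bX_{\star}\|$, so every critical point is automatically incoherent.

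Armed with this incoherence, I would invoke a variant of Lemma~\ref{lemma:hessian-MC}, proved by matrix Bernstein under the stated sample size $n^2 p \gtrsim n\max\{\mu\kappa r\log n, \mu^2\kappa^2 r^2\}$, to control the deviation between the sampled Hessian and the population Hessian $\nabla^{2}f_{\infty}$ from Example~\ref{example:strong-cvx} uniformly over incoherent $\bX$ and over directions $\bD$ inheriting incoherence from $\bX$ and $\bX_{\star}$:
\begin{equation*}
\big|\mathsf{vec}(\bD)^{\top}\big(\nabla^{2}f(\bX) - \nabla^{2}f_{\infty}(\bX)\big)\mathsf{vec}(\bD)\big| \leq \tfrac{1}{10}\,\sigma_{r}(\bM_{\star})\,\|\bD\|_{\mathrm{F}}^{2}.
\end{equation*}
The task then reduces to understanding $\nabla^{2}f_{\infty}$. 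Choosing $\bD = \bX\bH_{\bX} - \bX_{\star}$ and using the first-order condition $(\bX\bX^{\top}-\bX_{\star}\bX_{\star}^{\top})\bX \approx \bm{0}$ to cancel the cross term in the expansion~\eqref{eq:Hess-infty}, a direct calculation along the lines of the proof of Theorem~\ref{thm:sensing-global} shows that whenever $\bX\bX^{\top}\neq\bX_{\star}\bX_{\star}^{\top}$,
\begin{equation*}
\mathsf{vec}(\bD)^{\top}\nabla^{2}f_{\infty}(\bX)\,\mathsf{vec}(\bD) \leq -c\,\sigma_{r}(\bM_{\star})\,\|\bD\|_{\mathrm{F}}^{2}
\end{equation*}
for some absolute $c>0$. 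Combining the two displays produces $\mathsf{vec}(\bD)^{\top}\nabla^{2}f(\bX)\,\mathsf{vec}(\bD) < 0$, and hence $\lambda_{\min}(\nabla^2 f_{\mathsf{reg}}(\bX)) < 0$ after we check that the penalty's Hessian does not overturn the sign.

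The main obstacle will be exactly this last balancing act. The row-norm penalty $G_0$ is convex and smooth, so $\nabla^{2}\big(\lambda\sum_i G_{0}(\|\bX_{i,\cdot}\|_{2})\big)$ is positive semidefinite and could in principle cancel the negative curvature coming from the data term. One must therefore show that, along the specific direction $\bD = \bX\bH_{\bX} - \bX_{\star}$ and under the critical-point incoherence $\|\bX\|_{2,\infty} \lesssim \alpha$, only $O(1)$ rows are ``active'' (i.e.~have $\|\bX_{i,\cdot}\|_2$ exceeding $\alpha$ by a non-trivial margin), so the penalty Hessian along $\bD$ is at most a fraction of $\sigma_{r}(\bM_{\star})\|\bD\|_{\mathrm{F}}^{2}$. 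Matching this bound with the requirement that $\lambda$ be large enough to enforce the incoherence in Step~2 is the delicate trade-off, and it is what pins down the precise relationship between $\alpha$, $\lambda$, and the stated sample complexity.
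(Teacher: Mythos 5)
The paper does not actually prove Theorem~\ref{thm:landscape-MC} in the text; it only states the result and cites \cite{ge2016matrix,ge2017no,chen2017memory}. So the comparison has to be against the proofs in those references, not against anything inside this article.

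Your high-level strategy (enforce incoherence of critical points via the regularizer, transfer Hessian control, then exhibit negative curvature along $\bD = \bX\bH_{\bX}-\bX_{\star}$) is in the right spirit, and you correctly flag the penalty Hessian as the dangerous term. However, your proposed resolution of that obstacle does not work, and it is the crux of the argument. The claim that ``only $O(1)$ rows are active'' is not true in general: at a critical point the first-order condition only gives $\|\bX\|_{2,\infty}\lesssim \alpha$, so potentially \emph{all} rows can have norm slightly above $\alpha$; you cannot discard the penalty Hessian by a counting argument. What the cited proofs actually exploit is a different mechanism: they form the combination
$$
Q(\bX,\bD)\;:=\;\mathsf{vec}(\bD)^{\top}\nabla^{2}f_{\mathsf{reg}}(\bX)\,\mathsf{vec}(\bD)\;-\;4\,\bigl\langle \nabla f_{\mathsf{reg}}(\bX),\,\bD\bigr\rangle ,
$$
which equals the Hessian quadratic form at a critical point. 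The point of this combination is that for a loss of the form $g(\bX\bX^{\top})$ with $g$ quadratic, the identity collapses to an expression involving only $\bX\bX^{\top}-\bX_{\star}\bX_{\star}^{\top}$ and $\bD\bD^{\top}$, and, crucially, for the regularizer the contribution $\lambda\bigl(\mathsf{vec}(\bD)^{\top}\nabla^{2}G\,\mathsf{vec}(\bD)-4\langle\nabla G,\bD\rangle\bigr)$ can be shown to be \emph{nonpositive} by elementary convexity of $G_{0}(\|\cdot\|_{2})$ together with $\|(\bX_{\star})_{i,\cdot}\|_{2}\le\alpha/2$. So the penalty does not just ``not overturn the sign'' --- it helps, and the proof never needs to show the penalty Hessian is small. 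This is a genuinely different mechanism from the magnitude-based balancing act you describe, and without it the argument has a hole.

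Two further issues. First, invoking ``a variant of Lemma~\ref{lemma:hessian-MC}'' is misleading: that lemma is a \emph{local} strong convexity/smoothness statement restricted to $\bX$ in an $\ell_{2,\infty}$-neighborhood of $\bX_{\star}$, whereas the landscape proof must control the Hessian deviation uniformly over \emph{all} incoherent $\bX$, arbitrarily far from $\bX_{\star}$. The concentration that is actually needed is of the form $\sup\bigl|\tfrac1p\|\mathcal{P}_\Omega(\bm{Z})\|_{\mathrm{F}}^2-\|\bm{Z}\|_{\mathrm{F}}^2\bigr|$ over all low-rank incoherent $\bm{Z}$ (a Bernstein/union-bound argument), not a perturbation of the Hessian near $\bX_\star$. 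Second, your proposed parameter scalings are off: $\lambda\asymp np$ makes $\lambda\alpha^2$ far larger than $\sigma_r(\bM_\star)$ for the natural $\alpha\asymp\sqrt{\mu r/n}\,\|\bX_\star\|$, which would make the penalty term dominate; the references take $\lambda$ and $\alpha$ scaling so that $\lambda G_0(\cdot)$ is comparable to (not larger than) the data term at the incoherence boundary. Without the algebraic trick above, getting the balance right by brute-force magnitude bounds as you propose would impose much stronger sample-size requirements than the one stated.
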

\begin{remark}
	The study of global landscape in matrix completion was initiated in \cite{ge2016matrix}. The current result in Theorem \ref{thm:landscape-MC} is taken from  \cite{chen2017memory}.
\end{remark}

\subsubsection{Over-parameterization}

Another scenario that merits special attention is over-parametrization \cite{gunasekar2017implicit,li2017algorithmic}. Take matrix sensing  and phase retrieval for instance: if we lift the decision variable to the full matrix space (as opposed to using the low-rank decision variable), the resulting optimization problem is
\begin{equation*}
\underset{\bm{X}\in\mathbb{R}^{n\times n}}{\text{minimize}} ~ f_{\mathsf{op}}(\bm{X})=\frac{1}{m}\sum_{i=1}^{m}\big( \langle\bm{A}_{i},\bm{X}\bm{X}^{\top}\rangle-\langle\bm{A}_{i},\bm{M}_{\star}\rangle\big)^{2}. 
\end{equation*}
where $\bm{M}_{\star}$ is the true low-rank matrix. Here, $\bA_i = \ba_i \ba_i^{\top}$ and $\bm{M}_{\star}=\bm{x}_{\star} \bm{x}_{\star}^{\top}$ for phase retrieval.  As it turns out, under minimal sample complexity,  $f_{\mathsf{op}}(\cdot)$ does not have spurious local minima even though we over-parametrize the model significantly; in other words, enforcing the low-rank constraint is not crucial for optimization.  
\begin{theorem}[$\mathsf{Over\text{-}parametrized~matrix~sensing}$ $\mathsf{and}$ $\mathsf{phase}$ $\mathsf{retrieval}$]
	\label{thm:over-parametrize}
	Any local minimum $\bm{X}_{\star}$ of the above function $f_{\mathsf{op}}(\cdot)$  obeys  $\bm{X}_{\star} \bm{X}_{\star}^{\top} = \bm{M}_{\star}$, provided that  the set
	\begin{equation} \label{eq:singleton_property}
\left\{ \bm{M}\succeq\bm{0}\mid\langle\bm{A}_{i},\bm{M}\rangle=\langle\bm{A}_{i},\bm{M}_{\star}\rangle,\text{ }1\leq i\leq m\right\} 
\end{equation}
	is a singleton $\{\bM_{\star}\}$.
	\end{theorem}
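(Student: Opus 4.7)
My plan is to combine the first- and second-order necessary conditions at a local minimum with the freedom afforded by the overparameterized $n\times n$ factor to force $\mathcal{A}(\bm{X}_\star\bm{X}_\star^{\top}) = \mathcal{A}(\bm{M}_\star)$; the singleton hypothesis \eqref{eq:singleton_property} then converts this into $\bm{X}_\star\bm{X}_\star^{\top} = \bm{M}_\star$. Throughout I will assume each $\bm{A}_i$ is symmetric (otherwise replace it by $(\bm{A}_i+\bm{A}_i^{\top})/2$, which preserves $\langle \bm{A}_i, \bm{X}\bm{X}^{\top}\rangle$). Differentiating $f_{\mathsf{op}}(\bm{X}) = \|\mathcal{A}(\bm{X}\bm{X}^{\top} - \bm{M}_\star)\|_2^2$ yields the symmetric residual matrix $\bm{R} := \mathcal{A}^*\mathcal{A}(\bm{X}_\star\bm{X}_\star^{\top} - \bm{M}_\star)$, with $\nabla f_{\mathsf{op}}(\bm{X}_\star) \propto \bm{R}\bm{X}_\star$ and
\[
\tfrac{1}{2}\,\mathsf{vec}(\bm{V})^{\top} \nabla^2 f_{\mathsf{op}}(\bm{X}_\star)\,\mathsf{vec}(\bm{V}) \;=\; \big\|\mathcal{A}\big(\bm{X}_\star\bm{V}^{\top} + \bm{V}\bm{X}_\star^{\top}\big)\big\|_2^2 \;+\; 2\,\langle \bm{R},\bm{V}\bm{V}^{\top}\rangle.
\]
The first-order condition thus reads $\bm{R}\bm{X}_\star = \bm{0}$, which in particular gives $\langle \bm{R}, \bm{X}_\star\bm{X}_\star^{\top}\rangle = 0$.

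The next step is a case split on the rank of $\bm{X}_\star$. If $\bm{X}_\star \in \mathbb{R}^{n\times n}$ is invertible, then $\bm{R}\bm{X}_\star = \bm{0}$ forces $\bm{R} = \bm{0}$, in which case $\|\mathcal{A}(\bm{X}_\star\bm{X}_\star^{\top} - \bm{M}_\star)\|_2^2 = \langle \bm{R}, \bm{X}_\star\bm{X}_\star^{\top} - \bm{M}_\star\rangle = 0$ and we are done. Otherwise, pick any nonzero $\bm{u} \in \mathrm{null}(\bm{X}_\star)$ and an arbitrary $\bm{v}\in\mathbb{R}^n$, and form the rank-one test direction $\bm{V} := \bm{v}\bm{u}^{\top}$. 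Since $\bm{X}_\star\bm{u} = \bm{0}$, both $\bm{X}_\star\bm{V}^{\top} = \bm{X}_\star\bm{u}\bm{v}^{\top}$ and $\bm{V}\bm{X}_\star^{\top} = \bm{v}(\bm{X}_\star\bm{u})^{\top}$ vanish, so the second-order condition collapses to $\|\bm{u}\|_2^2\,\bm{v}^{\top}\bm{R}\bm{v} \geq 0$. Because $\bm{v}$ is arbitrary and $\bm{u}\neq\bm{0}$, this upgrades $\bm{R}\bm{X}_\star = \bm{0}$ to the much stronger $\bm{R}\succeq \bm{0}$.

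With $\bm{R}\succeq \bm{0}$, $\bm{M}_\star \succeq \bm{0}$, and $\langle \bm{R}, \bm{X}_\star\bm{X}_\star^{\top}\rangle = 0$ in hand, the conclusion is a one-line bilinear manipulation:
\[
0 \;\leq\; \langle \bm{R},\bm{M}_\star\rangle \;=\; \langle \bm{R},\bm{M}_\star - \bm{X}_\star\bm{X}_\star^{\top}\rangle \;=\; -\big\|\mathcal{A}\big(\bm{X}_\star\bm{X}_\star^{\top} - \bm{M}_\star\big)\big\|_2^2 \;\leq\; 0,
\]
which forces $\mathcal{A}(\bm{X}_\star\bm{X}_\star^{\top}) = \mathcal{A}(\bm{M}_\star)$. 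Since $\bm{X}_\star\bm{X}_\star^{\top}$ is PSD, the singleton assumption \eqref{eq:singleton_property} concludes that $\bm{X}_\star\bm{X}_\star^{\top} = \bm{M}_\star$.

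The main subtlety I anticipate is the rank-deficient construction: one must use the fact that $\bm{X}$ is square (rather than tall-and-thin) to obtain a nonzero $\bm{u}\in\mathrm{null}(\bm{X}_\star)$ that simultaneously annihilates the cross term $\bm{X}_\star\bm{V}^{\top} + \bm{V}\bm{X}_\star^{\top}$ while still producing a prescribed rank-one PSD perturbation $\bm{V}\bm{V}^{\top}$. This is precisely where the overparameterization bites; in the standard $n\times r$ parameterization no such $\bm{u}$ need exist, which is exactly why spurious local minima can appear and why the theorem requires the full $n\times n$ factor. Once $\bm{R}\succeq \bm{0}$ is in place, the remainder reduces to a trivial pairing of PSD matrices.
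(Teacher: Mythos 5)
Your proof is correct, and it takes a genuinely different route from the paper's. The paper reasons abstractly through the lifted variable: it introduces the convex function $g(\bm{M}) = \frac{1}{m}\sum_i (\langle\bm{A}_i,\bm{M}\rangle - \langle\bm{A}_i,\bm{M}_\star\rangle)^2$ so that $f_{\mathsf{op}}(\bm{X}) = g(\bm{X}\bm{X}^{\top})$, and argues that if $\bm{X}_\star\bm{X}_\star^{\top}$ were not the global minimizer of $g$ over the PSD cone, then moving slightly along the segment toward a better PSD point (such as $\bm{M}_\star$, which makes $g$ vanish) produces a PSD matrix $\tilde{\bm{M}}$ arbitrarily close to $\bm{X}_\star\bm{X}_\star^{\top}$ with $g(\tilde{\bm{M}}) < g(\bm{X}_\star\bm{X}_\star^{\top})$; factoring $\tilde{\bm{M}} = \tilde{\bm{X}}\tilde{\bm{X}}^{\top}$ with $\tilde{\bm{X}}$ close to $\bm{X}_\star$ then contradicts local optimality of $\bm{X}_\star$. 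There, overparameterization enters through a surjectivity/continuity fact: every PSD matrix near $\bm{X}_\star\bm{X}_\star^{\top}$ admits an $n\times n$ square root near $\bm{X}_\star$, which holds precisely because $\bm{X}_\star$ is a full-size square factor. Your argument instead extracts the conclusion directly from the optimality conditions $\bm{R}\bm{X}_\star = \bm{0}$ and $\nabla^2 f_{\mathsf{op}}(\bm{X}_\star)\succeq\bm{0}$, and overparameterization enters through the existence of a nonzero $\bm{u}\in\mathrm{null}(\bm{X}_\star)$ that annihilates the cross-term $\bm{X}_\star\bm{V}^{\top} + \bm{V}\bm{X}_\star^{\top}$ for $\bm{V} = \bm{v}\bm{u}^{\top}$, upgrading the first-order identity to $\bm{R}\succeq\bm{0}$. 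The two approaches buy you different things: the paper's is shorter and works for any convex $g$, but leans on the square-root continuity step (which it states only implicitly); yours is more hands-on, makes the role of the second-order condition and the Hessian structure explicit, and sidesteps any factorization-continuity argument — at the cost of a case split on $\mathrm{rank}(\bm{X}_\star)$ (the invertible case being handled trivially by $\bm{R}=\bm{0}$, which also satisfies $\bm{R}\succeq\bm{0}$, so the final display actually covers both branches). Both proofs invoke the singleton hypothesis only at the very end, to pass from $\mathcal{A}(\bm{X}_\star\bm{X}_\star^{\top}) = \mathcal{A}(\bm{M}_\star)$ to $\bm{X}_\star\bm{X}_\star^{\top} = \bm{M}_\star$.
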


The singleton property assumed in Theorem \ref{thm:over-parametrize} has been established, for example, for the following two problems:
	\begin{itemize}
		\item matrix sensing (the positive semidefinite case where $\bm{M}_{\star}\succeq \bm{0}$), as long as $\mathcal{A}$ satisfies $5r$-RIP with a RIP constant $\delta_{5r} \leq 1/10$ \cite{kyrillidis2018implicit}; a necessary and sufficient condition was established in \cite{wang2011unique}; 
		\item phase retrieval, as long as $m\geq c_0n$ for some sufficiently large constant $c_0>0$; see \cite{demanet2012stable,candes2012solving}.
	\end{itemize}  
As an important implication of Theorem \ref{thm:over-parametrize},  even  solving the over-parametrized optimization problem allows us to achieve perfect recovery for the above two problems. This was first observed by \cite{gunasekar2017implicit} and then made precise by  \cite{li2017algorithmic}; they showed that running GD w.r.t.~the over-parameterized loss $f_{\mathsf{op}}(\cdot)$  also (approximately) recovers  the truth under roughly the same sample complexity, provided that a near-zero initialization is adopted. 
	\begin{proof}[Proof of Theorem \ref{thm:over-parametrize}]
	Define the function
\[ 
	g(\bm{M}) :=\frac{1}{m}\sum_{i=1}^{m}\big( \langle\bm{A}_{i},\bm{M}\rangle-\langle\bm{A}_{i},\bm{M}_{\star}\rangle\big)^{2},
\]
which satisfies $g(\bX \bX^{\top}) = f_{\mathsf{op}}(\bX)$. Consider any local minimum $\bm{X}_{\star}$ of $f_{\mathsf{op}}(\cdot)$. By definition,
there is an $\epsilon$-ball $\mathcal{B}_{\epsilon}(\bm{X}_{\star}):=\{\bm{X}\mid\|\bm{X}-\bm{X}_{\star}\|_{\mathrm{F}}\leq\epsilon\}$ with  small enough $\epsilon>0$ 
such that
\begin{equation}
f_{\mathsf{op}}(\bm{X})\geq f_{\mathsf{op}}(\bm{X}_{\star}),\qquad\forall\bm{X}\in\mathcal{B}_{\epsilon}(\bm{X}_{\star}).\label{eq:local-min-sensing}
\end{equation}
Now suppose that $\bm{X}_{\star}$ (resp.~$\bm{X}_{\star}\bm{X}_{\star}^{\top}$) is not a global solution of $f_{\mathsf{op}}(\cdot)$ (resp.~$g(\cdot)$), then
there exists a $\tilde{\bm{X}}\tilde{\bm{X}}^{\top}$ (resp.~$\tilde{\bm{X}}$) arbitrarily
close to $\bm{X}_{\star}\bm{X}_{\star}^{\top}$ (resp.~${\bm{X}_{\star}}$) such that
\[
	f_{\mathsf{op}}(\tilde{\bm{X}}) = g(\tilde{\bm{X}}\tilde{\bm{X}}^{\top})<g(\bm{X}_{\star}\bm{X}_{\star}^{\top}) = f_{\mathsf{op}}(\bm{X}_{\star}).
\]
For instance, one can take $\tilde{\bm{X}}\tilde{\bm{X}}^{\top}=(1-\zeta) \bm{X}_{\star}\bm{X}_{\star}^{\top} + \zeta \bm{X}_{\star}\bm{X}_{\star}^{\top}$ with $\zeta \searrow 0$.
This implies the existence of an $\tilde{\bm{X}}$ that contradicts (\ref{eq:local-min-sensing}), thus establishing the claim. 
%
\end{proof}

\subsubsection{Beyond low-rank matrix factorization}

Finally, we note that benign landscape has been observed in numerous other contexts beyond matrix factorization. While they are beyond the scope of this paper, we briefly mention two important cases based on chronological order: 
\begin{itemize}
\item {\em Dictionary learning }\cite{sun2017complete,sun2017trust,zhai2019complete}. Observe a data matrix $\bY$ that can be factorized as $\bY=\bA\bX$, where $\bA$ is a square invertible dictionary matrix and $\bX$ encodes the sparse coefficients. The goal is to learn $\bA$.  It is shown that a certain smooth approximation of the $\ell_1$ loss exhibits benign nonconvex geometry over a sphere. 
\item {\em M-estimator in statistical estimation }\cite{mei2016landscape}. Given a set of independent data points $\{\bx_1,\cdots,\bx_n\}$, this work studied when the empirical loss function inherits the benign landscape of the population loss function. The results provide a general framework for establishing uniform convergence of the gradient and the Hessian of the empirical risk, and cover several examples such as binary
classification, robust regression, and Gaussian mixture models.  
\end{itemize}



\subsubsection{Notes}

The study of benign global landscapes dates back to the works on shallow neural networks in the 1980s \cite{baldi1989neural} with deterministic data, if not earlier. Complete dictionary learning analyzed by Sun et al.~\cite{sun2017complete,sun2017trust}  was perhaps the first ``modern'' example where benign landscape is analyzed, which exploits measure concentration of random data. This work inspired the line of global geometric analysis for many aforementioned problems, including phase retrieval \cite{sun2016geometric,davis2017nonsmooth}, matrix sensing \cite{bhojanapalli2016global,li2018non,ge2017no,zhang2019sharp,zhang2018much}, matrix completion \cite{ge2016matrix,ge2017no,chen2017memory,li2018non}, and robust PCA \cite{ge2017no}. 
For phase retrieval, \cite{sun2016geometric} focused on the smooth squared loss in \eqref{eq:min-PR}. The landscape of a more ``robust'' nonsmooth formulation $f(\bx)=\frac{1}{m} \sum_{i=1}^m  \big| (\bm{a}_i^{\top} \bm{x})^2 - (\bm{a}_i^{\top} \bm{x}_{\star})^2 \big|$ has been studied by \cite{davis2017nonsmooth}.
The optimization landscape for matrix completion was pioneered by \cite{ge2016matrix}, and  later improved by \cite{ge2017no,chen2017memory}.  In particular, \cite{chen2017memory} derived a model-free theory where  no assumptions are imposed on $\bm{M}_{\star}$, which accommodates, for example, the noisy case and the case where the truth is only approximately low-rank. The global landscape of asymmetric matrix sensing / completion holds similarly by considering a loss function regularized by the term $g(\bL,\bR):= \left\|\bL^{\top}\bL - \bR^{\top}\bR \right\|_{\mathrm{F}}^2$ \cite{park2017non,ge2017no} or by the term $g(\bL,\bR):= \left\|\bL\right\|_{\mathrm{F}}^2 + \left\| \bR \right\|_{\mathrm{F}}^2$ \cite{li2018non}. Theorem~\ref{thm:global_landscape_general} has been generalized to the asymmetric case in \cite{zhu2017global}. 
 Last but not least, we caution that, many more nonconvex problems are not benign and indeed have bad local minima; for example, spurious local minima are common even in simple neural networks with nonlinear activations \cite{auer1996exponentially,safran2018spurious}.

\subsection{Gradient descent with random initialization}

For many problems described above with benign landscape, there is no spurious local minima, and the only task is to escape strict saddle points and to find second-order critical points, which are now guaranteed to be global optima.  In particular, our main algorithmic goal is to find a second-order critical point of a function exhibiting benign geometry. To make the discussion more precise, we define the functions of interest as follows
%
\begin{definition}[$\mathsf{strict~saddle~property}$ \cite{ge2015escaping,sun2016nonconvex}]\label{def:strict_saddle}
A function $f(\cdot) $ is said to satisfy the $ (\varepsilon, \gamma, \zeta) $-strict saddle property for some $\varepsilon,\gamma,\zeta>0$, if for each $ \bx $ at least one of the following holds:
	\begin{itemize}
		\item {\bf (strong gradient)} $ ~\|\nabla f(\bx)\|_2 \ge \varepsilon   $; 
		\item {\bf (negative curvature)} $ ~\lambda_{\min} \left( \nabla^2 f(\bx)\right) \le -\gamma $;
		\item {\bf (local minimum)} ~there exists a local minimum $ \bx_{\star}$ such that $ \|\bx - \bx_{\star}\|_2 \le \zeta$.
	\end{itemize}
\end{definition}
\noindent In words, this property says that: every point either has a large gradient, or has a negative directional curvature, or lies sufficiently close to a local minimum. In addition, while we have not discussed the strong gradient condition in the preceding subsection,  it arises for most of the aforementioned problems when $\bm{x}$ is not close to the global minimum.

A natural question arises as to whether an algorithm as simple as gradient descent can converge to a second-order critical point of a function satisfying this property.  Apparently, GD cannot start from anywhere; for example, if it starts from any undesired critical point (which obeys $\nabla f(\bm{x})=\bm{0}$), then it gets trapped. But what happens if we initialize GD randomly? 

A recent work \cite{lee2016gradient} provides the first answer to this question. Borrowing tools from dynamical systems theory, it proves that:
\begin{theorem}[$\mathsf{Convergence~of~GD~with~random~initialization}$ \cite{lee2017first}]
	\label{thm:GD-random-Lee}
	Consider any twice continuously differentiable function $f$ that satisfies the strict saddle property (Definition \ref{def:strict_saddle}). If $\eta_t < 1/\beta$ with $\beta$ the smoothness parameter, then
	GD with a random initialization converges to a local minimizer or $-\infty$ almost surely. 
\end{theorem}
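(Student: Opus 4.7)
The plan is to recast the gradient descent update as a discrete-time dynamical system and invoke the Stable Manifold Theorem from dynamical systems theory. Define the GD map $g(\bm{x}) := \bm{x} - \eta \nabla f(\bm{x})$, so that $\bm{x}_{t+1} = g(\bm{x}_t)$. The fixed points of $g$ coincide with the first-order critical points of $f$. Since $f$ satisfies the strict saddle property, every critical point is either a local minimum or a strict saddle (i.e., $\lambda_{\min}(\nabla^2 f(\bm{x})) \leq -\gamma < 0$). The strategy is to show that the set of initializations from which GD converges to any strict saddle has Lebesgue measure zero, so that a randomly drawn $\bm{x}_0$ (from any distribution absolutely continuous with respect to Lebesgue measure) avoids this set with probability $1$.

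First, I would verify that $g$ is a diffeomorphism on $\mathbb{R}^n$ under the step size condition $\eta < 1/\beta$. The Jacobian $Dg(\bm{x}) = \bm{I}_n - \eta \nabla^2 f(\bm{x})$ satisfies, by $\beta$-smoothness, $\lambda_i(Dg(\bm{x})) \in (1 - \eta\beta,\, 1 + \eta\|\nabla^2 f(\bm{x})\|]$, and in particular $1 - \eta\beta > 0$, so $Dg(\bm{x})$ is invertible everywhere. Continuous differentiability together with global invertibility of the Jacobian (and a standard argument showing $g$ is proper for GD on a smooth function with small step size) yields that $g$ is a $C^1$ diffeomorphism onto its image. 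This is what permits us to apply the manifold-theoretic tools.

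Next, at any strict saddle $\bm{x}^*$, the Jacobian $Dg(\bm{x}^*) = \bm{I}_n - \eta \nabla^2 f(\bm{x}^*)$ has an eigenvalue $1 - \eta\lambda_{\min}(\nabla^2 f(\bm{x}^*)) \geq 1 + \eta\gamma > 1$, so $\bm{x}^*$ is an unstable fixed point with a nontrivial unstable subspace. The Center-Stable Manifold Theorem (Shub 1987) then guarantees a neighborhood $\mathcal{U}$ of $\bm{x}^*$ and an embedded submanifold $\mathcal{W}^{cs}_{\mathsf{loc}}(\bm{x}^*) \subset \mathcal{U}$ of dimension strictly less than $n$ (equal to the number of eigenvalues of $Dg(\bm{x}^*)$ with magnitude $\leq 1$), such that any orbit staying in $\mathcal{U}$ must lie in $\mathcal{W}^{cs}_{\mathsf{loc}}(\bm{x}^*)$. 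In particular, $\mathcal{W}^{cs}_{\mathsf{loc}}(\bm{x}^*)$ has Lebesgue measure zero in $\mathbb{R}^n$.

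Finally, I would globalize this local statement. The set of initial points converging to the strict saddle $\bm{x}^*$ is contained in $\bigcup_{t \geq 0} g^{-t}(\mathcal{W}^{cs}_{\mathsf{loc}}(\bm{x}^*))$. Since $g$ is a diffeomorphism, each $g^{-t}$ preserves measure-zero sets, making this a countable union of measure-zero sets and hence still measure zero. Using the Lindelöf property of $\mathbb{R}^n$, I can cover the (potentially uncountable) set of strict saddles by countably many such neighborhoods, so the total set of initializations converging to any strict saddle has measure zero. A random initialization therefore lands outside this set almost surely. To conclude, I would invoke the descent property of GD (for $\eta < 1/\beta$, $f(\bm{x}_{t+1}) \leq f(\bm{x}_t) - \tfrac{\eta}{2}\|\nabla f(\bm{x}_t)\|_2^2$) to argue that $\{f(\bm{x}_t)\}$ is monotonically nonincreasing, hence either the iterates diverge with $f(\bm{x}_t) \to -\infty$, or they accumulate at a critical point; by the strict saddle property and the preceding measure-zero argument, this accumulation point must be a local minimum almost surely. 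The main technical obstacle is establishing that $g$ is a (global) diffeomorphism and carefully handling the potentially uncountable collection of saddle points via the Lindelöf covering argument; the dynamical-systems ingredient (Stable Manifold Theorem) is classical but must be invoked with care regarding center-stable versus purely stable manifolds.
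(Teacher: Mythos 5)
The paper does not actually prove Theorem~\ref{thm:GD-random-Lee} --- it cites it from \cite{lee2017first} and only provides the illustrative quadratic Example~\ref{example:quadratic} to build intuition. Your sketch is a correct reconstruction of the argument in \cite{lee2016gradient,lee2017first}, and it captures the three essential ingredients: (i) the GD map $g(\bx) = \bx - \eta\nabla f(\bx)$ has nonsingular Jacobian under $\eta < 1/\beta$; (ii) the Center-Stable Manifold Theorem yields a local measure-zero set of initializations trapped near each strict saddle; (iii) the globalization by pulling back under $g^{-1}$ and covering the saddle set with a Lindel\"of argument.

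Two small imprecisions worth flagging. First, for the diffeomorphism step, the cleaner and more direct argument in \cite{lee2016gradient} is injectivity via Lipschitz gradients: if $g(\bx) = g(\by)$ then $\bx - \by = \eta(\nabla f(\bx) - \nabla f(\by))$, which gives $\|\bx - \by\|_2 \le \eta\beta\|\bx - \by\|_2 < \|\bx - \by\|_2$, a contradiction unless $\bx = \by$; appealing to properness is more machinery than needed. (In fact, even a \emph{local} diffeomorphism suffices for the measure-zero preimage argument, via the Lindel\"of covering you already invoke.) Second, and more substantively, the final passage from ``GD almost surely avoids strict saddles'' to ``GD converges to a local minimizer or $-\infty$'' is glossed over: the stable-manifold argument rules out $\bx_t \to \bx^*$ when $\bx^*$ is a strict saddle, and the descent lemma forces $\|\nabla f(\bx_t)\|_2 \to 0$ when $f$ is bounded below along the trajectory, but concluding that the sequence $\{\bx_t\}$ actually \emph{converges} (rather than merely accumulating near the critical set) requires additional mild structure (e.g., isolated critical points, coercivity, or a {\L}ojasiewicz-type inequality). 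This gap is shared by the informal statement of the theorem in the literature, but a rigorous proof should address it explicitly.
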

This theorem says that for a broad class of benign functions of interest,   GD --- when randomly initialized --- never gets stuck in the saddle points.  The following example helps  develop a better understanding of this theorem. 
\begin{example}
	\label{example:quadratic}
	Consider a quadratic minimization problem
	\[
		\underset{\bx\in \mathbb{R}^n}{\mathrm{minimize}} \quad f(\bm{x}) = \frac{1}{2} \bx^{\top} \bA \bx.
	\]
	The GD rule is $\bm{x}_{t+1}=\bm{x}_t - \eta_t \bm{A}\bm{x}_t$. If  $\eta_t \equiv \eta <1/ \|\bm{A}\|$, then
	\[
		\bm{x}_t = (\bm{I} - \eta \bA)^t \,\bm{x}_0.
	\]
Now suppose that $\lambda_1(\bm{A})\geq \cdots \geq \lambda_{n-1}(\bA)>0>\lambda_n(\bA)$, and let $\mathcal{E}_{+}$ be the subspace spanned by  the first $n-1$ eigenvectors. It is easy to see that $\bm{0}$ is a saddle point, and that $\bm{x}_t \rightarrow \bm{0}$ only if $\bm{x}_0\in \mathcal{E}_{+}$. In fact, as long as  $\bm{x}_0$ contains a component outside  $\mathcal{E}_{+}$, then this component will keep blowing up at a rate $1+\eta |\lambda_n(\bA)|$. Given that $\mathcal{E}_{+}$ is $(n-1)$-dimensional, we have $\mathbb{P}(\bm{x}_0 \in \mathcal{E}_{+})=0$. As a result, $\mathbb{P}(\bm{x}_t\rightarrow \bm{0})=0$.
\end{example}
The above theory has been extended to accommodate other optimization methods like coordinate descent, mirror descent,  the gradient primal-dual algorithm, and alternating minimization \cite{lee2017first,hong2018gradient,li2019alternating}. In addition, the above theory is generic and accommodates all benign functions satisfying the strict saddle property.

We caution, however, that almost-sure convergence does not imply fast convergence.  In fact, there exist non-pathological functions such that randomly initialized GD takes time exponential in the ambient dimension to escape saddle
points \cite{du2017gradient}. That being said, it is possible to develop {\em problem-specific theory} that reveals much better convergence guarantees. Once again, we take phase retrieval as an example. 
\begin{theorem}[$\mathsf{Randomly~initialized~GD~for~phase~retrieval}$ \cite{chen2018gradient}]
\label{thm:GD-PR-random-init}
	Consider the problem \eqref{eq:min-PR}, and suppose that $m\gtrsim n \mathsf{poly}\log m$. 
	The GD iterates with random initialization $\bx_0 \sim \mathcal{N}(\bm{0}, \frac{\|\bx_{\star}\|_2^2}{n} \bm{I}_n )$ and $\eta_t \equiv 1/(c_3\|\bm{x}_{\star}\|_2^2)$ obey
\begin{subequations}
\begin{align}
	& \|\bm{x}_{t}-\bm{x}_{\star}\|_{2} \leq\left(1- c_4\right)^{t-T_0}\|\bm{x}_{0}-\bm{x}_{\star}\|_{2}, \quad t\geq 0  \label{eq:PR-random-init} 
\end{align}
\end{subequations}
with probability $1-O(n^{-10})$. Here, $c_3,c_4>0$ are some constants, $T_0\lesssim \log n$,  and  we assume $\|\bm{x}_{0}-\bm{x}_{\star}\|_2 \leq \|\bm{x}_{0}+\bm{x}_{\star}\|_2$. 
\end{theorem}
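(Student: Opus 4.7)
The plan is to split the analysis into two stages: (i) a \emph{burn-in stage} of length $T_0 \lesssim \log n$ during which randomly initialized GD escapes the near-orthogonal region around the origin and enters the local basin of attraction $\mathcal{B}_{\zeta}(\bm{x}_{\star})$ prescribed by Lemma \ref{lemma:wf_hessian}; and (ii) a \emph{local refinement stage} ($t\geq T_0$) in which we simply invoke Theorem \ref{thm:GD-PR-improved}. The core difficulty is stage (i): because $\bm{x}_0 \sim \mathcal{N}(\bm{0}, \|\bm{x}_{\star}\|_2^2 \bm{I}_n/n)$, one has $|\bm{x}_0^{\top}\bm{x}_{\star}|/\|\bm{x}_{\star}\|_2^2 \asymp 1/\sqrt{n}$ with high probability, so the ``signal'' component is initially buried in noise and must amplify by a factor of $\sqrt{n}$ before any local theory kicks in.

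First I would decompose each iterate into a signal direction and an orthogonal direction. Writing $\bm{x}_t = \alpha_t\, \bm{x}_{\star}/\|\bm{x}_{\star}\|_2 + \bm{x}_t^{\perp}$ with $\bm{x}_t^{\perp} \perp \bm{x}_{\star}$, and setting $\beta_t := \|\bm{x}_t^{\perp}\|_2$, the GD update \eqref{eq:PR-GD} induces coupled scalar recursions for $(\alpha_t,\beta_t)$. At the population level, $\mathbb{E}[\nabla f(\bm{x})] = (3\|\bm{x}\|_2^2 - \|\bm{x}_{\star}\|_2^2)\bm{x} - 2(\bm{x}^{\top}\bm{x}_{\star})\bm{x}_{\star}$, giving the clean recursions
\begin{align*}
\alpha_{t+1} &\approx \alpha_t \big(1 + \eta\|\bm{x}_{\star}\|_2^2 - 3\eta(\alpha_t^2+\beta_t^2) + 2\eta\|\bm{x}_{\star}\|_2^2\big),\\
\beta_{t+1} &\approx \beta_t \big(1 + \eta\|\bm{x}_{\star}\|_2^2 - 3\eta(\alpha_t^2+\beta_t^2)\big).
\end{align*}
Initially $\alpha_t^2+\beta_t^2 \approx \|\bm{x}_0\|_2^2 \approx \|\bm{x}_{\star}\|_2^2$, so the bracket for $\alpha_t$ is roughly $1+2\eta\|\bm{x}_{\star}\|_2^2/\log n > 1$ (using $\eta_t = 1/(c_3\|\bm{x}_{\star}\|_2^2 \log n)$), while the bracket for $\beta_t$ is essentially $1$. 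Consequently the ratio $\alpha_t/\beta_t$ grows geometrically at rate $\approx 1 + c/\log n$, so after $T_0 \asymp \log^2 n$ (more refined: $\log n$) iterations it reaches a constant, at which point $\bm{x}_t \in \mathcal{B}_{\zeta}(\bm{x}_{\star})$.

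The main obstacle is that the above clean population recursions must be established for the \emph{finite-sample} GD iterates, where $\bm{x}_t$ depends in a complicated way on all $\{\bm{a}_i\}_{i=1}^m$. Uniform concentration is too weak: we need sharp per-iteration control of both $\alpha_t$, $\beta_t$, and of the incoherence quantity $\max_i |\bm{a}_i^{\top}\bm{x}_t|$. The key technical device is a \emph{leave-one-out} construction: for each $1\le l \le m$ and for the ``sign'' symmetry, introduce auxiliary sequences $\{\bm{x}_t^{(l)}\}$ generated by GD with the $l$-th sample removed (and with matching random initialization), and sequences $\{\bm{x}_t^{\mathrm{sgn}}\}$ that track the sign pattern of $\langle \bm{a}_i,\bm{x}_t\rangle$. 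One would carry an induction on $t$ that simultaneously shows (a) $\bm{x}_t$ is close to $\bm{x}_t^{(l)}$ for every $l$, (b) $\bm{x}_t^{(l)}$ is independent of $\bm{a}_l$ and hence $|\bm{a}_l^{\top}\bm{x}_t| \lesssim \sqrt{\log n}\,\|\bm{x}_t\|_2$, and (c) the scalar pair $(\alpha_t,\beta_t)$ obeys the population-level recursions up to error terms negligible compared with the signal amplification $1+c/\log n$. The induction hypotheses feed back into one another, which is exactly where the delicate probabilistic analysis lies.

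Once the induction certifies that $\alpha_t \gtrsim \|\bm{x}_{\star}\|_2/\sqrt{\log n}$ (equivalently $\|\bm{x}_t-\bm{x}_{\star}\|_2 \le c_0\|\bm{x}_{\star}\|_2$) for some $T_0 \lesssim \log n$, and that the incoherence bound \eqref{eq:PR-incoherent} holds at $t=T_0$, we are in the hypothesis of Theorem \ref{thm:GD-PR-improved}. Applying that theorem from $t=T_0$ onward yields the geometric contraction $\|\bm{x}_{t}-\bm{x}_{\star}\|_2 \le (1-c_4)^{t-T_0}\|\bm{x}_{T_0}-\bm{x}_{\star}\|_2$. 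Combined with the deterministic bound $\|\bm{x}_{T_0}-\bm{x}_{\star}\|_2 \lesssim \|\bm{x}_0-\bm{x}_{\star}\|_2$ (which follows since the iterates remain bounded during the burn-in), this gives \eqref{eq:PR-random-init}. A probability budget of $O(n^{-10})$ is allocated across the Gaussian concentration event for $\bm{x}_0$, the incoherence of $\bm{x}_0$ with $\{\bm{a}_i\}$, the leave-one-out event for each $l$, and the event under which Theorem \ref{thm:GD-PR-improved} applies.
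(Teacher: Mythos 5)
Your overall architecture---a two-stage decomposition into a signal component $\alpha_t := \langle \bm{x}_t, \bm{x}_{\star}\rangle/\|\bm{x}_{\star}\|_2$ and an orthogonal residual $\beta_t$, population-level recursions for $(\alpha_t,\beta_t)$, and a leave-one-out induction to propagate the incoherence condition across iterations---is exactly the route taken in \cite{chen2018gradient}, and the structure of the burn-in argument is right. However, three points need repair before the sketch actually proves the stated theorem.

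First, you silently substituted the step size $\eta_t = 1/(c_3\|\bm{x}_{\star}\|_2^2\log n)$ from Theorem~\ref{thm:GD-PR-improved}, whereas the theorem here uses the larger constant step size $\eta_t = 1/(c_3\|\bm{x}_{\star}\|_2^2)$. This is not cosmetic: with the smaller step the ratio $\alpha_t/\beta_t$ only grows at rate $1 + O(1/\log n)$, so amplifying from $\asymp 1/\sqrt{n}$ to $\Theta(1)$ takes $\asymp\log^2 n$ iterations---your parenthetical ``more refined: $\log n$'' is unsupported under that step size---while the constant step gives $T_0\asymp\log n$ directly. Second, the sign bookkeeping is inconsistent with the recursions you yourself wrote: normalizing $\|\bm{x}_{\star}\|_2=1$ and plugging $\alpha_t^2+\beta_t^2\approx 1$ into your formulas gives an $\alpha$-bracket of $1+3\eta-3\eta=1$ and a $\beta$-bracket of $1-2\eta<1$, not the other way around; in the early iterations $\alpha_t$ is roughly flat and $\beta_t$ \emph{shrinks}, and the ratio $\alpha_t/\beta_t$ grows at rate $\approx 1/(1-2\eta)\approx 1+2\eta$ because the denominator contracts. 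Third, and most substantively, you cannot close stage (ii) by ``simply invoking Theorem~\ref{thm:GD-PR-improved}'': that theorem is proved for $\eta_t=1/(c_3\|\bm{x}_{\star}\|_2^2\log n)$ and yields only the rate $(1-1/(c_4\log n))^t$, whereas the present theorem asserts a constant-factor contraction $(1-c_4)^t$ at the larger step size. Establishing constant-rate local linear convergence with $\eta=\Theta(1/\|\bm{x}_{\star}\|_2^2)$ requires a sharper local analysis on the incoherent region (this is what \cite{chen2018gradient} carries out), and is not a corollary of any local result this paper has already stated.
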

To be more precise, the algorithm consists of two stages:
\begin{itemize}
	\item {\em When $0\leq t \leq T_0 \lesssim \log n$}: this stage allows GD  to find and enter the local basin surrounding the truth, which takes time no more than $O(\log n)$ steps. To explain why this is fast, we remark that the signal strength $| \langle \bx_t, \bx_{\star} \rangle |$ in this stage grows exponentially fast, while the residual strength $\big \| \bm{x}_{t}-\frac{|\langle\bm{x}_{t},\bm{x}_{\star}\rangle|}{\|\bm{x}_{\star}\|_{2}^{2}}\bm{x}_{\star} \big\|_2$ does not increase by much.  
	
	\item {\em When $t > T_0$}: once the iterates enters the local basin,  the $\ell_2$ estimation error decays exponentially fast, similar to Theorem \ref{thm:GD-PR-improved}. This stage takes about $O(\log \frac{1}{\varepsilon})$ iterations to reach $\varepsilon$-accuracy. 
\end{itemize}
Taken collectively, GD with random initialization achieves $\varepsilon$-accuracy in $O\big(\log n + \log \frac{1}{\varepsilon}\big)$ iterations, making it appealing for solving large-scale problems.
It is worth noting that the GD iterates never approach or hit the saddles; in fact, there is often a positive force dragging the GD iterates away from the saddle points.

Furthermore, there are other examples for which  randomly initialized GD converges fast; see \cite{Wright2018random,brutzkus2017globally} for further examples. We note, however, that the theoretical support for random initialization  is currently lacking for many important problems (including matrix completion and blind deconvolution).



\subsection{Generic saddle-escaping algorithms}



Given that gradient descent with random initialization has only been shown to be efficient for specific examples, it is natural to ask how to design generic optimization algorithms to efficiently escape saddle points for all functions with benign geometry (i.e.~those satisfying the strict saddle property in Definition \ref{def:strict_saddle}).  To see why this is hopeful, consider any strict saddle point $\bm{x}$ (which obeys $\nabla f(\bm{x})=\bm{0}$). 
The Taylor expansion implies 
\[
f(\bm{x}+\bm{\Delta})\approx f(\bm{x})+\frac{1}{2}\bm{\Delta}^{\top}\nabla^{2}f(\bm{x})\,\bm{\Delta}
\]
for any $\bm{\Delta}$ sufficiently small. Since $\bm{x}$ is a strict saddle, one can identify a direction of negative curvature and further decrease the objective value (i.e.~$f(\bm{x}+\bm{\Delta})<f(\bm{x})$). In other words, the existence of negative curvatures enables efficient escaping from undesired saddle points.

Many algorithms have been proposed towards the above goal. 
Roughly speaking, the available algorithms can be categorized into three classes, depending on which basic operations are needed: (1) Hessian-based algorithms; (2) Hessian-vector-product-based algorithms; (3) gradient-based algorithms.
\begin{remark}
	Caution needs to be exercised as this categorization is very rough at best. One can certainly argue that many Hessian-based operations can be carried out via Hessian-vector products, and that Hessian-vector products can be computed using only gradient information. 
\end{remark}
Before proceeding, we introduce two notations that are helpful in presenting the theory. First, following the convention \cite{nesterov2006cubic}, a point $\bx$  is said to be an $\varepsilon$-second-order critical point if
\begin{equation}
	\label{eq:epsilon-2nd-critical}
	\|\nabla f(\bx)\|_2 \leq \varepsilon~~\mbox{and}~~\lambda_{\min}(\nabla^2 f(\bx))\geq -\sqrt{L_2 \varepsilon}. 
\end{equation}
Next, the algorithms often require the loss functions to be sufficiently smooth in the following sense:
\begin{definition}[$\mathsf{Hessian~Lipschitz~continuity}$] 
	\label{def:Hessian-Lipschitz}
	A function $f(\cdot)$ is $L_2$-Hessian-Lipschitz if, for any $\bx_1, \bx_2$, one has 
	\begin{equation}
		\| \nabla^2 f(\bx_1) - \nabla^2 f(\bx_2)\| \leq L_2 \| \bx_1 -\bx_2 \|_2.
	\end{equation}
\end{definition}

\subsubsection{Hessian-based algorithms} 
This class of algorithms requires an oracle that returns $\nabla^2 f(\bx)$ for any given $\bx$. Examples include the trust-region method  and the cubic-regularized Newton method. 
\begin{itemize}
	\item {\em Trust-region methods (TRM).}
	At the $t$th iterate, this method minimizes a  quadratic approximation of the loss function in a local neighborhood, where the quadratic approximation is typically based on a second-order Taylor expansion at the current iterate \cite{conn2000trust,absil2007trust}.  Mathematically, 
\begin{align}\label{eq:trust_{r}egion}\bm{x}_{t+1}=\argmin_{\bm{z}:\|\bm{z}-\bm{x}_{t}\|_{2}\leq\Delta}\Big\{\langle\nabla f(\bm{x}_{t}),\bm{z}-\bm{x}_{t}\rangle \mynonumber\mylinebreak
\myquad \myquad\myquad+\frac{1}{2}\big\langle\nabla^{2}f(\bm{x}_{t})\,(\bm{z}-\bm{x}_{t}),\bm{z}-\bm{x}_{t}\big\rangle\Big\}
\end{align}
		for some properly chosen radius $\Delta$. By choosing the size of the local neighborhood --- called the ``{\em trust region}'' --- to be sufficiently small, we can expect the second-order approximation within this region to be reasonably reliable. Thus, if $\bm{x}_t$ happens to be a strict saddle point, then TRM is still able to find a descent direction  due to the negative curvature condition.  This means that such local search will not get stuck in undesired saddle points.


\item {\em Cubic regularization.}
This method attempts to minimize a cubic-regularized upper bound on the loss function \cite{nesterov2006cubic}:
\begin{align} 
	\label{eq:cubic_sub}
	& \hspace{-0.6em} \bx_{t+1} = \argmin_{\bz}  \Big\{ \big\langle \nabla f(\bx_t), \bz-\bx_t \big\rangle + \mynonumber \mylinebreak
	\myalign  \hspace{-0.2em} \frac{1}{2} \big\langle \nabla^2 f(\bx_t)\,(\bz-\bx_t), \bz-\bx_t \big\rangle +\frac{L_2}{6}\|\bz-\bx_t\|_2^3 \Big\}. 
\end{align}
Here, $L_2$ is taken to be the Lipschitz constant of the Hessians (see Definition \ref{def:Hessian-Lipschitz}), so as to ensure that the objective function in \eqref{eq:cubic_sub} majorizes the true objective $f(\cdot)$. 
While the subproblem \eqref{eq:cubic_sub} is nonconvex and may have local minima, it can often be efficiently solved by 
minimizing an explicitly written univariate convex function \cite[Section 5]{nesterov2006cubic}, or even by gradient descent \cite{carmon2016gradient}. 

\end{itemize}
Both methods are capable of finding an $\varepsilon$-second-order stationary  point (cf.~\eqref{eq:epsilon-2nd-critical}) in $O(\varepsilon^{-1.5})$ iterations \cite{curtis2017trust,nesterov2006cubic}, where each iteration consists of one gradient and one Hessian computations.


\subsubsection{Hessian-vector-product-based algorithms} 
Here, the oracle returns $\nabla^2 f(\bx)\,\bm{u}$ for any given $\bx$ and $\bu$. In what follows, we review the method by Carmon et al.~\cite{carmon2018accelerated} which falls under this category. There are two basic subroutines that are carried out using Hessian-vector products. 
		\begin{itemize}
			\item $\mathsf{Negative\text{-}Curvature\text{-}Descent}$, which allows one to find, if possible, a direction that decreases the objective value. This is achieved by computing the eigenvector corresponding to the smallest eigenvalue of the Hessian.  
			\item $\mathsf{Almost\text{-}Convex\text{-}AGD}$, which invokes Nesterov's accelerated gradient method \cite{nesterov1983method}   to optimize an almost convex function superimposed with a squared proximity term. 
		\end{itemize}
		In each iteration, based on the estimate of the smallest eigenvalue of the current Hessian, the algorithm decides whether to move along the direction computed by $\mathsf{Negative\text{-}Curvature\text{-}Descent}$ (so as not to get trapped in saddle points), or to apply $\mathsf{Almost\text{-}Convex\text{-}AGD}$ to optimize an almost convex function.  	See \cite{carmon2018accelerated} for details. This method converges to an $\varepsilon$-second-order stationary  point in $O(\varepsilon^{-7/4}\log\frac{1}{\varepsilon} )$ steps, where each step involves computing a Hessian-vector product.

Another method that achieves about the same computational complexity is Agarwal et al.~\cite{agarwal2017finding}.  This is  a fast variant of cubic regularization. The key idea is to invoke, among other things, fast multiplicative approximations to accelerate the subproblem of the cubic-regularized Newton step.  The interested readers shall consult \cite{agarwal2017finding} for details.

\subsubsection{Gradient-based algorithms} 

Here, an oracle outputs $\nabla f(\bx)$ for any given $\bx$. These methods are computationally efficient since only first-order information is required.

Ge et al.~\cite{ge2015escaping} initiated this line of work by designing a first-order algorithm that provably escapes strict saddle points in polynomial time {\em with high probability}. The algorithm proposed therein is a noise-injected version of (stochastic) gradient descent, namely,
\begin{align}
	\bm{x}_{t+1} = \bm{x}_t - \eta_t ( \nabla f (\bm{x}_t) + \bm{\zeta}_t ),
	\label{eq:SGD-Ge}
\end{align}
where  $\bm{\zeta}_t$ is some noise sampled uniformly from a sphere, and $\nabla f(\bx_t)$ can also be replaced by a stochastic gradient. The iteration complexity, however, depends on the dimension polynomially, which grows prohibitively as the ambient dimension of $\bm{x}_t$ increases. Similar high iteration complexity holds for \cite{levy2016power} which is based on injecting noise to normalized gradient descent.

The computational guarantee was later improved by  {\em perturbed gradient descent}, proposed in \cite{jin2017escape}. In contrast to \eqref{eq:SGD-Ge}, perturbed GD adds noise to the iterate before computing the gradient, namely, 
\begin{align}
	  \bx_{t} &\leftarrow \bx_t + \bm{\zeta}_t \label{eq:perturbed-GD} \\
	\bm{x}_{t+1} &= \bm{x}_t - \eta_t  \nabla f (\bm{x}_t)   \nonumber
\end{align}
with $\bm{\zeta}_t$ uniformly drawn from a sphere.  The crux of perturbed GD  is to realize that strict saddles are unstable; it is possible to escape them and make progress with a slight perturbation. It has been shown that perturbed GD finds an $\varepsilon$-second-order stationary point in $O( \varepsilon^{-2})$ iterations (up to some logarithmic factor) with high probability, and is hence almost dimension-free. Note that each iteration only needs to call the gradient oracle once. 

Moreover, this can be further accelerated via Nesterov's momentum-based methods \cite{nesterov1983method}. Specifically,  \cite{jin2017accelerated} proposed a perturbed version of Nesterov's accelerated gradient descent (AGD), which adds perturbation to AGD and combines it with an operation similar to $\mathsf{Negative\text{-}Curvature\text{-}Descent}$ described above. This accelerated method converges to an  $\varepsilon$-second-order stationary  point in $O(\varepsilon^{-7/4})$ iterations (up to some logarithmic factor) with high probability, which matches the computational complexity of \cite{carmon2018accelerated,agarwal2017finding}.

There are also several other algorithms  that provably work well in the presence of stochastic\,/\,incremental first-order and\,/\,or second-order oracles \cite{allen2017natasha,xu2017first,allen2017neon2}. Interested readers are referred to the ICML tutorial \cite{allen2017tutorial} and the references therein. Additionally, we note that it is possible to adapt many of the above mentioned saddle-point escaping algorithms onto manifolds; we refer the readers to \cite{absil2007trust,sun2017trust,boumal2016global}.

\subsubsection{Caution}

Finally, it is worth noting that: in addition to the dependency on $\varepsilon$,  all of the above iteration complexities also rely on (1) the smoothness parameter, (2) the Lipschitz constant of the Hessians, and (3) the local strong convexity parameter.  These parameters, however, do  not necessarily fall within the desired level.  For instance, in most of the problems mentioned herein (e.g.~phase retrieval, matrix completion),  the nonconvex objective function is not globally smooth, namely, the local gradient Lipschitz constant might grow to infinity as the parameters become unbounded.  Motivated by this, one might need to further assume that optimization is performed within a bounded set of  parameters, as is the case for \cite{mei2016landscape}. Leaving out this boundedness issue, a more severe matter is that these parameters often depend on  the problem size.  For example, recall that in phase retrieval, the smoothness parameter scales with $n$ even within a reasonably small bounded range (see Lemma~\ref{lem:strong-cvx-PR}).  As a consequence, the resulting iteration complexities of the above saddle-escaping algorithm might be significantly larger than, say, $O(\varepsilon^{-7/4})$ for various large-scale problems.

\begin{table*}
\caption{A selective summary of problems with provable nonconvex solutions}\label{tab:summary}
{\scriptsize
\begin{center}
\begin{tabular}{c|c|c|c}
\hline 
 & Global landscape & Two-stage approaches & Convex relaxation \tabularnewline
\hline 
\hline 
	\multirow{2}{*}{matrix sensing} &  \multirow{2}{*}{\cite{bhojanapalli2016global,ge2017no,li2018non}} & vanilla GD \cite{zheng2015convergent,tu2015low}  &  \multirow{2}{*}{\cite{recht2010guaranteed,candes2011tight}} \tabularnewline
  &   & alternating minimization \cite{jain2013low}\tabularnewline
\hline
 &  \multirow{4}{*}{\cite{sun2016geometric,davis2017nonsmooth}} & vanilla GD \cite{candes2015phase,soltanolkotabi2014algorithms,sanghavi2017local,ma2017implicit,zhang2017reshaped,qu2017convolutional,bendory2018non,chen2018gradient,li2018nonconvex}  &  \multirow{4}{*}{\cite{candes2012phaselift,candes2012solving,demanet2012stable,waldspurger2015phase,chen2015exact,cai2015rop,kueng2014low,tropp2015convex}}  \tabularnewline
Phase retrieval &  & regularized GD \cite{chen2015solving,cai2016optimal,zhang2016provable,wang2017solving,wang2017solving,soltanolkotabi2017structured,yang2017misspecified} & \tabularnewline
 and quadratic sensing &  & alternating minimization \cite{gerchberg1972practical,netrapalli2015phase,waldspurger2016phase,zhang2017phase} & \tabularnewline
  &  & approximate message passing \cite{ma2018optimization,ma2018approximate} & \tabularnewline  
\hline 
  \multirow{4}{*}{matrix completion} &  \multirow{4}{*}{\cite{ge2016matrix,ge2017no,chen2017memory,zhang2018primal}} & vanilla gradient descent \cite{ma2017implicit}  &  \multirow{4}{*}{\cite{candes2009exact,CanTao10,fazel2002matrix,Gross2011recovering,Negahban2012restricted,koltchinskii2011nuclear,chen2015incoherence}}  \tabularnewline
  &  & regularized Grassmanian GD \cite{keshavan2010matrix,keshavan2010noisy} \tabularnewline
  & & projected\,/\,regularized GD \cite{sun2016guaranteed,chen2015fast,zheng2016convergence} \tabularnewline
  &  & alternating minimization \cite{jain2013low,hardt2014understanding,hardt2014fast}\tabularnewline
\hline 
	blind deconvolution\,/\,demixing& \multirow{2}{*}{---}  & vanilla GD \cite{ma2017implicit,shi2018demising} &\multirow{2}{*}{\cite{ahmed2014blind,ling2015self} } \tabularnewline
	(subspace model)  &  & regularized GD  \cite{li2016deconvolution,ling2017fast,huang2017blind} & \tabularnewline

\hline
\multirow{3}{*}{robust PCA} & \multirow{3}{*}{\cite{ge2017no} }  & low-rank projection + thresholding \cite{netrapalli2014non} & \multirow{3}{*}{\cite{chandrasekaran2011siam,candes2009robustPCA,ganesh2010dense,chen2013low}}  \tabularnewline
 &  & GD + thresholding \cite{yi2016fast,cherapanamjeri2017nearly}\tabularnewline 
	&  & alternating minimization \cite{gu2016low} \tabularnewline
\hline
spectrum estimation & ---  & projected GD \cite{cai2017spectral} & \cite{chen2013robustSpectralMC,tang2012compressive} \tabularnewline
\hline
\hline
\end{tabular}
\end{center}
}

\end{table*}

%

\section{Concluding remarks}
\label{sec:conclusions}

Given that nonconvex matrix factorization is a rapidly growing field, we expect that many of the results reviewed herein may be improved or superseded. Nonetheless, the core techniques and insights reviewed in this article will continue to play a key role in understanding nonconvex statistical estimation and learning. While a paper of this length is impossible to cover all recent developments, our aim has been to convey the key ideas via  representative examples and to strike a balance between  useful theory and varied algorithms. Table~\ref{tab:summary} summarizes representative references on the canonical problems reviewed herein, with a pointer to their corresponding convex approaches for completeness. It should be noted that most of these results are obtained under random data models, which may not be satisfied in practice (e.g., in matrix completion the sampling patterns can be non-uniform). This means one should not take directly the performance guarantees as is before checking the reasonability of assumptions, and indeed, the performance may drop significantly without carefully modifying the nonconvex approaches on a case-by-case basis to specific problems.

Caution needs to be exercised that the theory derived for nonconvex paradigms may still be sub-optimal in several aspects.  For instance, the number of samples required for recovering\,/\,completing a rank-$r$ matrix often scales at least quadratically with the rank $r$ in the current nonconvex optimization theory, which is outperformed by convex relaxation  (whose sample complexity typically scales linearly with $r$). The theoretical dependency on the condition number of the matrix also falls short of optimality. 

Due to the space limits, we have omitted  developments in several other important aspects,  most notably the statistical guarantees vis-\`a-vis noise. Most of our discussions (and analyses) can be extended without much difficulty to the regime with small to moderate noise; see, for example, \cite{chen2015solving,keshavan2010noisy,chen2015fast,li2016deconvolution,chen2019noisy} for  the stability of gradient methods in phase retrieval,  matrix completion, and blind deconvolution.  Encouragingly, the nonconvex methods not only allow to control the Euclidean estimation errors in a minimax optimal manner, they also provably achieve optimal statistical accuracy in terms of finer error metrics (e.g.~optimal entrywise error control in matrix completion \cite{ma2017implicit}).  In addition, there is an intimate link between convex relaxation and nonconvex optimization,  which allows us to improve the stability guarantees of convex relaxation via the theory of nonconvex optimization; see  \cite{chen2019noisy,chen2019inference} for details.

Furthermore, we have also left out several important nonconvex problems and methods in order to stay focused,  
 including but not limited to (1) blind calibration and finding sparse vectors in a subspace \cite{li2017blind,cambareri2016non,qu2014finding};
	(2) tensor completion, decomposition  and mixture models \cite{yi2014alternating,ge2017optimization,arous2017landscape,li2017convex, cai2019tensor};
	(3) parameter recovery and inverse problems in shallow and deep neural networks \cite{zhong2017recovery,li2017convergence,hand2017global,brutzkus2017globally,fu2018local,fan2019selective};
	(4) analysis of Burer-Monteiro factorization to semidefinite programs \cite{bhojanapalli2016dropping,burer2003nonlinear,bhojanapalli2018smoothed,bandeira2016low}.
%
The interested readers can consult \cite{sunju_ncvx} for an extensive list of further references.  
We would also like to refer the readers to an 
excellent recent monograph  by Jain and Kar \cite{jain2017non} that complements our treatment.  Roughly speaking,  \cite{jain2017non} concentrates on the introduction of generic optimization algorithms, with low-rank factorization (and other learning problems) used as special examples to illustrate the generic results.  In contrast, the current article focuses on unveiling deep statistical and algorithmic insights specific to nonconvex low-rank factorization.

Finally, we conclude this paper by singling out several exciting avenues for future research:
\begin{itemize}
\item investigate the convergence rates of randomly initialized gradient methods for problems without Gaussian-type measurements (e.g.~matrix completion and blind deconvolution);  
\item characterize generic landscape properties that enable fast convergence of gradient methods from random initialization;
\item relax the stringent assumptions on the statistical models underlying the data; for example, a few recent works studied nonconvex phase retrieval under more physically-meaningful measurement models \cite{bendory2018non,qu2017convolutional};
\item develop robust and scalable nonconvex methods that can handle distributed data samples with strong statistical guarantees;
\item study the capability of other commonly used optimization algorithms (e.g.~alternating minimization) in escaping saddle points; 
\item characterize the optimization landscape of constrained nonconvex statistical estimation problems like non-negative low-rank matrix factorization (the equality-constrained case has recently been explored by \cite{8682568});
\item develop a unified and powerful theory that can automatically accommodate the problems considered herein and beyond. 
\end{itemize}

\section*{Acknowledgment}

The authors thank the editors of IEEE Transactions on Signal Processing, Prof.~Pier Luigi Dragotti and Prof.~Jarvis Haupt, for handling this invited overview article, and thank the editorial board for useful feedbacks of the white paper. The authors thank Cong Ma, Botao Hao, Tian Tong and Laixi Shi for proofreading an early version of the manuscript.

\appendix


\section{Proof of Theorem \ref{thm:convergence-rank1-sensing}}\label{appendix:proof-thm:convergence-rank1-sensing}
 
	To begin with, simple calculation reveals that for any $\bm{z},\bm{x}\in\mathbb{R}^{n}$, the Hessian obeys (see also \cite[Lemma 4.3]{bhojanapalli2016global})
\begin{align*}
	&\bm{z}^{\top}\nabla^{2}f(\bm{x})\bm{z}= \mylinebreak
	\myalign \myquad \frac{1}{m}\sum_{i=1}^{m}\langle\bm{A}_{i},\bm{x}\bm{x}^{\top}-\bm{x}_{\star}\bm{x}_{\star}^{\top}\rangle(\bm{z}^{\top}\bm{A}_{i}\bm{z})+2(\bm{z}^{\top}\bm{A}_{i}\bm{x})^{2}.
\end{align*}

With the notation  \eqref{eq:defn-A-sensing} at hand, 
we can rewrite
\begin{align}
&\bm{z}^{\top}\nabla^{2}f(\bm{x})\bm{z}  =\big\langle\mathcal{A}(\bm{x}\bm{x}^{\top}-\bm{x}_{\star}\bm{x}_{\star}^{\top}),\mathcal{A}(\bm{z}\bm{z}^{\top})\big\rangle \mynonumber\mylinebreak
	\myalign \myquad+0.5\big\langle\mathcal{A}(\bm{z}\bm{x}^{\top}+\bm{x}\bm{z}^{\top}),\mathcal{A}(\bm{z}\bm{x}^{\top}+\bm{x}\bm{z}^{\top})\big\rangle,  \label{eq:zHessz-sensing}
\end{align}
where the last line uses the symmetry of $\bm{A}_i$. 

To establish local strong convexity and smoothness, we need to control $\bm{z}^{\top}\nabla^{2}f(\bm{x})\bm{z}$ for all $\bm{z}$. 	
The key ingredient is to show that: $\bm{z}^{\top}\nabla^{2}f(\bm{x})\bm{z}$ is not too far away from 
\begin{equation}
	\label{eq:defn-gxz-sensing}
	g(\bm{x},\bm{z}) := \big\langle\bm{x}\bm{x}^{\top}-\bm{x}_{\star}\bm{x}_{\star}^{\top},\bm{z}\bm{z}^{\top}\big\rangle + 
	0.5 \| \bm{z}\bm{x}^{\top}+\bm{x}\bm{z}^{\top} \|_{\mathrm{F}}^2,
\end{equation}
a function that can be easily shown to be locally strongly convex and smooth. To this end, we resort to the RIP (see Definition \ref{defn:RIPs}). 
When $\mathcal{A}$ satisfies 4-RIP, Lemma \ref{lemmq:RIP-cross}
 indicates that 
\begin{align}
  \big|\big\langle\mathcal{A}(\bm{x}\bm{x}^{\top}-\bm{x}_{\star}\bm{x}_{\star}^{\top}),\mathcal{A}(\bm{z}\bm{z}^{\top})\big\rangle-\big\langle\bm{x}\bm{x}^{\top}-\bm{x}_{\star}\bm{x}_{\star}^{\top},\bm{z}\bm{z}^{\top}\big\rangle\big| 
 & \leq \delta_4 \|\bm{x}\bm{x}^{\top}-\bm{x}_{\star}\bm{x}_{\star}^{\top}\|_{\mathrm{F}}\|\bm{z}\bm{z}^{\top}\|_{\mathrm{F}} \nonumber\\
	& \leq 3\delta_4 \|\bm{x}_{\star}\|_2^2 \|\bm{z}\|_2^2, \label{eq:approx-sensing-1}
\end{align}
where the last line holds if $\|\bm{x}-\bm{x}_{\star}\|_2\leq \|\bm{x}_{\star}\|_2$. Similarly, 
\begin{align}
	 \big| \big\langle\mathcal{A}(\bm{z}\bm{x}^{\top}+\bm{x}\bm{z}^{\top}),\mathcal{A}(\bm{z}\bm{x}^{\top}+\bm{x}\bm{z}^{\top})\big\rangle - \| \bm{z}\bm{x}^{\top}+\bm{x}\bm{z}^{\top} \|_{\mathrm{F}}^2 \big| 
	 \leq 4\delta_{4}\|\bm{x}\|_{2}^{2}\|\bm{z}\|_2^2 \leq 16\delta_{4}\|\bm{x}_{\star}\|_{2}^{2}\|\bm{z}\|_2^2.  \label{eq:approx-sensing-2}
\end{align}
%
%
%
	As a result, if  $\delta_4$ is  small enough, then putting the above results together implies that: $\bm{z}^{\top}\nabla^{2}f(\bm{x})\bm{z}$ is sufficiently close to $g(\bm{x},\bm{z})$. A little algebra then gives  (see Appendix \ref{appendix:proof-claim-Hessian-rank1-sensing})
\begin{equation}\label{eq:claim-Hessian-rank1-sensing}
	0.25\|\bm{z}\|_2^2 \|\bm{x}_{\star}\|_2^2 \leq\bm{z}^{\top}\nabla^{2}f(\bm{x})\bm{z}\leq 3 \|\bm{z}\|_2^2 \|\bm{x}_{\star}\|_2^2
\end{equation}
for all $\bm{z}$, which  provides bounds on local strong convexity and smoothness parameters. 
Applying Lemma \ref{lem:GD-convergence}  thus establishes the theorem.

\section{Proof of Claim \eqref{eq:claim-Hessian-rank1-sensing}}
\label{appendix:proof-claim-Hessian-rank1-sensing}

Without loss of generality, we  assume that 
$\|\bm{z}\|_2=\|\bm{x}_{\star}\|_2=1$. When $\|\bm{x}-\bm{x}_{\star}\|_{2}\leq \|\bm{x}_{\star}\|_{2}$,
some elementary algebra gives
\[
	|\langle \bm{x}\bm{x}^{\top}-\bm{x}_{\star}\bm{x}_{\star}^{\top}, \bm{z}\bm{z}^{\top} \rangle| \leq \| \bm{x}\bm{x}^{\top}-\bm{x}_{\star}\bm{x}_{\star}^{\top} \|_{\mathrm{F}} \leq 3 \|\bm{x}-\bm{x}_{\star}\|_2.
\]
In addition, 
\begin{align*}
\|\bm{x}\bm{z}^{\top}+\bm{z}\bm{x}^{\top}\|_{\mathrm{F}}^{2} & \leq2\|\bm{x}\bm{z}^{\top}\|_{\mathrm{F}}^{2}+2\|\bm{z}\bm{x}^{\top}\|_{\mathrm{F}}^{2}=4\|\bm{x}\|_{2}^{2},\\
\|\bm{x}\bm{z}^{\top}+\bm{z}\bm{x}^{\top}\|_{\mathrm{F}}^{2} & \geq\|\bm{x}\bm{z}^{\top}\|_{\mathrm{F}}^{2}+\|\bm{z}\bm{x}^{\top}\|_{\mathrm{F}}^{2}=2\|\bm{x}\|_{2}^{2}.
\end{align*}
These in turn yield
\begin{align*}
\big\langle\bm{x}\bm{x}^{\top}-\bm{x}_{\star}\bm{x}_{\star}^{\top},\bm{z}\bm{z}^{\top}\big\rangle
+ 0.5 \|\bm{x}\bm{z}^{\top}+\bm{z}\bm{x}^{\top}\|_{\mathrm{F}}^{2}
& \leq 2\|\bm{x}\|_{2}^{2} +3\|\bm{x}-\bm{x}_{\star}\|_{2}, \\
  \big\langle\bm{x}\bm{x}^{\top}-\bm{x}_{\star}\bm{x}_{\star}^{\top},\bm{z}\bm{z}^{\top}\big\rangle
+ 0.5\|\bm{x}\bm{z}^{\top}+\bm{z}\bm{x}^{\top}\|_{\mathrm{F}}^{2}
& \geq \|\bm{x}\|_{2}^{2} -3\|\bm{x}-\bm{x}_{\star}\|_{2}.
\end{align*} 
Putting these together with \eqref{eq:zHessz-sensing}, \eqref{eq:approx-sensing-1} and \eqref{eq:approx-sensing-2}, we have
\begin{align*}
\bm{z}^{\top}\nabla^{2}f(\bm{x})\bm{z} & \leq2\|\bm{x}\|_{2}^{2}+3\|\bm{x}-\bm{x}_{\star}\|_{2}+11\delta_{4},\\
\bm{z}^{\top}\nabla^{2}f(\bm{x})\bm{z} & \geq\|\bm{x}\|_{2}^{2}-3\|\bm{x}-\bm{x}_{\star}\|_{2}-11\delta_{4}.
\end{align*}	
Recognizing that $|\|\bm{x}\|_2^2-1|\leq 3\|\bm{x}-\bm{x}_{\star}\|_{2}$, we further reach
\begin{align*}
\bm{z}^{\top}\nabla^{2}f(\bm{x})\bm{z} & \leq 2 + 9\|\bm{x}-\bm{x}_{\star}\|_{2} + 11\delta_{4} ,\\
\bm{z}^{\top}\nabla^{2}f(\bm{x})\bm{z} & \geq 1 - 6\|\bm{x}-\bm{x}_{\star}\|_{2} - 11\delta_{4} .
\end{align*}
If $\delta_{4}\leq1/44$ and $\|\bm{x}-\bm{x}_{\star}\|_{2}\leq1/12$,
then we arrive at
\[
	0.25 \leq \bm{z}^{\top} \nabla^2 f(\bm{x}) \bm{z} \leq 3. 
\]

\section{Modified strong convexity for \eqref{eq:pop-level-factorization}}
\label{appendix-strong-cvx-rotation-pop}


Here, we demonstrate that when $\bm{X}$ is sufficiently close to $\bm{X}_{\star}$, then the objective function of $f_{\infty}(\cdot)$ of \eqref{eq:pop-level-factorization} exhibits the modified form of strong convexity in the form of \eqref{eq:strong-cvx-rotate-1point}.

Set $\bm{V}=\bm{Z}\bm{H}_{\bm{Z}}-\bm{X}_{\star}$. In view of (\ref{eq:Hess-infty}), it suffices to show
\begin{align*}
g(\bm{X},\bm{V}) & :=0.5\|\bm{X}\bm{V}^{\top}+\bm{V}\bm{X}^{\top}\|_{\mathrm{F}}^{2} \mylinebreak
 \myalign \myquad+\langle\bm{X}\bm{X}^{\top}-\bm{X}_{\star}\bm{X}_{\star}^{\top},\bm{V}\bm{V}^{\top}\rangle > 0
\end{align*}
for  all $\bm{X}$ sufficiently close
to $\bm{X}_{\star}$ and all $\bm{Z}$. To this end, we first observe a simple perturbation bound (the proof is omitted)
\[
	\big| g(\bm{X},\bm{V})-g(\bm{X}_{\star},\bm{V}) \big| \leq c_{1}\|\bm{X}-\bm{X}_{\star}\|_{\mathrm{F}}\cdot\|\bm{X}_{\star}\|_{\mathrm{F}}\cdot\|\bm{V}\|_{\mathrm{F}}^{2}
\]
for some universal constant $c_{1}>0$, provided that $\|\bm{X}-\bm{X}_{\star}\|_{\mathrm{F}}$
is small enough. We then turn attention to $g(\bm{X}_{\star},\bm{V})$:
\begin{align*} 
 \myalign g(\bm{X}_{\star},\bm{V}) \myaligninv = 0.5\|\bm{X}_{\star}\bm{V}^{\top}+\bm{V}\bm{X}_{\star}^{\top}\|_{\mathrm{F}}^{2}\\
 & \myquad=\|\bm{X}_{\star}\bm{V}^{\top}\|_{\mathrm{F}}^{2}+\langle\bm{X}_{\star}(\bm{H}_{\bm{Z}}^{\top}\bm{Z}^{\top}-\bm{X}_{\star}^{\top}),(\bm{Z}\bm{H}_{\bm{Z}}-\bm{X}_{\star})\bm{X}_{\star}^{\top}\rangle\\
 & \myquad=\langle\bm{V}^{\top}\bm{V},\bm{X}_{\star}^{\top}\bm{X}_{\star}\rangle+\mathsf{Tr}\big(\bm{X}_{\star}^{\top}\bm{Z}\bm{H}_{\bm{Z}}\bm{X}_{\star}^{\top}\bm{Z}\bm{H}_{\bm{Z}}\big)
   +2\mathsf{Tr}\big(\bm{X}_{\star}^{\top}\bm{Z}\bm{H}_{\bm{Z}}\bm{X}_{\star}^{\top}\bm{X}_{\star}\big)+\mathsf{Tr}\big(\bm{X}_{\star}^{\top}\bm{X}_{\star}\bm{X}_{\star}^{\top}\bm{X}_{\star}\big)\\
 & \myquad\geq\sigma_{r}(\bm{X}_{\star}^{\top}\bm{X}_{\star})\|\bm{V}^{\top}\bm{V}\|_{\mathrm{F}}.
\end{align*}
The last line holds by recognizing that  $\bm{X}_{\star}^{\top}\bm{Z}\bm{H}_{\bm{Z}}\succeq \bm{0}$ 
\cite[Theorem 2]{ten1977orthogonal}, which implies that $\mathsf{Tr}(\bm{X}_{\star}^{\top}\bm{Z}\bm{H}_{\bm{Z}}\bm{X}_{\star}^{\top}\bm{Z}\bm{H}_{\bm{Z}})\geq 0$ and $\mathsf{Tr}\big(\bm{X}_{\star}^{\top}\bm{Z}\bm{H}_{\bm{Z}}\bm{X}_{\star}^{\top}\bm{X}_{\star}\big)\geq0$. Thus, 
\begin{align*}
g(\bm{X},\bm{V}) & \geq g(\bm{X}_{\star},\bm{V})-\big|g(\bm{X},\bm{V})-g(\bm{X}_{\star},\bm{V})\big| \geq0.5\sigma_{r}(\bm{X}_{\star}^{\top}\bm{X}_{\star})\|\bm{V}^{\top}\bm{V}\|_{\mathrm{F}}
\end{align*}
as long as $\|\bm{X}-\bm{X}_{\star}\|_{\mathrm{F}}\leq\frac{\sigma_{r}(\bm{X}_{\star}^{\top}\bm{X}_{\star})\|\bm{V}^{\top}\bm{V}\|_{\mathrm{F}}^{2}}{2c_{1}\|\bm{X}_{\star}\|_{\mathrm{F}}\cdot\|\bm{V}\|_{\mathrm{F}}}$. 
In summary, 
\begin{align*}
	&\mathsf{vec}(\bm{Z}\bm{H}_{\bm{Z}}-\bm{X}_{\star})^{\top} \nabla^2 f (\bm{X}) \,\mathsf{vec}(\bm{Z}\bm{H}_{\bm{Z}}-\bm{X}_{\star})  
	\geq \frac{\sigma_{r}^2(\bm{X}_{\star})\|\bm{V}^{\top}\bm{V}\|_{\mathrm{F}}}{2\|\bm{V}\|_{\mathrm{F}}^2}  \|\bm{V}\|_{\mathrm{F}}^2
  \geq \frac{\sigma_{r}^2(\bm{X}_{\star})}{2\sqrt{r}}  \|\bm{Z}\bm{H}_{\bm{Z}}-\bm{X}_{\star}\|_{\mathrm{F}}^2. 
\end{align*}

\section{Proof of Theorem~\ref{thm:spectral-sensing}} \label{appendix:proof-lem:perturbation-sensing}

We start with the rank-$r$ PSD case, where we denote by $\bX_0$ the initial estimate (i.e.~$\bX_0=\bL_0=\bR_0$ for this case) and $\bX_{\star}$  the ground truth. Observe that
\begin{align}
  \| \bX_0\bX_0^{\top} - \bM_{\star}  \|  & \leq \| \bX_0\bX_0^{\top} - \bY \| + \| \bX_{\star}\bX_{\star}^{\top}  - \bY  \| \leq  2\| \bM_{\star}  - \bY  \| \nonumber  \\
& \leq 2 \delta_{2r} \sqrt{r}\| \bm{M}_{\star} \|, \label{eq:spectrl_norm_error}
\end{align}
where the second line follows since  $\bX_{0}\bX_0^{\top}$ is the best rank-$r$ approximation of $\bY$ (and hence $\| \bX_0\bX_0^{\top} - \bY \| \leq \| \bX_{\star}\bX_{\star}^{\top} - \bY \|$), and the
last inequality follows from Lemma \ref{lem:perturbation-sensing}. A useful lemma from \cite[Lemma 5.4]{tu2015low} allows us to directly upper bound $\mathsf{dist}^2(\bX_0,\bX_{\star})$ by the Euclidean distance between their low-rank counterparts.
\begin{lemma}[\cite{tu2015low}]
For any $\bU,\bX\in\mathbb{R}^{n\times r}$, we have
\begin{equation} 
	\label{dist_psd_lowrank}
	\mathsf{dist}^2(\bU,\bX)  \leq \frac{1}{2(\sqrt{2}-1)\sigma_r^2(\bX)} \| \bm{U}\bm{U}^\top - \bm{X}\bm{X}^\top\|_{\mathrm{F}}^2.
\end{equation}
\end{lemma}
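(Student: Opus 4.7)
The plan is to identify the optimal orthogonal alignment via the Procrustes structure, expand the PSD difference into linear and quadratic parts in the residual, and then handle the sign-indefinite cross term by a case split on the size of the residual relative to $\sigma_r(\bX)$.  First, I would set $\bH^\star := \argmin_{\bH\in\mathcal{O}^{r\times r}}\|\bU\bH-\bX\|_{\mathrm{F}}$ so that $\mathsf{dist}^2(\bU,\bX) = \|\bA\|_{\mathrm{F}}^2$ where $\bA := \bU\bH^\star-\bX$.  The first-order Procrustes optimality (with $\bH^\star$ the orthogonal polar factor of $\bU^{\top}\bX$) yields that $\bH^{\star\top}\bU^{\top}\bX$ is symmetric positive semidefinite; consequently $\bA^{\top}\bX = \bH^{\star\top}\bU^{\top}\bX - \bX^{\top}\bX$ is symmetric, and this is the structural fact that drives the rest of the argument.

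Next I would expand
\begin{align*}
\bU\bU^{\top}-\bX\bX^{\top} \;=\; (\bX+\bA)(\bX+\bA)^{\top}-\bX\bX^{\top} \;=\; \bA\bX^{\top}+\bX\bA^{\top}+\bA\bA^{\top},
\end{align*}
and take Frobenius norms:
\begin{align*}
\|\bU\bU^{\top}-\bX\bX^{\top}\|_{\mathrm{F}}^2 \;=\; \underbrace{\|\bA\bX^{\top}+\bX\bA^{\top}\|_{\mathrm{F}}^2}_{=:\,L^2} \;+\; \underbrace{2\langle\bA\bX^{\top}+\bX\bA^{\top},\,\bA\bA^{\top}\rangle}_{=:\,c} \;+\; \underbrace{\|\bA\bA^{\top}\|_{\mathrm{F}}^2}_{=:\,Q^2}.
\end{align*}
Using the symmetry of $\bA^{\top}\bX$ and the cyclic trace identity, the linear piece simplifies to $L^2 = 2\langle \bA^{\top}\bA,\bX^{\top}\bX\rangle + 2\|\bA^{\top}\bX\|_{\mathrm{F}}^2$.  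Since $\bA^{\top}\bA\succeq \bZero$ and $\bX^{\top}\bX\succeq \sigma_r^2(\bX)\bI_r$, the trace inequality $\langle\bM,\bN\rangle\geq \lambda_{\min}(\bN)\mathsf{Tr}(\bM)$ for PSD matrices delivers the key lower bound $L^2 \geq 2\sigma_r^2(\bX)\|\bA\|_{\mathrm{F}}^2$.  I would also record $Q \leq \|\bA\|_{\mathrm{F}}^2$, which follows from $\|\bA\bA^{\top}\|_{\mathrm{F}} \leq \|\bA\bA^{\top}\|_{*} = \mathsf{Tr}(\bA\bA^{\top}) = \|\bA\|_{\mathrm{F}}^2$.

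The hard step is controlling the sign-indefinite cross term $c = 4\langle \bA^{\top}\bX,\bA^{\top}\bA\rangle$.  A naive Cauchy--Schwarz bound $|c|\leq 2LQ$ gives only $\|\bU\bU^{\top}-\bX\bX^{\top}\|_{\mathrm{F}}^2\geq (L-Q)^2$, which collapses to zero in the problematic regime where $\|\bA\|_{\mathrm{F}}\asymp \sigma_r(\bX)$ and thus fails to imply the claim by itself.  The remedy is a case split on $\|\bA\|_{\mathrm{F}}^2$ versus a threshold proportional to $\sigma_r^2(\bX)$, combined with Young's inequality $|c|\leq \theta L^2 + \theta^{-1}Q^2$ applied with a $\theta$ tuned differently in each regime: in the small-residual regime the linear term $L^2\gtrsim \sigma_r^2(\bX)\|\bA\|_{\mathrm{F}}^2$ is dominant and a small $\theta$ preserves most of it, while in the large-residual regime the quartic term $Q^2$ is big enough that a larger $\theta$ leaves a definite positive residual after absorbing $|c|$.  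Optimizing the threshold to make both cases give the same constant is exactly the algebraic step that produces the specific constant $2(\sqrt{2}-1)$ in the statement; invoking $\mathsf{dist}^2(\bU,\bX)=\|\bA\|_{\mathrm{F}}^2$ then completes the proof.

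The main obstacle, as outlined, is the sign indefiniteness of $c$: the symmetric matrix $\bA^{\top}\bX$ appearing in it is the difference of two PSD matrices and has no a priori sign, so one cannot simply discard the cross term, and the standard square-completion bound is too lossy.  What saves us is that the two surviving positive contributions $L^2$ and $Q^2$ scale respectively like $\sigma_r^2(\bX)\|\bA\|_{\mathrm{F}}^2$ and at worst $\|\bA\|_{\mathrm{F}}^4$, so in either asymptotic regime of $\|\bA\|_{\mathrm{F}}/\sigma_r(\bX)$ one of them dominates and a carefully chosen Young parameter converts the resulting bound into the advertised inequality.
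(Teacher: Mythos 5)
Your setup is sound: the polar/Procrustes fact that $\bH^{\star\top}\bU^\top\bX$ is symmetric PSD, the expansion $\bU\bU^\top-\bX\bX^\top=\bA\bX^\top+\bX\bA^\top+\bA\bA^\top$, the simplification $L^2=2\langle\bA^\top\bA,\bX^\top\bX\rangle+2\|\bA^\top\bX\|_{\mathrm{F}}^2\ge 2\sigma_r^2(\bX)\|\bA\|_{\mathrm{F}}^2$, and the bound $Q=\|\bA\bA^\top\|_{\mathrm{F}}\le\|\bA\|_{\mathrm{F}}^2$ are all correct. The gap is in how you propose to control the cross term $c=4\langle\bA^\top\bX,\bA^\top\bA\rangle$. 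Running Cauchy--Schwarz $|c|\le 2LQ$ followed by Young's inequality with a case split does \emph{not} close, and not for a superficial reason. To make the regime $\theta>1$ useful one needs both an upper bound on $L^2$ and a lower bound on $Q^2$, but you only have $L^2\ge 2\sigma_r^2(\bX)\|\bA\|_{\mathrm{F}}^2$ and $Q\le\|\bA\|_{\mathrm{F}}^2$: the reverse inequalities fail badly for $r>1$, since $Q^2$ can be as small as $\|\bA\|_{\mathrm{F}}^4/r$ (e.g.\ when $\bA^\top\bA$ is a multiple of $\bI_r$) and the only clean upper bound on $\|\bA^\top\bX\|_{\mathrm{F}}^2$ scales with $\sigma_1^2(\bX)$ rather than $\sigma_r^2(\bX)$. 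Even in the rank-one case, where $Q=\|\bA\|_{\mathrm{F}}^2$ exactly and $L^2\le 4\sigma_1^2(\bX)\|\bA\|_{\mathrm{F}}^2$, one can check numerically that the Young/case-split plan leaves a window around $\|\bA\|_{\mathrm{F}}^2\approx\sqrt{2}\,\sigma_r^2(\bX)$ --- which is exactly the tight configuration $\bU\perp\bX$ achieving equality in the lemma --- in which neither branch delivers the constant $2(\sqrt{2}-1)$: Cauchy--Schwarz is off by roughly $10\%$ at the extremal point, and $2(\sqrt{2}-1)$ has no slack to absorb that.

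The essential information you are discarding is the \emph{positive semidefiniteness} (not just the symmetry) of $\bH^{\star\top}\bU^\top\bX$. Because $\bA^\top\bX = \bH^{\star\top}\bU^\top\bX - \bX^\top\bX \succeq -\bX^\top\bX$ and $\bA^\top\bA\succeq\bZero$, one gets the one-sided, sign-aware bound $c = 4\langle\bA^\top\bX,\bA^\top\bA\rangle \ge -4\langle\bX^\top\bX,\bA^\top\bA\rangle$; this preserves the favorable sign of $c$ in exactly the situations where Cauchy--Schwarz is lossy, and in the rank-one case it reduces the problem to minimizing the scalar quadratic $a^2+4ap+2p^2+2\sigma a$ over $p\ge-\sigma$ (where $a=\|\bA\|_2^2$, $\sigma=\|\bX\|_2^2$, $p=\bA^\top\bX$), which yields $\bigl(a-\sqrt{2}\sigma\bigr)^2 + 2(\sqrt{2}-1)\sigma a \ge 2(\sqrt{2}-1)\sigma a$ directly. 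A clean route that also handles $r>1$ is to set $\bW:=\bU\bH^\star$, $\bB:=\bW+\bX$, use the identity $\bU\bU^\top-\bX\bX^\top=\tfrac12(\bA\bB^\top+\bB\bA^\top)$ (so that, since $\bA^\top\bB=\bW^\top\bW-\bX^\top\bX$ is symmetric, $\|\bU\bU^\top-\bX\bX^\top\|_{\mathrm{F}}^2=\tfrac12\|\bB\bA^\top\|_{\mathrm{F}}^2+\tfrac12\|\bA^\top\bB\|_{\mathrm{F}}^2$), bound the first term below via $\sigma_r^2(\bB)\ge\sigma_r^2(\bU)+\sigma_r^2(\bX)$, and case-split on $\sigma_r(\bU)$ rather than on $\|\bA\|_{\mathrm{F}}$; this is the route taken in \cite{tu2015low}.
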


As an immediate consequence of the above lemma, we get
\begin{align*}
\mathsf{dist}^2(\bX_0,\bX_{\star})  &\leq  \frac{1}{2(\sqrt{2}-1)\sigma_r(\bM_{\star})} \| \bm{X}_0\bm{X}_0^\top - \bm{M}_{\star}\|_{\mathrm{F}}^2 \\
& \leq  \frac{r}{(\sqrt{2}-1)\sigma_r(\bM_{\star})} \| \bm{X}_0\bm{X}_0^\top - \bm{M}_{\star}\|^2 \\
& \leq \frac{4\delta_{2r}^2 r^2 \|\bM_{\star}\|^2}{(\sqrt{2}-1)\sigma_r(\bM_{\star})} ,
\end{align*}
where the last line follows from \eqref{eq:spectrl_norm_error}. Therefore, as long as $\delta_{2r} \ll \sqrt{\zeta} / (r \kappa)$, we have
$$ \mathsf{dist}^2(\bX_0,\bX_{\star})  \leq \zeta \sigma_r(\bM_{\star}).$$
In fact, \cite{oymak2018sharp} shows a stronger consequence of the RIP: one can improve the left-hand-side of \eqref{eq:spectrl_norm_error} to $\| \bX_0\bX_0^{\top} - \bM_{\star}  \|_{\mathrm{F}} $, which allows relaxing the requirement on the RIP constant to $\delta_{2r}\ll  \sqrt{\zeta} / (\sqrt{r} \kappa)$ \cite{tu2015low}. The asymmetric case can be proved in a similar fashion.


\section{Proof of Theorem \ref{thm:sensing-global} (the rank-1 case)}
\label{appendix:sec-proof-theorem-sensing-global}

Before proceeding, we first single out an immediate consequence of the first-order optimality condition \eqref{eq:1st-order-optimality-RIP} that will prove useful. Specifically, for any critical point $\bx$,
%
\begin{equation}
\Big\|\big(\bm{x}\bm{x}^{\top}-\bm{x}_{\star}\bm{x}_{\star}^{\top}\big)\bm{x}\bm{x}^{\top}\Big\|_{\mathrm{F}}\leq\delta_{2}\big\|\bm{x}\bm{x}^{\top}-\bm{x}_{\star}\bm{x}_{\star}^{\top}\big\|_{\mathrm{F}}\|\bm{x}\|_{2}^{2}.\label{eq:fact1-RIP}
\end{equation}
This fact basically says that any critical point of $f(\cdot)$ stays
very close to the truth in the subpace spanned by this point.

To verify \eqref{eq:goal-UB-sensing}, we observe the identity
\begin{align*}
  (\bm{x}-\bm{x}_{\star})^{\top}\nabla^{2}f(\bm{x})\,(\bm{x}-\bm{x}_{\star}) 
	& =\frac{1}{m}\sum_{i=1}^{m}2\langle\bm{A}_{i},\bm{x}(\bm{x}-\bm{x}_{\star})^{\top}\rangle^{2} +\langle\bm{A}_{i},\bm{x}\bm{x}^{\top}-\bm{x}_{\star}\bm{x}_{\star}^{\top}\rangle\langle\bm{A}_{i},(\bm{x}-\bm{x}_{\star})(\bm{x}-\bm{x}_{\star})^{\top}\rangle\\
 &  =\frac{1}{m}\sum_{i=1}^{m}2\langle\bm{A}_{i},\bm{x}(\bm{x}-\bm{x}_{\star})^{\top}\rangle^{2}-\langle\bm{A}_{i},\bm{x}\bm{x}^{\top}-\bm{x}_{\star}\bm{x}_{\star}^{\top}\rangle^{2},
\end{align*}
where the last line follows from (\ref{eq:1st-order-optimality-RIP})
with a little algebra. This combined with (\ref{eq:defn-RIP2r}) gives
\begin{align}
	& (\bm{x}-\bm{x}_{\star})^{\top}\nabla^{2}f(\bm{x})\,(\bm{x}-\bm{x}_{\star}) \mynonumber \mylinebreak
 	\myalign \leq2(1+\delta_{2})\big\|\bm{x}(\bm{x}-\bm{x}_{\star})^{\top}\big\|_{\mathrm{F}}^{2}-(1-\delta_{2})\big\|\bm{x}\bm{x}^{\top}-\bm{x}_{\star}\bm{x}_{\star}^{\top}\big\|_{\mathrm{F}}^{2}.\label{eq:UB1-Hess}
\end{align}

We then need to show that \eqref{eq:UB1-Hess} is negative. To this end, we
quote an elementary algebraic inequality \cite[Lemma 4.4]{bhojanapalli2016global}
\begin{align*}
 & \big\|\bm{x}(\bm{x}-\bm{x}_{\star})^{\top}\big\|_{\mathrm{F}}^{2} \mylinebreak
 \myalign \leq\frac{1}{8}\big\|\bm{x}\bm{x}^{\top}-\bm{x}_{\star}\bm{x}_{\star}^{\top}\big\|_{\mathrm{F}}^{2}+\frac{34}{8\|\bm{x}\|_{2}^{2}}\big\|\big(\bm{x}\bm{x}^{\top}-\bm{x}_{\star}\bm{x}_{\star}^{\top}\big)\bm{x}\bm{x}^{\top}\big\|_{\mathrm{F}}^{2}.
\end{align*}
This taken collectively with (\ref{eq:UB1-Hess}) and (\ref{eq:fact1-RIP})
yields
\begin{align*}
 (\bm{x}-\bm{x}_{\star})^{\top}\nabla^{2}f(\bm{x})\,(\bm{x}-\bm{x}_{\star})  
	& \leq\left(-1+\delta_{2}+\frac{1+\delta_{2}}{4}\right)\big\|\bm{x}\bm{x}^{\top}-\bm{x}_{\star}\bm{x}_{\star}^{\top}\big\|_{\mathrm{F}}^{2} 
  +\frac{34(1+\delta_{2})}{4\|\bm{x}\|_{2}^{2}}\big\|\big(\bm{x}\bm{x}^{\top}-\bm{x}_{\star}\bm{x}_{\star}^{\top}\big)\bm{x}\bm{x}^{\top}\big\|_{\mathrm{F}}^{2}\\
 & \leq-\left(1-\delta_{2}-\frac{1+\delta_{2}}{4}-\frac{34(1+\delta_{2})\delta^2_{2}}{4}\right)\big\|\bm{x}\bm{x}^{\top}-\bm{x}_{\star}\bm{x}_{\star}^{\top}\big\|_{\mathrm{F}}^{2},
\end{align*}
which is negative unless $\bm{x}=\bm{x}_{\star}$. This concludes
the proof.

\begin{proof}[Proof of Claim \eqref{eq:fact1-RIP}]
To see why (\ref{eq:fact1-RIP}) holds, observe that
\[
\frac{1}{m}\sum_{i=1}^{m}\langle\bm{A}_{i},\bm{x}\bm{x}^{\top}-\bm{x}_{\star}\bm{x}_{\star}^{\top}\rangle\bm{A}_{i}\bm{x}\bm{x}^{\top}=\nabla f(\bm{x})\bm{x}^{\top}=\bm{0}.
\]
Therefore, for any $\bm{Z}$, using (\ref{eq:defn-RIP2r}) we have
\begin{align*}
 \Big|\Big\langle\big(\bm{x}\bm{x}^{\top}-\bm{x}_{\star}\bm{x}_{\star}^{\top}\big)\bm{x}\bm{x}^{\top},\bm{Z}\Big\rangle\Big| 
 & \leq \Big| \frac{1}{m}\sum_{i=1}^{m}\langle\bm{A}_{i},\bm{x}\bm{x}^{\top}-\bm{x}_{\star}\bm{x}_{\star}^{\top}\rangle\langle\bm{A}_{i}\bm{x}\bm{x}^{\top},\bm{Z}\rangle \Big| +\delta_{2}\big\|\big(\bm{x}\bm{x}^{\top}-\bm{x}_{\star}\bm{x}_{\star}^{\top}\big)\bm{x}\bm{x}^{\top}\big\|_{\mathrm{F}}\|\bm{Z}\|_{\mathrm{F}}\\
 & =\delta_{2}\big\|\bm{x}\bm{x}^{\top}-\bm{x}_{\star}\bm{x}_{\star}^{\top}\big\|_{\mathrm{F}}\|\bm{x}\|_{2}^{2}\|\bm{Z}\|_{\mathrm{F}}.
\end{align*}
Since $\|\bm{M}\|_{\mathrm{F}}=\sup_{\bm{Z}}\frac{|\langle\bm{M},\bm{Z}\rangle|}{\|\bm{Z}\|_{\mathrm{F}}}$,
we arrive at (\ref{eq:fact1-RIP}). 
\end{proof}

\bibliographystyle{IEEEtran}%
\bibliography{bibfileNonconvex_TSP_updated}

\end{document}